\documentclass[10pt]{article} 
\usepackage[table]{xcolor}
\def\anonymous{0}
\if\anonymous1
    \usepackage{tmlr}
\else
    \usepackage[accepted]{tmlr}
\fi

\usepackage{hyperref}
\usepackage{url}
\usepackage{amsmath}
\usepackage{amsthm}
\usepackage{amssymb}
\usepackage{dsfont}
\usepackage{mathtools}
\usepackage{tabularx}
\usepackage{booktabs}
\usepackage{siunitx}
\usepackage[framemethod=tikz,xcolor=true]{mdframed}
\usepackage{multirow}
\usepackage{colortbl}
\usepackage{hhline}
\usepackage{pifont}
\usepackage{wrapfig}
\usepackage{tikz}
\usepackage{algorithm2e}
\usepackage{nicefrac}
\usepackage{tcolorbox}
\usepackage{forloop}
\usepackage{silence}
\usepackage{caption}
\usepackage{enumitem}
\usepackage{siunitx}
\RequirePackage{fix-cm}
\usepackage[T1]{fontenc} 
\usepackage{lmodern} 
\rmfamily 
\DeclareFontShape{T1}{lmr}{b}{sc}{<->ssub*cmr/bx/sc}{}
\DeclareFontShape{T1}{lmr}{bx}{sc}{<->ssub*cmr/bx/sc}{}
\WarningFilter{latex}{Text page 17 contains only floats}

\newcounter{loopc}
\NewDocumentCommand\towrite{O{1}m}%
  {{\color{red}#2\forloop{loopc}{1}{\value{loopc} < #1}{; #2}}}


\usepackage{amsmath,amsfonts,bm}









\def\eqref#1{equation~\ref{#1}}









\def\1{\bm{1}}










\DeclareMathAlphabet{\mathsfit}{\encodingdefault}{\sfdefault}{m}{sl}
\SetMathAlphabet{\mathsfit}{bold}{\encodingdefault}{\sfdefault}{bx}{n}













\DeclareMathOperator*{\argmax}{arg\,max}
\DeclareMathOperator*{\argmin}{arg\,min}


\newcommand{\cmark}{\raisebox{-0.1em}[0pt][0pt]{\ding{51}}}
\newcommand{\xmark}{\raisebox{-0.1em}[0pt][0pt]{\ding{55}}}
\newcommand{\pmark}{\raisebox{-0.1em}[0pt][0pt]{\ding{70}}}
\newcommand{\nograd}{\texttt{no\_grad}}
\definecolor{yescolor}{rgb}{0,0.502,0.502}
\definecolor{nocolor}{rgb}{0.502,0,0.502}
\definecolor{partialcolor}{rgb}{0.302,0.302,0.302}
\definecolor{datasetcolor}{rgb}{0.53,0.81,0.98}
\definecolor{group1}{RGB}{0,128,128}
\definecolor{group2}{RGB}{128,0,128}
\definecolor{group3}{RGB}{42,127,255}
\definecolor{annottype}{RGB}{42,127,255}
\definecolor{classap}{RGB}{254,42,42}
\definecolor{clusterap}{RGB}{0,128,128}
\newtheorem{theorem}{Theorem}

\newcommand{\yes}{\cellcolor{yescolor!30}\cmark}
\newcommand{\no}{\cellcolor{nocolor!30}\xmark}
\newcommand{\partially}{\cellcolor{partialcolor!30}\pmark}
\newcommand{\defas}{\coloneqq}
\newsavebox\CBox
\def\textBF#1{\sbox\CBox{#1}\resizebox{\wd\CBox}{\ht\CBox}{\textbf{#1}}}
\newcolumntype{P}[1]{>{\raggedright\arraybackslash}p{#1}}
\newcolumntype{L}[1]{>{\raggedright\let\newline\\\arraybackslash\hspace{0pt}}m{#1}}
\newcolumntype{C}[1]{>{\centering\let\newline\\\arraybackslash\hspace{0pt}}m{#1}}
\newcolumntype{R}[1]{>{\raggedleft\let\newline\\\arraybackslash\hspace{0pt}}m{#1}}
\newcolumntype{Y}{>{\raggedright\arraybackslash}X}
\newcolumntype{Z}{>{\centering\arraybackslash}X}
\newtheorem{prop}{Proposition}
\definecolor{Gray}{gray}{0.9}
\newcommand*\circled[1]{\tikz[baseline=(char.base)]{
            \node[shape=circle,draw,inner sep=0.5pt] (char) {#1};}}
\newcommand{\appropto}{\mathrel{\vcenter{
  \offinterlineskip\halign{\hfil$##$\cr
    \propto\cr\noalign{\kern2pt}\sim\cr\noalign{\kern-2pt}}}}}

\title{
Multi-annotator Deep Learning: \\A Probabilistic Framework for Classification
}


\author{\name Marek Herde \email marek.herde@uni-kassel.de \\
       \addr Intelligent Embedded Systems\\
       University of Kassel\\
       Kassel, Hesse, Germany
       \AND
       \name Denis Huseljic \email dhuseljic@uni-kassel.de \\
       \addr Intelligent Embedded Systems\\
       University of Kassel\\
       Kassel, Hesse, Germany
       \AND
       \name Bernhard Sick \email bsick@uni-kassel.de \\
       \addr Intelligent Embedded Systems\\
       University of Kassel\\
       Kassel, Hesse, Germany}



\begin{document}

\maketitle

\if\anonymous1
    \vspace*{2.25cm}
\fi

\begin{abstract}
Solving complex classification tasks using deep neural networks typically requires large amounts of annotated data.
However, corresponding class labels are noisy when provided by error-prone annotators, e.g., crowdworkers.
Training standard deep neural networks leads to subpar performances in such multi-annotator supervised learning settings. 
We address this issue by presenting a probabilistic training framework named multi-annotator deep learning (MaDL).
A downstream ground truth and an annotator performance model are jointly trained in an end-to-end learning approach.
The ground truth model learns to predict instances' true class labels, while the annotator performance model infers probabilistic estimates of annotators' performances.
A modular network architecture enables us to make varying assumptions regarding annotators' performances, e.g., an optional class or instance dependency. 
Further, we learn annotator embeddings to estimate annotators' densities within a latent space as proxies of their potentially correlated annotations.
Together with a weighted loss function, we improve the learning from correlated annotation patterns.
In a comprehensive evaluation, we examine three research questions about multi-annotator supervised learning.
Our findings show MaDL's state-of-the-art performance and robustness against many correlated, spamming annotators.
\end{abstract}

\if\anonymous1
    \vspace*{2.25cm}
\fi

\section{Introduction}

Supervised \textit{deep neural networks} (DNNs) have achieved great success in many classification tasks~\citep{pouyanfar2018survey}.
In general, these DNNs require a vast amount of annotated data for their successful employment~\citep{algan2021image}.
However, acquiring top-quality class labels as annotations is time-intensive and/or financially expensive~\citep{herde2021survey}.
Moreover, the overall annotation load may exceed a single annotator's workforce~\citep{uma2021learning}. 
For these reasons, multiple non-expert annotators, e.g., crowdworkers, are often tasked with data annotation~\citep{zhang2022knowledge,gilyazev2018active}.
Annotators' missing domain expertise can lead to erroneous annotations, known as noisy labels. 
Further, even expert annotators cannot be assumed to be omniscient because additional factors, such as missing motivation, fatigue, or an annotation task's ambiguity~\citep{vaughan2017making}, may decrease their performances. 
A popular annotation quality assurance option is the acquisition of multiple annotations per data instance with subsequent aggregation~\citep{zhang2016learning}, e.g., via majority rule. 
The aggregated annotations are proxies of the \textit{ground truth} (GT) labels to train DNNs. 
Aggregation techniques operate exclusively on the basis of annotations.
In contrast, model-based techniques use feature or annotator information and thus work well in low-redundancy settings, e.g., with just one annotation per instance~\citep{khetan2018learning}.
Through predictive models, these techniques jointly estimate instances' GT labels and \textit{annotators' performances} (APs) by learning and inferring interdependencies between instances, annotators, and their annotations.
As a result, model-based techniques cannot only predict GT labels and APs for training instances but also for test instances, i.e., they can be applied in transductive and inductive learning settings~\citep{vapnik1995the}.

Despite ongoing research, several \textbf{challenges} still need to be addressed in multi-annotator supervised learning.
To introduce these challenges, we exemplarily look at the task of animal classification in Fig.~\ref{fig:abstract}. 
Eight annotators have been queried to provide annotations for the image of a jaguar.
Such a query is difficult because jaguars have remarkable similarities to other predatory cats, e.g., leopards. 
Accordingly, the obtained annotations indicate a strong disagreement between the leopard and jaguar class.
Simply taking the majority vote of these annotations results in leopard as a wrongly estimated GT label.
Therefore, advanced multi-annotator supervised learning techniques leverage annotation information from other (similar) annotated images to estimate APs.
However, producing accurate AP estimates is difficult because one needs to learn many annotation patterns.
Otherwise, the estimated GT labels will be biased, e.g., when APs are exclusively modeled as a function of annotators.
In this case, we cannot identify annotators who are only knowledgeable about certain classes or regions in the feature space.
Another challenge in multi-annotator supervised learning concerns potential (latent) correlations between annotators.
In our animal annotation task, we illustrate this issue by visualizing three latent groups of similarly behaving annotators.
Although we assume that the annotators work independently of each other, they can still share common or statistically correlated error patterns~\citep{chu2021learning}.
This is particularly problematic if a group of ordinary persons strongly outvotes a much smaller group of professionals.
Considering prior information about the annotators, i.e., annotator features or metadata~\citep{zhang2023learning}, can help to identify these groups.
Moreover, prior information enables a model to inductively learn performances for annotators who have provided few or no annotations.
In our example, zoological interest could be a good indicator for this purpose.
While the inductive learning of APs for annotators poses an additional challenge to the already complex task, its use may be beneficial for further applications, e.g., optimizing the annotator selection in an active learning setting~\citep{herde2021survey} or training annotators to improve their own knowledge~\citep{daniel2018quality}.

\begin{figure}[!ht]
	\centering
    \includegraphics[width=\textwidth]{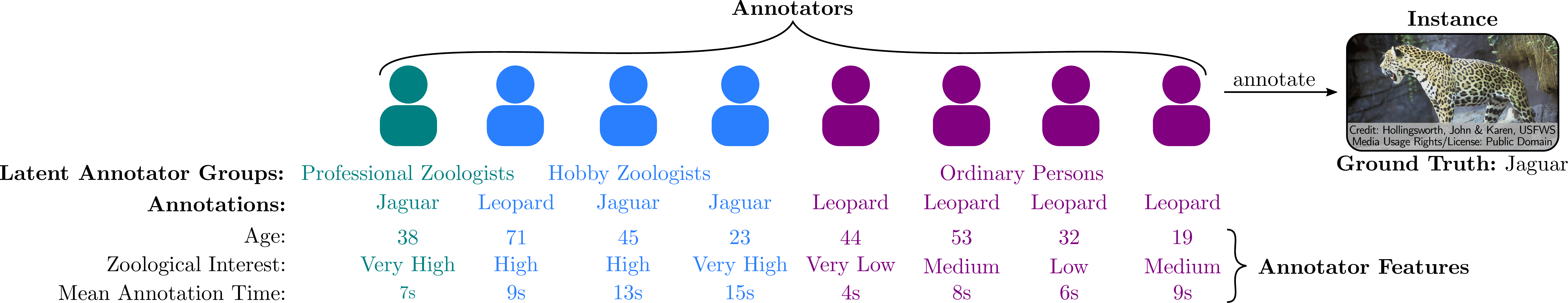}
    \caption{
        Animal classification as an illustration of a multi-annotator supervised learning problem.
    } 
    \label{fig:abstract}
\end{figure}

\begin{tcolorbox}
In this article, we address the above challenges by making the following \textbf{contributions}:
\begin{itemize}
    \item  We propose \textit{multi-annotator deep learning } (MaDL) as a probabilistic and modular classification framework. In an end-to-end training via a weighted maximum-likelihood approach, it learns embeddings of annotators to account for possible correlations among them.
    \item We specify six properties concerning the estimation of APs and application scenarios for categorizing related multi-annotator supervised learning techniques.
    \item Associated with these properties, we formulate three \textit{research questions} (RQs), which we experimentally investigate, including comparisons of MaDL to related techniques.
\end{itemize}
\end{tcolorbox}

The \textbf{remainder of this article} is structured as follows: In Section~\ref{sec:problem-setting}, we formally introduce the problem setting of supervised learning from multiple annotators. Subsequently, we identify central properties of multi-annotator supervised learning techniques as a basis for categorizing related works and pointing out their differences to MaDL in Section~\ref{sec:related-work}.
Section~\ref{sec:multi-annotator-deep-learning} explains the details of our MaDL framework.
Experimental evaluations of MaDL and related techniques are presented regarding RQs associated with the aforementioned properties in Section~\ref{sec:experimental-evaluation}.
Finally, we conclude and give an outlook regarding future research work in Section~\ref{sec:conclusion}.

\section{Problem Setting}
\label{sec:problem-setting}
In this section, we formalize the assumptions and objectives of multi-annotator supervised learning for classification tasks.

\textbf{Prerequisites:} Without loss of generality, we represent a data instance as a vector ${\mathbf{x} \defas (x^{(1)},...,x^{(D)})^\mathrm{T}}$, $D \in \mathbb{N}_{>0}$ in a $D$-dimensional real-valued input or feature space~$\Omega_X  \defas \mathbb{R}^D$. The $N \in \mathbb{N}_{>0}$ instances jointly form a matrix $\mathbf{X} \defas (\mathbf{x}_1,...,\mathbf{x}_N)^\mathrm{T}$ and originate from an unknown probability density function~$\Pr(\mathbf{x})$. 
For each observed instance~$\mathbf{x}_n \sim \Pr(\mathbf{x})$, there is a GT class label ${y_n \in \Omega_Y \defas \{1, \dots, C\}}$. 
Each GT label~$y_n$ is assumed to be drawn from an unknown conditional distribution: $y_n \sim \Pr(y \mid \mathbf{x}_n)$.
We denote the GT labels as the vector $\mathbf{y} \defas (y_1,...,y_N)^\mathrm{T}$.
These GT labels are unobserved since there is no omniscient annotator.
Instead, we consider multiple error-prone annotators.
For the sake of simplicity, we represent an annotator through individual features as a vector $\mathbf{a}_m \in \Omega_A \defas \mathbb{R}^O, O\in \mathbb{N}_{>0}$.
If no prior annotator information is available, the annotators' features are defined through one-hot encoded vectors, i.e., $\Omega_A \defas \{\mathbf{e}_1, \dots, \mathbf{e}_M\}$ with $\mathbf{a}_m \defas \mathbf{e}_m$, to identify each annotator uniquely.
Otherwise, annotator features may provide information specific to the general annotation task, e.g., the zoological interest when annotating animal images or the years of experience in clinical practice when annotating medical data.
Together, the $M \in \mathbb{N}_{>0}$ annotators form a matrix $\mathbf{A} \defas (\mathbf{a}_1, \dots, \mathbf{a}_M)^\mathrm{T}$.
We denote the annotation assigned by annotator~$\mathbf{a}_m$ to instance~$\mathbf{x}_n$ through $z_{nm} \in \Omega_Y \cup \{ \otimes \}$, where $z_{nm} = \otimes$ indicates that an annotation is unobserved, i.e., not provided. 
An observed annotation is assumed to be drawn from an unknown conditional distribution: $z_{nm} \sim \Pr(z \mid \mathbf{x}_n, \mathbf{a}_m, y)$.
Multiple annotations for an instance~$\mathbf{x}_n$ can be summarized as a vector $\mathbf{z}_n \defas (z_{n1},...,z_{nM})^\mathrm{T}$. 
Thereby, the set \mbox{$\mathcal{A}_n \defas \{m \mid m \in \{1, \dots, M\} \wedge z_{nm} \in \Omega_Y\}$} represents the indices of the annotators who assigned an annotation to an instance $\mathbf{x}_n$.
Together, the annotations of all observed instances form the matrix $\mathbf{Z} \defas (\mathbf{z}_1,...,\mathbf{z}_N)^\mathrm{T}$.
We further assume there is a subset of annotators whose annotated instances are sufficient to approximate the GT label distribution, i.e., together, these annotated instances allow us to correctly differentiate between all classes.
Otherwise, supervised learning is hardly possible without explicit prior knowledge about the distributions of GT labels and/or APs. 
Moreover, we expect that the annotators independently decide on instances' annotations and that their APs are time-invariant.

\textbf{Objectives:} Given these prerequisites, the first objective is to train a downstream GT model, which approximates the optimal GT decision function $y_\mathrm{GT}: \Omega_X \rightarrow \Omega_Y$ by minimizing the expected loss across all classes:
\begin{align}
    \label{eq:ground-truth-objective}
    y_{\mathrm{GT}}(\mathbf{x}) \defas \argmin_{y^\prime \in \Omega_Y}\left( \mathbb{E}_{y \mid \mathbf{x}}\left[L_{\mathrm{GT}}(y, y^\prime)\right]\right).
\end{align}
Thereby, we define the loss function ${L_{\text{GT}}: \Omega_Y \times \Omega_Y \rightarrow \{0, 1\}}$ through the zero-one loss:
\begin{equation}
    \label{eq:zero-one-loss}
    L_{\text{GT}}(y, y^\prime) \defas \delta(y \neq y^\prime) \defas \begin{cases} 0, \text{ if } y = y^\prime, \\ 1, \text{ if } y \neq y^\prime. \end{cases}
\end{equation}
As a result, an optimal GT model for classification tasks can accurately predict the GT labels of instances.
\begin{prop}
    \label{prop:gt}
    Assuming $L_\mathrm{GT}$ to be the zero-one loss in Eq.~\ref{eq:zero-one-loss}, the Bayes optimal prediction for Eq.~\ref{eq:ground-truth-objective} is given by:
    \begin{align}
        \label{eq:gt-bayes-optimal}
        y_{\mathrm{GT}}(\mathbf{x}) = \argmax_{y^\prime \in \Omega_Y} \left(\Pr(y^\prime \mid \mathbf{x})\right).
    \end{align}
\end{prop}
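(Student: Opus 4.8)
The plan is to unfold the definition of the expected loss in Eq.~\ref{eq:ground-truth-objective} using the explicit form of the zero-one loss from Eq.~\ref{eq:zero-one-loss}, and show that minimizing this quantity over $y^\prime$ is equivalent to maximizing the posterior $\Pr(y^\prime \mid \mathbf{x})$. Since $\Omega_Y = \{1, \dots, C\}$ is finite, the expectation over $y \mid \mathbf{x}$ is simply a finite sum, so no measure-theoretic care is needed.

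First I would fix an arbitrary candidate label $y^\prime \in \Omega_Y$ and compute
\begin{align}
    \mathbb{E}_{y \mid \mathbf{x}}\left[L_{\mathrm{GT}}(y, y^\prime)\right]
    = \sum_{y \in \Omega_Y} \Pr(y \mid \mathbf{x})\, \delta(y \neq y^\prime)
    = \sum_{y \in \Omega_Y \setminus \{y^\prime\}} \Pr(y \mid \mathbf{x})
    = 1 - \Pr(y^\prime \mid \mathbf{x}),
\end{align}
where the last equality uses that $\Pr(\cdot \mid \mathbf{x})$ is a probability distribution over $\Omega_Y$ and therefore sums to one. This identity is the crux of the argument.

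Next I would substitute this back into the objective: $y_{\mathrm{GT}}(\mathbf{x}) = \argmin_{y^\prime \in \Omega_Y}\big(1 - \Pr(y^\prime \mid \mathbf{x})\big)$. Because $x \mapsto 1 - x$ is strictly decreasing, the minimizer of $1 - \Pr(y^\prime \mid \mathbf{x})$ coincides with the maximizer of $\Pr(y^\prime \mid \mathbf{x})$ over the finite set $\Omega_Y$, which yields Eq.~\ref{eq:gt-bayes-optimal}. I would add a brief remark that the $\argmax$ (and $\argmin$) may not be unique when several classes share the maximal posterior mass; in that case any such label is Bayes optimal, and one fixes a tie-breaking rule (e.g., the smallest index) so that $y_{\mathrm{GT}}$ is well defined as a function.

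There is essentially no substantive obstacle here; the result is classical. The only point requiring a moment's attention is justifying the step $\sum_{y \neq y^\prime} \Pr(y \mid \mathbf{x}) = 1 - \Pr(y^\prime \mid \mathbf{x})$, which relies on the finiteness of $\Omega_Y$ and on $\Pr(\cdot \mid \mathbf{x})$ being a genuine conditional probability mass function over the classes — both guaranteed by the prerequisites in Section~\ref{sec:problem-setting}.
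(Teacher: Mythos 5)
Your proposal is correct and follows essentially the same route as the paper's proof: expand the conditional expectation as a finite sum, use $\sum_{y \neq y^\prime} \Pr(y \mid \mathbf{x}) = 1 - \Pr(y^\prime \mid \mathbf{x})$, and convert the $\argmin$ into an $\argmax$. The added remark on tie-breaking is a harmless refinement the paper omits.
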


When learning from multiple annotators, the APs are further quantities of interest. 
Therefore, the second objective is to train an AP model, which approximates the optimal AP decision function ${y_{\mathrm{AP}}: \Omega_X \times \Omega_A \rightarrow \{0, 1\}}$ by minimizing the following expected loss:
\begin{equation}
    \label{eq:annotator-performance-objective}
    y_{\mathrm{AP}}(\mathbf{x}, \mathbf{a}) \defas \argmin_{y^\prime \in \{0, 1\}}\left(\mathbb{E}_{y \mid \mathbf{x}}\left[\mathbb{E}_{z \mid \mathbf{x}, \mathbf{a}, y}\left[L_{\mathrm{AP}}\left(y^\prime, L_{\mathrm{GT}}\left(y, z\right)\right)\right]\right]\right).
\end{equation}
Defining $L_{\mathrm{AP}}$ and $L_{\mathrm{GT}}$ as zero-one loss, an optimal AP model for classification tasks can accurately predict the zero-one loss of annotator's class labels, i.e., whether an annotator $\mathbf{a}$ provides a false, i.e., $y_{\mathrm{AP}}(\mathbf{x}, \mathbf{a})=1$, or correct, i.e., $y_{\mathrm{AP}}(\mathbf{x}, \mathbf{a})=0$, class label for an instance $\mathbf{x}$.
\begin{prop}
    \label{prop:ap}
    Assuming both $L_{\mathrm{AP}}$ and $L_{\mathrm{GT}}$ to be the zero-one loss, as defined in Eq.~\ref{eq:zero-one-loss}, the Bayes optimal prediction for Eq.~\ref{eq:annotator-performance-objective} is given by:
    \begin{align}
        \label{eq:ap-bayes-optimal}
        y_{\mathrm{AP}}(\mathbf{x}, \mathbf{a}) = \delta\left(\sum_{y \in \Omega_Y} \Pr(y \mid \mathbf{x}) \Pr(y \mid \mathbf{x}, \mathbf{a}, y) < 0.5\right).
    \end{align}
\end{prop}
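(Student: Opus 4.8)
The plan is to reduce the nested minimization in Eq.~\ref{eq:annotator-performance-objective} to the elementary Bayes rule for binary classification, in the same spirit as the argument behind Proposition~\ref{prop:gt}. First I would observe that, conditional on $\mathbf{x}$ and $\mathbf{a}$, the quantity $W \defas L_{\mathrm{GT}}(y, z) = \delta(y \neq z)$ is a $\{0,1\}$-valued random variable whose law is the push-forward of first drawing $y \sim \Pr(y \mid \mathbf{x})$ and then $z \sim \Pr(z \mid \mathbf{x}, \mathbf{a}, y)$. Hence the two stacked expectations $\mathbb{E}_{y \mid \mathbf{x}}[\mathbb{E}_{z \mid \mathbf{x}, \mathbf{a}, y}[\cdot]]$ collapse into a single expectation over $W$, and, since $L_{\mathrm{AP}}$ is also the zero-one loss of Eq.~\ref{eq:zero-one-loss}, the objective rewrites as $y_{\mathrm{AP}}(\mathbf{x},\mathbf{a}) = \argmin_{y' \in \{0,1\}} \mathbb{E}[\delta(y' \neq W) \mid \mathbf{x}, \mathbf{a}] = \argmin_{y' \in \{0,1\}} \Pr(W \neq y' \mid \mathbf{x}, \mathbf{a})$.

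Next I would solve this one-variable minimization explicitly. Because $\Pr(W \neq y' \mid \mathbf{x}, \mathbf{a}) = 1 - \Pr(W = y' \mid \mathbf{x}, \mathbf{a})$, the minimizer is the more probable outcome: predict $1$ when $\Pr(W = 1 \mid \mathbf{x}, \mathbf{a}) > \Pr(W = 0 \mid \mathbf{x}, \mathbf{a})$ and $0$ otherwise. As $W$ is binary, the condition $\Pr(W = 1 \mid \mathbf{x}, \mathbf{a}) > \Pr(W = 0 \mid \mathbf{x}, \mathbf{a})$ is equivalent to $\Pr(W = 0 \mid \mathbf{x}, \mathbf{a}) < 0.5$, giving $y_{\mathrm{AP}}(\mathbf{x}, \mathbf{a}) = \delta\big(\Pr(W = 0 \mid \mathbf{x}, \mathbf{a}) < 0.5\big)$; ties are resolved toward $0$, which both matches the strict inequality in Eq.~\ref{eq:ap-bayes-optimal} and leaves the attained expected loss unchanged.

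Finally I would unfold $\Pr(W = 0 \mid \mathbf{x}, \mathbf{a})$, the probability that the annotator is correct, by the law of total probability over the GT label. Conditioning on $y$ and using $\{W = 0\} = \{z = y\}$ yields $\Pr(W = 0 \mid \mathbf{x}, \mathbf{a}) = \sum_{y \in \Omega_Y} \Pr(y \mid \mathbf{x})\, \Pr(z = y \mid \mathbf{x}, \mathbf{a}, y)$, which is precisely the sum inside $\delta(\cdot)$ in Eq.~\ref{eq:ap-bayes-optimal} once one reads the paper's shorthand $\Pr(y \mid \mathbf{x}, \mathbf{a}, y)$ as the term $\Pr(z = y \mid \mathbf{x}, \mathbf{a}, y)$. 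Substituting this identity into the expression from the previous step gives the claim. The argument is essentially bookkeeping; the only step requiring genuine care is the collapse of the two nested expectations into one expectation over the Bernoulli loss variable $W$ (so that the standard binary Bayes rule applies), together with keeping the tie-breaking convention aligned with the direction of the inequality in Eq.~\ref{eq:ap-bayes-optimal}.
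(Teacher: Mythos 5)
Your proposal is correct and is essentially the paper's own argument in a cleaner wrapper: the paper expands the nested expectations as an explicit double sum, splits the inner sum over $z$ into the cases $z=y$ and $z\neq y$, and compares the two candidate predictions, which is exactly the computation you package as ``apply the binary Bayes rule to the Bernoulli correctness indicator $W$.'' Your explicit handling of the tie-breaking convention (resolving ties toward $0$ to match the strict inequality in Eq.~\ref{eq:ap-bayes-optimal}) is a small point the paper leaves implicit, but nothing of substance differs.
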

 We refer to Appendix~\ref{app:proofs} for the proofs of Proposition~\ref{prop:gt} and Proposition~\ref{prop:ap}.

\section{Related Work}
\label{sec:related-work}
This section discusses existing multi-annotator supervised learning techniques targeting our problem setting of Section~\ref{sec:problem-setting}. 
Since we focus on the AP next to the GT estimation, we restrict our discussion to techniques capable of estimating both target types.
In this context, we analyze related research regarding three aspects, i.e., GT models, AP models, and algorithms for training these models.

\textbf{Ground truth model:} The first multi-annotator supervised learning techniques employed logistic regression models~\citep{raykar2010learning,kajino2012convex,rodrigues2013learning,yan2014learning} for classification. 
Later, different kernel-based variants of GT models, e.g., Gaussian processes, were developed~\citep{rodrigues2014gaussian,long2016joint,gil2021learning}.
\cite{rodrigues2017learning} focused on documents and extended topic models to the multi-annotator setting.
More recently, several techniques were proposed to train DNNs for large-scale and especially image classification tasks with noisy annotations \citep{albarqouni2016aggnet,guan2018said,khetan2018learning,rodrigues2018deep,yang2018leveraging,tanno2019learning,cao2018maxmig,platanios2020learning,le2020disentangling,gil2021regularized,ruhling2021end,chu2021learning,li2022beyond,wei2022deep,gao2022learning}. MaDL follows this line of work and also employs a (D)NN as the GT model.

\textbf{Annotator performance model:} An AP model is typically seen as an auxiliary part of the GT model since it provides AP estimates for increasing the GT model's performance.
In this article, we reframe an AP model's use in a more general context because accurately assessing APs can be crucial in improving several applications, e.g., human-in-the-loop processes~\citep{herde2021survey} or knowledge tracing~\citep{piech2015deep}.
For this reason, we analyze existing AP models regarding six properties, which we identified as relevant while reviewing literature about multi-annotator supervised learning.
\begin{description}
    \item \textbf{(P1) Class-dependent annotator performance:} The simplest AP representation is an overall accuracy value per annotator. On the one hand, AP models estimating such accuracy values have low complexity and thus do not overfit~\citep{rodrigues2013learning,long2016joint}. On the other hand, they may be overly general and cannot assess APs on more granular levels.
    Therefore, many other AP models assume a dependency between APs and instances' GT labels. 
    Class-dependent AP models typically estimate confusion matrices~\citep{raykar2010learning,rodrigues2014gaussian,rodrigues2017learning,khetan2018learning,tanno2019learning,platanios2020learning,gao2022learning,li2022beyond}, which indicate annotator-specific probabilities of mistaking one class for another, e.g., recognizing a jaguar as a leopard. Alternatively, weights of annotation aggregation functions~\citep{cao2018maxmig,ruhling2021end} or noise-adaption layers~\citep{rodrigues2018deep,chu2021learning,wei2022deep} can be interpreted as non-probabilistic versions of confusion matrices. MaDL estimates probabilistic confusion matrices or less complex approximations, e.g., the elements on their diagonals.
    \item \textbf{(P2) Instance-dependent annotator performance:} In many real-world applications, APs are additionally instance-dependent~\citep{yan2014learning} because instances of the same class can strongly vary in their feature values. For example, recognizing animals in blurry images is more difficult than in high-resolution images. Hence, several AP models estimate the probability of obtaining a correct annotation as a function of instances and annotators~\citep{kajino2012convex,yan2014learning,guan2018said,yang2018leveraging,gil2021learning,gil2021regularized}. Combining instance- and class-dependent APs results in the most complex AP models, which estimate a confusion matrix per instance-annotator pair~\citep{platanios2020learning,le2020disentangling,ruhling2021end,chu2021learning,gao2022learning,li2022beyond}. MaDL also employs an AP model of this type. However, it optionally allows dropping the instance and class dependency, which can benefit classification tasks where each annotator provides only a few annotations. 
    \item \textbf{(P3) Annotator correlations:} Although most techniques assume that annotators do not collaborate, they can still have correlations regarding their annotation patterns, e.g., by sharing statistically correlated error patterns~\citep{chu2021learning}. \cite{gil2021learning} proposed a kernel-based approach where a matrix quantifies such correlations for all pairs of annotators. Inspired by weak supervision, \cite{cao2018maxmig} and \cite{ruhling2021end} employ an aggregation function that takes all annotations per instance as input to model annotator correlations. \cite{gil2021regularized} introduce a regularized chained DNN whose weights encode correlations. \cite{wei2022deep} jointly model the annotations of all annotators as outputs and thus take account of potential correlated mistakes. \cite{chu2021learning} consider common annotation noise through a noise adaptation layer shared across annotators. Moreover, similar to our MaDL framework, they learn embeddings of annotators. Going beyond, MaDL exploits these embeddings to determine annotator correlations.
    \item \textbf{(P4) Robustness to spamming annotators:} Especially on crowdsourcing platforms, there have been several reports of workers spamming annotations~\citep{vuurens2011much}, e.g., by randomly guessing or permanently providing the same annotation. Such spamming annotators can strongly harm the learning process. As a result, multi-annotator supervised learning techniques are ideally robust against these types of annotation noise. \cite{cao2018maxmig} employ an information-theoretic approach to separate expert annotators from possibly correlated spamming annotators. \cite{ruhling2021end} empirically demonstrated that their weak-supervised learning technique is robust to large numbers of randomly guessing annotators. MaDL ensures this robustness by training via a weighted likelihood function, assigning high weights to independent annotators whose annotation patterns have no or only slight statistical correlations to the patterns of other annotators.
    \item \textbf{(P5) Prior annotator information:} On crowdsourcing platforms, requesters may acquire prior information about annotators~\citep{daniel2018quality}, e.g., through surveys, annotation quality tests, or publicly available profiles. Several existing AP models leverage such information to improve learning. Thereby, conjugate prior probability distributions, e.g., Dirichlet distributions, represent a straightforward way of including prior estimates of class-dependent accuracies~\citep{raykar2010learning,albarqouni2016aggnet,rodrigues2017learning}. Other techniques~\citep{platanios2020learning,chu2021learning}, including our MaDL framework, do not directly expect prior accuracy estimates but work with all types of prior information that can be represented as vectors of annotator features.
    \item \textbf{(P6) Inductive learning of annotator performance:} Accurate AP estimates can be beneficial in various applications, e.g., guiding an active learning strategy to select accurate annotators~\citep{yang2018leveraging}. For this purpose, it is necessary that a multi-annotator supervised learning technique can inductively infer APs for non-annotated instances. Moreover, an annotation process is often a dynamic system where annotators leave and enter. Hence, it is highly interesting to inductively estimate the performances of newly entered annotators, e.g., through annotator features as used by \citealt{platanios2020learning} and MaDL.
\end{description}

\textbf{Training:}
Several multi-annotator supervised learning techniques employ the \textit{expectation-maximization} (EM) algorithm for training~\citep{raykar2010learning,rodrigues2013learning,yan2014learning,long2016joint,albarqouni2016aggnet,guan2018said,khetan2018learning,yang2018leveraging,platanios2020learning}.
GT labels are modeled as latent variables and estimated during the E step, while the GT and AP models' parameters are optimized during the M step. 
The exact optimization in the M step depends on the underlying models. 
Typically, a variant of \textit{gradient descent} (GD), e.g., quasi-Newton methods, is employed, or a closed-form solution exists, e.g., for AP models with instance-independent AP estimates.
Other techniques take a Bayesian view of the models' parameters and therefore resort to \textit{expectation propagation}~(EP)~\citep{rodrigues2014gaussian,long2016joint} or \textit{variational inference} (VI)~\citep{rodrigues2017learning}.
As approximate inference methods are computationally expensive and may lead to suboptimal results, several end-to-end training algorithms have been proposed.
\cite{gil2021learning} introduced a localized kernel alignment-based relevance analysis that optimizes via GD.
Through a regularization term, penalizing differences between GT and AP model parameters, \citeauthor{kajino2012convex} formulated a convex loss function for logistic regression models. 
\cite{rodrigues2018deep}, \cite{gil2021regularized}, and \cite{wei2022deep} jointly train the GT and AP models by combining them into a single DNN with noise adaption layers.
\cite{chu2021learning} follow a similar approach with two types of noise adaption layers: one shared across annotators and one individual for each annotator.
\cite{gil2021regularized} employ a regularized chained DNN to estimate GT labels and AP performances jointly.
In favor of probabilistic AP estimates, \cite{tanno2019learning}, \cite{le2020disentangling}, \cite{li2022beyond}, and MaDL avoid noise adaption layers but employ loss functions suited for end-to-end learning.
\cite{cao2018maxmig} and \cite{ruhling2021end} jointly learn an aggregation function in combination with the AP and GT models.

Table~\ref{tab:related-work} summarizes and completes the aforementioned discussion by categorizing multi-annotator supervised learning techniques according to their GT model, AP model, and training algorithm. 
Thereby, the AP model is characterized by the six previously discussed properties (P1--P6).
We assign $\cmark$ if a property is supported, $\xmark$ if not supported, and $\pmark$ if partially supported.
More precisely, $\pmark$ is assigned to property P5 if the technique can include prior annotator information but needs a few adjustments and to property P6 if the technique requires some architectural changes to learn the performances of new annotators inductively.
For property P4, a $\cmark$ indicates that the authors have shown that their proposed technique learns in the presence of many spamming annotators.

\begin{table}[!ht]
\setlength\tabcolsep{7.1pt}
\centering
\footnotesize
\caption{Literature categorization of multi-annotator supervised learning techniques.}
\label{tab:related-work}
\begin{tabular}{|l|c|c|c|c|c|c|c|c|} 
\toprule
\multirow{2}{*}{\hspace{3.5em}\textbf{Reference}} & \multirow{2}{*}{\textbf{Ground Truth Model}}                                                    & \multirow{2}{*}{\textbf{Training}}   & \multicolumn{6}{c|}{\textbf{Annotator Performance Model}}                                                                                                                       \\ 
\hhline{|~|~|~|------}
& & &  \raisebox{-0.1em}[0pt][0pt]{\textbf{P1}}                       & \raisebox{-0.1em}[0pt][0pt]{\textbf{P2}}                                                                         & \raisebox{-0.1em}[0pt][0pt]{\textbf{P3}}                       & \raisebox{-0.1em}[0pt][0pt]{\textbf{P4}}                       & \raisebox{-0.1em}[0pt][0pt]{\textbf{P5}}                       & \raisebox{-0.1em}[0pt][0pt]{\textbf{P6}}   \\
\midrule
\hline
\hspace{-0.5em}\cite{kajino2012convex} & \multirow{4}{*}{Logistic Regression Model} & GD                                                                                                                                        & \no  & \yes                                                     & \no  & \no & \no & \no   \\ 
\hhline{-|~|-------}
                                                                               \hspace{-0.5em}\cite{raykar2010learning} & & \multirow{3}{*}{EM \& GD}                                      & \yes & \no &  \no  & \no  & \yes & \no   \\ 
\hhline{-|~|~|------}
                                                                               \hspace{-0.5em}\cite{rodrigues2013learning} &   &                                                                                                                                                 & \no  & \no                                                    & \no  & \no  & \no  & \no   \\ 
\hhline{-|~|~|------}
                                                                               \hspace{-0.5em}\cite{yan2014learning}  &                                                                                                                      &                                    & \no  & \yes                                                    & \no  & \no & \partially  & \partially   \\ 
\hhline{|---------|}
\hspace{-0.5em}\cite{rodrigues2017learning} & Topic Model                                                                    &     VI \& GD                                                                                                                                                & \yes & \no                                                     & \no  & \no & \yes & \no   \\ 
\hline
\hspace{-0.5em}\cite{rodrigues2014gaussian} & \multirow{3}{*}{Kernel-based Model}  & EP                                                                                                                                                  & \yes & \no                                                     & \no  & \no  & \no  & \no   \\ 
\hhline{-|~|-------|}
                                                                               \hspace{-0.5em}\cite{long2016joint} & & EM \& EP \& GD                                                                                                                                                      & \no  & \no                                                    & \no  & \no  & \no  & \no   \\ 
\hhline{-|~|-------|}
                                                                               \hspace{-0.5em}\cite{gil2021learning} & & GD                                                                                                                                               & \no  & \yes                                                     & \yes & \no & \no  & \no   \\ 
\hline
\hspace{-0.5em}\cite{albarqouni2016aggnet} & \multirow{16}{*}{(Deep) Neural Network}                                          & \multirow{4}{*}{EM \& GD}                               & \yes & \no                                                   & \no & \no & \yes & \no   \\ 
\hhline{-|~|~|------|}
                                                                               \hspace{-0.5em}\cite{yang2018leveraging} &                                                                                                                      &                                     & \no  & \yes                                                     & \no  & \no & \partially & \partially   \\ 
\hhline{-|~|~|------|}
                                                                              \hspace{-0.5em}\cite{khetan2018learning} & &                                                                                                                                                       & \yes & \no                                                      & \no  & \no & \partially  & \partially   \\ 
\hhline{-|~|~|------|}
                                                                               \hspace{-0.5em}\cite{platanios2020learning}  &                                                                                                                      &                                & \yes & \yes                                                    & \no  & \no  & \yes  & \yes  \\ 
\hhline{-|~|-------|}
                                                                               \hspace{-0.5em}\cite{rodrigues2018deep} &                                                                                                                      &    \multirow{12}{*}{GD}                            & \yes & \no                                                      & \no  & \no  & \no & \no   \\
                                                                               
\hhline{-|~|~|------|}
\hspace{-0.5em}\cite{guan2018said}  & &                                                                                                                         & \no  & \yes                                                     & \no  & \no  & \no & \no   \\ 
\hhline{-|~|~|------|}
                                                                               \hspace{-0.5em}\cite{tanno2019learning} &                                                                                                                      &                                    & \yes & \no                                                      & \no  & \no  & \partially  & \partially   \\ 
\hhline{-|~|~|------|}
                                                                               \hspace{-0.5em}\cite{cao2018maxmig} &                                                                                                                      &                                      & \yes & \no                                                      & \yes & \yes & \no & \no   \\ 
\hhline{-|~|~|------|}
                                                                               \hspace{-0.5em}\cite{le2020disentangling} &      
                                                                                     &                                      & \yes & \yes                                                     & \yes & \no  & \partially & \partially   \\ 
\hhline{-|~|~|------|}
                                                                               \hspace{-0.5em}\cite{gil2021regularized} &  
                                                                               &                             & \no  & \yes                                                    & \yes  & \yes & \no   & \no   \\ 
\hhline{-|~|~|------|}
                                                                               \hspace{-0.5em}\cite{ruhling2021end} &                                                                                                                      &                                  & \yes & \yes                                                    & \yes & \yes & \no  & \no   \\ 
\hhline{-|~|~|------|}
                                                                               \hspace{-0.5em}\cite{chu2021learning} &                                                                                                                      &                                    & \yes & \yes                                                     & \yes & \yes & \yes  & \no   \\ 
\hhline{-|~|~|------|}
                                                                               \hspace{-0.5em}\cite{li2022beyond} &                                                                                                                      &                                    & \yes & \yes                                                     & \no & \no & \partially  & \partially   \\ 
\hhline{-|~|~|------|}
                                                                               \hspace{-0.5em}\cite{wei2022deep} &                                                                                                                      &                                     &\yes  &\no                                                      &\yes  &\no &\no  &\no   \\
\hhline{-|~|~|------|}
                                                                               \hspace{-0.5em}\cite{gao2022learning} &                                                                                                                      &                                     &\yes  &\yes                                                      &\no &\no &\partially &\partially   \\                                                                               
\hhline{-|~|~|------|}
                                                                               \hspace{-0.5em}MaDL (2023) &                                                                                                                      &                                     & \yes & \yes                                                   & \yes & \yes & \yes & \yes  \\
\hline
\bottomrule
\end{tabular}
\end{table}

\section{Multi-annotator Deep Learning}
\label{sec:multi-annotator-deep-learning}
In this section, we present our modular probabilistic MaDL framework.
We start with a description of its underlying probabilistic model.
Subsequently, we introduce its GT and AP models' architectures.
Finally, we explain our end-to-end training algorithm.

\subsection{Probabilistic Model}
\label{subsec:probabilistic-model}
The four nodes in Fig.~\ref{fig:probabilistic-model} depict the random variables of an instance~$\mathbf{x}$, a GT label~$y$, an annotator~$\mathbf{a}$, and an annotation~$z$.
Thereby, arrows indicate probabilistic dependencies among each other.
The random variable of an instance $\mathbf{x}$ and an annotator $\mathbf{a}$ have no incoming arrows and thus no causal dependencies on other random variables.
In contrast, the distribution of a latent GT label $y$ depends on its associated instance~$\mathbf{x}$. 
For classification problems, the probability of observing $y = c$ as GT label of an instance $\mathbf{x}$
can be modeled through a categorical distribution:
\begin{align}
    \Pr(y = c \mid \mathbf{x}) \defas \mathrm{Cat}(y = c \mid \boldsymbol{p}(\mathbf{x})) \defas \prod_{k=1}^C \left(p^{(k)}(\mathbf{x})\right)^{\delta(k=c)} = p^{(c)}(\mathbf{x}),
\end{align}
where $\mathbf{p}: \Omega_X \rightarrow  \Delta \defas \{\mathbf{p} \in [0, 1]^C \mid \sum_{c=1}^{C} p^{(c)} = 1 \}$ denotes the function outputting an instance's true class-membership probabilities.
The outcome of an annotation process may depend on the annotator's features, an instance's features, and the latent GT label.
A function $\mathbf{P}: \Omega_X \times \Omega_A \rightarrow [0, 1]^{C \times C}$ outputting a row-wise normalized confusion matrix per instance-annotator pair can capture these dependencies.
The probability that an annotator~$\mathbf{a}$ annotates an instance~$\mathbf{x}$ of class $y=c$ with the annotation~$z=k$ can then be modeled through a categorical distribution:
\begin{align}
    \label{eq:likelihood}
    \Pr(z=k \mid \mathbf{x}, \mathbf{a}, y=c) &\defas \text{Cat}\left(z = k \,\middle\vert\, \mathbf{P}^{(c,:)}(\mathbf{x}, \mathbf{a})\right) \defas \prod_{l=1}^C \left(P^{(c,l)}(\mathbf{x}, \mathbf{a})\right)^{\delta(l=k)} = P^{(c,k)}(\mathbf{x}, \mathbf{a}),
\end{align}
where the column vector $\mathbf{P}^{(c,:)}(\mathbf{x}, \mathbf{a}) \in \Delta$ corresponds to the $c$-th row of the confusion matrix $\mathbf{P}(\mathbf{x}, \mathbf{a})$.

\begin{figure}[!h]
    \centering
    \includegraphics[width=\textwidth]{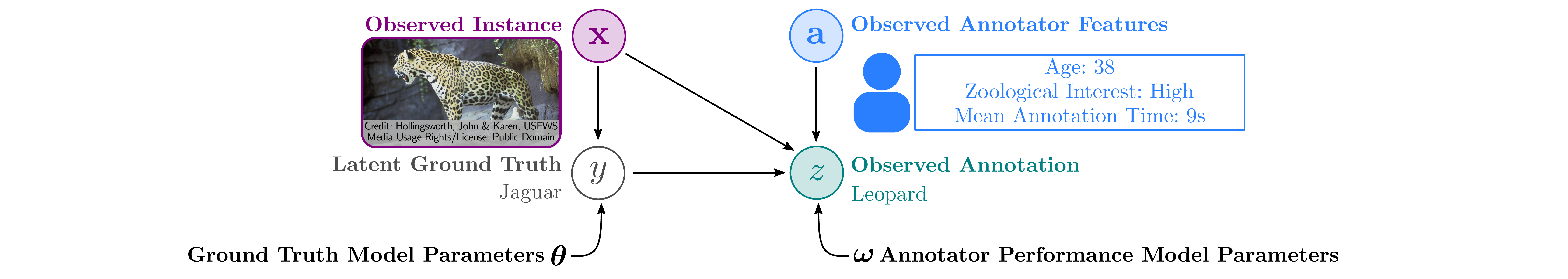}
	\caption{Probabilistic graphical model of MaDL.}
	\label{fig:probabilistic-model}
\end{figure}

\subsection{Model Architectures}
Now, we introduce how MaDL's GT and AP models are designed to approximate the functions of true class-membership probabilities~$\mathbf{p}$ and true confusion matrices~$\mathbf{P}$ for the respective instances and annotators.
Fig.~\ref{fig:architectures} illustrates the architecture of the GT (purple) and AP (green) models within our MaDL framework.
Solid lines indicate mandatory components, while dashed lines express optional ones.

\begin{figure}[!t]
    \includegraphics[width=\textwidth]{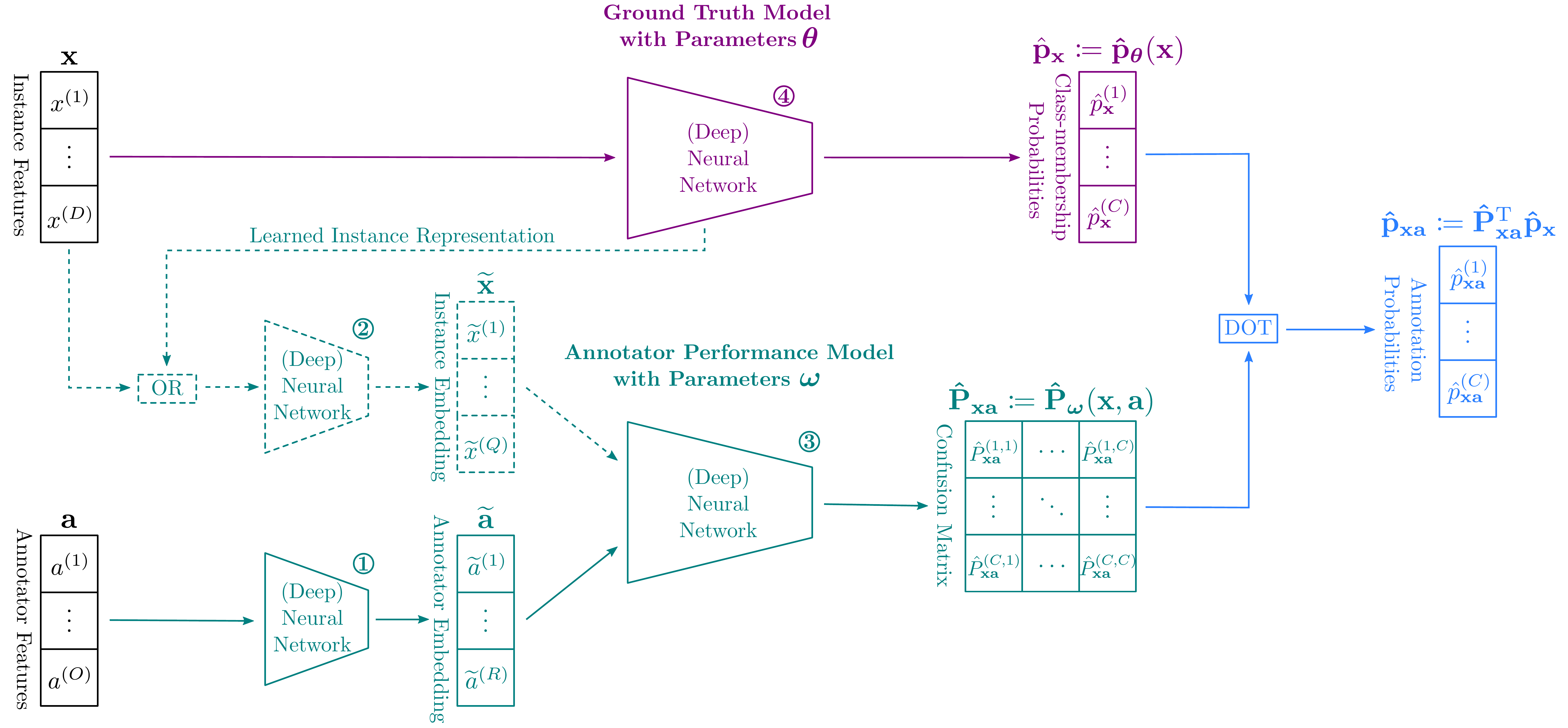}
	\caption{Architectures of MaDL's GT and AP models.}
	\label{fig:architectures}
\end{figure}

The GT model with parameters $\boldsymbol{\theta}$ is a (D)NN (cf. \circled{4} in Fig.~\ref{fig:architectures}), which takes an instance $\mathbf{x}$ as input to approximate its true class-membership probabilities $\mathbf{p}(\mathbf{x})$ via $\mathbf{\hat{p}}_{ \boldsymbol{\theta}}(\mathbf{x})$. We define its decision function in analogy to the Bayes optimal prediction in Eq.~\ref{eq:gt-bayes-optimal} through
\begin{align}
    \label{eq:class-prediction}
    \hat{y}_{\boldsymbol{\theta}}(\mathbf{x}) \defas \argmax_{y \in \Omega_Y} \left(\hat{p}_{\boldsymbol{\theta}}^{(y)}(\mathbf{x})\right).
\end{align}

\begin{wrapfigure}[17]{r}[0pt]{0.25\textwidth}
  \begin{center}
    \includegraphics[width=0.25\textwidth]{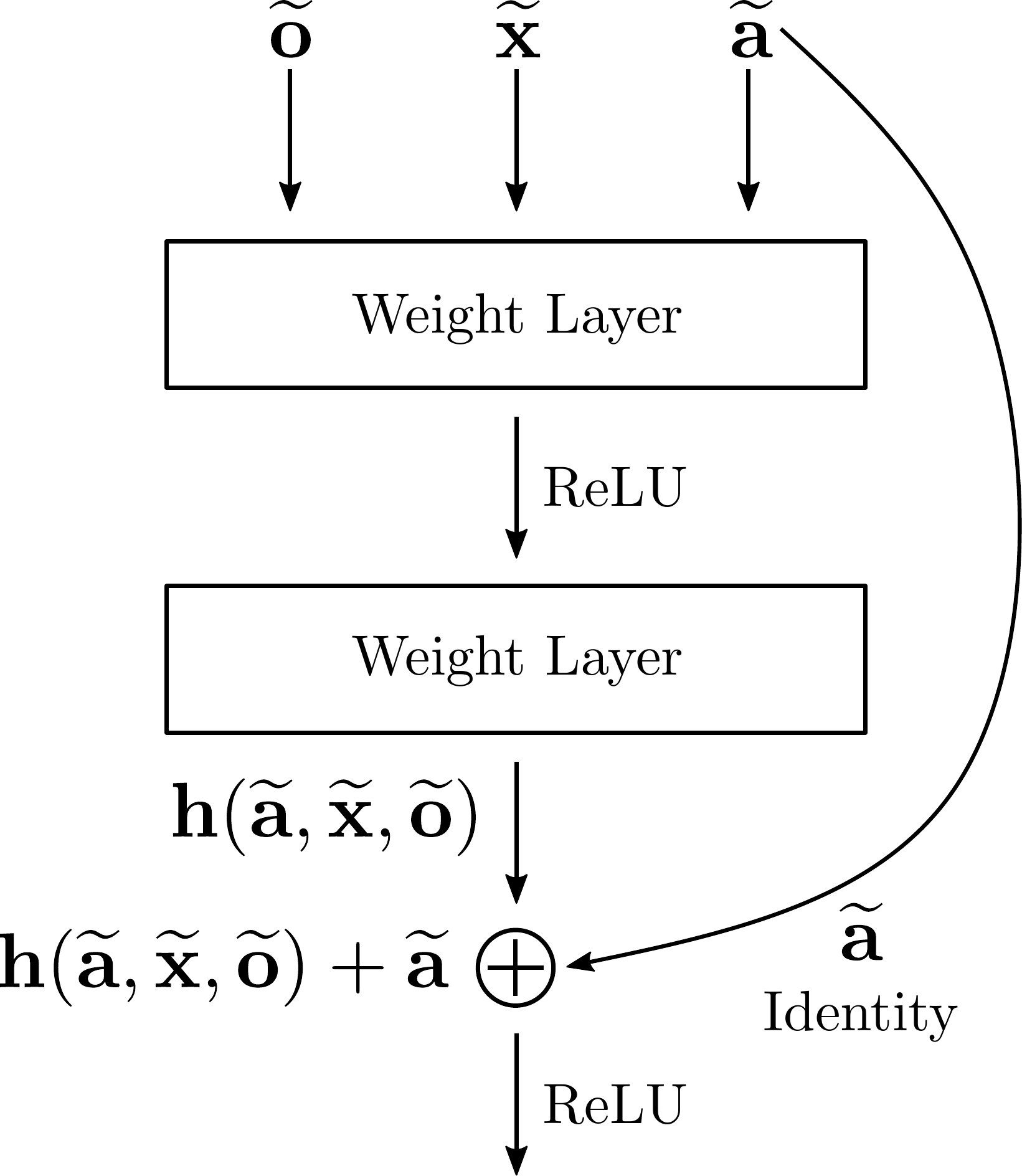}
  \end{center}
  \caption{MaDL's residual block combining annotator and instance embedding.}
  \label{fig:residual-block}
\end{wrapfigure}
The architecture of the AP model with parameters $\boldsymbol{\omega}$ comprises mandatory and optional components.
We start by describing its most general form, which consists of three (D)NNs and estimates \mbox{annotator-}, class-, and instance-dependent APs.
Annotator features $\mathbf{a}$ are propagated through a first (D)NN (cf.~\circled{1} in Fig.~\ref{fig:architectures}) to learn an annotator embedding ${\mathbf{\widetilde{a}} \in \mathbb{R}^R, R \in \mathbb{N}_{\geq 1}}$.
During training, we will use such embeddings for quantifying correlations between annotators.
Analogously, we propagate raw instance features $\mathbf{x}$ or a representation learned by the GT model's hidden layers through a second (D)NN (cf.~\circled{2} in Fig.~\ref{fig:architectures}) for learning an instance embedding ${\mathbf{\widetilde{x}}  \in \mathbb{R}^Q, Q \in \mathbb{N}_{\geq 1}}$.
Subsequently, instance and annotator embeddings $\mathbf{\widetilde{x}}$ and $\mathbf{\widetilde{a}}$ are combined through a third and final (D)NN (cf.~\circled{3} in Fig.~\ref{fig:architectures}) for approximating the true confusion matrix $\mathbf{P}(\mathbf{x}, \mathbf{a})$ via  $\mathbf{\hat{P}}_{\boldsymbol{\omega}}(\mathbf{x}, \mathbf{a})$.
Various architectures for combining embeddings have already been proposed in the literature~\citep{fiedler2021simple}. 
We adopt a solution from recommender systems where often latent factors of users and items are combined~\citep{zhang2019deep}.
Concretely, in DNN~\circled{3}, we use an outer product-based layer outputting $\mathbf{\widetilde{o}} \in \mathbb{R}^F, F \in \mathbb{N}_{\geq1}$ to model the interactions between instance and annotator embeddings~\citep{qu2016product}.
The concatenation of $\mathbf{\widetilde{a}}, \mathbf{\widetilde{x}}$, and $\mathbf{\widetilde{o}}$ is propagated through a residual block~\citep{he2016deep}, whose architecture is visualized in Fig.~\ref{fig:residual-block}.
There, we add only the annotator embedding $\mathbf{\widetilde{a}}$ to the learned mapping $\mathbf{h}(\mathbf{\widetilde{a}}, \mathbf{\widetilde{x}}, \mathbf{\widetilde{o}}) \in \mathbb{R}^R$.
The motivation behind this modification is that the annotator embeddings, informing about an annotator's individuality, are likely to be the most influential inputs for estimating confusion matrices as APs. 
Empirical investigations showed that $R=Q=F=16$ as the embedding size is a robust default.
Finally, we define the AP model's decision function in analogy to the Bayes optimal prediction in Eq.~\ref{eq:ap-bayes-optimal} through
\begin{align}
    \label{eq:ap-prediction}
    \hat{y}_{\boldsymbol{\theta}, \boldsymbol{\omega}}(\mathbf{x}, \mathbf{a}) \defas \delta\left(\sum_{c=1}^C \hat{p}_{\boldsymbol{\theta}}^{(c)}(\mathbf{x}) \cdot \hat{P}_{\boldsymbol{\omega}}^{(c, c)}(\mathbf{x}, \mathbf{a}) < 0.5\right) \defas \delta\left(\underbrace{\hat{p}_{\boldsymbol{\theta}, \boldsymbol{\omega}}(\mathbf{x}, \mathbf{a})}_{\text{predicted correctness probability}} < 0.5\right).
\end{align}
An AP model estimating a confusion matrix per instance-annotator pair can be overly complex if there are only a few annotations per annotator or the number of classes is high~\citep{rodrigues2013learning}.
In such settings, ignoring the instance features as input of the AP model may be beneficial.
Alternatively, we can constrain a confusion matrix's degrees of freedom by reducing the number of output neurons of the AP model.
For example, we might estimate only the diagonal elements of the confusion matrix and assume that the remaining probability mass per row is uniformly distributed. 
Further, we can either estimate each diagonal element individually (corresponding to $C$ output neurons) or approximate them via a single scalar (corresponding to one output neuron). Appendix~\ref{app:practioner-guide} illustrates such confusion matrices with varying degrees of freedom.

\subsection{End-to-end Training}
Given the probabilistic model and accompanying architectures of the GT and AP models, we propose an algorithm for jointly learning their parameters.
A widespread method for training probabilistic models is to maximize the likelihood of the observed data with respect to the model parameters.
Assuming that the joint distributions of annotations~$\mathbf{Z}$ are conditionally independent for given instances $\mathbf{X}$, we can specify the likelihood function as follows:
\begin{equation}
    \label{eq:likelihood-assumption-1}
    \Pr(\mathbf{Z} \mid \mathbf{X}, \mathbf{A}; \boldsymbol{\theta}, \boldsymbol{\omega}) = \prod_{n=1}^N \Pr(\mathbf{z}_n \mid \mathbf{x}_n, \mathbf{A}; \boldsymbol{\theta}, \boldsymbol{\omega}).
\end{equation}
We further expect that the distributions of annotations $\mathbf{z}_n$ for a given instance $\mathbf{x}_n$ are conditionally independent.
Thus, we can simplify the likelihood function:
\begin{align}
    \label{eq:likelihood-assumption-2}
    \Pr(\mathbf{Z} \mid \mathbf{X}, \mathbf{A}; \boldsymbol{\theta}, \boldsymbol{\omega}) &= \prod_{n=1}^N \prod_{m \in \mathcal{A}_n} \Pr(z_{nm} \mid \mathbf{x}_n,  \mathbf{a}_m; \boldsymbol{\theta}, \boldsymbol{\omega}).
\end{align}
Leveraging our probabilistic model in Fig.~\ref{fig:probabilistic-model}, we can express the probability of obtaining a certain annotation as an expectation with respect to an instance's (unknown) GT class label:
\begin{align}
    \label{eq:likelihood-assumption-3}
    \Pr(\mathbf{Z} \mid \mathbf{X}, \mathbf{A}; \boldsymbol{\theta}, \boldsymbol{\omega}) &= \prod_{n=1}^N \prod_{m \in \mathcal{A}_n} \mathbb{E}_{y_n \mid \mathbf{x}_n; \boldsymbol{\theta}} \left[\Pr(z_{nm} \mid \mathbf{x}_n,  \mathbf{a}_m, y_n; \boldsymbol{\omega})\right] \\ 
    &= \prod_{n=1}^N \prod_{m \in \mathcal{A}_n} \left(\sum_{y_n=1}^{C} \Pr(y_n \mid \mathbf{x}_n; \boldsymbol{\theta}) \Pr(z_{nm} \mid \mathbf{x}_n,  \mathbf{a}_m, y_n; \boldsymbol{\omega})\right) \\ 
    &= \prod_{n=1}^N \prod_{m \in \mathcal{A}_n} \mathbf{e}^\mathrm{T}_{z_{nm}}\underbrace{\mathbf{\hat{P}}^\mathrm{T}_{\boldsymbol{\omega}}(\mathbf{x}_n, \mathbf{a}_m)\mathbf{\hat{p}}_{\boldsymbol{\theta}}(\mathbf{x}_n)}_{\text{annotation probabilities}},
\end{align}
where $\mathbf{e}_{z_{nm}}$ denotes the one-hot encoded vector of annotation $z_{nm}$.
Taking the logarithm of this likelihood function and converting the maximization into a minimization problem, we get
\begin{equation}
    \label{eq:negative-log-likelihood}
    L_{\mathbf{X}, \mathbf{A}, \mathbf{Z}}(\boldsymbol{\theta}, \boldsymbol{\omega}) \defas - \sum_{n=1}^N \sum_{m \in \mathcal{A}_n} \ln\left(\mathbf{e}^\mathrm{T}_{z_{nm}}\mathbf{\hat{P}}^\mathrm{T}_{\boldsymbol{\omega}}(\mathbf{x}_n,\mathbf{a}_m)\mathbf{\hat{p}}_{\boldsymbol{\theta}}(\mathbf{x}_n)\right)
\end{equation}
as cross-entropy loss function for learning annotation probabilities by combining the outputs of the GT and AP models (cf. blue components in~Fig.~\ref{fig:architectures}).
Yet, directly employing this loss function for learning may result in poor results for two reasons.

\textbf{Initialization:} Reason number one has been noted by~\cite{tanno2019learning}, who showed that such a loss function cannot ensure the separation of the AP and GT label distributions.
This is because infinite many combinations of class-membership probabilities and confusion matrices perfectly comply with the true annotation probabilities, e.g., by swapping the rows of the confusion matrix as the following example shows:
\begin{align}
    \underbrace{\mathbf{P}^{\mathrm{T}}(\mathbf{x}_n, \mathbf{a}_m)\mathbf{p}(\mathbf{x}_n)}_{\text{true probabilities}} = \begin{pmatrix} 1 & 0 \\ 0 & 1 \end{pmatrix} \begin{pmatrix} 1 \\ 0 \end{pmatrix} = \begin{pmatrix} 1 \\ 0 \end{pmatrix} = \begin{pmatrix} 0 & 1 \\ 1 & 0 \end{pmatrix} \begin{pmatrix} 0 \\ 1 \end{pmatrix} = \underbrace{\mathbf{\hat{P}}^{\mathrm{T}}_{\boldsymbol{\omega}}(\mathbf{x}_n, \mathbf{a}_m)\mathbf{\hat{p}}_{\boldsymbol{\theta}}(\mathbf{x}_n)}_{\text{predicted probabilities}}.
\end{align}
Possible approaches aim at resolving this issue by favoring certain combinations, e.g., diagonally dominant confusion matrices.
Typically, one can achieve this via regularization~\citep{tanno2019learning,le2020disentangling,li2022beyond} and/or suitable initialization of the AP model's parameters~\citep{rodrigues2018deep,wei2022deep}.
We rely on the latter approach because it permits encoding prior knowledge about APs.
Concretely, we approximate an initial confusion matrix for any instance-annotator pair $(\mathbf{x}_n, \mathbf{a}_m)$ through
\begin{align}
    \label{eq:initial-confusion-matrix}
    \mathbf{\hat{P}}_{\boldsymbol{\omega}}(\mathbf{x}_n, \mathbf{a}_m)  
    \defas
    \begin{pmatrix} \texttt{softmax}((\mathbf{v}^\mathrm{T}(\mathbf{x}_n, \mathbf{a}_m)\mathbf{W} + \mathbf{B})^{(1, :)}) \\ \vdots \\ \texttt{softmax}((\mathbf{v}^\mathrm{T}(\mathbf{x}_n, \mathbf{a}_m)\mathbf{W} + \mathbf{B})^{(C, :)}) \end{pmatrix} 
    \approx \eta \mathbf{I}_C + \frac{\left(1-\eta\right)}{C-1} \left(\mathbf{1}_C - \mathbf{I}_C\right),
\end{align}
where $\mathbf{I}_C \in \mathbb{R}^{C \times C}$ denotes an identity matrix, $\mathbf{1}_C \in \mathbb{R}^{C \times C}$ an all-one matrix, and $\eta \in (0, 1)$ the prior probability of obtaining a correct annotation.
For example, in a binary classification problem, the initial confusion matrix would approximately take the following values:
\begin{equation}
    P_{\boldsymbol{\omega}}(\mathbf{x}_n, \mathbf{a}_m) \approx \begin{pmatrix} \eta & 1-\eta \\ 1 - \eta & \eta \end{pmatrix}.
\end{equation}
The outputs of the $\texttt{softmax}$ functions represent the confusion matrix's rows. 
Provided that the initial AP model's last layer's weights $\mathbf{W} \in \mathbb{R}^{H \times C \times C}, H \in \mathbb{N}_{>0}$ satisfy $\mathbf{v}^\mathrm{T}(\mathbf{x}_n, \mathbf{a}_m)\mathbf{W} \approx \mathbf{0}_C \in \mathbb{R}^{C \times C}$ for the hidden representation $\mathbf{v}(\mathbf{x}_n, \mathbf{a}_m) \in \mathbb{R}^{H}$ of each instance-annotator pair, we approximate Eq.~\ref{eq:initial-confusion-matrix} by initializing the biases $\mathbf{B} \in \mathbb{R}^{C \times C}$ of our AP model's output layer via
\begin{align}
    \label{eq:biases}
    \mathbf{B} \defas \ln\left(\frac{\eta \cdot (C -1)}{1-\eta}\right) \mathbf{I}_C.
\end{align}
By default, we set $\eta = 0.8$ to assume trustworthy annotators a priori.
Accordingly, initial class-membership probability estimates are close to the annotation probability estimates.

\begin{figure}[!ht]
    \begin{picture}(100, 180)
            \put(0,0){
                \includegraphics[width=0.375\textwidth]{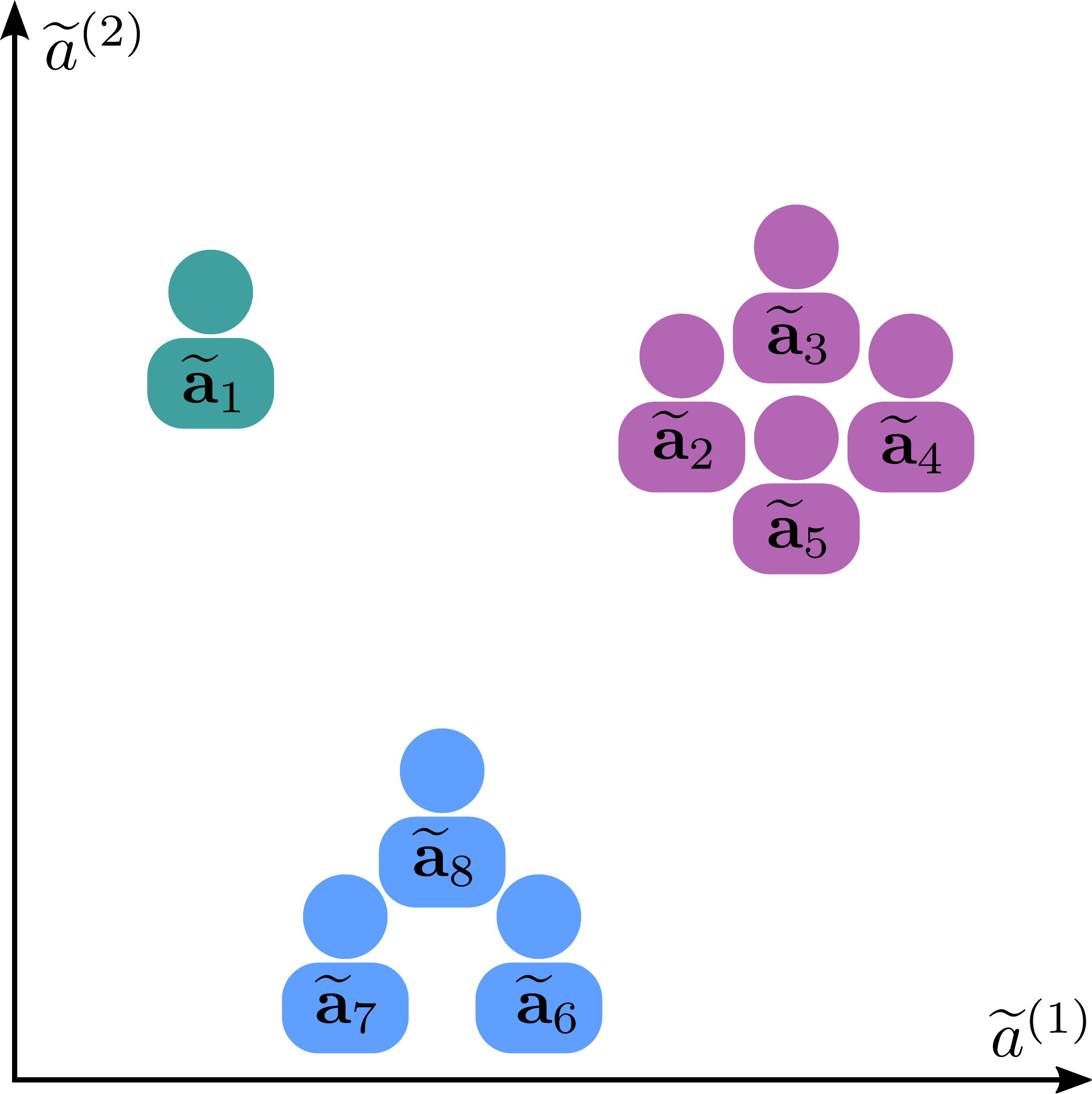}
            }
            \put(210, 160){
                \textBF{Annotator Probability Densities:}
            }
            \put(210, 140){
                \color{group1}{$\Pr(\mathbf{a}_1 \mid \mathbf{A}) \appropto 1$                }
            }
            \put(210, 120){
                \color{group2}{$\Pr(\mathbf{a}_2 \mid \mathbf{A}) \approx \Pr(\mathbf{a}_3 \mid \mathbf{A}) \approx \Pr(\mathbf{a}_4 \mid \mathbf{A}) \approx \Pr(\mathbf{a}_5 \mid \mathbf{A}) \appropto 4$}                
            }
            \put(210, 100){
                \color{group3}{$\Pr(\mathbf{a}_6 \mid \mathbf{A}) \approx \Pr(\mathbf{a}_7 \mid \mathbf{A}) \approx \Pr(\mathbf{a}_8 \mid \mathbf{A}) \appropto 3$}                
            }
            \put(210, 70){
                \textBF{Annotator Weights:}
            }
            \put(210, 50){
                \color{group1}{$w(\mathbf{a}_1) \approx \frac{8}{3}$}                
            }
            \put(210, 30){
                \color{group2}{$w(\mathbf{a}_2) \approx w(\mathbf{a}_3) \approx w(\mathbf{a}_4) \approx w(\mathbf{a}_5) \approx \frac{8}{4 \cdot 3}$}                
            }
            \put(210, 10){
                \color{group3}{$w(\mathbf{a}_6) \approx w(\mathbf{a}_7) \approx w(\mathbf{a}_8) \approx \frac{8}{3 \cdot 3}$}                
            }
    \end{picture}
    \caption{Visualization of annotator embeddings (left) accompanied by an exemplary calculation of annotator probability densities and annotator weights (right).}
    \label{fig:madl-weights}
\end{figure}

\textbf{Annotator weights:} Reason number two has been noted by~\cite{cao2018maxmig}, who proved that maximum-likelihood solutions fail when there are strong annotator correlations, i.e., annotators with significant statistical correlations in their annotation patterns.
To address this issue, we explore the annotator correlations in the latent space of the learned annotator embeddings.
For this purpose, we assume that annotators with similar embeddings share correlated annotation patterns.
Recalling our example in Fig.~\ref{fig:abstract}, this assumption implies that annotators of the same latent group are located near each other.
The left plot of Fig.~\ref{fig:madl-weights} visualizes this assumption for a two-dimensional embedding space, where the eight annotators are arranged into three clusters as proxies of the three latent annotator groups.
We aim to extend our loss function so that its evaluation is independent of the annotator groups' cardinalities.
For our example, we view the three annotator groups as three independent annotators of equal importance.
To this purpose, we extend the original likelihood function in Eq.~\ref{eq:likelihood-assumption-2} by annotator weights, such that we obtain the weighted likelihood function:
\begin{align}
    \label{eq:weighted-likelihood}
    \Pr(\mathbf{Z} \mid \mathbf{X}, \mathbf{A}; \boldsymbol{\theta}, \boldsymbol{\omega}, \mathbf{w}) &= \prod_{n=1}^N \prod_{m \in \mathcal{A}_n} \Pr(z_{nm} \mid \mathbf{x}_n,  \mathbf{a}_m; \boldsymbol{\theta}, \boldsymbol{\omega})^{w(\mathbf{a}_m)},
\end{align}
where $\mathbf{w} \defas (w(\mathbf{a}_1), \dots, w(\mathbf{a}_M))^\mathrm{T} \in \mathbb{R}_{\geq 0}^M$ denotes a vector of non-negative annotator weights.
From a probabilistic perspective, we can interpret such a weight $w(\mathbf{a}_m)$ as the effective number of observations (or copies) per annotation of annotator~$\mathbf{a}_m$.
Interpreting the annotators $\mathbf{A}$ as samples from a continuous latent space, we define an annotator weight $w(\mathbf{a}_m)$ to be inversely proportional to an annotator's $\mathbf{a}_m$ probability density: 
\begin{equation}
    \label{eq:annotator-weight}
    w(\mathbf{a}_m) \defas \frac{\Pr(\mathbf{a}_m \mid \mathbf{A})^{-1}}{Z}, Z \defas M^{-1}\left(\sum_{m=1}^{M} \Pr(\mathbf{a}_m \mid \mathbf{A})^{-1}\right) \text{ provided that } \Pr(\mathbf{a}_1 \mid \mathbf{A}), \dots, \Pr(\mathbf{a}_M \mid \mathbf{A}) > 0.
\end{equation}
The normalization term $Z \in \mathbb{R}_{>0}$ ensures that the number of effective annotations remains equal to the number of annotators, i.e., $\sum_{m=1}^{M} w(\mathbf{a}_m) = M$.
On the right side of our example in Fig.~\ref{fig:madl-weights}, we expect that an annotator's probability density is approximately proportional to the cardinality of the group to which the annotator belongs.
As a result, we assign high (low) weights to annotators belonging to small (large) groups.
Inspecting the exemplary annotator weights and adding the weights per annotator group, we observe that each group provides the same number of effective annotations, i.e., $\nicefrac{8}{3}$.
More generally, we support our definition of the annotator weights by the following theorem, whose proof is given in Appendix~\ref{app:proofs}.
\begin{tcolorbox}
    \begin{theorem}
        \label{th:annotator-weights}
        Let there be $G \in \{1, \dots, M\}$ non-empty, disjoint annotator groups, which we denote as sets of indices such that $\mathcal{A}^{(1)} \cup \dots \cup \mathcal{A}^{(G)} = \{1, \dots, M\}$. Further assume, the annotators within each group $g \in \{1, \dots, G\}$ share identical annotation patterns for the observed instances, i.e., 
        \begin{equation}
            \label{eq:assumption-1}
            \forall n \in \{1, \dots, N\}, \forall m, l \in \mathcal{A}^{(g)}: z_{nm} = z_{nl} \wedge \Pr(z_{nm} \mid \mathbf{x}_n, \mathbf{a}_m) = \Pr(z_{nl} \mid \mathbf{x}_n, \mathbf{a}_{l}), \tag*{$(\dagger)$}
        \end{equation}
        and the annotators' probability densities are proportional to their respective groups' cardinalities, i.e.,
        \begin{equation}
            \label{eq:assumption-2}
            \forall m \in \{1, \dots, M\}: \Pr(\mathbf{a}_m \mid \mathbf{A}) \propto \sum_{g=1}^{G} \delta(m \in \mathcal{A}^{(g)}) |\mathcal{A}^{(g)}|. \tag*{$(\star)$}
        \end{equation}
        Then, the true weighted log-likelihood function for all $M$ annotators reduces to the log-likelihood for $G$ annotators:
        $$\sum_{n=1}^N \sum_{m=1}^{M} w(\mathbf{a}_m) \ln\left(\Pr(z_{nm} \mid \mathbf{x}_n,  \mathbf{a}_m)\right) \propto \sum_{n=1}^N \sum_{g=1}^{G} \ln\left(\Pr(z_{n{m_g}} \mid \mathbf{x}_n,  \mathbf{a}_{m_g})\right),$$
        where $m_g \in \mathcal{A}^{(g)}$ represents the index of an arbitrary annotator of the $g$-th annotator group.
    \end{theorem}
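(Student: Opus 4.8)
The plan is to prove the identity by a direct computation in three moves: rewrite the annotator weights under assumption $(\star)$, regroup the outer sum over annotators by groups, and then collapse each group using assumption $(\dagger)$.

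First I would unpack the weight definition in Eq.~\ref{eq:annotator-weight}. Assumption $(\star)$ says that $\Pr(\mathbf{a}_m \mid \mathbf{A}) = c\,|\mathcal{A}^{(g)}|$ for every $m \in \mathcal{A}^{(g)}$, with a single proportionality constant $c > 0$ (well-defined and nonzero precisely because all groups are non-empty, so all densities are strictly positive and the normalizer $Z$ in Eq.~\ref{eq:annotator-weight} is finite). Hence $\Pr(\mathbf{a}_m \mid \mathbf{A})^{-1} = (c\,|\mathcal{A}^{(g)}|)^{-1}$, so $w(\mathbf{a}_m)$ is constant on each group — call its common value $w_g$ — and $w_g = (c\,Z\,|\mathcal{A}^{(g)}|)^{-1}$. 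The key observation is that the product $|\mathcal{A}^{(g)}|\, w_g = (c\,Z)^{-1} =: \kappa$ does not depend on $g$.

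Next I would split the outer sum over annotators by groups, $\sum_{m=1}^M = \sum_{g=1}^G \sum_{m \in \mathcal{A}^{(g)}}$, and apply $(\dagger)$: for $m \in \mathcal{A}^{(g)}$ both $z_{nm}$ and $\Pr(z_{nm} \mid \mathbf{x}_n, \mathbf{a}_m)$ coincide with their values at the representative index $m_g$, so every summand of the inner sum equals $w_g \ln\!\big(\Pr(z_{nm_g} \mid \mathbf{x}_n, \mathbf{a}_{m_g})\big)$, independently of which $m \in \mathcal{A}^{(g)}$ is taken; there are exactly $|\mathcal{A}^{(g)}|$ of them. Thus the inner sum equals $|\mathcal{A}^{(g)}|\, w_g \ln\!\big(\Pr(z_{nm_g} \mid \mathbf{x}_n, \mathbf{a}_{m_g})\big) = \kappa\, \ln\!\big(\Pr(z_{nm_g} \mid \mathbf{x}_n, \mathbf{a}_{m_g})\big)$ by the observation above. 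Pulling the group-independent constant $\kappa$ out of the double sum gives $\sum_n \sum_m w(\mathbf{a}_m) \ln(\Pr(z_{nm}\mid\cdots)) = \kappa \sum_n \sum_g \ln(\Pr(z_{nm_g}\mid\cdots))$, which is the asserted proportionality.

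There is no genuinely hard step here; the argument is bookkeeping. The one point that needs a line of care is verifying that $|\mathcal{A}^{(g)}|\, w_g$ is truly group-independent — this is exactly where the cancellation between $(\star)$ (density proportional to cardinality) and the reciprocal in Eq.~\ref{eq:annotator-weight} does its work. I would also remark that the result holds only up to the constant $\kappa$, i.e. the "$\propto$" cannot in general be strengthened to "$=$", and that since $(\dagger)$ forces the set of annotated instances to be common within each group (if $z_{nm} = z_{nl}$ then both are $\otimes$ or both lie in $\Omega_Y$), restricting each inner sum to observed annotations is consistent with the likelihood in Eq.~\ref{eq:likelihood-assumption-2}.
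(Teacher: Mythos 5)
Your proof is correct and follows essentially the same route as the paper's: both apply $(\star)$ to make the weight constant on each group, observe that the total weight per group is a group-independent constant (your $\kappa=(cZ)^{-1}$ equals the paper's explicitly computed $\nicefrac{M}{G}$), and then collapse each group's inner sum via $(\dagger)$. The only cosmetic difference is that you leave the constant implicit while the paper evaluates it.
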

\end{tcolorbox}
Intuitively, Theorem~\ref{th:annotator-weights} confirms that each group $\mathcal{A}^{(g)}$, independent of its cardinality $|\mathcal{A}^{(g)}|$, equally contributes to the weighted log-likelihood function.
This way, we avoid any bias toward a large group of highly correlated annotators during learning. 
Typically, the assumptions \ref{eq:assumption-1} and \ref{eq:assumption-2} of Theorem~\ref{th:annotator-weights} do not hold in practice because there are no annotator groups with identical annotation patterns.
Therefore, we estimate degrees of correlations between annotators by computing similarities between their embeddings $\mathbf{\widetilde{A}} \defas (\mathbf{\widetilde{a}}_1, \dots, \mathbf{\widetilde{a}}_M)^\mathrm{T}$ as the basis for a nonparametric annotator probability density estimation:
\begin{align}
    \label{eq:estimated-annotator-probability-density}
    \Pr\left(\mathbf{a}_m \mid \mathbf{A}\right) \approx \Pr\left(\mathbf{\widetilde{a}}_m \mid \mathbf{\widetilde{A}}, k_\gamma\right) \propto \sum_{l=1}^{M} k_\gamma\left(\nograd\left(\mathbf{\widetilde{a}}_{l}\right), \nograd\left(\mathbf{\widetilde{a}}_m\right)\right),
\end{align}
where $k_\gamma: \mathbb{R}^{R \times R} \rightarrow \mathbb{R}_{\geq 0}$ denotes a kernel function and $\gamma \in \mathbb{R}_{>0}$ its kernel scale.
The expression $\nograd(\mathbf{\widetilde{a}}_m) \in \mathbb{R}^R$ indicates that no gradient regarding the learned annotator embedding $\mathbf{\widetilde{a}}_m$ is computed, which is necessary to decouple the learning of embeddings from computing annotator weights.
Otherwise, we would learn annotator embeddings, which optimize the annotator weights instead of reflecting the annotation patterns.   
Although many kernel (or similarity) functions are conceivable, we will focus on the popular Gaussian kernel:
\begin{align}
    \label{eq:similarity}
    k_\gamma(\nograd\left(\mathbf{\widetilde{a}}_m\right), \nograd\left(\mathbf{\widetilde{a}}_l\right)) &\propto  \exp\left(-\gamma\,||\nograd\left(\mathbf{\widetilde{a}}_m\right) - \nograd\left(\mathbf{\widetilde{a}}_l\right)||_2^2\right)
\end{align}
with $||\cdot||_2$ as Euclidean distance.
Typically, the kernel scale $\gamma$ needs to fit the observed data, i.e., annotator embeddings in our case.
Therefore, its definition a priori is challenging, such that we define $\gamma$ as a learnable parameter subject to a prior distribution.
Concretely, we employ the gamma distribution for this purpose:
\begin{align}
    \Pr\left(\gamma \mid \alpha, \beta\right) \defas \mathrm{Gam}\left(\gamma \mid \alpha, \beta\right) \defas \frac{\beta^{\alpha}}{\Gamma(\alpha)} \gamma^{\alpha-1}\exp\left(-\beta\gamma\right),
\end{align}
where $\Gamma$ is the gamma function and $\alpha \in \mathbb{R}_{>1}, \beta \in \mathbb{R}_{>0}$ are hyperparameters. 
Based on experiments, we set $\alpha = 1.25, \beta = 0.25$ such that the mode is $\nicefrac{(\alpha - 1)}{\beta} = 1$ (defining the initial value of $\gamma$ before optimization) and the variance with $\nicefrac{\alpha}{\beta^2} = 20$ is high in favor of flexible learning.
As a weighted loss function, we finally get
\begin{align}
    \label{eq:weighted-loss-function}
    L_{\mathbf{X}, \mathbf{A}, \mathbf{Z}, \alpha, \beta}(\boldsymbol{\theta}, \boldsymbol{\omega}, \gamma) &\defas -\frac{1}{|\mathbf{Z}|}\sum_{n=1}^N \sum_{m \in \mathcal{A}_n} \left(\hat{w}_\gamma(\mathbf{a}_m)\ln\left(\mathbf{e}^\mathrm{T}_{z_{nm}}\mathbf{\hat{P}}^\mathrm{T}_{\boldsymbol{\omega}}(\mathbf{x}_n, \mathbf{a}_m)\mathbf{\hat{p}}_{\boldsymbol{\theta}}(\mathbf{x}_n) \right) \right) - \ln\left(\mathrm{Gam}\left(\gamma \mid \alpha, \beta\right)\right), \\
    |\mathbf{Z}| &\defas \sum_{n=1}^{N}\sum_{m=1}^M \delta(z_{nm} \in \Omega_Y),
\end{align}
where $\hat{w}_\gamma(\mathbf{a}_m)$ denotes that the annotator weights $w(\mathbf{a}_m)$ are estimated by learning the kernel scale $\gamma$. The number of annotations $|\mathbf{Z}|$ is a normalization factor, which accounts for potentially unevenly distributed annotations across mini-batches when using stochastic GD.

Given the loss function in Eq.~\ref{eq:weighted-loss-function}, we present the complete \textbf{end-to-end training algorithm} of MaDL in Algorithm~\ref{alg:training} and an example in Appendix~\ref{app:training-algorithm-illustration}.
During each training step, we recompute the annotator weights and use them as the basis for the weighted loss function to optimize the AP and GT models' parameters. 
After training, the optimized model parameters $(\boldsymbol{\theta}, \boldsymbol{\omega})$ can be used to make probabilistic predictions, e.g., class-membership probabilities $\mathbf{\hat{p}}_{\boldsymbol{\theta}}(\mathbf{x})$ (cf. Fig.~\ref{fig:architectures}) and annotator confusion matrix $\mathbf{\hat{P}}_{\boldsymbol{\omega}}(\mathbf{x}, \mathbf{a})$ (cf. Fig.~\ref{fig:architectures}), or to decide on distinct labels, e.g., class label $\hat{y}_{\boldsymbol{\theta}}(\mathbf{x})$ (cf. Eq.~\ref{eq:class-prediction}) and annotation error $\hat{y}_{\boldsymbol{\theta}, \boldsymbol{\omega}}(\mathbf{x}, \mathbf{a})$ (cf. Eq.~\ref{eq:ap-prediction}).

\vspace*{1.5em}
\SetKwInput{KwInput}{input}
\SetKwInput{KwOutput}{output}
\let\oldKwInput\KwInput
\renewcommand{\KwInput}[1]{%
  \makebox[\widthof{\KwOutput{}}][l]{\oldKwInput{}#1}%
}
\RestyleAlgo{ruled}
\SetArgSty{textup}
\begin{figure}[!ht]
    \vspace{-\baselineskip}
    \begin{algorithm}[H]
        \textbf{input:} instances $\mathbf{X}$, annotators $\mathbf{A}$, annotations $\mathbf{Z}$, number of training epochs $E$, mini-batch size $B$, initial model parameters $(\boldsymbol{\theta}, \boldsymbol{\omega})$, prior annotation accuracy $\eta$, gamma distribution parameters $(\alpha, \beta)$\;
        \textbf{start:} initialize biases $\mathbf{B}$ of the AP model's output layer using $\eta$ (cf. Eq.~\ref{eq:biases})\;
        \phantom{\textbf{start:}} initialize kernel scale $\gamma \defas \nicefrac{(\alpha-1)}{\beta}$ \;
        \For{epoch $e \in \{1, \dots, E\}$}{
            \For{sampled mini-batch $\mathbf{\overline{X}} \defas (\mathbf{x}_{i_1}, \dots, \mathbf{x}_{i_B})^\mathrm{T}, \mathbf{\overline{Z}} \defas (\mathbf{z}_{i_1}, \dots, \mathbf{z}_{i_B})^\mathrm{T}$ with $\{i_1, \dots, i_B\} \subset \{1, \dots, N\}$}{
                \For{$b \in \{i_1, \dots, i_B\}$}{
                    compute class-membership probabilities $\mathbf{\hat{p}}_{\boldsymbol{\theta}}(\mathbf{x}_b)$ (cf.\ Fig.~\ref{fig:architectures})\;
                    \textbf{for} $m \in \{1, \dots, M\}$ \textbf{do}
                        compute confusion matrix $\mathbf{\hat{P}}_{\boldsymbol{\omega}}(\mathbf{x}_b, \mathbf{a}_{m})$ (cf.\ Fig.~\ref{fig:architectures}); 
                    \textbf{end}
                }
                \textbf{for} $(m,l) \in \{1, \dots, M\}^2$ \textbf{do} 
                        compute similarity $k_\gamma(\nograd(\mathbf{\widetilde{a}}_m), \nograd(\mathbf{\widetilde{a}}_l))$ (cf.\ Eq.~\ref{eq:similarity});
                \textbf{end} \\
                \textbf{for} $m \in \{1, \dots, M\}$ \textbf{do}
                        compute annotator weight $w(\mathbf{a}_m) \approx \hat{w}_\gamma(\mathbf{a}_m)$ (cf.\ Eq.~\ref{eq:annotator-weight} and Eq.~\ref{eq:estimated-annotator-probability-density});
                \textbf{end}\\
            optimize parameters $\boldsymbol{\theta}, \boldsymbol{\omega}, \gamma$ with reference to $L_{\mathbf{\overline{X}}, \mathbf{A}, \mathbf{\overline{Z}}, \alpha, \beta}(\boldsymbol{\theta}, \boldsymbol{\omega}, \gamma)$ (cf.\ Eq.~\ref{eq:weighted-loss-function})\;
            }
        }
        
        \textbf{output:} optimized model parameters $(\boldsymbol{\theta}, \boldsymbol{\omega})$
        \caption{End-to-end training algorithm of MaDL.}
    \label{alg:training}
    \end{algorithm}
\end{figure}

\section{Experimental Evaluation}
\label{sec:experimental-evaluation}
This section investigates three RQs regarding the properties P1--P6 (cf. Section~\ref{sec:related-work}) of multi-annotator supervised learning.
We divide the analysis of each RQ into four parts, which are (1) a takeaway summarizing the key insights, (2) a setup describing the experiments, (3) a qualitative study, and (4) a quantitative study.
The qualitative studies intuitively explain our design choices about MaDL, while the quantitative studies compare MaDL's performance to related techniques.
Note that we analyze each RQ in the context of a concrete evaluation scenario. 
Accordingly, the results provide potential indications for an extension to related scenarios. 
As this section's starting point, we overview the general experimental setup, whose code base is publicly available at 
\if\anonymous1
    \url{https://www.github.com/anonymous/anonymous}.
\else
    \url{https://www.github.com/ies-research/multi-annotator-deep-learning}.
\fi

\subsection{Experimental Setup}
\label{sec:experimental-setup}
We base our experimental setup on the problem setting in Section~\ref{sec:problem-setting}.
Accordingly, the goal is to evaluate the predictions of GT and AP models trained via multi-annotator supervised learning techniques.
For this purpose, we perform experiments on several datasets with class labels provided by error-prone annotators, with models of varying hyperparameters, and in combination with a collection of different evaluation scores.

\textbf{Datasets:} 
We conduct experiments for the tabular and image datasets listed by Table~\ref{tab:datasets}.
\textsc{labelme} and \textsc{music} are actual crowdsourcing datasets, while we simulate annotators for the other five datasets.
For the \textsc{labelme} dataset, \cite{rodrigues2018deep} performed a crowdsourcing study to annotate a subset of 1000 out of 2688 instances of eight different classes as training data.
This dataset consists of images, but due to its small training set size, we follow the idea of~\citeauthor{rodrigues2018deep} and transform it into a tabular dataset by utilizing the features of a pretrained VGG-16~\citep{simonyan2015very} as inputs.
There are class labels obtained from 59 different annotators, and on average, about 2.5 class labels are assigned to an instance.
\textsc{music} is another crowdsourcing dataset, where 700 of 1000 audio files are classified into ten music genres by 44 annotators, and on average, about 2.9 class labels are assigned to a file.
We use the features extracted by~\cite{rodrigues2013learning} from the audio files for training and inference.
The artificial \textsc{toy} dataset with two classes and features serves to visualize our design choices about MaDL.
We generate this dataset via a Gaussian mixture model.
\cite{frey1991letter} published the \textsc{letter} dataset to recognize a pixel display, represented through statistical moments and edge counts, as one of the 26 capital letters in the alphabet 
 for Modern English. 
The datasets \textsc{fmnist}, \textsc{cifar10}, and \textsc{svhn} represent typical image benchmark classification tasks, each with ten classes but different object types to recognize. Appendix~\ref{app:cifar100} presents a separate case study on \textsc{cifar100} to investigate the outcomes on datasets with more classes.
\begin{table}[!h]
        \centering
        \footnotesize
        \setlength{\tabcolsep}{2.8pt}
        \caption{Overview of datasets and associated base network architectures.}
        \label{tab:datasets}
        \begin{tabular}{|l|c|c|c|c|l|}
            \toprule
            \multicolumn{1}{|c|}{\textbf{Dataset}} & \textbf{Annotators} & \textbf{Instances} & \textbf{Classes} & \textbf{Features} & \multicolumn{1}{|c|}{\textbf{Base Network Architecture}} \\
            \midrule
            \hline
            \multicolumn{6}{|c|}{\cellcolor{datasetcolor!10} Tabular Datasets}\\
            \hline
            \textsc{toy} & simulated  & 500 & 2 & 2 & MLP~\citep{rodrigues2018deep} \\
            \textsc{letter}~\citep{frey1991letter} & simulated & 20000 & 26 & 16 & MLP~\citep{rodrigues2018deep} \\
            \textsc{labelme}~\citep{rodrigues2018deep} & real-world & 2688  & 8 & 8192 & MLP~\citep{rodrigues2018deep} \\
            \textsc{music}~\citep{rodrigues2013learning} & real-world & 1000 & 10 & 124 & MLP~\citep{rodrigues2018deep}\\
            \hline
            \multicolumn{6}{|c|}{\cellcolor{datasetcolor!10} Image Datasets}\\
            \hline
            \textsc{fmnist}~\citep{xiao2017fashion} & simulated & 70000 & 10 & 1 $\times$ 28 $\times$ 28 & LeNet-5~\citep{lecun2010mnist}\\
            \textsc{cifar10}~\citep{krizhevsky2009learning} & simulated & 60000 & 10 & 3 $\times$ 32 $\times$ 32 & ResNet-18~\citep{he2016deep}\\
            \textsc{svhn}~\citep{netzer2011reading} & simulated & 99289 & 10 & 3 $\times$ 32 $\times$ 32 & ResNet-18~\citep{he2016deep}\\
            \hline
            \bottomrule
        \end{tabular}
\end{table}

\textbf{Network Architectures:} 
Table~\ref{tab:datasets} lists the base network architectures selected to meet the datasets' requirements.
These architectures are starting points for designing the GT and AP models, which we adjust according to the respective multi-annotator supervised learning technique.
For the tabular datasets, we follow \cite{rodrigues2018deep} and train a \textit{multilayer perceptron} (MLP) with a single fully connected layer of 128 neurons as a hidden layer.
A modified LeNet-5 architecture~\citep{lecun2010mnist}, a simple convolutional neural network, serves as the basis for \textsc{fmnist} as a gray-scale image dataset, while we employ a \mbox{ResNet-18}~\citep{he2016deep} for \textsc{cifar10} and \textsc{svhn} as RGB image datasets. 
We refer to our code base for remaining details, e.g., on the use of \textit{rectified linear units} (ReLU, \citeauthor{glorot2011deep}~\citeyear{glorot2011deep}) as activation functions. 

\textbf{Annotator simulation:} 
For the datasets without real-world annotators, we adopt simulation strategies from related work~\citep{yan2014learning,cao2018maxmig,ruhling2021end,wei2022deep} and simulate annotators according to the following five types:
\begin{description}[font=\normalfont\textit]
    \item[Adversarial] annotators provide false class labels on purpose. In our case, such an annotator provides a correct class label with a probability of 0.05.
    \item[Randomly guessing] annotators provide class labels drawn from a uniform categorical distribution. As a result, such an annotator provides a correct class label with a probability of $\nicefrac{1}{C}$.
    \item[Cluster-specialized] annotators' performances considerably vary across the clusters found by the $k$-means clustering algorithm. 
    For images, we cluster the latent representations of the ResNet-18 pretrained on ImageNet~\citep{russakovsky2015imagenet}. In total, there are $k=10$ clusters. For each annotator, we randomly define five weak and five expert clusters. An annotator provides a correct class label with a probability of 0.95 for an expert cluster and with a probability of 0.05 for a weak cluster.
    \item[Common] annotators are simulated based on the identical clustering employed for the cluster-specialized annotators. However, their APs vary less between the clusters. Concretely, we randomly draw a correctness probability value in the range $[\nicefrac{1}{C}, 1]$ for each cluster-annotator pair.
    \item[Class-specialized] annotators' performances considerably vary across classes to which instances can belong. For each annotator, we randomly define $\lfloor \nicefrac{C}{2} \rfloor$ weak and $\lceil \nicefrac{C}{2} \rceil$ expert classes. An annotator provides a correct class label with a probability of 0.95 for an expert class and with a probability of 0.05 for a weak class.
\end{description}
We simulate annotation mistakes by randomly selecting false class labels.
Table~\ref{tab:annotator-simulation} lists four annotator sets (blueish rows) with varying numbers of annotators per annotator type (first five columns) and annotation ratios (last column).
Each annotator set is associated with a concrete RQ.
A copy flag indicates that the annotators in the respective types provide identical annotations. 
This way, we follow~\cite{wei2022deep}, \cite{cao2018maxmig}, and~\cite{ruhling2021end} to simulate strong correlations between annotators.
For example, the entry ''1 + 11 copies`` of the annotator set \textsc{correlated} indicates twelve cluster-specialized annotators, of which one annotator is independent, while the remaining eleven annotators share identical annotation patterns, i.e., they are copies of each other.
The simulated annotator correlations are not directly observable because the copied annotators likely annotate different instances.
This is because of the annotation ratios, e.g., a ratio of 0.2 indicates that each annotator provides annotations for only \SI{20}{\percent}
 of randomly chosen instances.
The annotation ratios are well below 1.0 because, in practice (especially in crowdsourcing applications), it is unrealistic for every annotator to annotate every instance.
We refer to Appendix~\ref{app:varying-annotation-ratios} presenting the results of a case study with higher annotation ratios for \textsc{cifar10}.

\begin{table}[!h]
    \caption{Simulated annotator sets for each RQ.}
    \label{tab:annotator-simulation}
    \setlength{\tabcolsep}{9.2pt}
    \footnotesize
    \centering
    \begin{tabular}{|c|c|c|c|c||c|}
        \toprule 
        \textbf{Adversarial} & \textbf{Common} & \textbf{Cluster-specialized} & \textbf{Class-specialized} & \textbf{Random} & \textbf{Annotation Ratio} \\
        \midrule
        \hline
        \multicolumn{6}{|c|}{\cellcolor{datasetcolor!10} \textsc{independent} (RQ1)}\\
        \hline
        1  & 6 & 2 & 1 & 0 & 0.2\\
        \hline
        \multicolumn{6}{|c|}{\cellcolor{datasetcolor!10} \textsc{correlated} (RQ2)}\\
        \hline
        11 copies & 6 & 1 + 11 copies & 11 copies & 0 & 0.2\\
        \hline
        \multicolumn{6}{|c|}{\cellcolor{datasetcolor!10} \textsc{random-correlated} (RQ2)}\\
        \hline
        1 & 6 & 2 & 1 & 90 copies & 0.2 \\
        \hline
        \multicolumn{6}{|c|}{\cellcolor{datasetcolor!10} \textsc{inductive} (RQ3)}\\
        \hline
        10 & 60 & 20 & 10 & 0 & 0.02\\
        \hline
        \bottomrule
    \end{tabular}
\end{table}

\textbf{Evaluation scores:} 
Since we are interested in quantitatively assessing GT and AP predictions, we need corresponding evaluation scores.
In this context, we interpret the prediction of APs as a binary classification problem with the AP model predicting whether an annotator provides the correct or a false class label for an instance.
Next to categorical predictions, the GT and AP models typically provide probabilistic outputs, which we examine regarding their quality~\citep{huseljic2021separation}.
We list our evaluation scores in the following, where arrows indicate which scores need to be maximized ($\uparrow$) or minimized ($\downarrow$):
\begin{description}[font=\normalfont\textit]
    \item[\textit{Accuracy}] (ACC, $\uparrow$) is probably the most popular score for assessing classification performances. For the GT estimates, it describes the fraction of correctly classified instances, whereas it is the fraction of (potential) annotations correctly identified as false or correct for the AP estimates:
    \begin{gather}
        \text{GT-ACC}(\mathbf{X}, \mathbf{y}, \hat{y}_{\boldsymbol{\theta}}) \defas \frac{1}{N} \sum_{n=1}^{N} \delta\left(y_n = \hat{y}_{\boldsymbol{\theta}}(\mathbf{x}_n)\right), \\
        \text{AP-ACC}(\mathbf{X}, \mathbf{y}, \mathbf{Z}, \hat{y}_{\boldsymbol{\theta}, \boldsymbol{\omega}}) \defas \frac{1}{|\mathbf{Z}|} \sum_{n=1}^{N} \sum_{m \in \mathcal{A}_n} \delta\left(\delta\left(y_n \neq z_{nm}\right) = \hat{y}_{\boldsymbol{\theta}, \boldsymbol{\omega}}(\mathbf{x}_n, \mathbf{a}_m)\right).
    \end{gather}
    Maximizing both scores corresponds to the Bayes optimal predictions in Eq.~\ref{eq:gt-bayes-optimal} and Eq.~\ref{eq:ap-bayes-optimal}. 
    \item[\textit{Balanced accuracy}] (BAL-ACC, $\uparrow$) is a variant of ACC designed for imbalanced classification problems~\citep{brodersen2010balanced}. For the GT estimation, the idea is to compute the ACC score for each class of instances separately and then average them. Since our datasets are fairly balanced in their distributions of class labels, we use this evaluation score only for assessing AP estimates. We may encounter highly imbalanced binary classification problems per annotator, where a class represents either a false or correct annotation. For example, an adversarial annotator provides majorly false annotations. Therefore, we extend the definition of BAL-ACC by computing the ACC scores for each annotator-class pair separately to average them. 
    \item[\textit{Negative log-likelihood}] (NLL, $\downarrow$) is not only used as a typical loss function for training (D)NNs but can also be used to assess the quality of probabilistic estimates:
    \begin{gather}
        \text{GT-NLL}(\mathbf{X}, \mathbf{y}, \mathbf{\hat{p}}_{\boldsymbol{\theta}}) \defas -\frac{1}{N} \sum_{n=1}^{N} \ln\left(\hat{p}^{(y_n)}_{\boldsymbol{\theta}}(\mathbf{x}_n)\right), \\
        \text{AP-NLL}(\mathbf{X}, \mathbf{y}, \mathbf{Z}, \hat{p}_{\boldsymbol{\theta}, \boldsymbol{\omega}}) \defas \nonumber \\ -\frac{1}{|\mathbf{Z}|} \sum_{n=1}^{N} \sum_{m \in \mathcal{A}_n} \Big(\delta\left(y_n = z_{nm}\right)\ln\left(\hat{p}_{\boldsymbol{\theta}, \boldsymbol{\omega}}(\mathbf{x}_n, \mathbf{a}_m)\right) + \delta\left(y_n \neq z_{nm}\right) \ln\left(1-\hat{p}_{\boldsymbol{\theta}, \boldsymbol{\omega}}(\mathbf{x}_n, \mathbf{a}_m)\right)\Big).
    \end{gather}
    Moreover, NLL is a proper scoring rule~\citep{ovadia2019can} such that the best score corresponds to a perfect prediction.
    \item[\textit{Brier score}] (BS, $\downarrow$), proposed by~\cite{brier1950verification}, is another proper scoring rule, which measures the squared error between predicted probability vectors and one-hot encoded target vectors:
    \begin{gather}
        \text{GT-BS}(\mathbf{X}, \mathbf{y}, \mathbf{\hat{p}}_{\boldsymbol{\theta}}) \defas \frac{1}{N} \sum_{n=1}^{N} ||\mathbf{e}_{y_n}-\mathbf{\hat{p}}_{\boldsymbol{\theta}}(\mathbf{x}_n)||_2^2, \\
        \text{AP-BS}(\mathbf{X}, \mathbf{y}, \mathbf{Z}, \hat{p}_{\boldsymbol{\theta}, \boldsymbol{\omega}}) \defas \frac{1}{|\mathbf{Z}|} \sum_{n=1}^{N} \sum_{m \in \mathcal{A}_n} \left(\delta\left(y_n = z_{nm}\right) - \hat{p}_{\boldsymbol{\theta}, \boldsymbol{\omega}}(\mathbf{x}_n, \mathbf{a}_m)\right)^2.
    \end{gather}
\end{description}
In the literature, there exist many further evaluation scores, particularly for assessing probability calibration~\citep{ovadia2019can}. 
As a comprehensive evaluation of probabilities is beyond this article's scope, we focus on the aforementioned proper scoring rules. 
Accordingly, we have omitted other evaluation scores, such as the expected calibration error~\citep{naeini2015obtaining} being a non-proper scoring rule.

\textbf{Multi-annotator supervised learning techniques:} 
By default, we train MaDL via the weighted loss function in Eq.~\ref{eq:weighted-loss-function} using the hyperparameter values from Section~\ref{sec:multi-annotator-deep-learning} and the most general architecture depicted by Fig.~\ref{fig:architectures}.
Next to the ablations as part of analyzing the three RQs, we present an ablation study on the hyperparameters of MaDL in Appendix~\ref{app:ablation-study} and a practitioner's guide with concrete recommendations in Appendix~\ref{app:practioner-guide}. 
We evaluate MaDL compared to a subset of the related techniques presented in Section~\ref{sec:related-work}.
This subset consists of techniques that (1) provide probabilistic GT estimates for each instance, (2) provide probabilistic AP estimates for each instance-annotator pair, and (3) train a (D)NN as the GT model. 
Moreover, we focus on recent techniques with varying training algorithms and properties P1--P6 (cf. Section~\ref{sec:related-work}).
As a result, we select \textit{crowd layer}~(CL, \citeauthor{rodrigues2018deep}, \citeyear{rodrigues2018deep}), \textit{regularized estimation of annotator confusion} (REAC, \citeauthor{tanno2019learning}, \citeyear{tanno2019learning}), \textit{learning from imperfect annotators} (LIA, \citeauthor{platanios2020learning}, \citeyear{platanios2020learning}), \textit{common noise adaption layers}~(CoNAL, \citeauthor{chu2021learning}, \citeyear{chu2021learning}), and \textit{union net} (UNION, \citeauthor{wei2022deep}, \citeyear{wei2022deep}).
Further, we aggregate annotations through the majority rule as a \textit{lower baseline} (LB) and use the GT class labels as an \textit{upper baseline} (UB).
We adopt the architectures of MaDL's GT and AP models for both baselines.
The GT model then trains via the aggregated annotation (LB) or the GT class labels (UB).
The AP model trains using the aggregated annotations (LB) or the GT class labels (UB) to optimize the annotator confusion matrices. 
Unless explicitly stated, no multi-annotator supervised learning technique can access annotator features containing prior knowledge about the annotators.

\textbf{Experiment:} An experiment's run starts by splitting a dataset into train, validation, and test sets.
For \textsc{music} and \textsc{labelme}, these splits are predefined, while for the other datasets, we randomly select \SI{75}{\percent} of the samples for training, \SI{5}{\percent} for validation, and \SI{20}{\percent} for testing.
Following~\cite{ruhling2021end}, a small validation set with GT class labels allows a fair comparison by finding suitable hyperparameter values for the optimizer of the respective multi-annotator supervised learning technique.
We employ the AdamW~\citep{loshchilov2018decoupled} optimizer, where the learning rates $\{0.01, 0.005, 0.001\}$ and weight decays $\{0.0, 0.001, 0.0001\}$ are tested.
We decay learning rates via a cosine annealing schedule~\citep{loshchilov2017sgdr} and set the optimizer's mini-batch size to 64.
For the datasets \textsc{music} and \textsc{labelme}, we additionally perform experiments with 8 and 16 as mini-batch sizes due to their smaller number of instances and, thus, higher sensitivity to the mini-batch size.
The number of training epochs is set to 100 for all techniques except for LIA, which we train for 200 epochs due to its EM algorithm.
After training, we select the models with the best validation GT-ACC across the epochs.
Each experiment is run five times with different parameter initializations and data splits (except for \textsc{labelme} and \textsc{music}).
We report quantitative results as means and standard deviations over the best five runs determined via the validation GT-ACC.

\subsection{RQ1: Do class- and instance-dependent modeled APs improve learning? (Properties P1, P2)}
\begin{tcolorbox}
    \textbf{Takeaway:} 
    Estimating class- (property P1) and instance-dependent (property P2) APs leads to superior performances of the GT and AP models. 
    This observation is especially true for GT models trained on datasets with real-world annotators whose annotation patterns are unknown.
\end{tcolorbox}

\textbf{Setup:} 
We address RQ1 by evaluating multi-annotator supervised learning techniques with varying AP assumptions.
We simulate ten annotators for the datasets without real-world annotators according to the annotator set \textsc{independent} in Table~\ref{tab:annotator-simulation}.
Each simulated annotator provides class labels for \SI{20}{\percent} of randomly selected training instances.
Next to the related multi-annotator supervised learning techniques and the two baselines, we evaluate six variants of MaDL denoted via the scheme MaDL(P1, P2).
Property P1 refers to the estimation of potential class-dependent APs.
There, we differentiate between the options class-independent (I), partially (P) class-dependent, and fully (F) class-dependent APs.
We implement them by constraining the annotator confusion matrices' degrees of freedom.
Concretely, class-independent refers to a confusion matrix approximated by estimating a single scalar, partially class-dependent refers to a confusion matrix approximated by estimating its diagonal elements, and fully class-dependent refers to estimating each matrix element individually (cf. Appendix~\ref{app:practioner-guide}).
Property P2 indicates whether the APs are estimated as a function of instances (X) or not ($\overline{\text{X}}$).
Combining the two options of the properties P1 and P2 represents one variant.
For example, MaDL(X, F) is the default MaDL variant estimating instance- and fully class-dependent APs.

\textbf{Qualitative study:} Fig.~\ref{fig:toy-madl} visualizes MaDL's predictive behavior for the artificial dataset \textsc{toy}.
Thereby, each row represents the predictions of a different MaDL variant.
Since this is a binary classification problem, the variant MaDL(X, P) is identical to MaDL(X, F), and MaDL($\overline{\text{X}}$, P) is identical to MaDL($\overline{\text{X}}$, F).  
The first column visualizes instances as circles colored according to their GT labels, plots the class-membership probabilities predicted by the respective GT model as contours across the feature space, and depicts the decision boundary for classification as a black line. 
The last four columns show the class labels provided by four of the ten simulated annotators. 
The instances' colors indicate the class labels provided by an annotator, their forms mark whether the class labels are correct (circle) or false (cross) annotations, and the contours across the feature space visualize the AP model's predicted annotation correctness probabilities.
The GT models of the variants MaDL($\overline{\text{X}}$, F), MaDL(X, I), and MaDL(X, F) successfully separate the instances of both classes, whereas the GT model of MaDL($\overline{\text{X}}$, I) fails in this task.
Likely, the missing consideration of instance- and class-dependent APs explains this observation.
Further, the class-membership probabilities of the successful MaDL variants reflect instances' actual class labels but exhibit the overconfident behavior typical of deterministic (D)NNs, particularly for feature space regions without observed instances~\citep{huseljic2021separation}.
Investigating the estimated APs for the adversarial annotator (second column), we see that each MaDL variant correctly predicts low APs (indicated by the white-colored contours) across the feature space. 
When comparing the AP estimates for the class-specialized annotator (fifth column), clear differences between MaDL($\overline{\text{X}}$, I) and the other three variants of MaDL are visible.
Since MaDL($\overline{\text{X}}$, I) ignores any class dependency regarding APs, it cannot differentiate between classes of high and low APs.
In contrast, the AP predictions of the other three variants reflect the class structure learned by the respective GT model and thus can separate between weak and expert classes.
The performances of the cluster-specialized and common annotator depend on the regions in the feature space.
Therefore, only the variants MaDL(X, I) and MaDL(X, F) can separate clusters of low and high APs.
For example, both variants successfully identify the two weak clusters of the cluster-specialized annotator.
Analogous to the class-membership probabilities, the AP estimates are overconfident for feature space regions without observed instances.

\begin{figure}[!b]
    \begin{picture}(100,404)
        \put(20,375){\includegraphics[width=0.9575\textwidth]{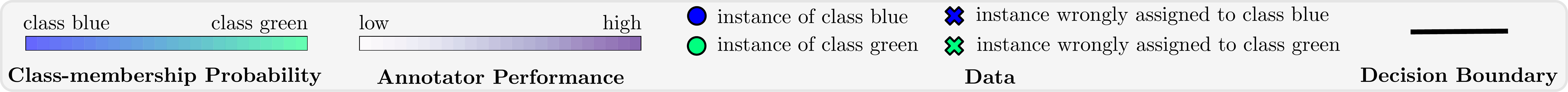}}
        
        \put(10,308){\rotatebox{90}{\textbf{\scriptsize{MaDL($\overline{\text{X}}$, I)}}}}
        
        \put(20,285){\includegraphics[width=0.19\textwidth]{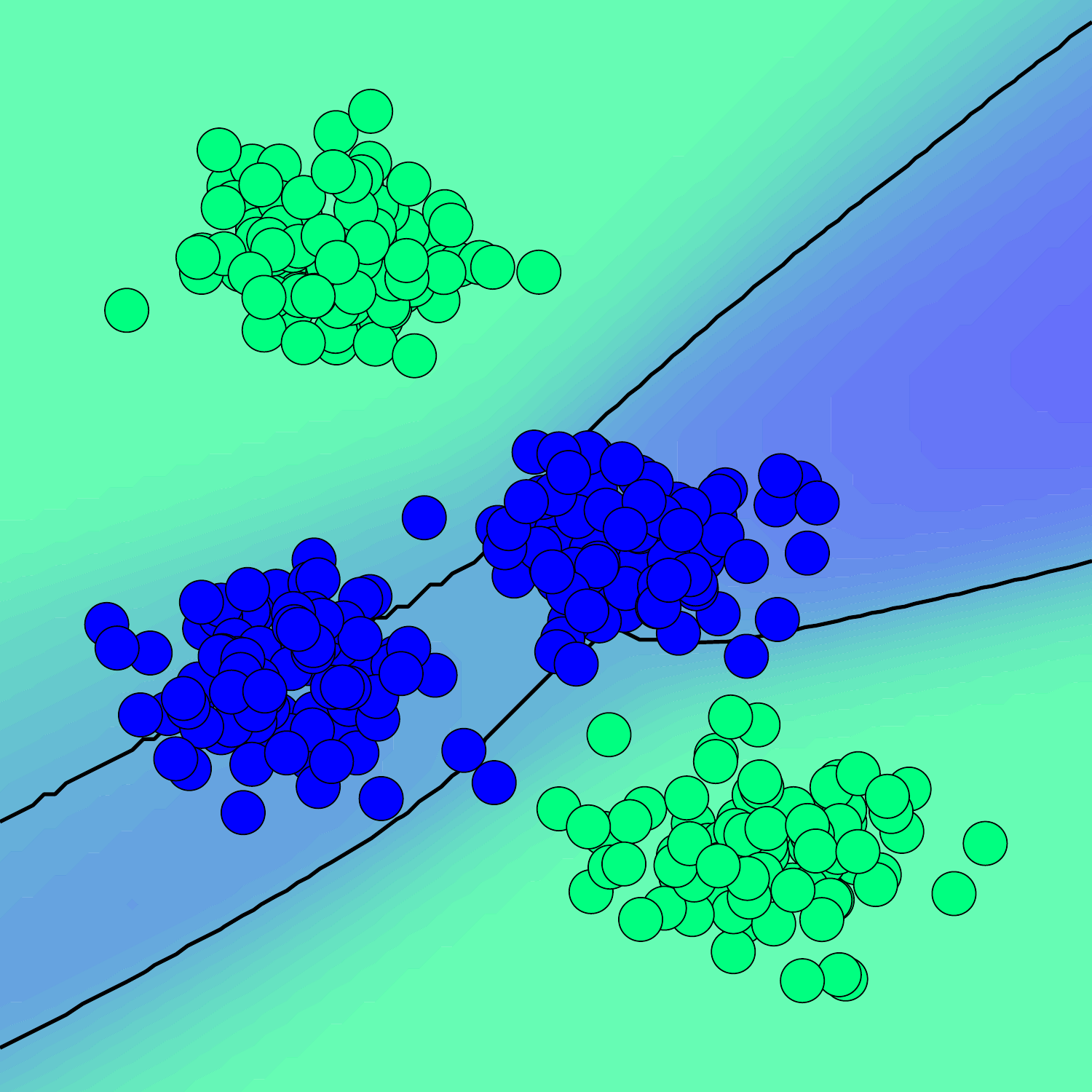}}
        \put(110,285){\includegraphics[width=0.19\textwidth]{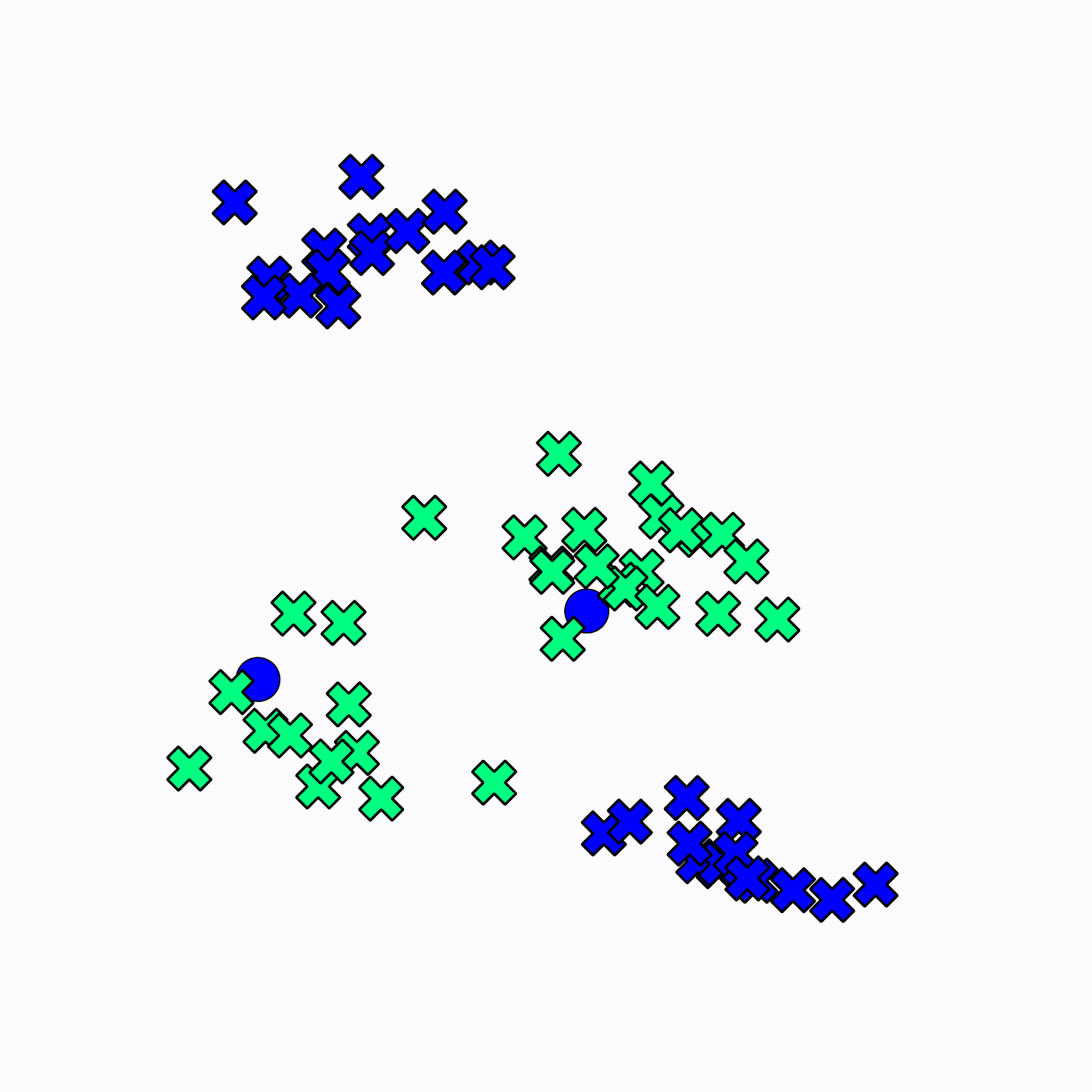}}
        \put(200,285){\includegraphics[width=0.19\textwidth]{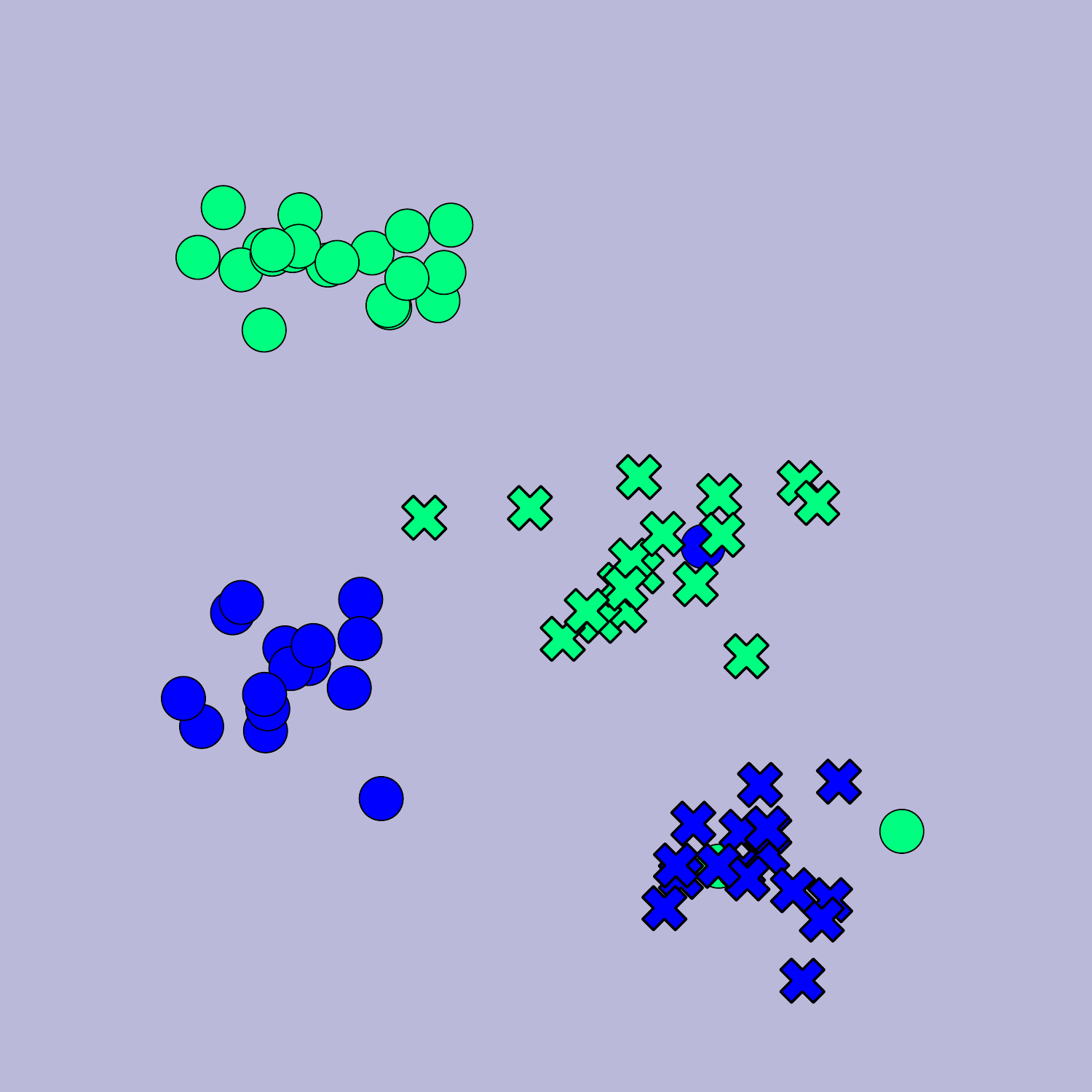}}
        \put(290,285){\includegraphics[width=0.19\textwidth]{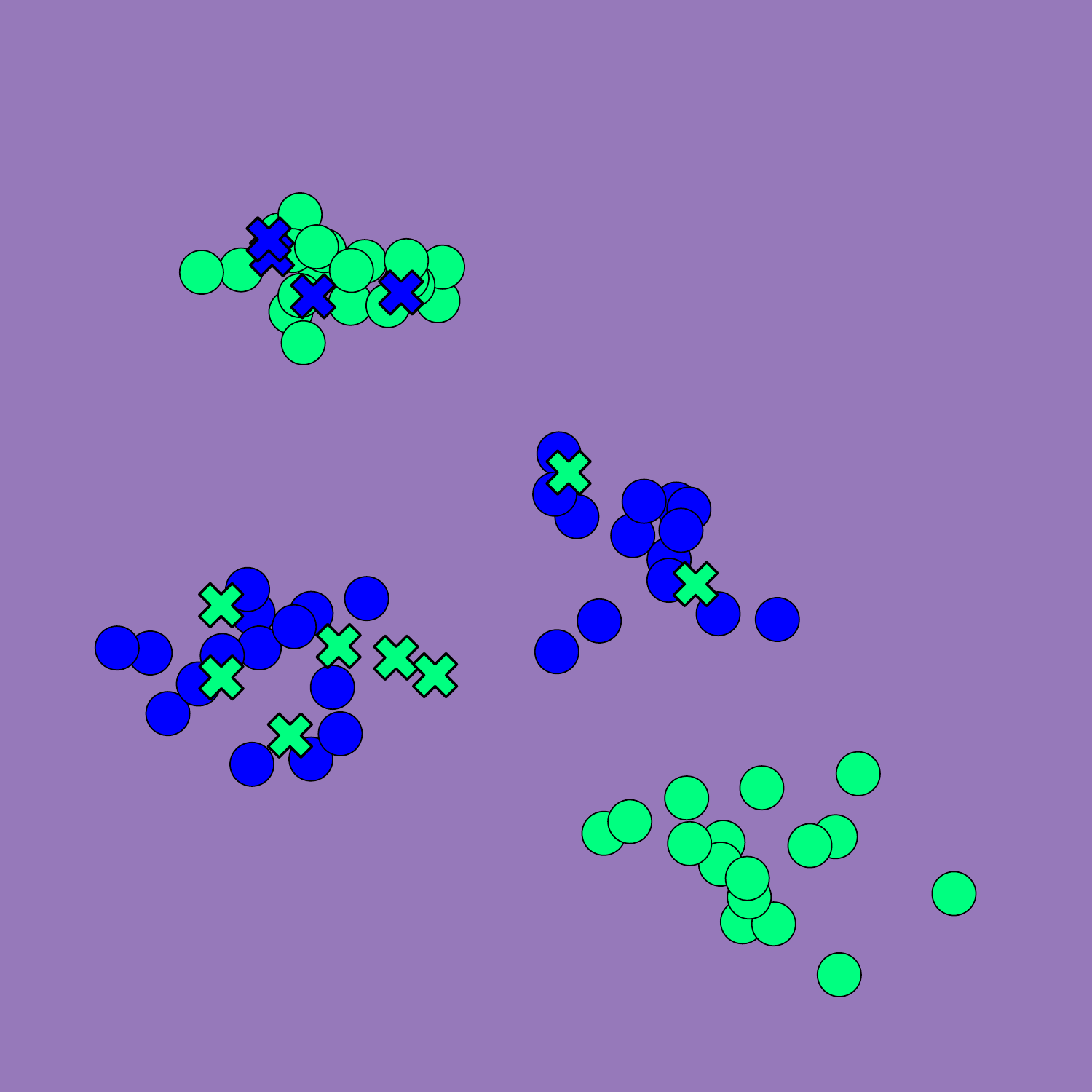}}
        \put(380,285){\includegraphics[width=0.19\textwidth]{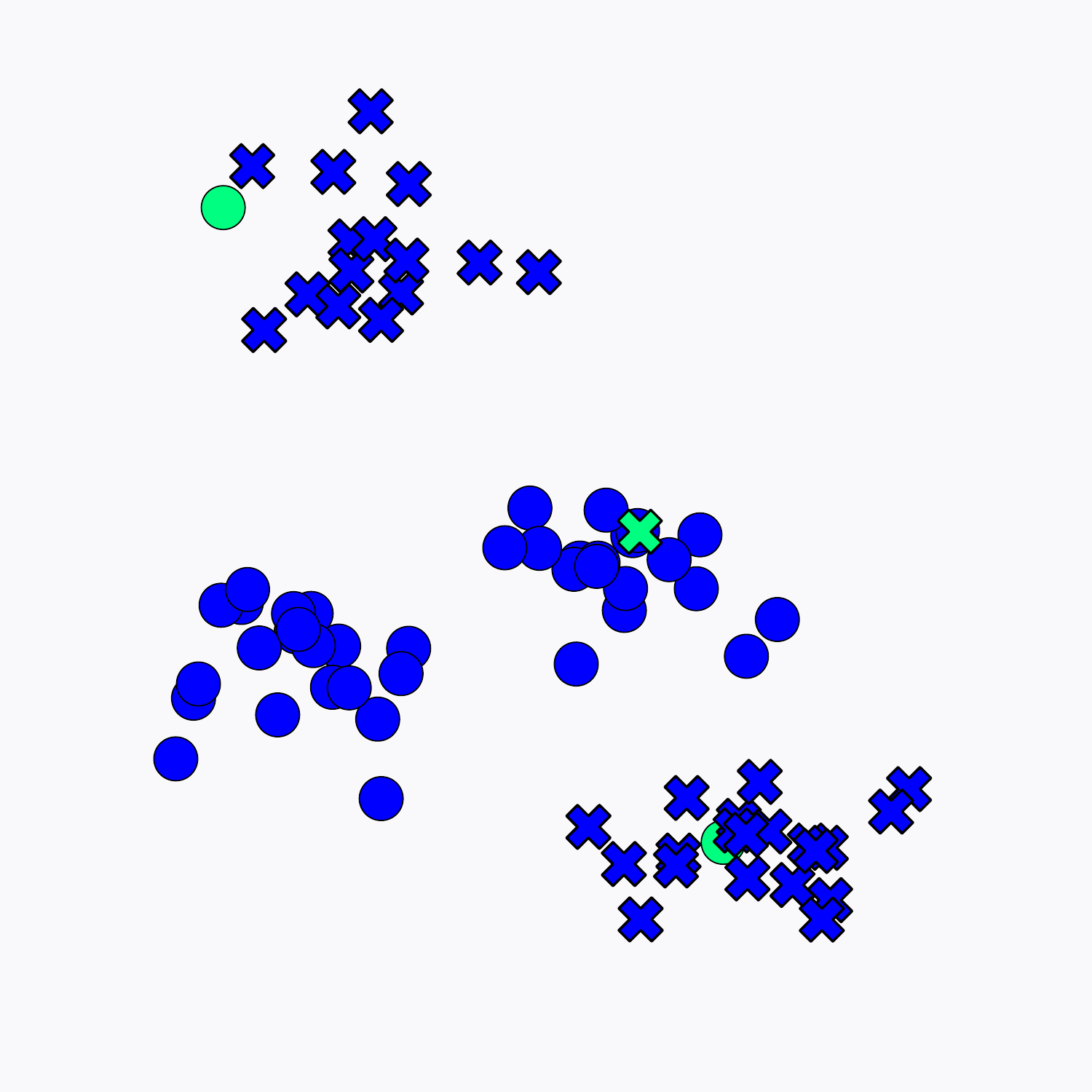}}
        
        \put(10,215){\rotatebox{90}{\textbf{\scriptsize{MaDL($\overline{\text{X}}$, F)}}}}
        
        \put(20,195){\includegraphics[width=0.19\textwidth]{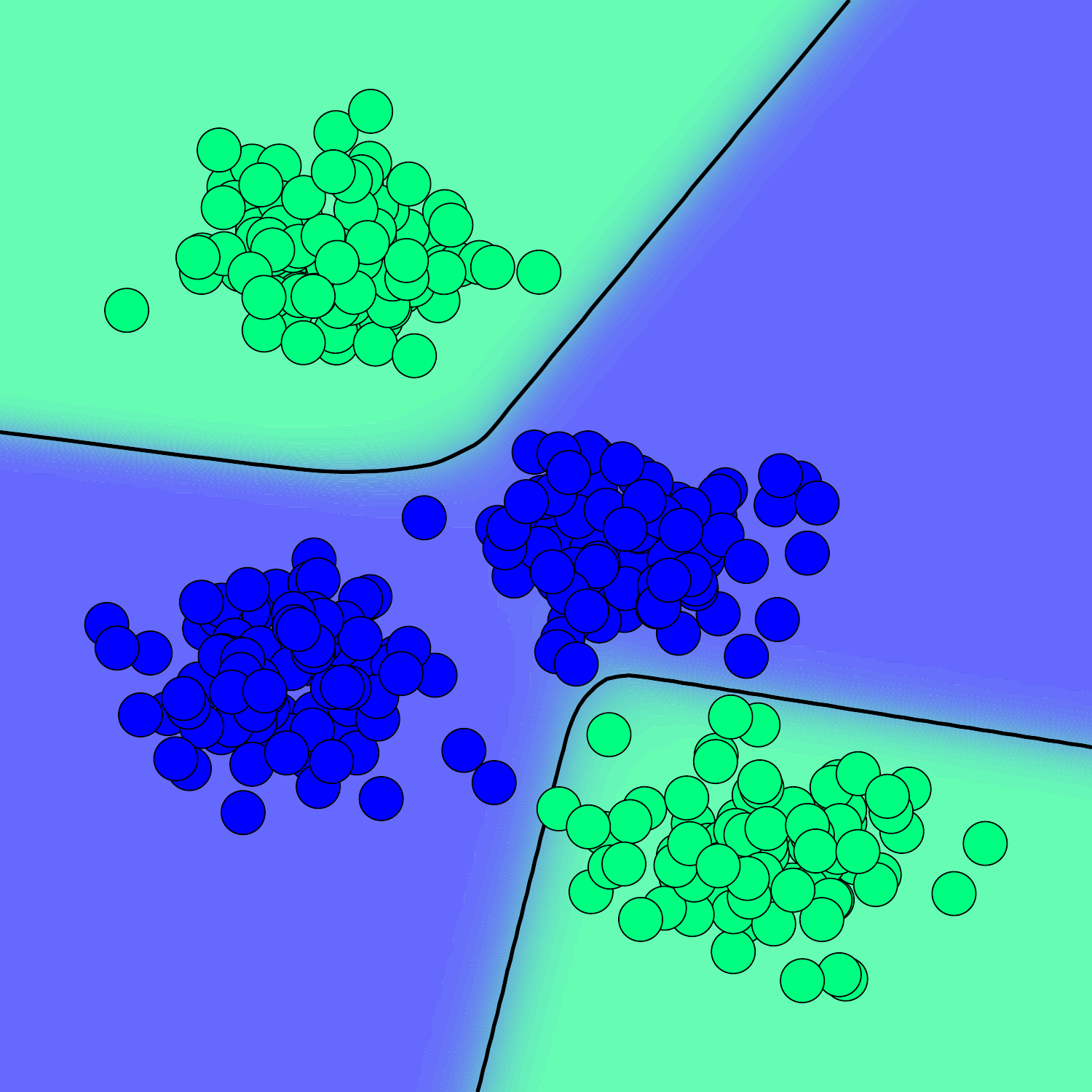}}
        \put(110,195){\includegraphics[width=0.19\textwidth]{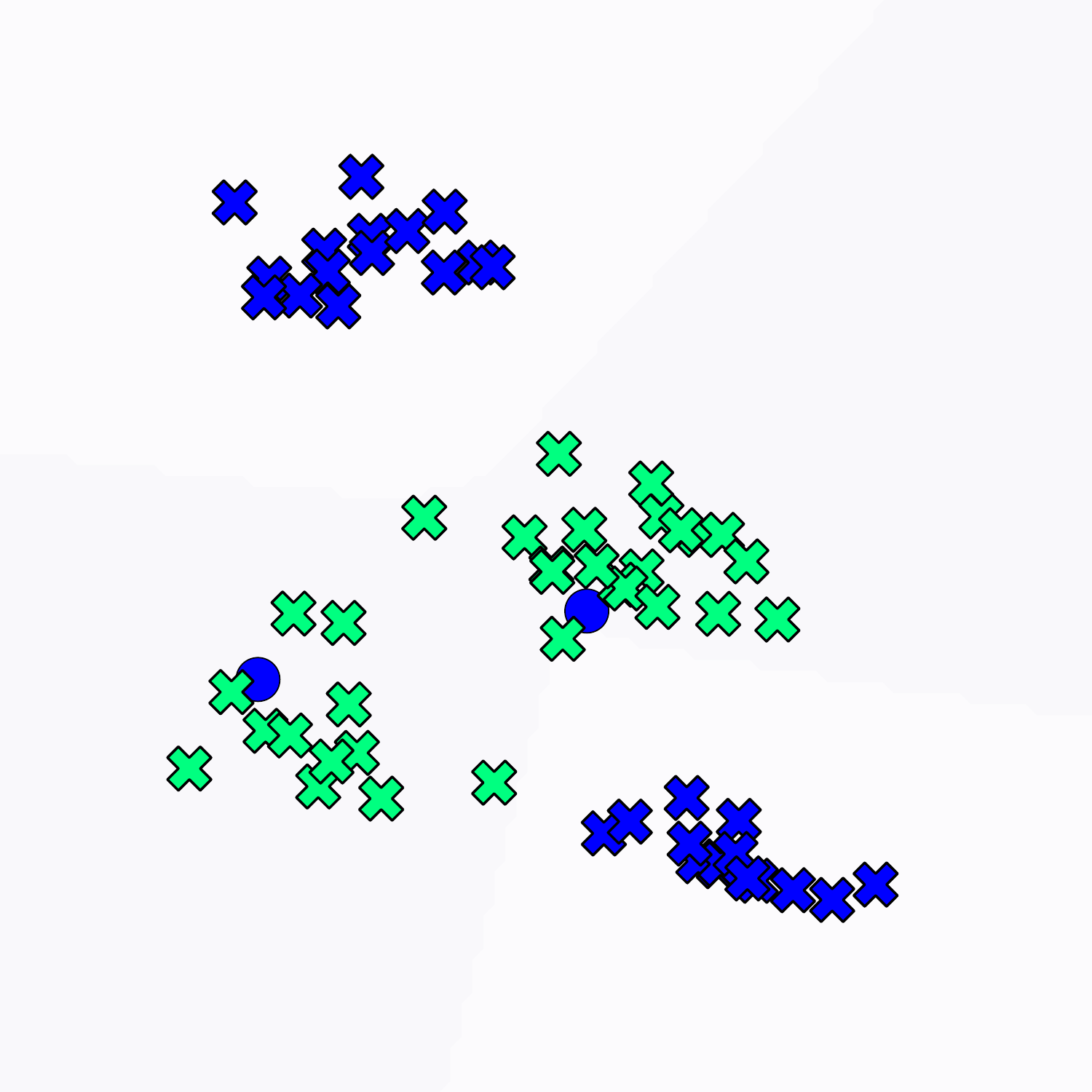}}
        \put(200,195){\includegraphics[width=0.19\textwidth]{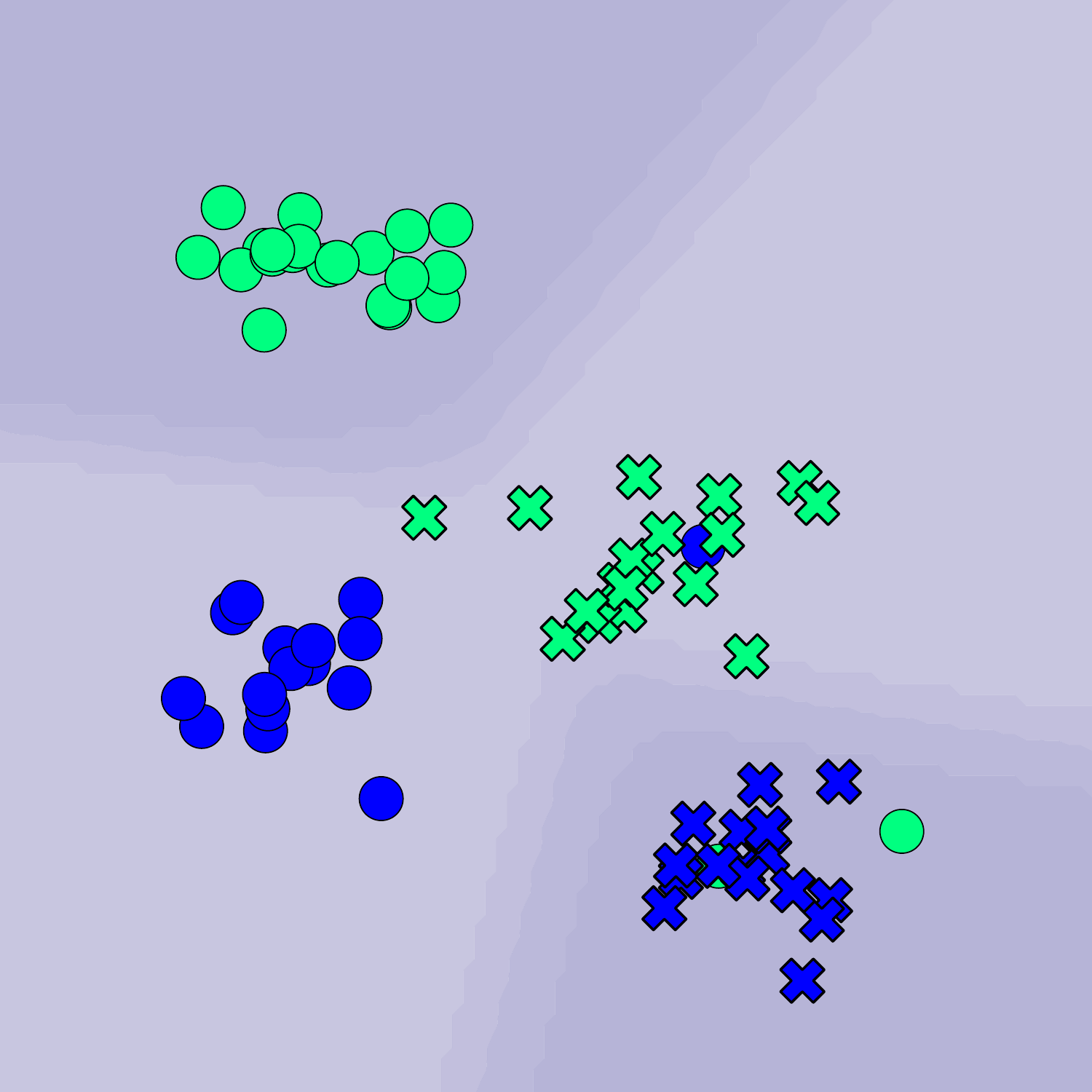}}
        \put(290,195){\includegraphics[width=0.19\textwidth]{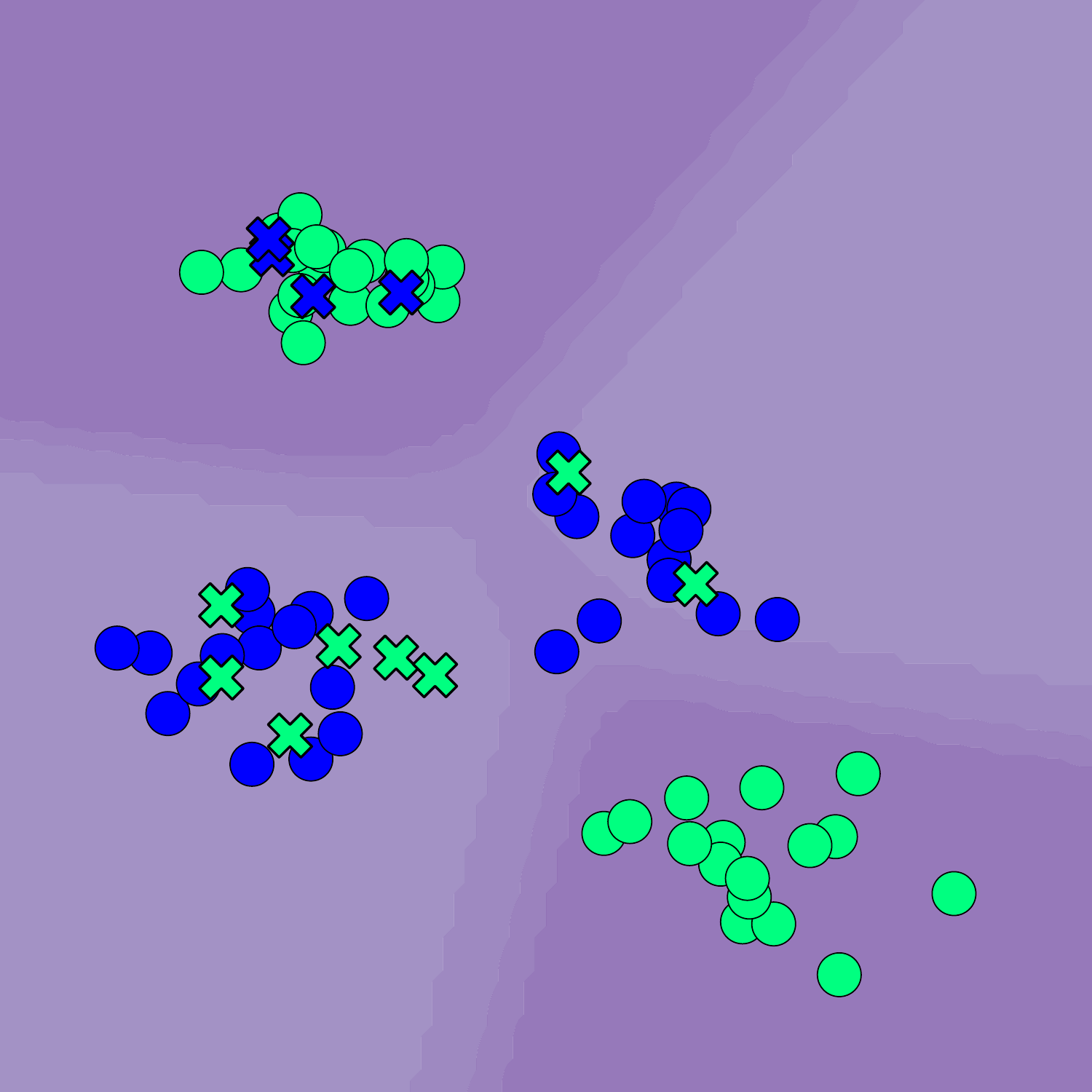}}
        \put(380,195){\includegraphics[width=0.19\textwidth]{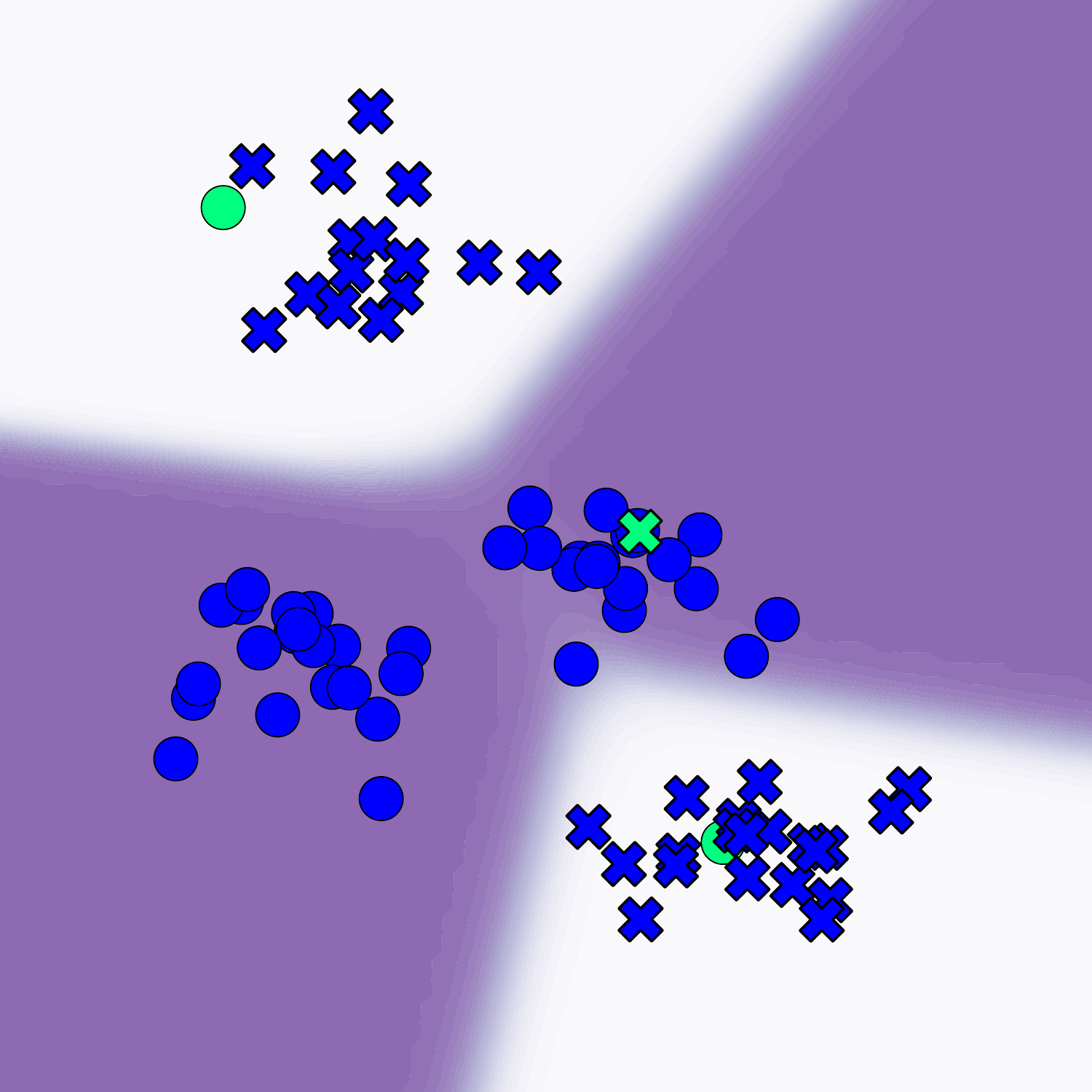}}
        
        \put(10,125){\rotatebox{90}{\textbf{\scriptsize{MaDL(X, I)}}}}
        
        \put(20,105){\includegraphics[width=0.19\textwidth]{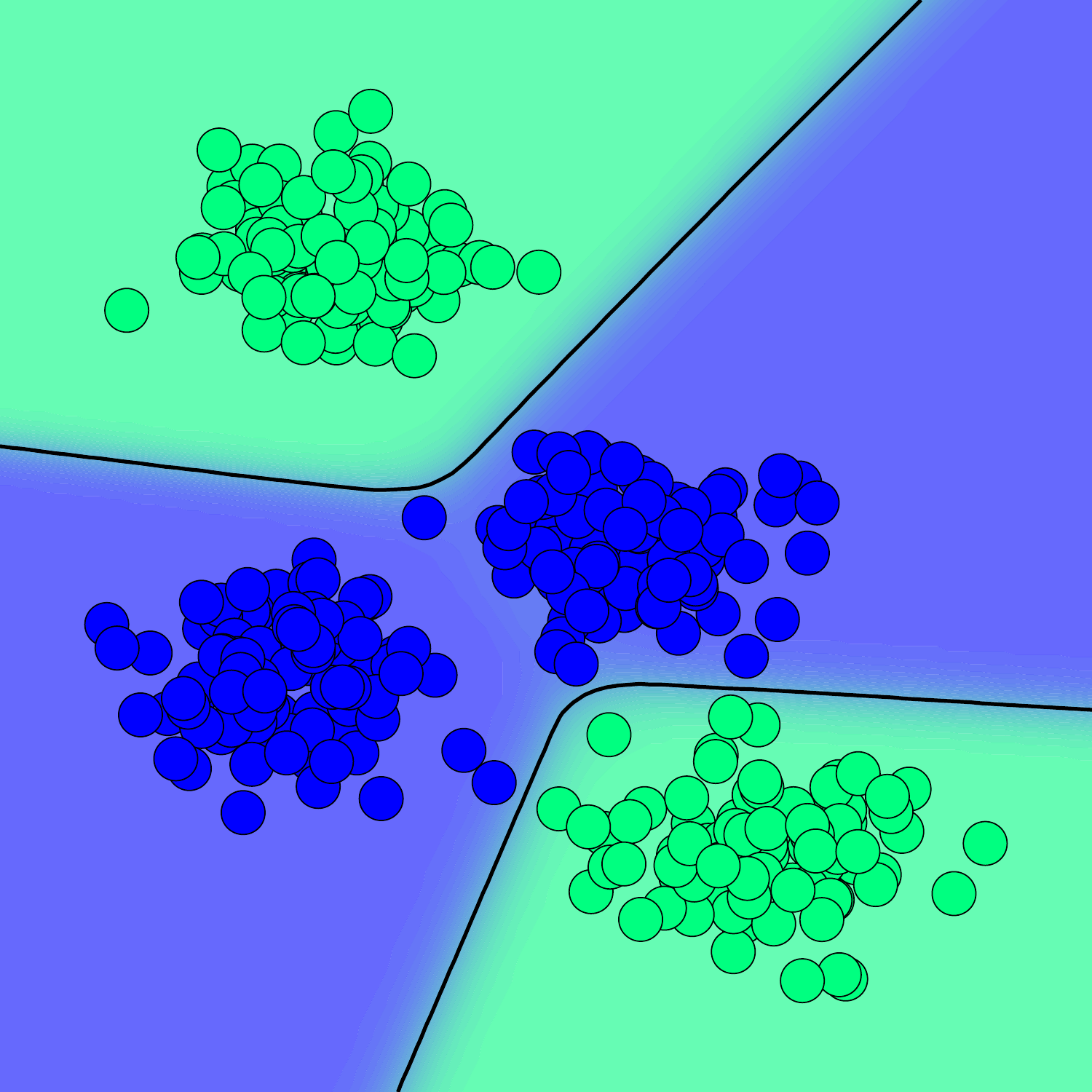}}
        \put(110,105){\includegraphics[width=0.19\textwidth]{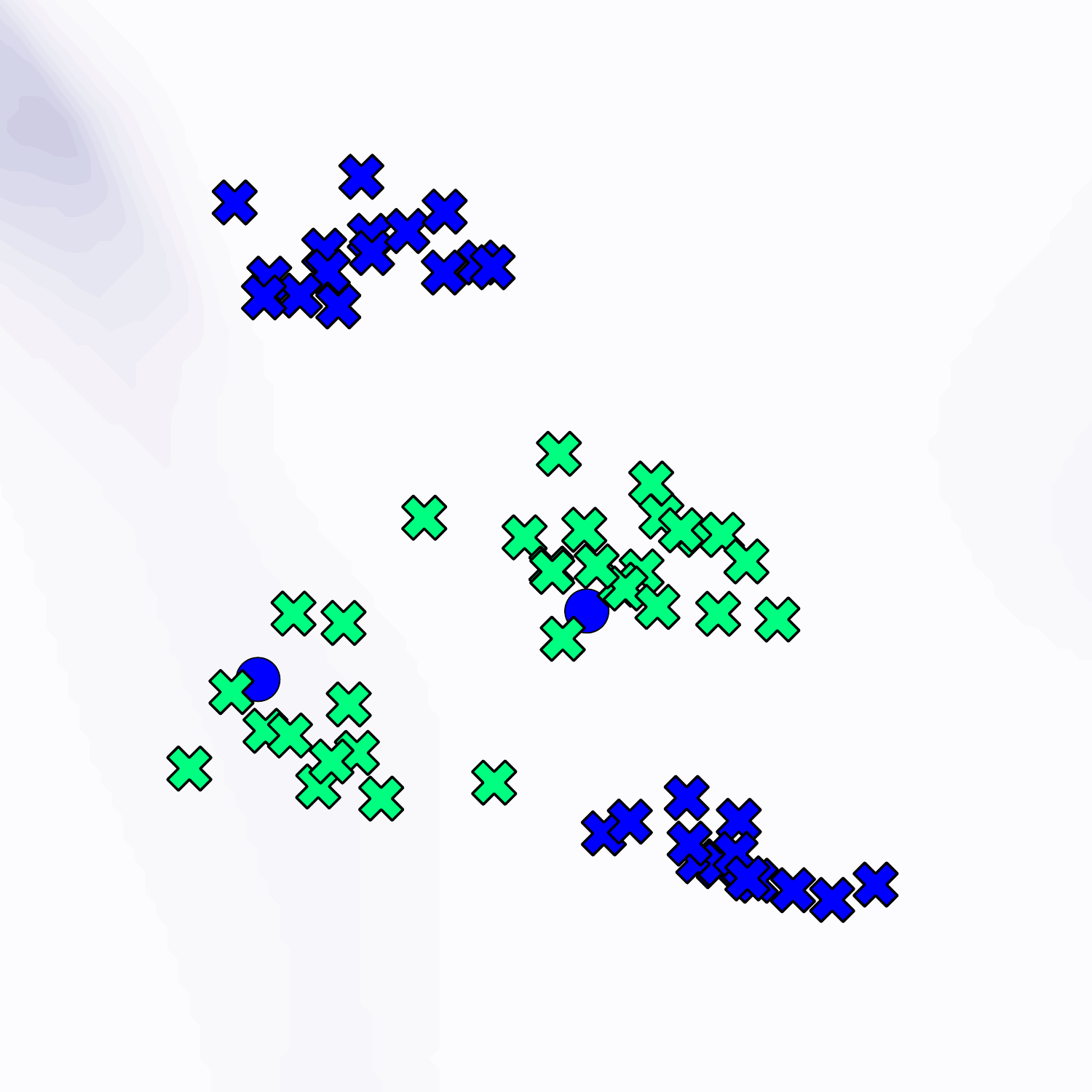}}
        \put(200,105){\includegraphics[width=0.19\textwidth]{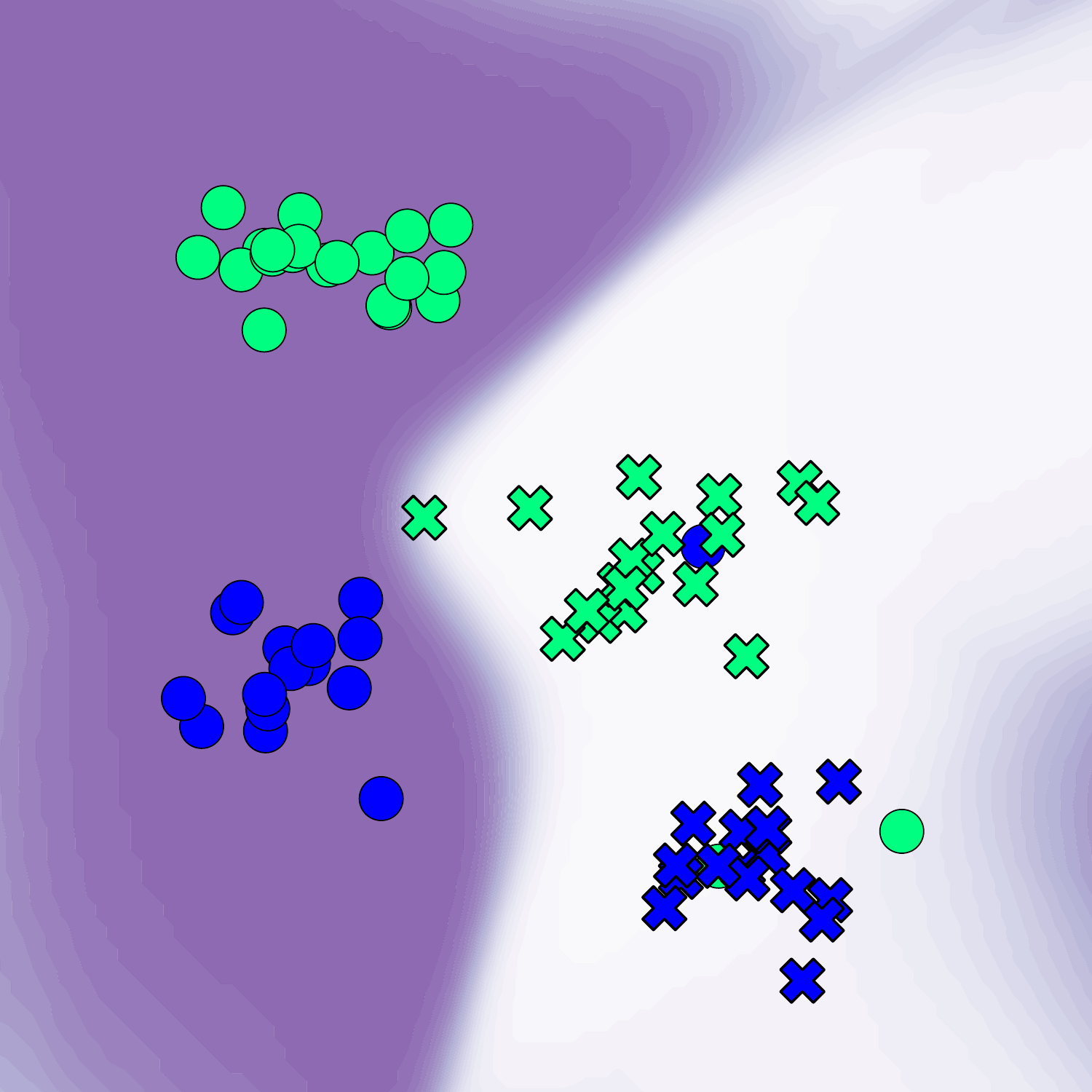}}
        \put(290,105){\includegraphics[width=0.19\textwidth]{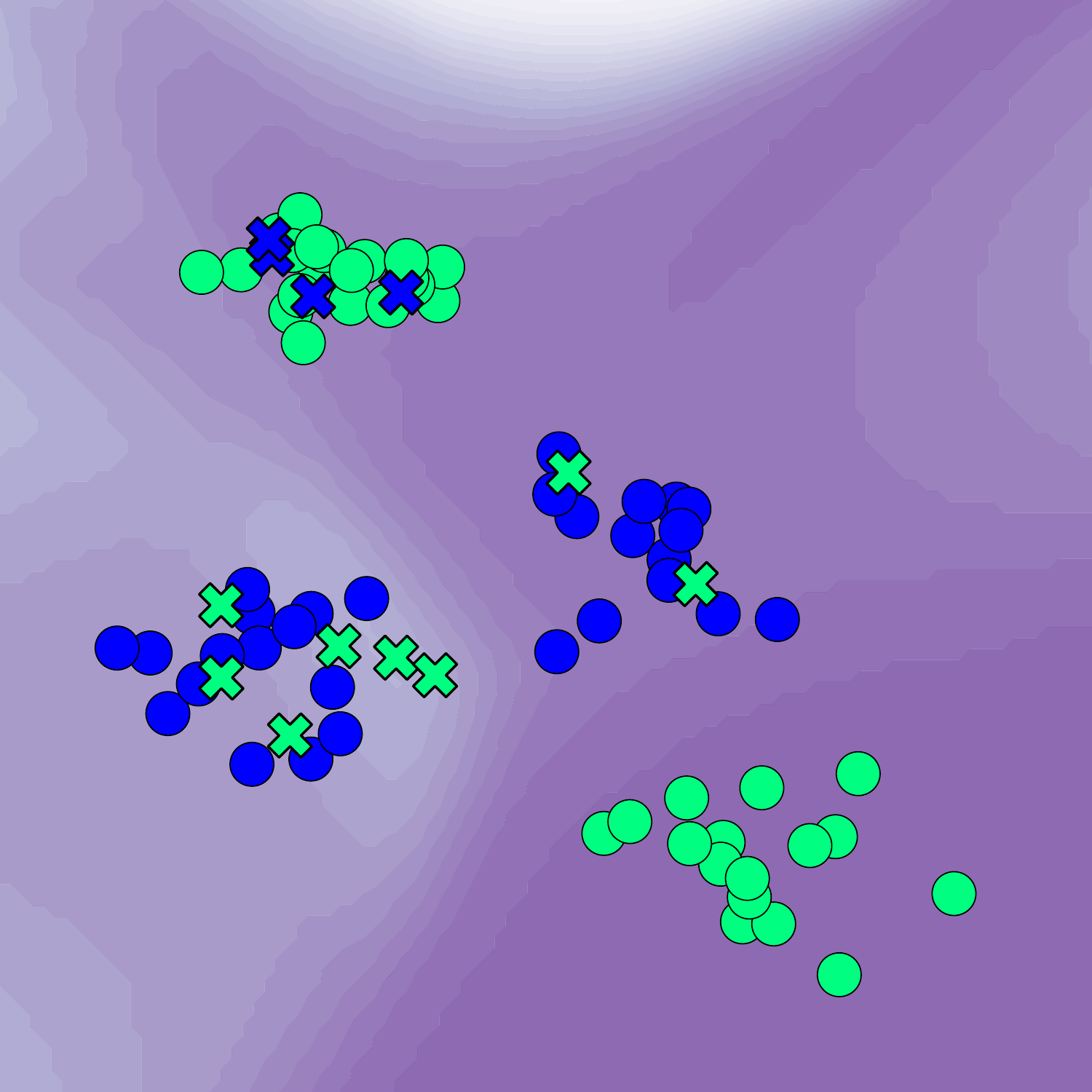}}
        \put(380,105){\includegraphics[width=0.19\textwidth]{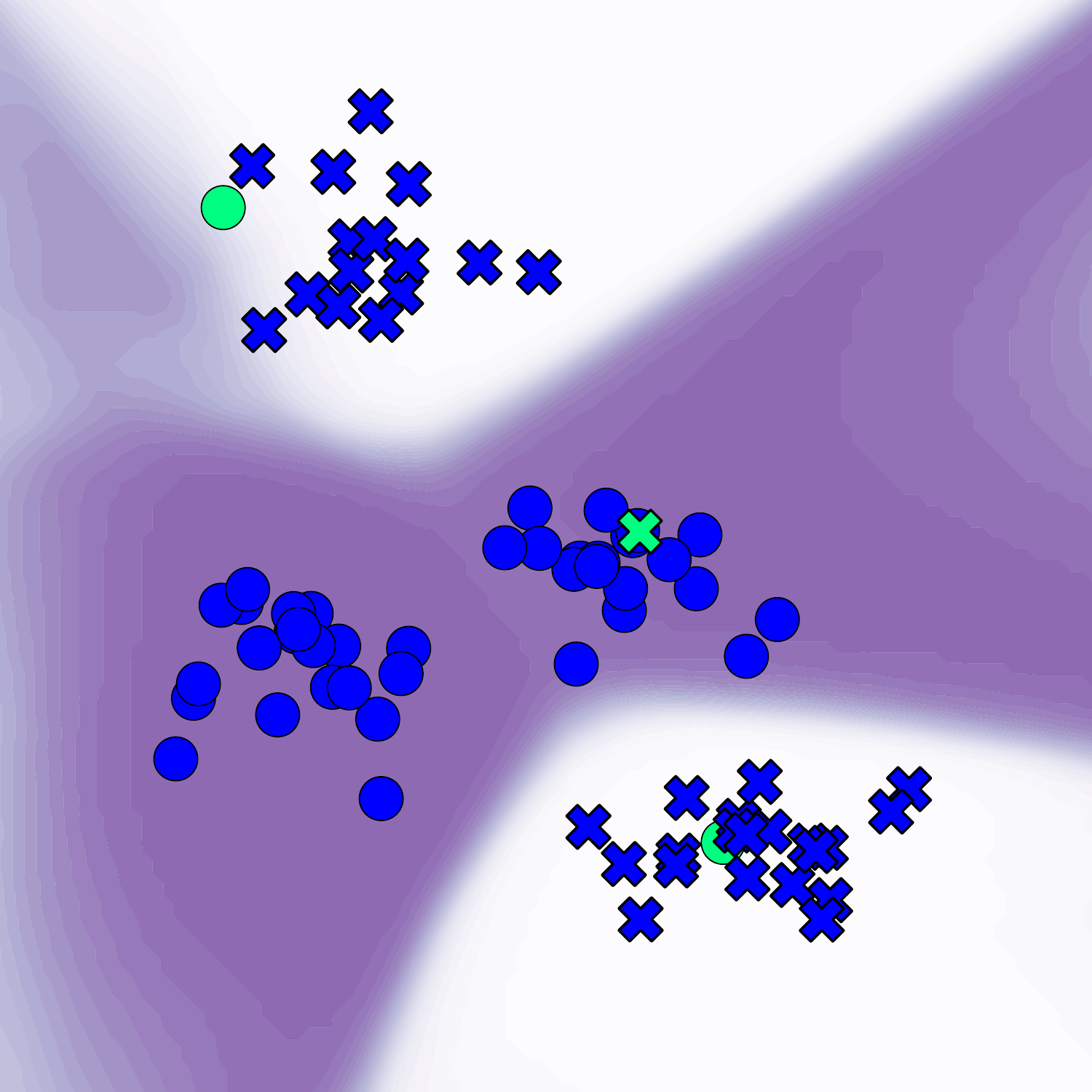}}
        
        \put(10,33){\rotatebox{90}{\textbf{\scriptsize{MaDL(X, F)}}}}
        
        \put(20,15){\includegraphics[width=0.19\textwidth]{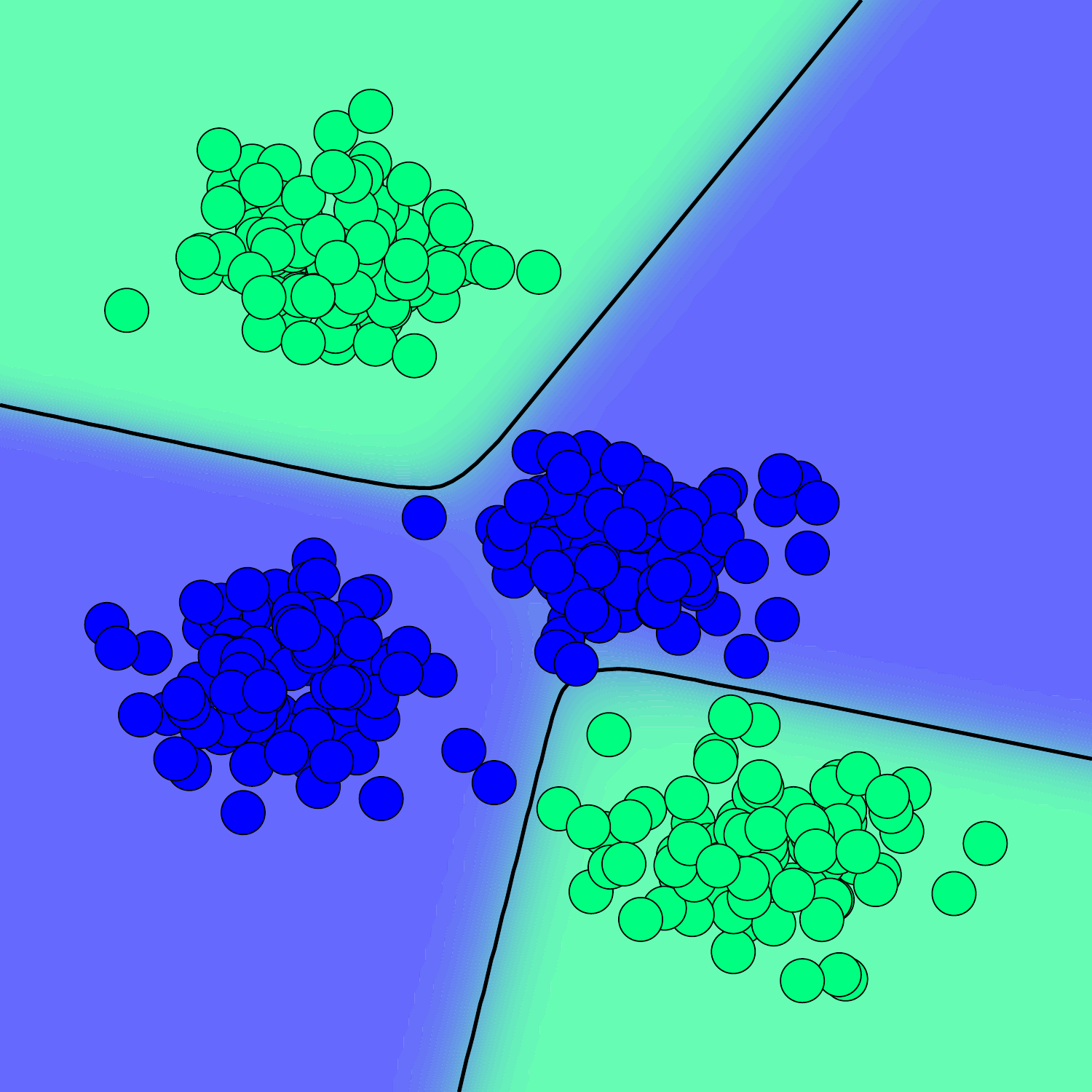}}
        \put(110,15){\includegraphics[width=0.19\textwidth]{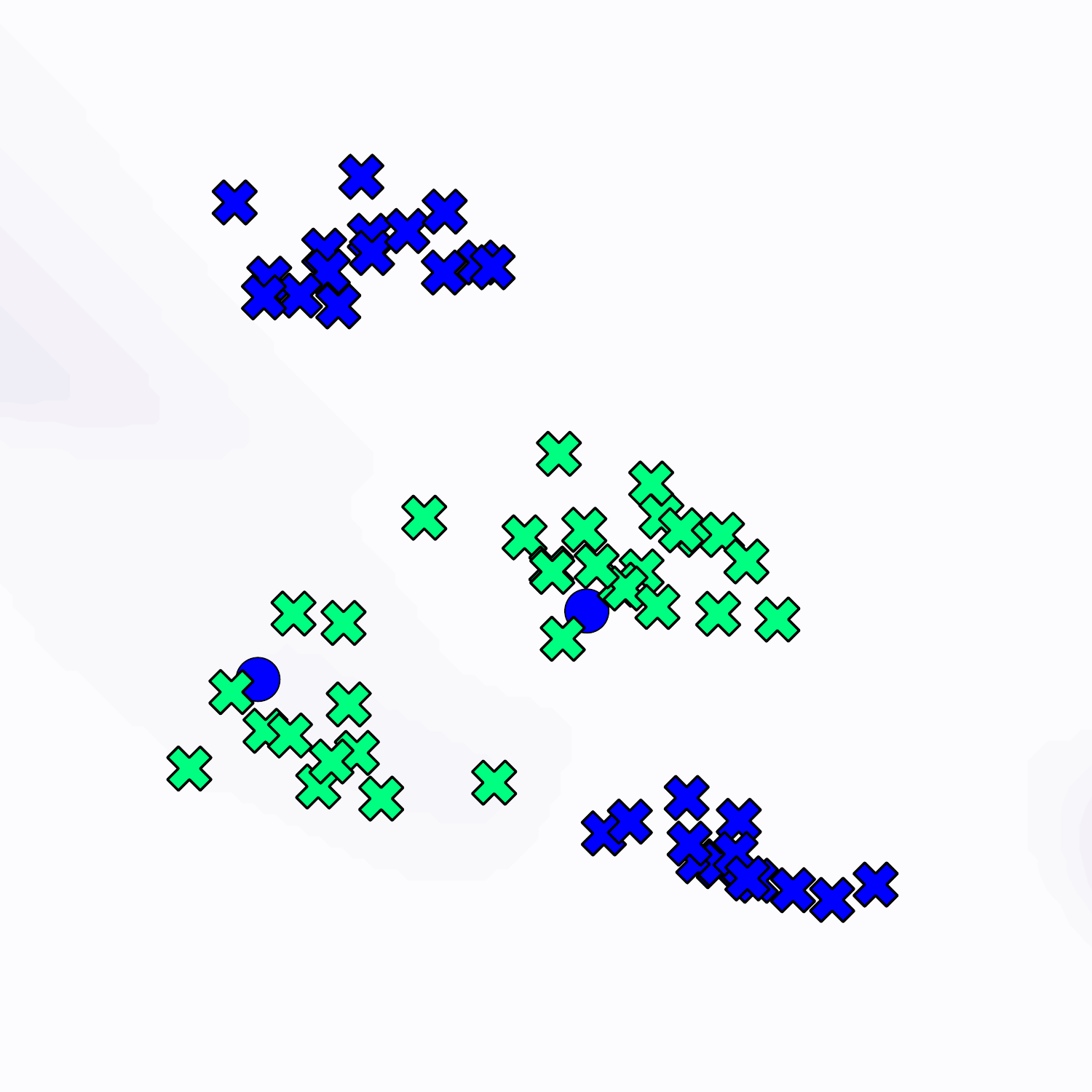}}
        \put(200,15){\includegraphics[width=0.19\textwidth]{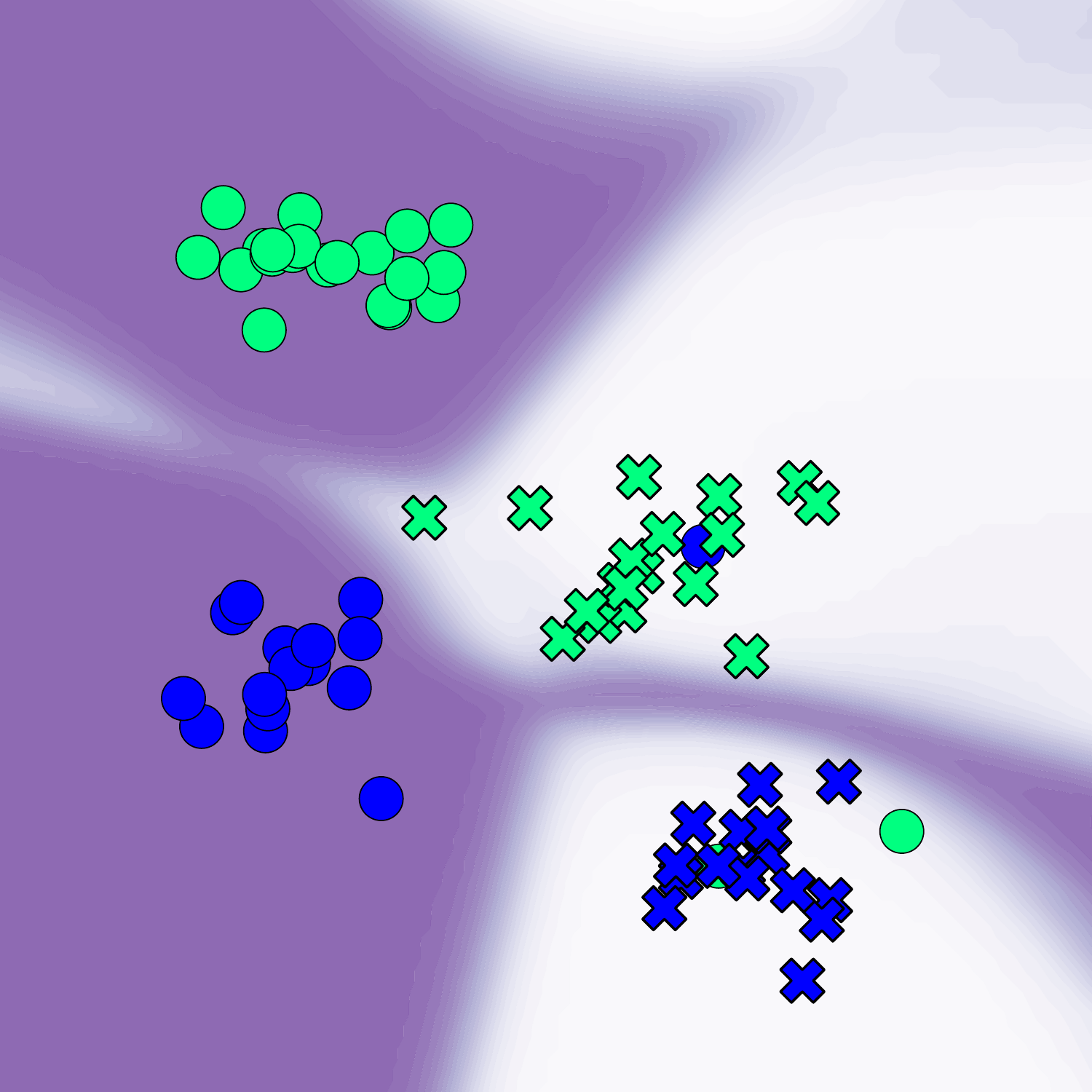}}
        \put(290,15){\includegraphics[width=0.19\textwidth]{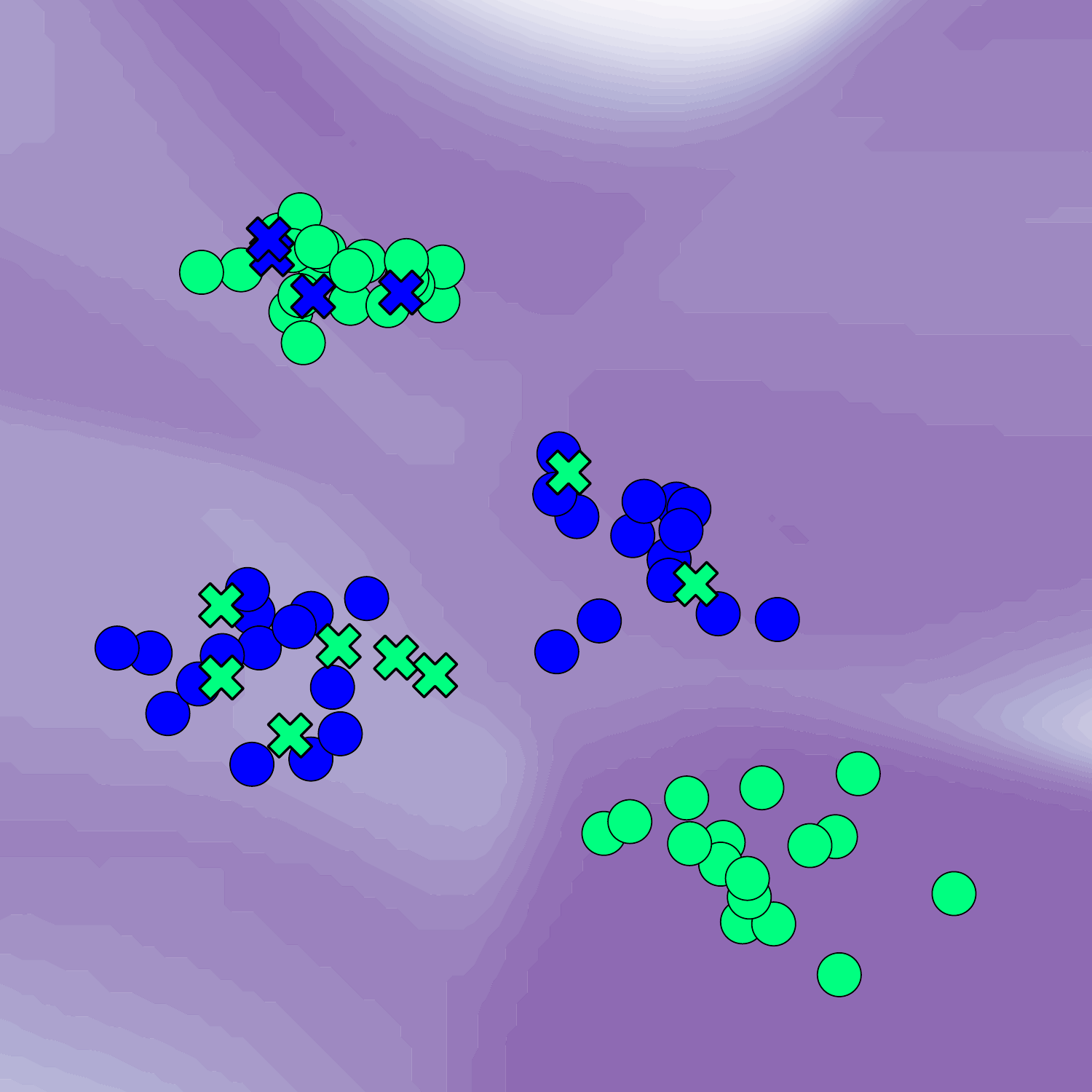}}
        \put(380,15){\includegraphics[width=0.19\textwidth]{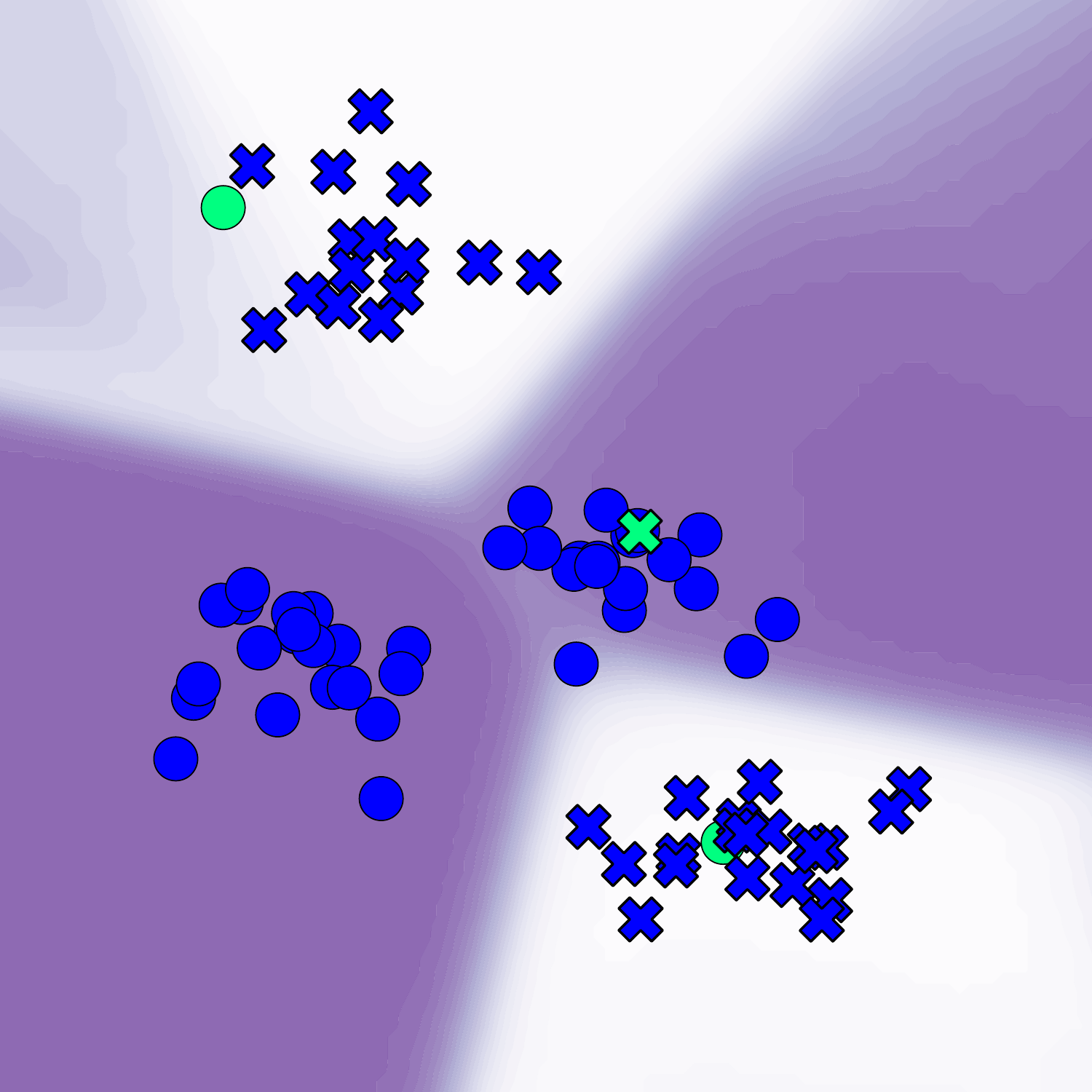}}
        
        \put(35,4){\scriptsize{\textbf{Classification}}}
        \put(128,8){\scriptsize{\textbf{Adversarial}}}
        \put(131,0){\scriptsize{\textbf{Annotator}}}
        \put(206,8){\scriptsize{\textbf{Cluster-specialized}}}
        \put(223,0){\scriptsize{\textbf{Annotator}}}
        \put(317,8){\scriptsize{\textbf{Common}}}
        \put(315,0){\scriptsize{\textbf{Annotator}}}
        \put(392,8){\scriptsize{\textbf{Class-specialized}}}
        \put(409,0){\scriptsize{\textbf{Annotator}}}
    \end{picture}

    \caption{Visualization of MaDL's predictive behavior for the two-dimensional dataset \textsc{toy}.}
    \label{fig:toy-madl}
\end{figure}

\textbf{Quantitative study:} 
Table~\ref{tab:rq-1-real-world} presents the numerical evaluation results for the two datasets with real-world annotators.
There, we only report the GT models' test results since no annotations for the test instances are available to assess the AP models' test performances.
Table~\ref{tab:rq-1-simulated} presents the GT and AP models' test results for the four datasets with simulated annotators.
Both tables indicate whether a technique models class-dependent (property P1) and/or instance-dependent (property P2) APs.
Generally, training with GT labels as UB achieves the best performances, while the LB with annotations aggregated according to the majority rule leads to the worst ones.
The latter observation confirms that leveraging AP estimates during training is beneficial.
Moreover, these AP estimates are typically meaningful, corresponding to BAL-ACC values above 0.5. 
An exception is MaDL($\overline{\text{X}}$, I) because this variant only estimates by design a constant performance per annotator across the feature space.
Comparing MaDL(X, F) as the most general variant to related techniques, we observe that it achieves competitive or superior results for all datasets and evaluation scores. 
Next to MaDL(X, F), CoNAL often delivers better results than the competitors. 
When we investigate the performances of the MaDL variants with instance-independent APs, we find that MaDL($\overline{\text{X}}$, F) achieves the most robust performances across all datasets.
In particular, for the datasets with real-world annotators, the ACC of the respective GT model is superior.
This observation suggests that modeling class-dependent APs (property P1) is beneficial.
We recognize a similar trend for the MaDL variants with instance-dependent APs (property P2). 
Comparing each pair of MaDL variants with X and $\overline{\text{X}}$, we observe that instance-dependent APs often improve GT and, in particular, AP estimates. 
The advantage of class- and instance-dependent APs is confirmed by CoNAL as a strong competitor of MaDL(X, F).
LIA's inferior performance contrasts this, although LIA estimates class- and instance-dependent APs. 
The difference in training algorithms can likely explain this observation.
While MaDL(X, F) and CoNAL train via an end-to-end algorithm, LIA trains via the EM algorithm, leading to higher runtimes and introducing additional sensitive hyperparameters, e.g., the number of EM iterations and training epochs per M step.

\begin{table}[!h]
    \caption{
        Results regarding RQ1 for datasets with real-world annotators: {\color{blue}{\textBF{Best}}} and {\color{purple}{\textBF{second best}}} performances are highlighted per dataset and evaluation score while {\color{orange}{excluding}} the performances of the UB.
    }
    \label{tab:rq-1-real-world}
    \setlength{\tabcolsep}{7.55pt}
    \scriptsize
    \centering
    \begin{tabular}{|l|c|c||c|c|c||c|c|c|}
        \toprule
        \multicolumn{1}{|c|}{\multirow{2}{*}{\textbf{Technique}}} &  \multirow{2}{*}{\textbf{P1}} & \multirow{2}{*}{\textbf{P2}} & 
        \multicolumn{3}{c||}{\textbf{Ground Truth Model}} & 
        \multicolumn{3}{c|}{\textbf{Ground Truth Model}} \\
        & & &
        ACC $\uparrow$  & NLL $\downarrow$  & 
        BS $\downarrow$  &  ACC $\uparrow$  & NLL $\downarrow$  & 
        BS $\downarrow$ \\ 
        \midrule 
        \hline
        \multicolumn{3}{|c||}{\cellcolor{datasetcolor!10}} & \multicolumn{3}{c||}{\cellcolor{datasetcolor!10} \textbf{\textsc{music}}} & \multicolumn{3}{c|}{\cellcolor{datasetcolor!10} \textbf{\textsc{labelme}}} \\
        \hline
        UB & \yes & \yes & \color{orange}{0.785$\pm$0.020} &       \color{orange}{0.710$\pm$0.037} &     \color{orange}{0.314$\pm$0.027} &        \color{orange}{0.914$\pm$0.003} &       \color{orange}{0.580$\pm$0.112} &     \color{orange}{0.150$\pm$0.003} \\
        LB & \yes & \yes &  0.646$\pm$0.045 &       1.096$\pm$0.103 &     0.492$\pm$0.051 &        0.810$\pm$0.015 &       0.724$\pm$0.155 &     0.294$\pm$0.024 \\
        \hline
        CL  & \yes & \no &  0.675$\pm$0.015 &       1.672$\pm$0.400 &     0.524$\pm$0.021 &         0.857$\pm$0.011 &       1.774$\pm$1.155 &     0.250$\pm$0.014 \\
        REAC  & \yes & \no & 0.705$\pm$0.023 &       0.893$\pm$0.081 &     0.410$\pm$0.033 &       0.843$\pm$0.006 &       0.833$\pm$0.088 &     0.254$\pm$0.006 \\
        UNION  & \yes & \no &  0.682$\pm$0.013 &       1.396$\pm$0.143 &     0.501$\pm$0.027 &       0.855$\pm$0.004 &       1.074$\pm$0.340 &     0.248$\pm$0.011 \\ 
        LIA  & \yes & \yes & 0.658$\pm$0.023 &       1.158$\pm$0.047 &     0.498$\pm$0.020 &        0.813$\pm$0.010 &       0.976$\pm$0.234 &     0.295$\pm$0.009 \\
        CoNAL  & \yes & \yes & 0.708$\pm$0.031 &       0.964$\pm$0.081 &     0.423$\pm$0.035 &        \color{purple}{\textBF{0.866$\pm$0.004}} &       2.740$\pm$1.304 &     0.247$\pm$0.023 \\
        \hline
        MaDL($\overline{\text{X}}$, I) & \no & \no & 0.718$\pm$0.010 &       \color{blue}{\textBF{0.871$\pm$0.027}} &     \color{purple}{\textBF{0.394$\pm$0.009}} &         0.815$\pm$0.009 &       \color{purple}{\textBF{0.616$\pm$0.125}} &     0.276$\pm$0.017 \\
        MaDL($\overline{\text{X}}$, P) & \partially & \no &  0.720$\pm$0.018 &       \color{purple}{\textBF{0.871$\pm$0.030}} &     0.396$\pm$0.009 &        0.811$\pm$0.012 &       0.630$\pm$0.128 &     0.281$\pm$0.022 \\
        MaDL($\overline{\text{X}}$, F)  & \yes & \no& \color{purple}{\textBF{0.725$\pm$0.015}} &       0.977$\pm$0.064 &     0.403$\pm$0.019 &       0.859$\pm$0.007 &       1.008$\pm$0.278 &     \color{purple}{\textBF{0.240$\pm$0.014}} \\
        MaDL(X, I) & \no & \yes &  0.713$\pm$0.027 &       0.876$\pm$0.041 &     0.402$\pm$0.022 &        0.816$\pm$0.008 &       \color{blue}{\textBF{0.559$\pm$0.027}} &     0.276$\pm$0.010 \\ 
        MaDL(X, P) & \partially & \yes &   0.714$\pm$0.014 &       0.909$\pm$0.036 &     0.398$\pm$0.013 &        0.811$\pm$0.009 &       0.771$\pm$0.160 &     0.289$\pm$0.016 \\
        MaDL(X, F) & \yes & \yes &   \color{blue}{\textBF{0.743$\pm$0.018}} &       0.877$\pm$0.030 &     \color{blue}{\textBF{0.381$\pm$0.012}} &        \color{blue}{\textBF{0.867$\pm$0.004}} &       0.623$\pm$0.124 &     \color{blue}{\textBF{0.214$\pm$0.008}} \\ 
        \hline
        \bottomrule
    \end{tabular}
\end{table}

\begin{table}[!b]
    \caption{
        Results regarding RQ1 for datasets with simulated annotators: {\color{blue}{\textBF{Best}}} and {\color{purple}{\textBF{second best}}} performances are highlighted per dataset and evaluation score while {\color{orange}{excluding}} the performances of the UB.
    }
    \label{tab:rq-1-simulated}
    \setlength{\tabcolsep}{4.75pt}
    \scriptsize
    \centering
    \begin{tabular}{|l|c|c||c|c|c||c|c|c|c|}
        \toprule
        \multicolumn{1}{|c|}{\multirow{2}{*}{\textbf{Technique}}} &  \multirow{2}{*}{\textbf{P1}} & \multirow{2}{*}{\textbf{P2}} & 
        \multicolumn{3}{c||}{\textbf{Ground Truth Model}} & 
        \multicolumn{4}{c|}{\textbf{Annotator Performance Model}} \\
        & & &
        ACC $\uparrow$  & NLL $\downarrow$  & 
        BS $\downarrow$  &  ACC $\uparrow$  & NLL $\downarrow$  & 
        BS $\downarrow$ & BAL-ACC $\uparrow$ \\ 
        \midrule 
        \hline
        \multicolumn{10}{|c|}{\cellcolor{datasetcolor!10} \textbf{\textsc{letter} (\textsc{independent})}} \\
        \hline
        UB & \yes & \yes & \color{orange}{0.961$\pm$0.003} &       \color{orange}{0.130$\pm$0.006} &     \color{orange}{0.059$\pm$0.004} &        \color{orange}{0.770$\pm$0.001} &       \color{orange}{0.488$\pm$0.003} &     \color{orange}{0.315$\pm$0.002} &           \color{orange}{0.709$\pm$0.001} \\
        LB & \yes & \yes &  0.878$\pm$0.004 &       0.980$\pm$0.021 &     0.385$\pm$0.008 &        0.664$\pm$0.004 &       0.624$\pm$0.003 &     0.433$\pm$0.003 &           0.666$\pm$0.004 \\
        \hline
        CL  & \yes & \no &  0.886$\pm$0.013 &       1.062$\pm$0.145 &     0.181$\pm$0.020 &        0.663$\pm$0.006 &       0.625$\pm$0.013 &     0.430$\pm$0.010 &           0.601$\pm$0.002 \\
        REAC  & \yes & \no & 0.936$\pm$0.005 &       \color{purple}{\textBF{0.238$\pm$0.018}} &     0.097$\pm$0.007 &        0.685$\pm$0.002 &       0.560$\pm$0.001 &     0.385$\pm$0.001 &           0.604$\pm$0.002 \\
        UNION  & \yes & \no &  0.905$\pm$0.016 &       0.906$\pm$0.435 &     0.151$\pm$0.030 &        0.670$\pm$0.004 &       0.589$\pm$0.008 &     0.408$\pm$0.006 &        0.605$\pm$0.002 \\
        LIA  & \yes & \yes & 0.897$\pm$0.005 &       0.778$\pm$0.052 &     0.305$\pm$0.021 &        0.669$\pm$0.004 &       0.654$\pm$0.010 &     0.447$\pm$0.004 &        0.616$\pm$0.003 \\
        CoNAL  & \yes & \yes & 0.907$\pm$0.016 &       0.813$\pm$0.354 &     0.143$\pm$0.027 &        0.723$\pm$0.018 &       0.555$\pm$0.024 &     0.372$\pm$0.020 &           0.663$\pm$0.017 \\
        \hline
        MaDL($\overline{\text{X}}$, I) & \no & \no &  0.934$\pm$0.003 &       0.269$\pm$0.035 &     0.100$\pm$0.004 &        0.607$\pm$0.001 &       0.627$\pm$0.000 &     0.444$\pm$0.000 &           0.500$\pm$0.000 \\
        MaDL($\overline{\text{X}}$, P) & \partially & \no &  0.935$\pm$0.005 &       \color{blue}{\textBF{0.235$\pm$0.013}} &     0.099$\pm$0.006 &        0.692$\pm$0.001 &       0.556$\pm$0.001 &     0.381$\pm$0.001 &           0.606$\pm$0.003 \\
        MaDL($\overline{\text{X}}$, F)  & \yes & \no&  0.933$\pm$0.005 &       0.255$\pm$0.025 &     0.100$\pm$0.005 &        0.691$\pm$0.002 &       0.556$\pm$0.001 &     0.381$\pm$0.001 &           0.606$\pm$0.002 \\
        MaDL(X, I) & \no & \yes &  \color{purple}{\textBF{0.938$\pm$0.006}} &       0.247$\pm$0.043 &     \color{purple}{\textBF{0.092$\pm$0.008}} &        \color{purple}{\textBF{0.770$\pm$0.004}} &       \color{purple}{\textBF{0.492$\pm$0.016}} &     \color{blue}{\textBF{0.316$\pm$0.007}} &           \color{purple}{\textBF{0.708$\pm$0.004}} \\
        MaDL(X, P) & \partially & \yes &   \color{blue}{\textBF{0.940$\pm$0.004}} &       0.242$\pm$0.045 &     \color{blue}{\textBF{0.090$\pm$0.004}} &        \color{blue}{\textBF{0.770$\pm$0.006}} &       0.496$\pm$0.020 &     \color{purple}{\textBF{0.316$\pm$0.009}} &           \color{blue}{\textBF{0.708$\pm$0.005}} \\
        MaDL(X, F) & \yes & \yes &   0.935$\pm$0.006 &       0.303$\pm$0.092 &     0.098$\pm$0.009 &        0.766$\pm$0.004 &       \color{blue}{\textBF{0.491$\pm$0.006}} &     0.317$\pm$0.004 &           0.702$\pm$0.005 \\
        \hline
        \multicolumn{10}{|c|}{\cellcolor{datasetcolor!10} \textbf{\textsc{fmnist} (\textsc{independent})}} \\
        \hline
        UB & \yes & \yes & \color{orange}{0.909$\pm$0.002} &       \color{orange}{0.246$\pm$0.005} &     \color{orange}{0.131$\pm$0.003} &        \color{orange}{0.756$\pm$0.001} &       \color{orange}{0.485$\pm$0.001} &     \color{orange}{0.321$\pm$0.001} &           \color{orange}{0.704$\pm$0.001} \\
        LB & \yes & \yes &  0.883$\pm$0.001 &       0.903$\pm$0.003 &     0.385$\pm$0.001 &        0.644$\pm$0.007 &       0.645$\pm$0.005 &     0.453$\pm$0.004 &           0.585$\pm$0.007 \\
        \hline
        CL  & \yes & \no &  0.892$\pm$0.002 &       0.312$\pm$0.008 &     0.158$\pm$0.004 &        0.674$\pm$0.002 &       0.580$\pm$0.001 &     0.402$\pm$0.001 &           0.623$\pm$0.001 \\
        REAC  & \yes & \no & 0.894$\pm$0.003 &       0.309$\pm$0.011 &     0.155$\pm$0.004 &        0.703$\pm$0.001 &       0.535$\pm$0.001 &     0.364$\pm$0.000 &           0.641$\pm$0.001 \\
        UNION  & \yes & \no &  0.893$\pm$0.002 &       0.305$\pm$0.006 &     0.155$\pm$0.003 &        0.674$\pm$0.002 &       0.570$\pm$0.002 &     0.395$\pm$0.002 &        0.622$\pm$0.001 \\ 
        LIA  & \yes & \yes &  0.858$\pm$0.002 &       1.017$\pm$0.016 &     0.442$\pm$0.008 &        0.665$\pm$0.024 &       0.628$\pm$0.017 &     0.437$\pm$0.016 &        0.613$\pm$0.027 \\
        CoNAL  & \yes & \yes & 0.894$\pm$0.004 &       0.304$\pm$0.009 &     0.155$\pm$0.004 &        0.725$\pm$0.016 &       0.521$\pm$0.018 &     0.351$\pm$0.016 &           0.679$\pm$0.018 \\
        \hline
        MaDL($\overline{\text{X}}$, I) & \no & \no &  0.896$\pm$0.003 &       0.340$\pm$0.006 &     0.161$\pm$0.004 &        0.590$\pm$0.000 &       0.638$\pm$0.000 &     0.453$\pm$0.000 &           0.500$\pm$0.000 \\
        MaDL($\overline{\text{X}}$, P) & \partially & \no & 0.894$\pm$0.001 &       0.307$\pm$0.003 &     0.155$\pm$0.001 &        0.705$\pm$0.001 &       0.534$\pm$0.000 &     0.363$\pm$0.000 &           0.640$\pm$0.001 \\
        MaDL($\overline{\text{X}}$, F)  & \yes & \no&  0.894$\pm$0.002 &       0.307$\pm$0.006 &     0.155$\pm$0.003 &        0.705$\pm$0.000 &       0.534$\pm$0.000 &     0.363$\pm$0.000 &           0.640$\pm$0.000 \\
        MaDL(X, I) & \no & \yes &  0.895$\pm$0.003 &       0.291$\pm$0.005 &     0.150$\pm$0.003 &        \color{blue}{\textBF{0.752$\pm$0.004}} &       \color{purple}{\textBF{0.490$\pm$0.004}} &     \color{purple}{\textBF{0.325$\pm$0.003}} &           \color{blue}{\textBF{0.699$\pm$0.004}} \\
        MaDL(X, P) & \partially & \yes &   \color{blue}{\textBF{0.899$\pm$0.003}} &       \color{blue}{\textBF{0.286$\pm$0.006}} &     \color{blue}{\textBF{0.147$\pm$0.003}} &        \color{purple}{\textBF{0.751$\pm$0.003}} &       \color{blue}{\textBF{0.489$\pm$0.004}} &     \color{blue}{\textBF{0.324$\pm$0.003}} &           \color{purple}{\textBF{0.698$\pm$0.005}} \\
        MaDL(X, F) & \yes & \yes &   \color{purple}{\textBF{0.896$\pm$0.002}} &       \color{purple}{\textBF{0.288$\pm$0.006}} &     \color{purple}{\textBF{0.148$\pm$0.003}} &        0.750$\pm$0.005 &       0.491$\pm$0.005 &     0.326$\pm$0.005 &           0.697$\pm$0.006 \\
        \hline
        \multicolumn{10}{|c|}{\cellcolor{datasetcolor!10} \textbf{\textsc{cifar10} (\textsc{independent})}} \\
        \hline
        UB & \yes & \yes & \color{orange}{0.933$\pm$0.002} &       \color{orange}{0.519$\pm$0.026} &     \color{orange}{0.118$\pm$0.004} &        \color{orange}{0.710$\pm$0.001} &       \color{orange}{0.547$\pm$0.001} &     \color{orange}{0.369$\pm$0.001} &           \color{orange}{0.658$\pm$0.001} \\
        LB & \yes & \yes &  0.789$\pm$0.004 &       1.081$\pm$0.031 &     0.460$\pm$0.015 &        0.575$\pm$0.021 &       0.673$\pm$0.006 &     0.481$\pm$0.006 &           0.547$\pm$0.011 \\
        \hline
        CL  & \yes & \no &  0.833$\pm$0.003 &       0.536$\pm$0.012 &     0.242$\pm$0.004 &        0.664$\pm$0.001 &       0.604$\pm$0.002 &     0.420$\pm$0.001 &           0.613$\pm$0.001 \\
        REAC  & \yes & \no & 0.839$\pm$0.003 &       0.581$\pm$0.010 &     0.245$\pm$0.003 &        0.676$\pm$0.003 &       0.580$\pm$0.006 &     0.397$\pm$0.004 &           0.625$\pm$0.002 \\
        UNION  & \yes & \no &  0.834$\pm$0.003 &       0.595$\pm$0.022 &     0.249$\pm$0.005 &        0.668$\pm$0.001 &       0.592$\pm$0.001 &     0.410$\pm$0.001 &        0.617$\pm$0.002 \\
        LIA  & \yes & \yes &  0.805$\pm$0.003 &       1.102$\pm$0.035 &     0.469$\pm$0.016 &        0.622$\pm$0.024 &       0.645$\pm$0.014 &     0.453$\pm$0.014 &        0.579$\pm$0.019 \\
        CoNAL  & \yes & \yes & 0.838$\pm$0.005 &       0.530$\pm$0.021 &     0.236$\pm$0.008 &        0.668$\pm$0.001 &       0.600$\pm$0.001 &     0.416$\pm$0.001 &           0.616$\pm$0.001 \\
        \hline
        MaDL($\overline{\text{X}}$, I) & \no & \no &  0.832$\pm$0.006 &       0.583$\pm$0.021 &     0.256$\pm$0.009 &        0.576$\pm$0.010 &       0.646$\pm$0.002 &     0.461$\pm$0.002 &           0.500$\pm$0.000 \\
        MaDL($\overline{\text{X}}$, P) & \partially & \no & 0.844$\pm$0.004 &       \color{purple}{\textBF{0.529$\pm$0.014}} &     \color{purple}{\textBF{0.231$\pm$0.004}} &        0.682$\pm$0.001 &       0.568$\pm$0.001 &     0.390$\pm$0.001 &           0.630$\pm$0.002 \\
        MaDL($\overline{\text{X}}$, F)  & \yes & \no&  0.840$\pm$0.005 &       0.545$\pm$0.019 &     0.237$\pm$0.006 &        0.681$\pm$0.001 &       0.569$\pm$0.002 &     0.390$\pm$0.001 &           0.630$\pm$0.001 \\
        MaDL(X, I) & \no & \yes &   0.843$\pm$0.005 &       0.555$\pm$0.024 &     0.236$\pm$0.008 &        0.697$\pm$0.002 &       0.559$\pm$0.005 &     0.380$\pm$0.003 &           0.646$\pm$0.002 \\
        MaDL(X, P) & \partially & \yes &   \color{purple}{\textBF{0.845$\pm$0.002}} &       0.546$\pm$0.027 &     0.232$\pm$0.005 &        \color{purple}{\textBF{0.697$\pm$0.001}} &       \color{purple}{\textBF{0.557$\pm$0.002}} &     \color{purple}{\textBF{0.380$\pm$0.001}} &           \color{blue}{\textBF{0.646$\pm$0.002}} \\
        MaDL(X, F) & \yes & \yes &   \color{blue}{\textBF{0.846$\pm$0.003}} &       \color{blue}{\textBF{0.521$\pm$0.014}} &     \color{blue}{\textBF{0.229$\pm$0.005}} &        \color{blue}{\textBF{0.697$\pm$0.002}} &      \color{blue}{\textBF{0.557$\pm$0.004}} &     \color{blue}{\textBF{0.379$\pm$0.002}} &           \color{purple}{\textBF{0.646$\pm$0.003}} \\
        \hline
        \multicolumn{10}{|c|}{\cellcolor{datasetcolor!10} \textbf{\textsc{svhn} (\textsc{independent})}} \\
        \hline
        UB & \yes & \yes & \color{orange}{0.965$\pm$0.000} &       \color{orange}{0.403$\pm$0.024} &     \color{orange}{0.064$\pm$0.001} &        \color{orange}{0.675$\pm$0.002} &       \color{orange}{0.567$\pm$0.001} &     \color{orange}{0.392$\pm$0.001} &           \color{orange}{0.590$\pm$0.004} \\
        LB & \yes & \yes &  0.930$\pm$0.002 &       0.811$\pm$0.030 &     0.332$\pm$0.015 &        0.581$\pm$0.021 &       0.680$\pm$0.008 &     0.487$\pm$0.008 &           0.540$\pm$0.000 \\
        \hline
        CL  & \yes & \no &  \color{purple}{\textBF{0.944$\pm$0.001}} &       \color{blue}{\textBF{0.237$\pm$0.008}} &     0.085$\pm$0.002 &        0.646$\pm$0.001 &       0.598$\pm$0.001 &     0.419$\pm$0.001 &           0.546$\pm$0.001 \\
        REAC  & \yes & \no &       0.943$\pm$0.001 &       0.278$\pm$0.048 &     0.096$\pm$0.020 &        0.648$\pm$0.006 &       0.593$\pm$0.015 &     0.414$\pm$0.010 &           0.543$\pm$0.000 \\
        UNION  & \yes & \no &  0.942$\pm$0.002 &       0.250$\pm$0.005 &     0.087$\pm$0.001 &        0.646$\pm$0.001 &       0.594$\pm$0.001 &     0.416$\pm$0.000 &        0.544$\pm$0.001 \\
        LIA  & \yes & \yes & 0.935$\pm$0.002 &       0.809$\pm$0.162 &     0.333$\pm$0.081 &        0.585$\pm$0.016 &       0.667$\pm$0.023 &     0.476$\pm$0.021 &        0.536$\pm$0.004 \\
        CoNAL  & \yes & \yes & 0.944$\pm$0.002 &       0.246$\pm$0.012 &     0.086$\pm$0.002 &        \color{blue}{\textBF{0.688$\pm$0.036}} &       \color{blue}{\textBF{0.560$\pm$0.029}} &     \color{blue}{\textBF{0.384$\pm$0.026}} &           \color{blue}{\textBF{0.602$\pm$0.050}} \\
        \hline
        MaDL($\overline{\text{X}}$, I) & \no & \no &  0.942$\pm$0.003 &       0.253$\pm$0.023 &     0.093$\pm$0.008 &        0.613$\pm$0.003 &       0.630$\pm$0.003 &     0.446$\pm$0.003 &           0.500$\pm$0.000 \\
        MaDL($\overline{\text{X}}$, P) & \partially & \no & 0.940$\pm$0.002 &       0.262$\pm$0.011 &     0.091$\pm$0.003 &        0.652$\pm$0.000 &       0.585$\pm$0.000 &     0.408$\pm$0.000 &           0.544$\pm$0.000 \\
        MaDL($\overline{\text{X}}$, F)  & \yes & \no&  0.940$\pm$0.002 &       0.264$\pm$0.007 &     0.092$\pm$0.002 &        0.652$\pm$0.001 &       0.585$\pm$0.000 &     0.408$\pm$0.000 &           0.543$\pm$0.001 \\
        MaDL(X, I) & \no & \yes &   0.944$\pm$0.003 &       \color{purple}{\textBF{0.240$\pm$0.007}} &     \color{purple}{\textBF{0.085$\pm$0.003}} &        0.665$\pm$0.001 &       0.575$\pm$0.001 &     0.399$\pm$0.001 &           0.565$\pm$0.001 \\
        MaDL(X, P) & \partially & \yes &   \color{blue}{\textBF{0.945$\pm$0.002}} &       0.245$\pm$0.010 &     \color{blue}{\textBF{0.084$\pm$0.004}} &        \color{purple}{\textBF{0.669$\pm$0.002}} &       \color{purple}{\textBF{0.572$\pm$0.002}} &     \color{purple}{\textBF{0.396$\pm$0.002}} &           \color{purple}{\textBF{0.573$\pm$0.005}} \\
        MaDL(X, F) & \yes & \yes &   0.943$\pm$0.001 &       0.254$\pm$0.013 &     0.087$\pm$0.002 &        0.668$\pm$0.003 &       0.572$\pm$0.003 &     0.396$\pm$0.003 &           0.570$\pm$0.006 \\
        \hline
        \bottomrule
    \end{tabular}
\end{table}

\subsection{RQ2: Does modeling correlations between (potentially spamming) annotators improve learning? \mbox{(Properties P3, P4)}}
\begin{tcolorbox}
    \textbf{Takeaway:}   
    Modeling correlations between annotators leads to better results in scenarios with many correlated spamming annotators (property P4). 
    Capturing the correlations of beneficial annotators does not lead to consistently better results (property P3).
    However, estimating and leveraging APs during training becomes more critical in scenarios with correlated annotators.
\end{tcolorbox}

\textbf{Setup:} 
We address RQ2 by evaluating multi-annotator supervised learning techniques with and without modeling annotator correlations.
We simulate two annotator sets for each dataset without real-world annotators according to Table~\ref{tab:annotator-simulation}.
The first annotator set \textsc{correlated} consists of the same ten annotators as in RQ1.
However, we extend this set by ten additional copies of the adversarial, the class-specialized, and one of the two cluster-specialized annotators, so there are 40 annotators.
The second annotator set \textsc{random-correlated} also consists of the same ten annotators as in RQ1 but is extended by 90 identical randomly guessing annotators.
Each simulated annotator provides class labels for \SI{20}{\percent} of randomly selected training instances.
Next to the related multi-annotator supervised learning techniques and the two baselines, we evaluate two variants of MaDL denoted via the scheme MaDL(P3).
Property P3 refers to the modeling of potential annotator correlations.
There, we differentiate between the variant MaDL(W) using annotator weights via the weighted loss function (cf. Eq.~\ref{eq:weighted-loss-function}) and the variant MaDL($\overline{\text{W}}$) training via the loss function without any weights (cf. Eq.~\ref{eq:negative-log-likelihood}).
MaDL(W) corresponds to MaDL's default variant in this setup.

\textbf{Qualitative study:}
Fig.~\ref{fig:annotator-weights} visualizes MaDL(W)'s learned annotator embeddings and weights for the dataset \textsc{letter} with the two annotator sets, \textsc{correlated} and \textsc{random-correlated}, after five training epochs.
Based on MaDL(W)'s learned kernel function, we create the two scatter plots via multi-dimensional scaling~\citep{kruskal1964multidimensional} for dimensionality reduction. 
This way, the annotator embeddings, originally located in an $(R=16)$-dimensional space, are transformed into a two-dimensional space, where each circle represents one annotator embedding.
A circle's color indicates to which annotator group the embedding belongs.
The two bar plots visualize the mean annotator weight of the different annotator groups, again indicated by their respective color.
Analyzing the scatter plot of the annotator set \textsc{correlated}, we observe that the annotator embeddings' latent representations approximately reflect the annotator groups' correlations.
Concretely, there are four clusters.
The center cluster corresponds to the seven independent annotators, one cluster-specialized annotator and six common annotators.
The three clusters in the outer area represent the three groups of correlated annotators.
The bar plot confirms our goal to assign lower weights to strongly correlated annotators.
For example, the single independent cluster-specialized annotator has a weight of 4.06, while the eleven correlated cluster-specialized annotators have a mean weight of 0.43.
We make similar observations for the annotator set \textsc{random-correlated}.
The scatter plot shows that the independent annotators also form a cluster, separated from the cluster of the large group of correlated, randomly guessing annotators.
The single adversarial annotator belongs to the cluster of randomly guessing annotators since both groups of annotators make many annotation errors and thus have highly correlated annotation patterns.
Again, the bar plot confirms that the correlated annotators get low weights. 
Moreover, these annotator weights are inversely proportional to the size of a group of correlated annotators.
For example, the 90 randomly guessing annotators have a similar weight in sum as the single class-specialized annotator.

\begin{figure}[!h]
    \begin{picture}(100,150)
        \put(11,117){\includegraphics[width=0.46\textwidth]{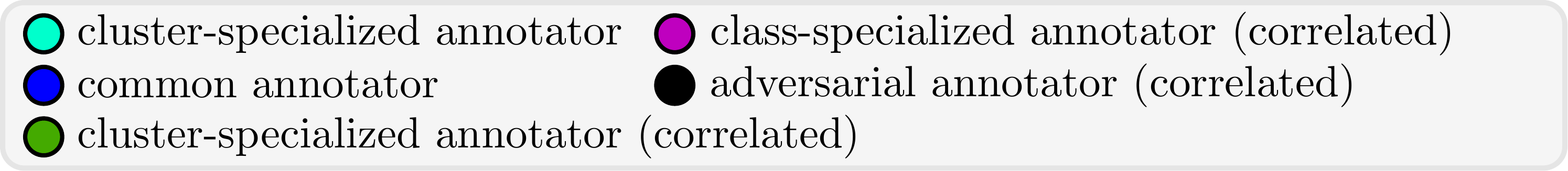}}

        \put(90,0){\textBF{\textsc{correlated}}}
        
        \put(4,32){\rotatebox{90}{\scriptsize{Latent Dimension 2}}}
        \put(28,10){\scriptsize{Latent Dimension 1}}
        \put(12,17){\includegraphics[width=0.21\textwidth]{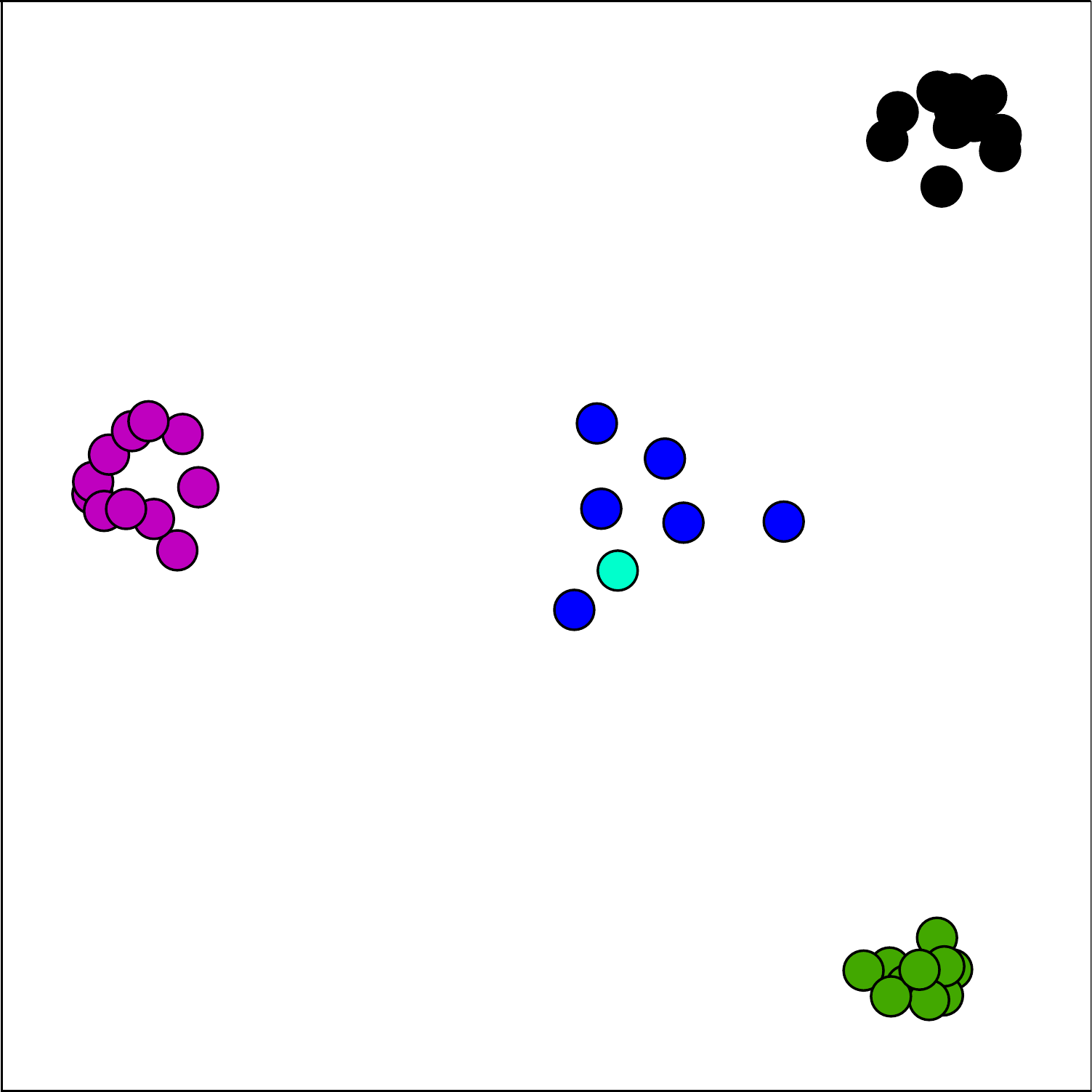}}

        \put(119,23){\rotatebox{90}{\scriptsize{Mean Annotator Weight}}}
        \put(146,10){\scriptsize{Annotator Group}}
        \put(127,17){\includegraphics[width=0.21\textwidth]{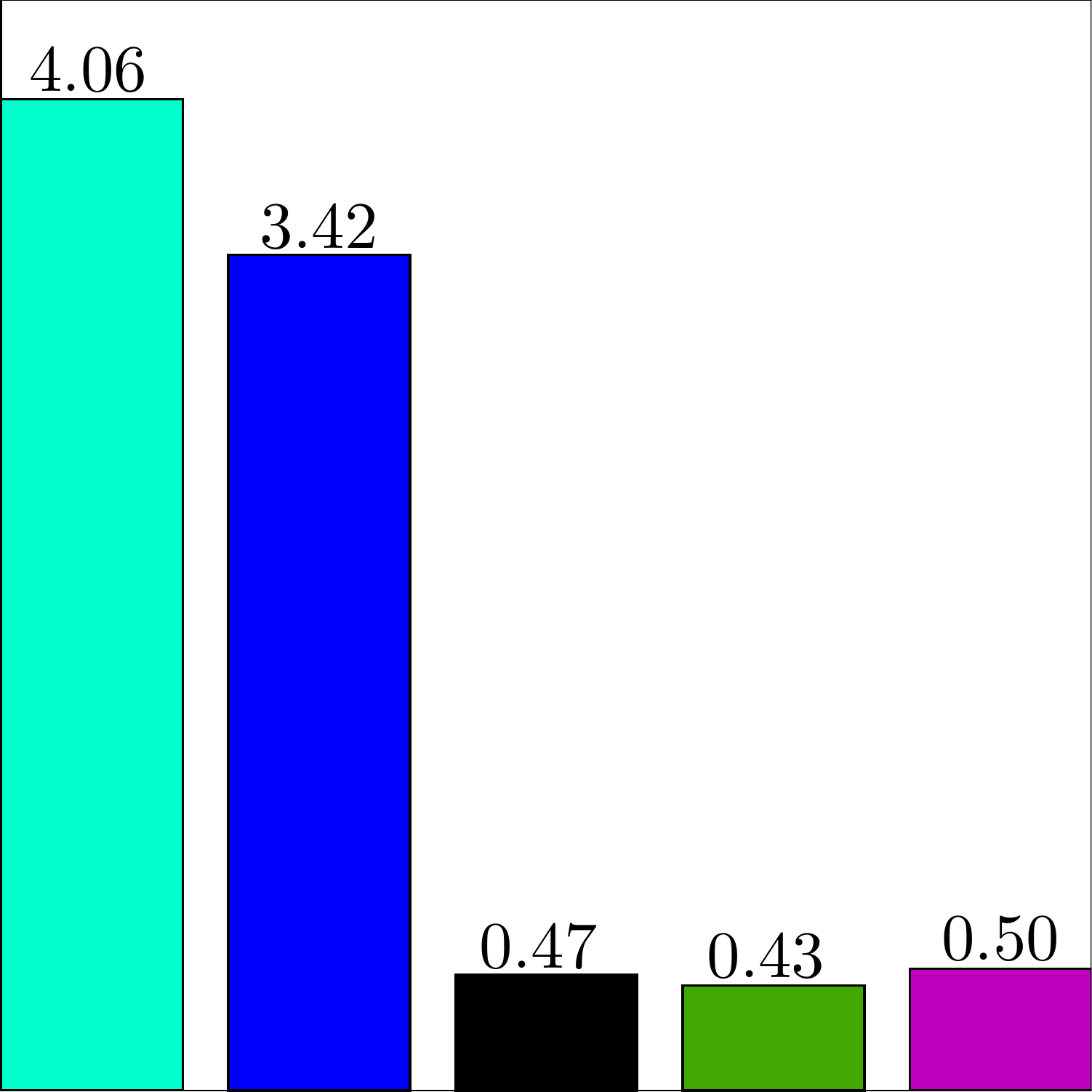}}

        \put(252,117){\includegraphics[width=0.46\textwidth]{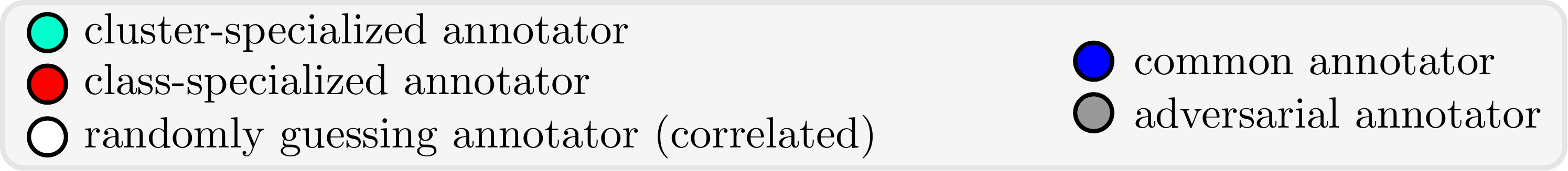}}

        \put(315,0){\textBF{\textsc{random-correlated}}}
        
        \put(245,32){\rotatebox{90}{\scriptsize{Latent Dimension 2}}}
        \put(269,10){\scriptsize{Latent Dimension 1}}
        \put(253,17){\includegraphics[width=0.21\textwidth]{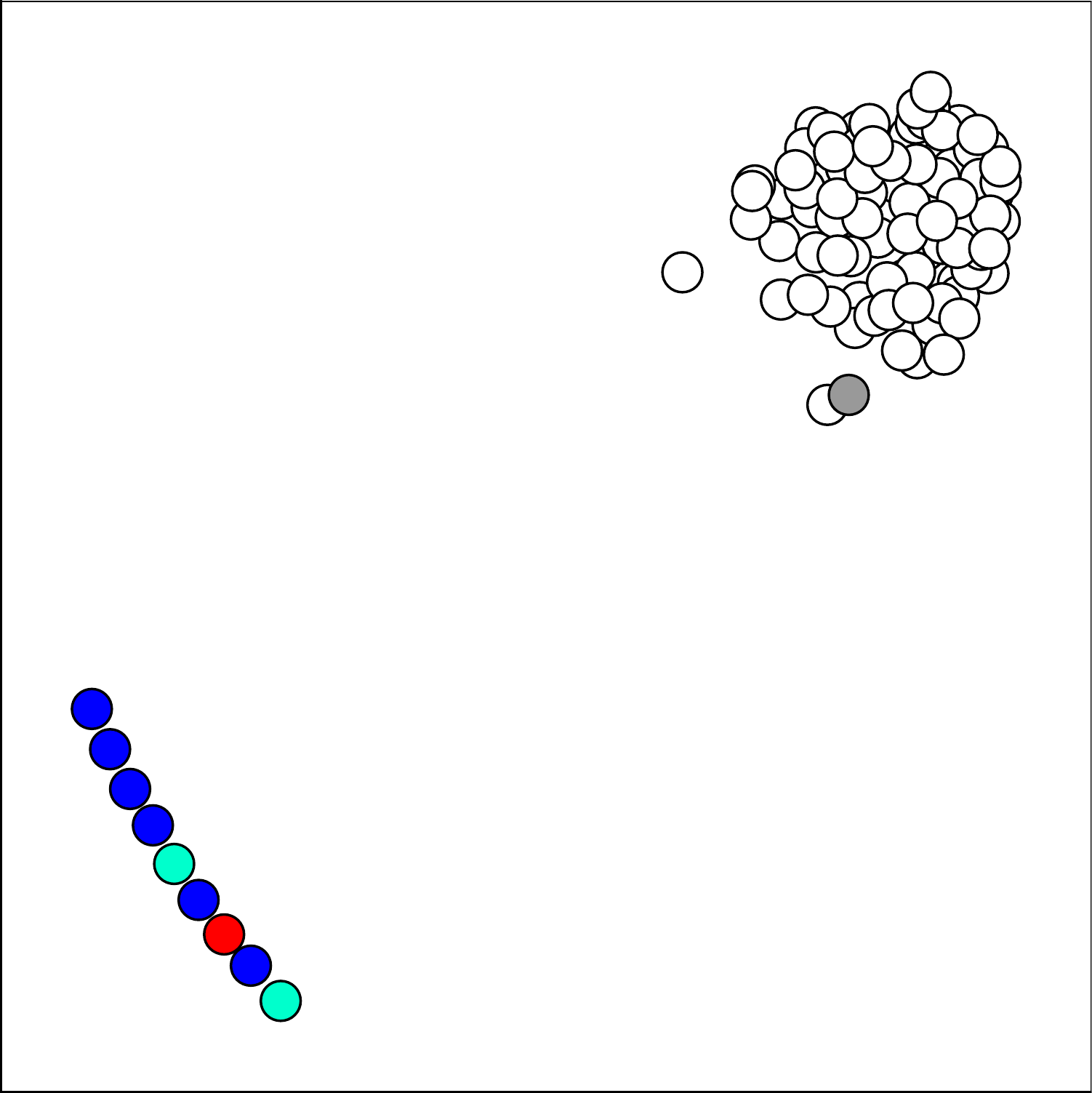}}

        \put(361,23){\rotatebox{90}{\scriptsize{Mean Annotator Weight}}}
        \put(388,10){\scriptsize{Annotator Group}}
        \put(369,17){\includegraphics[width=0.21\textwidth]{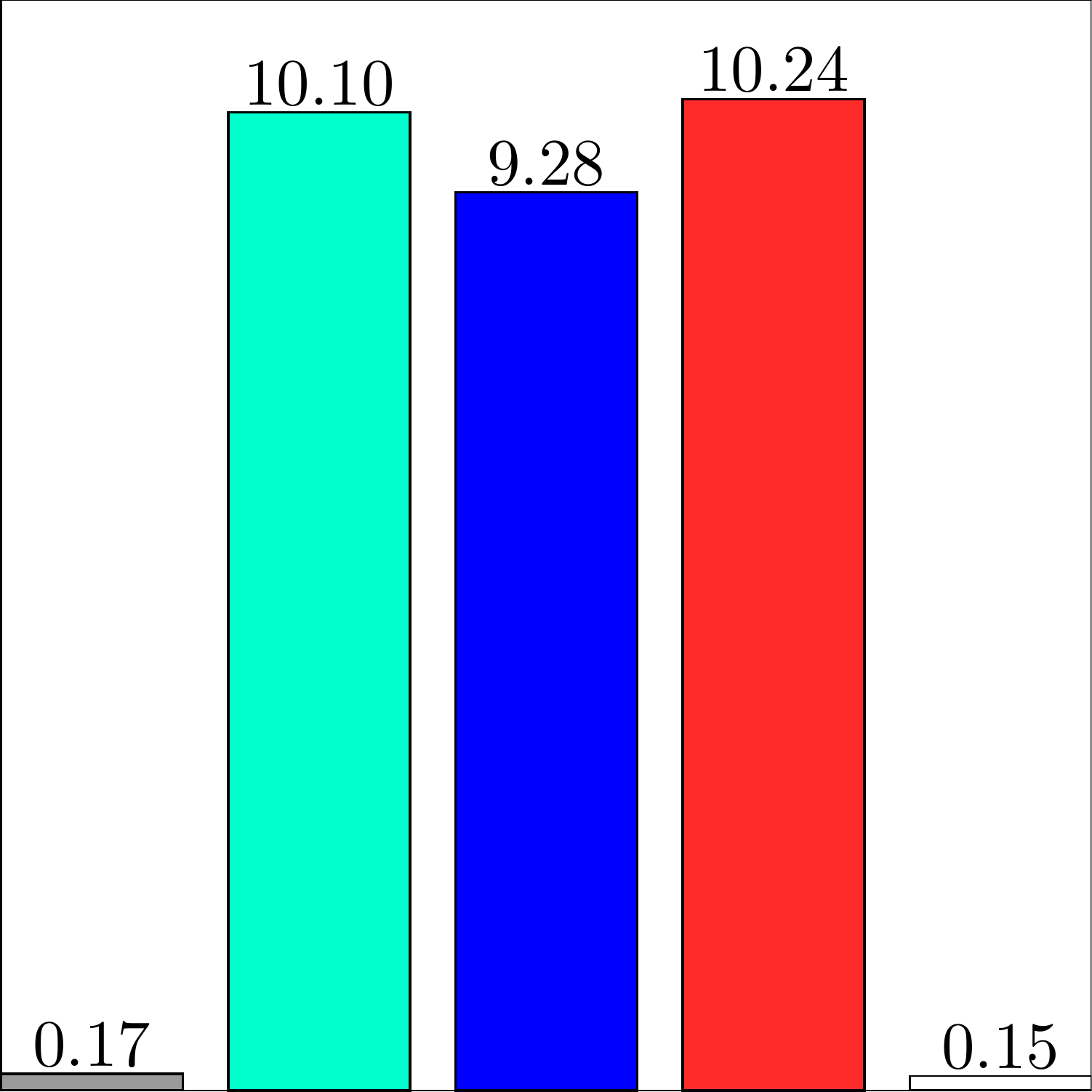}}
    \end{picture}

    \caption{Visualization of MaDL(W)'s learned similarities between annotator embeddings and associated annotator weights.}
    \label{fig:annotator-weights}
\end{figure}

\textbf{Quantitative study:}
Table~\ref{tab:rq-2-interdependent} presents the GT and AP models' test performances for the four datasets with the annotator set~\textsc{correlated} and Table~\ref{tab:rq-2-random-interdependent} for the annotator set~\textsc{random-correlated}. 
Both tables indicate whether a technique models correlations between annotators (property P3) and whether the authors of a technique demonstrated its robustness against spamming annotators (property P4).
Analogous to RQ1, training with GT labels achieves the best performances (UB), while annotation aggregation via the majority rule leads to the worst ones (LB).
The LB's significant underperformance confirms the importance of modeling APs in scenarios with correlated annotators.
MaDL(W), as the default MaDL variant, achieves competitive and often superior results for all datasets and evaluation scores. 
In particular, for the annotator set \textsc{random-correlated}, MaDL(W) outperforms the other techniques, which are vulnerable to many randomly guessing annotators.
This observation is also confirmed when we compare MaDL(W) to MaDL($\overline{\text{W}}$).
In contrast, there is no consistent performance gain of MaDL(W) over MaDL($\overline{\text{W}}$) for the annotator set \textsc{correlated}.
While CoNAL is competitive for the annotator set \textsc{correlated}, its performance strongly degrades for the annotator set \textsc{random-correlated}.
The initial E step in LIA's EM algorithm estimates the GT class labels via a probabilistic variant of the majority rule.
Similarly to the LB, such an estimate is less accurate for correlated and/or spamming annotators.
Besides MaDL(W), only CL and UNION consistently outperform the LB by large margins for the annotator set \textsc{random-correlated}.

\begin{table}[!ht]
    \caption{
        Results regarding RQ2 for datasets with simulated annotators: {\color{blue}{\textBF{Best}}} and {\color{purple}{\textBF{second best}}} performances are highlighted per dataset and evaluation score while {\color{orange}{excluding}} the performances of the UB.
    }
    \label{tab:rq-2-interdependent}
    \setlength{\tabcolsep}{4.8pt}
    \scriptsize
    \centering
    \begin{tabular}{|l|c|c||c|c|c||c|c|c|c|}
        \toprule
        \multicolumn{1}{|c|}{\multirow{2}{*}{\textbf{Technique}}} &  \multirow{2}{*}{\textbf{P3}} & \multirow{2}{*}{\textbf{P4}} & 
        \multicolumn{3}{c||}{\textbf{Ground Truth Model}} & 
        \multicolumn{4}{c|}{\textbf{Annotator Performance Model}} \\
        & & &
        ACC $\uparrow$  & NLL $\downarrow$  & 
        BS $\downarrow$  &  ACC $\uparrow$  & NLL $\downarrow$  & 
        BS $\downarrow$ & BAL-ACC $\uparrow$ \\ 
        \midrule 
        \hline
        \multicolumn{10}{|c|}{\cellcolor{datasetcolor!10} \textbf{\textsc{letter} (\textsc{correlated})}} \\
        \hline
        UB & \no & \yes & \color{orange}{0.962$\pm$0.004} &       \color{orange}{0.129$\pm$0.004} &     \color{orange}{0.058$\pm$0.003} &        \color{orange}{0.887$\pm$0.002} &       \color{orange}{0.305$\pm$0.004} &     \color{orange}{0.173$\pm$0.002} &           \color{orange}{0.757$\pm$0.002}  \\
        LB & \no & \no &  0.762$\pm$0.007 &       1.302$\pm$0.005 &     0.482$\pm$0.004 &        0.682$\pm$0.005 &       0.604$\pm$0.003 &     0.416$\pm$0.002 &           0.602$\pm$0.006 \\
        \hline
        CL  & \no & \no &  0.803$\pm$0.035 &       2.435$\pm$1.218 &     0.318$\pm$0.057 &        0.800$\pm$0.008 &       0.446$\pm$0.016 &     0.285$\pm$0.012 &           0.674$\pm$0.007 \\
        REAC & \no & \no & 0.922$\pm$0.003 &       \color{purple}{\textBF{0.288$\pm$0.065}} &     0.115$\pm$0.007 &        0.815$\pm$0.001 &       0.395$\pm$0.001 &     0.249$\pm$0.001 &           0.684$\pm$0.001 \\
        UNION  & \yes & \no &  0.866$\pm$0.019 &       1.668$\pm$0.322 &     0.224$\pm$0.034 &        0.795$\pm$0.007 &       0.432$\pm$0.007 &     0.278$\pm$0.007 &        0.667$\pm$0.006 \\
        LIA  & \no & \no & 0.823$\pm$0.005 &       1.483$\pm$0.018 &     0.569$\pm$0.007 &        0.676$\pm$0.005 &       0.629$\pm$0.004 &     0.436$\pm$0.004 &        0.575$\pm$0.004 \\
        CoNAL  & \yes & \yes & 0.871$\pm$0.015 &       1.380$\pm$0.349 &     0.213$\pm$0.024 &        0.840$\pm$0.014 &       0.390$\pm$0.028 &     0.238$\pm$0.021 &           0.712$\pm$0.014 \\
        \hline
        MaDL($\overline{\text{W}}$) & \no & \no &   \color{purple}{\textBF{0.946$\pm$0.006}} &       0.293$\pm$0.082 &     \color{purple}{\textBF{0.083$\pm$0.009}} &        \color{purple}{\textBF{0.883$\pm$0.002}} &       \color{purple}{\textBF{0.314$\pm$0.001}} &     \color{purple}{\textBF{0.178$\pm$0.002}} &        \color{purple}{\textBF{0.751$\pm$0.003}} \\
        MaDL(W) & \yes & \yes &   \color{blue}{\textBF{0.947$\pm$0.003}} &       \color{blue}{\textBF{0.282$\pm$0.069}} &     \color{blue}{\textBF{0.080$\pm$0.004}} &        \color{blue}{\textBF{0.887$\pm$0.001}} &       \color{blue}{\textBF{0.308$\pm$0.004}} &     \color{blue}{\textBF{0.175$\pm$0.002}} &           \color{blue}{\textBF{0.756$\pm$0.001}} \\
        \hline
        \multicolumn{10}{|c|}{\cellcolor{datasetcolor!10} \textbf{\textsc{fmnist} (\textsc{correlated})}} \\
        \hline
        UB & \no & \yes & \color{orange}{0.909$\pm$0.002} &       \color{orange}{0.246$\pm$0.005} &     \color{orange}{0.131$\pm$0.003} &        \color{orange}{0.866$\pm$0.002} &       \color{orange}{0.333$\pm$0.002} &     \color{orange}{0.198$\pm$0.002} &           \color{orange}{0.741$\pm$0.002} \\
        LB & \no & \no &  0.787$\pm$0.003 &       1.127$\pm$0.013 &     0.475$\pm$0.007 &        0.668$\pm$0.009 &       0.626$\pm$0.006 &     0.436$\pm$0.006 &           0.580$\pm$0.005 \\
        \hline
        CL  & \no & \no &  0.868$\pm$0.003 &       0.447$\pm$0.020 &     0.217$\pm$0.010 &        0.799$\pm$0.004 &       0.421$\pm$0.004 &     0.270$\pm$0.003 &           0.677$\pm$0.004 \\
        REAC & \no & \no &  0.873$\pm$0.004 &       0.415$\pm$0.012 &     0.196$\pm$0.006 &        0.828$\pm$0.001 &       0.382$\pm$0.001 &     0.237$\pm$0.001 &           0.697$\pm$0.001 \\
        UNION  & \yes & \no &  0.859$\pm$0.006 &       0.411$\pm$0.018 &     0.205$\pm$0.008 &        0.801$\pm$0.009 &       0.420$\pm$0.014 &     0.269$\pm$0.011 &        0.678$\pm$0.009 \\
        LIA  & \no & \no & 0.837$\pm$0.006 &       1.277$\pm$0.008 &     0.553$\pm$0.004 &        0.685$\pm$0.002 &       0.633$\pm$0.001 &     0.441$\pm$0.001 &        0.569$\pm$0.002 \\
        CoNAL  & \yes & \yes & 0.897$\pm$0.002 &    0.299$\pm$0.009 &     0.152$\pm$0.004 &       0.844$\pm$0.001 &       0.356$\pm$0.003 &     0.217$\pm$0.002 &           0.721$\pm$0.001 \\
        \hline
        MaDL($\overline{\text{W}}$) & \no & \no &  \color{blue}{\textBF{0.904$\pm$0.002}} &       \color{blue}{\textBF{0.272$\pm$0.007}} &     \color{blue}{\textBF{0.139$\pm$0.003}} &        \color{blue}{\textBF{0.863$\pm$0.003}} &       \color{blue}{\textBF{0.337$\pm$0.004}} &     \color{blue}{\textBF{0.201$\pm$0.004}} &        \color{purple}{\textBF{0.737$\pm$0.004}} \\
        MaDL(W) & \yes & \yes &  \color{purple}{\textBF{0.903$\pm$0.002}} &       \color{purple}{\textBF{0.273$\pm$0.004}} &     \color{purple}{\textBF{0.141$\pm$0.002}} &        \color{purple}{\textBF{0.863$\pm$0.003}} &       \color{purple}{\textBF{0.338$\pm$0.003}} &     \color{purple}{\textBF{0.202$\pm$0.003}} &           \color{blue}{\textBF{0.738$\pm$0.003}} \\
        \hline
        \multicolumn{10}{|c|}{\cellcolor{datasetcolor!10} \textbf{\textsc{cifar10} (\textsc{correlated})}} \\
        \hline
        UB & \no & \yes & \color{orange}{0.933$\pm$0.002} &       \color{orange}{0.495$\pm$0.017} &     \color{orange}{0.118$\pm$0.003} &        \color{orange}{0.837$\pm$0.001} &       \color{orange}{0.384$\pm$0.001} &     \color{orange}{0.235$\pm$0.001} &           \color{orange}{0.711$\pm$0.001} \\
        LB & \no & \no &  0.652$\pm$0.014 &       1.309$\pm$0.016 &     0.540$\pm$0.008 &        0.602$\pm$0.011 &       0.623$\pm$0.003 &     0.436$\pm$0.003 &           0.541$\pm$0.008 \\
        \hline
        CL  & \no & \no &  0.850$\pm$0.007 &       0.490$\pm$0.022 &     0.224$\pm$0.011 &        0.799$\pm$0.002 &       0.439$\pm$0.004 &     0.282$\pm$0.003 &           0.674$\pm$0.002 \\
        REAC & \no & \no & 0.856$\pm$0.003 &       0.600$\pm$0.063 &     0.259$\pm$0.025 &        0.775$\pm$0.017 &       0.445$\pm$0.015 &     0.287$\pm$0.012 &           0.648$\pm$0.017 \\
        UNION  & \yes & \no &  0.858$\pm$0.007 &       0.499$\pm$0.024 &     0.211$\pm$0.009 &        0.800$\pm$0.003 &       0.432$\pm$0.002 &     0.276$\pm$0.002 &        0.675$\pm$0.003 \\
        LIA  & \no & \no & 0.776$\pm$0.002 &       1.343$\pm$0.020 &     0.565$\pm$0.009 &        0.741$\pm$0.002 &       0.617$\pm$0.003 &     0.424$\pm$0.003 &        0.617$\pm$0.002 \\
        CoNAL  & \yes & \yes & 0.862$\pm$0.002 &       0.473$\pm$0.005 &     0.213$\pm$0.003 &        0.800$\pm$0.001 &       0.433$\pm$0.003 &     0.277$\pm$0.002 &           0.676$\pm$0.001 \\
        \hline
        MaDL($\overline{\text{W}}$) & \no & \no &   \color{blue}{\textBF{0.878$\pm$0.004}} &       \color{purple}{\textBF{0.439$\pm$0.015}} &     \color{blue}{\textBF{0.184$\pm$0.005}} &        \color{blue}{\textBF{0.824$\pm$0.004}} &       \color{purple}{\textBF{0.398$\pm$0.004}} &     \color{blue}{\textBF{0.247$\pm$0.004}} &        \color{blue}{\textBF{0.699$\pm$0.004}} \\
        MaDL(W) & \yes & \yes &   \color{purple}{\textBF{0.875$\pm$0.008}} &       \color{blue}{\textBF{0.434$\pm$0.020}} &     \color{purple}{\textBF{0.188$\pm$0.011}} &        \color{purple}{\textBF{0.823$\pm$0.002}} &       \color{blue}{\textBF{0.397$\pm$0.003}} &     \color{purple}{\textBF{0.248$\pm$0.002}} &           \color{purple}{\textBF{0.698$\pm$0.002}} \\
        \hline
        \multicolumn{10}{|c|}{\cellcolor{datasetcolor!10} \textbf{\textsc{svhn} (\textsc{correlated})}} \\
        \hline
        UB & \no & \yes & \color{orange}{0.966$\pm$0.001} &       \color{orange}{0.382$\pm$0.018} &     \color{orange}{0.062$\pm$0.001} &        \color{orange}{0.794$\pm$0.003} &       \color{orange}{0.414$\pm$0.002} &     \color{orange}{0.266$\pm$0.002} &           \color{orange}{0.657$\pm$0.004} \\
        LB & \no & \no &  0.900$\pm$0.005 &       1.012$\pm$0.038 &     0.420$\pm$0.017 &        0.624$\pm$0.022 &       0.634$\pm$0.008 &     0.444$\pm$0.007 &          0.567$\pm$0.017 \\
        \hline
        CL  & \no & \no &  0.947$\pm$0.001 &       0.314$\pm$0.044 &     0.116$\pm$0.017 &        0.789$\pm$0.009 &       0.433$\pm$0.001 &     0.281$\pm$0.002 &           \color{purple}{\textBF{0.655$\pm$0.012}} \\
        REAC & \no & \no & 0.946$\pm$0.002 &       0.263$\pm$0.012 &     0.097$\pm$0.005 &        0.767$\pm$0.002 &       0.431$\pm$0.001 &     0.283$\pm$0.000 &           0.620$\pm$0.003 \\
        UNION  & \yes & \no &  0.947$\pm$0.001 &       0.250$\pm$0.025 &     0.089$\pm$0.010 &        0.767$\pm$0.003 &       0.435$\pm$0.003 &     0.286$\pm$0.002 &        0.621$\pm$0.005 \\
        LIA  & \no & \no & 0.929$\pm$0.002 &       1.123$\pm$0.023 &     0.477$\pm$0.011 &        0.716$\pm$0.013 &       0.623$\pm$0.010 &     0.431$\pm$0.010 &        0.594$\pm$0.013 \\
        CoNAL  & \yes & \yes & \color{purple}{\textBF{0.952$\pm$0.000}} &       \color{purple}{\textBF{0.231$\pm$0.003}} &     \color{purple}{\textBF{0.075$\pm$0.001}} &        \color{blue}{\textBF{0.835$\pm$0.003}} &       \color{blue}{\textBF{0.379$\pm$0.005}} &     \color{blue}{\textBF{0.235$\pm$0.004}} &           \color{blue}{\textBF{0.702$\pm$0.004}} \\
        \hline
        MaDL($\overline{\text{W}}$) & \no & \no & 0.950$\pm$0.002 &       0.237$\pm$0.006 &     0.078$\pm$0.003 &        \color{purple}{\textBF{0.790$\pm$0.003}} &       \color{purple}{\textBF{0.416$\pm$0.002}} &     \color{purple}{\textBF{0.269$\pm$0.002}} &        0.652$\pm$0.002 \\  
        MaDL(W) & \yes & \yes & \color{blue}{\textBF{0.952$\pm$0.001}} &       \color{blue}{\textBF{0.227$\pm$0.006}} &     \color{blue}{\textBF{0.075$\pm$0.002}} &        0.784$\pm$0.003 &       0.420$\pm$0.002 &     0.273$\pm$0.002 &           0.645$\pm$0.004 \\        
        \hline
        \bottomrule
    \end{tabular}
\end{table}

\begin{table}[!ht]
    \caption{
        Results regarding RQ2 for datasets with simulated annotators: {\color{blue}{\textBF{Best}}} and {\color{purple}{\textBF{second best}}} performances are highlighted per dataset and evaluation score while {\color{orange}{excluding}} the performances of the UB.
    }
    \label{tab:rq-2-random-interdependent}
    \setlength{\tabcolsep}{4.8pt}
    \scriptsize
    \centering
    \begin{tabular}{|l|c|c||c|c|c||c|c|c|c|}
        \toprule
        \multicolumn{1}{|c|}{\multirow{2}{*}{\textbf{Technique}}} &  \multirow{2}{*}{\textbf{P3}} & \multirow{2}{*}{\textbf{P4}} & 
        \multicolumn{3}{c||}{\textbf{Ground Truth Model}} & 
        \multicolumn{4}{c|}{\textbf{Annotator Performance Model}} \\
        & & &
        ACC $\uparrow$  & NLL $\downarrow$  & 
        BS $\downarrow$  &  ACC $\uparrow$  & NLL $\downarrow$  & 
        BS $\downarrow$ & BAL-ACC $\uparrow$ \\ 
        \midrule 
        \hline
        \multicolumn{10}{|c|}{\cellcolor{datasetcolor!10} \textbf{\textsc{letter} (\textsc{random-correlated})}} \\
        \hline
        UB & \no & \yes & \color{orange}{0.960$\pm$0.003} &       \color{orange}{0.131$\pm$0.006} &     \color{orange}{0.059$\pm$0.003} &        \color{orange}{0.937$\pm$0.002} &       \color{orange}{0.212$\pm$0.003} &     \color{orange}{0.104$\pm$0.002} &           \color{orange}{0.516$\pm$0.002} \\
        LB & \no & \no &  0.056$\pm$0.009 &       3.307$\pm$0.049 &     0.965$\pm$0.004 &        0.088$\pm$0.000 &       9.950$\pm$2.090 &     1.816$\pm$0.002 &           0.500$\pm$0.000 \\
        \hline
        CL  & \no & \no &  0.565$\pm$0.028 &       3.519$\pm$0.455 &     0.682$\pm$0.052 &        0.925$\pm$0.000 &       0.237$\pm$0.004 &     0.124$\pm$0.002 &           0.506$\pm$0.000 \\
        REAC & \no & \no & 0.607$\pm$0.024 &       \color{purple}{\textBF{1.810$\pm$0.127}} &     \color{purple}{\textBF{0.561$\pm$0.034}} &        \color{purple}{\textBF{0.926$\pm$0.000}} &       \color{purple}{\textBF{0.221$\pm$0.004}} &     \color{purple}{\textBF{0.116$\pm$0.002}} &           \color{purple}{\textBF{0.507$\pm$0.000}} \\
        UNION  & \yes & \no &  \color{purple}{\textBF{0.615$\pm$0.034}} &       3.317$\pm$0.582 &     0.625$\pm$0.065 &        0.925$\pm$0.000 &       0.232$\pm$0.004 &     0.122$\pm$0.002 &        0.506$\pm$0.000 \\
        LIA  & \no & \no & 0.352$\pm$0.010 &       2.960$\pm$0.035 &     0.932$\pm$0.004 &        0.088$\pm$0.000 &       2.131$\pm$0.137 &     1.474$\pm$0.041 &        0.500$\pm$0.000 \\
        CoNAL  & \yes & \yes & 0.581$\pm$0.015 &       2.325$\pm$0.249 &    0.599$\pm$0.027 &        0.925$\pm$0.000 &       0.236$\pm$0.002 &     0.124$\pm$0.001 &           0.507$\pm$0.000 \\
        \hline
        MaDL($\overline{\text{W}}$) & \no & \no &  0.548$\pm$0.033 &       1.902$\pm$0.215 &     0.673$\pm$0.064 &        0.801$\pm$0.044 &       0.423$\pm$0.033 &     0.265$\pm$0.027 &        0.506$\pm$0.006 \\
        MaDL(W) & \yes & \yes &   \color{blue}{\textBF{0.932$\pm$0.003}} &       \color{blue}{\textBF{0.277$\pm$0.038}} &     \color{blue}{\textBF{0.101$\pm$0.005}} &        \color{blue}{\textBF{0.940$\pm$0.000}} &       \color{blue}{\textBF{0.204$\pm$0.003}} &     \color{blue}{\textBF{0.101$\pm$0.001}} &           \color{blue}{\textBF{0.519$\pm$0.001}} \\
        \hline
        \multicolumn{10}{|c|}{\cellcolor{datasetcolor!10} \textbf{\textsc{fmnist} (\textsc{random-correlated})}} \\
        \hline
        UB & \no & \yes & \color{orange}{0.909$\pm$0.002} &       \color{orange}{0.246$\pm$0.005} &     \color{orange}{0.131$\pm$0.003} &        \color{orange}{0.888$\pm$0.000} &       \color{orange}{0.337$\pm$0.001} &     \color{orange}{0.191$\pm$0.000} &           \color{orange}{0.520$\pm$0.000} \\
        LB & \no & \no &  0.172$\pm$0.019 &       2.296$\pm$0.005 &     0.899$\pm$0.001 &        0.140$\pm$0.000 &      21.865$\pm$6.169 &     1.703$\pm$0.000 &           0.500$\pm$0.000 \\
        \hline
        CL  & \no & \no &  0.880$\pm$0.003 &       0.462$\pm$0.169 &     0.222$\pm$0.073 &        0.880$\pm$0.003 &       0.347$\pm$0.004 &     0.200$\pm$0.003 &           0.513$\pm$0.002 \\
        REAC & \no & \no &  0.870$\pm$0.003 &       0.470$\pm$0.009 &     0.204$\pm$0.004 &        \color{purple}{\textBF{0.885$\pm$0.000}} &       \color{purple}{\textBF{0.342$\pm$0.000}} &     \color{purple}{\textBF{0.194$\pm$0.000}} &           0.514$\pm$0.000 \\
        UNION  & \yes & \no &  \color{purple}{\textBF{0.884$\pm$0.002}} &       \color{purple}{\textBF{0.387$\pm$0.022}} &     \color{purple}{\textBF{0.182$\pm$0.007}} &        0.881$\pm$0.000 &       0.345$\pm$0.000 &     0.198$\pm$0.000 &        0.514$\pm$0.000 \\
        LIA  & \no & \no & 0.677$\pm$0.008 &       2.094$\pm$0.002 &     0.852$\pm$0.001 &        0.140$\pm$0.000 &       2.067$\pm$0.005 &     1.418$\pm$0.002 &        0.500$\pm$0.000 \\
        CoNAL  & \yes & \yes & 0.858$\pm$0.012 &       0.457$\pm$0.086 &     0.219$\pm$0.031 &        0.882$\pm$0.002 &       0.344$\pm$0.002 &     0.197$\pm$0.002 &           \color{purple}{\textBF{0.516$\pm$0.001}} \\
        \hline
        MaDL($\overline{\text{W}}$) & \no & \no &  0.337$\pm$0.046 &       2.131$\pm$0.090 &     0.855$\pm$0.029 &        0.229$\pm$0.075 &       1.038$\pm$0.146 &     0.814$\pm$0.128 &        0.498$\pm$0.002 \\
        MaDL(W) & \yes & \yes &  \color{blue}{\textBF{0.896$\pm$0.002}} &       \color{blue}{\textBF{0.290$\pm$0.003}} &     \color{blue}{\textBF{0.150$\pm$0.002}} &        \color{blue}{\textBF{0.889$\pm$0.000}} &       \color{blue}{\textBF{0.337$\pm$0.000}} &     \color{blue}{\textBF{0.191$\pm$0.000}} &           \color{blue}{\textBF{0.520$\pm$0.000}} \\
        \hline
        \multicolumn{10}{|c|}{\cellcolor{datasetcolor!10} \textbf{\textsc{cifar10} (\textsc{random-correlated})}} \\
        \hline
        UB & \no & \yes & \color{orange}{0.932$\pm$0.002} &       \color{orange}{0.519$\pm$0.016} &     \color{orange}{0.119$\pm$0.004} &        \color{orange}{0.886$\pm$0.000} &       \color{orange}{0.340$\pm$0.002} &     \color{orange}{0.192$\pm$0.001} &           \color{orange}{0.515$\pm$0.000} \\
        LB & \no & \no &  0.141$\pm$0.008 &       2.301$\pm$0.002 &     0.900$\pm$0.000 &        0.139$\pm$0.000 &      14.224$\pm$6.699 &     1.704$\pm$0.001 &           0.500$\pm$0.000 \\
        \hline
        CL  & \no & \no &  \color{purple}{\textBF{0.576$\pm$0.023}} &       1.395$\pm$0.090 &     \color{purple}{\textBF{0.576$\pm$0.028}} &        \color{purple}{\textBF{0.878$\pm$0.000}} &       0.353$\pm$0.002 &     0.204$\pm$0.001 &           \color{purple}{\textBF{0.507$\pm$0.000}} \\
        REAC & \no & \no & 0.462$\pm$0.010 &       2.093$\pm$0.062 &     0.767$\pm$0.011 &        0.875$\pm$0.001 &       \color{purple}{\textBF{0.353$\pm$0.000}} &     \color{purple}{\textBF{0.204$\pm$0.000}} &           0.505$\pm$0.001 \\
        UNION  & \yes & \no &  0.540$\pm$0.049 &       1.517$\pm$0.209 &     0.629$\pm$0.065 &        0.876$\pm$0.002 &       0.355$\pm$0.003 &     0.205$\pm$0.002 &        0.506$\pm$0.002 \\
        LIA  & \no & \no & 0.211$\pm$0.014 &       2.273$\pm$0.007 &     0.894$\pm$0.001 &        0.139$\pm$0.000 &       2.096$\pm$0.007 &     1.429$\pm$0.002 &        0.500$\pm$0.000 \\
        CoNAL  & \yes & \yes & 0.555$\pm$0.020 &       \color{purple}{\textBF{1.379$\pm$0.053}} &     0.592$\pm$0.020 &        0.876$\pm$0.001 &       0.355$\pm$0.002 &     0.206$\pm$0.002 &           0.506$\pm$0.001 \\
        \hline
        MaDL($\overline{\text{W}}$) & \no & \no &   0.217$\pm$0.042 &       6.992$\pm$0.386 &     1.219$\pm$0.087 &        0.872$\pm$0.001 &       0.398$\pm$0.011 &     0.229$\pm$0.009 &        0.502$\pm$0.001 \\
        MaDL(W) & \yes & \yes &   \color{blue}{\textBF{0.822$\pm$0.007}} &       \color{blue}{\textBF{0.593$\pm$0.033}} &     \color{blue}{\textBF{0.262$\pm$0.010}} &        \color{blue}{\textBF{0.885$\pm$0.000}} &       \color{blue}{\textBF{0.339$\pm$0.001}} &     \color{blue}{\textBF{0.192$\pm$0.001}} &           \color{blue}{\textBF{0.514$\pm$0.000}} \\
        \hline
        \multicolumn{10}{|c|}{\cellcolor{datasetcolor!10} \textbf{\textsc{svhn} (\textsc{random-correlated})}} \\
        \hline
        UB & \no & \yes & \color{orange}{0.965$\pm$0.001} &       \color{orange}{0.399$\pm$0.017} &     \color{orange}{0.064$\pm$0.001} &        \color{orange}{0.877$\pm$0.000} &       \color{orange}{0.349$\pm$0.000} &     \color{orange}{0.201$\pm$0.000} &           \color{orange}{0.509$\pm$0.001} \\
        LB & \no & \no &  0.190$\pm$0.000 &       2.298$\pm$0.002 &     0.899$\pm$0.000 &        0.138$\pm$0.000 &      24.019$\pm$7.802 &     1.704$\pm$0.001 &           0.500$\pm$0.000 \\
        \hline
        CL  & \no & \no &  \color{purple}{\textBF{0.908$\pm$0.038}} &       \color{purple}{\textBF{0.398$\pm$0.226}} &     \color{purple}{\textBF{0.143$\pm$0.056}} &        \color{purple}{\textBF{0.873$\pm$0.001}} &       \color{purple}{\textBF{0.354$\pm$0.002}} &     \color{purple}{\textBF{0.205$\pm$0.001}} &           \color{purple}{\textBF{0.505$\pm$0.000}} \\
        REAC & \no & \no & 0.189$\pm$0.001 &       2.294$\pm$0.003 &     0.898$\pm$0.001 &        0.140$\pm$0.000 &       2.262$\pm$0.734 &     1.384$\pm$0.304 &           0.500$\pm$0.000 \\
        UNION  & \yes & \no &  0.881$\pm$0.104 &       0.529$\pm$0.553 &     0.179$\pm$0.154 &        0.872$\pm$0.002 &       0.356$\pm$0.008 &     0.206$\pm$0.005 &        0.505$\pm$0.000 \\
        LIA  & \no & \no & 0.192$\pm$0.004 &       2.294$\pm$0.004 &     0.898$\pm$0.001 &        0.138$\pm$0.000 &       3.864$\pm$3.540 &     1.483$\pm$0.111 &        0.500$\pm$0.000 \\
        CoNAL  & \yes & \yes & 0.231$\pm$0.048 &       2.933$\pm$0.526 &     0.956$\pm$0.072 &        0.860$\pm$0.000 &       0.414$\pm$0.008 &     0.242$\pm$0.003 &           0.500$\pm$0.000 \\
        \hline
        MaDL($\overline{\text{W}}$) & \no & \no & 0.243$\pm$0.102 &       6.055$\pm$3.173 &     1.119$\pm$0.230 &        0.575$\pm$0.352 &       0.702$\pm$0.344 &     0.505$\pm$0.319 &        0.500$\pm$0.001 \\
        MaDL(W) & \yes & \yes & \color{blue}{\textBF{0.940$\pm$0.002}} &       \color{blue}{\textBF{0.244$\pm$0.011}} &     \color{blue}{\textBF{0.091$\pm$0.003}} &        \color{blue}{\textBF{0.877$\pm$0.000}} &       \color{blue}{\textBF{0.349$\pm$0.000}} &     \color{blue}{\textBF{0.201$\pm$0.000}} &           \color{blue}{\textBF{0.508$\pm$0.000}} \\
        \hline
        \bottomrule
    \end{tabular}
\end{table}

\subsection{RQ3: Do annotator features containing prior information about annotators improve learning and enable inductively learning annotators' performances? (Properties P5, P6)}
\begin{tcolorbox}
    \textbf{Takeaway:} 
    Annotator features containing prior information about annotators improve the learning of GT and AP models (property P5). 
    Furthermore, we can use these annotator features to inductively estimate the performances of annotators unavailable during training (property P6).
\end{tcolorbox}

\textbf{Setup:} 
We address RQ3 by evaluating multi-annotator supervised learning techniques with and without using annotator features containing prior information.
For each dataset, we simulate 100 annotators according to the annotator set \textsc{inductive} in Table~\ref{tab:annotator-simulation}.
However, only 75 annotators provide class labels for training.
Each of them provides class labels for \SI{2}{\percent} of randomly selected training instances. 
The lower annotation ratio is used to study the generalization across annotators sharing similar features.
The remaining 25 annotators form a test set to assess AP predictions.
We generate annotator features containing prior information by composing information about annotator type, class-wise APs, and cluster-wise APs.  
Fig.~\ref{fig:annotator-inductive} provides examples for two annotators based on two classes and four clusters.
We evaluate two variants of LIA, CoNAL, and MaDL, denoted respectively by the schemes LIA(P5), CoNAL(P5), and MaDL(P5).
Property P5 refers to a technique's ability to consider prior information about annotators.
We differentiate between the variant with annotator features containing prior information (A) and the one using one-hot encoded features to separate between annotators' identities ($\overline{\text{A}}$).
MaDL($\overline{\text{A}}$) corresponds to MaDL's default variant in this setup.
We do not evaluate CL, UNION, and REAC since these techniques cannot handle annotator features.

\begin{figure}[!ht]
    \centering
    \begin{picture}(470,180)
        \put(0,153){\includegraphics[width=\textwidth]{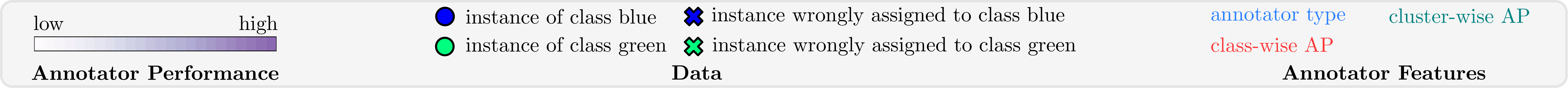}}
        \put(15,0){\textBF{Adversarial Annotator} $\mathbf{a}_1$}
        \put(0,9){\includegraphics[width=0.3\textwidth]{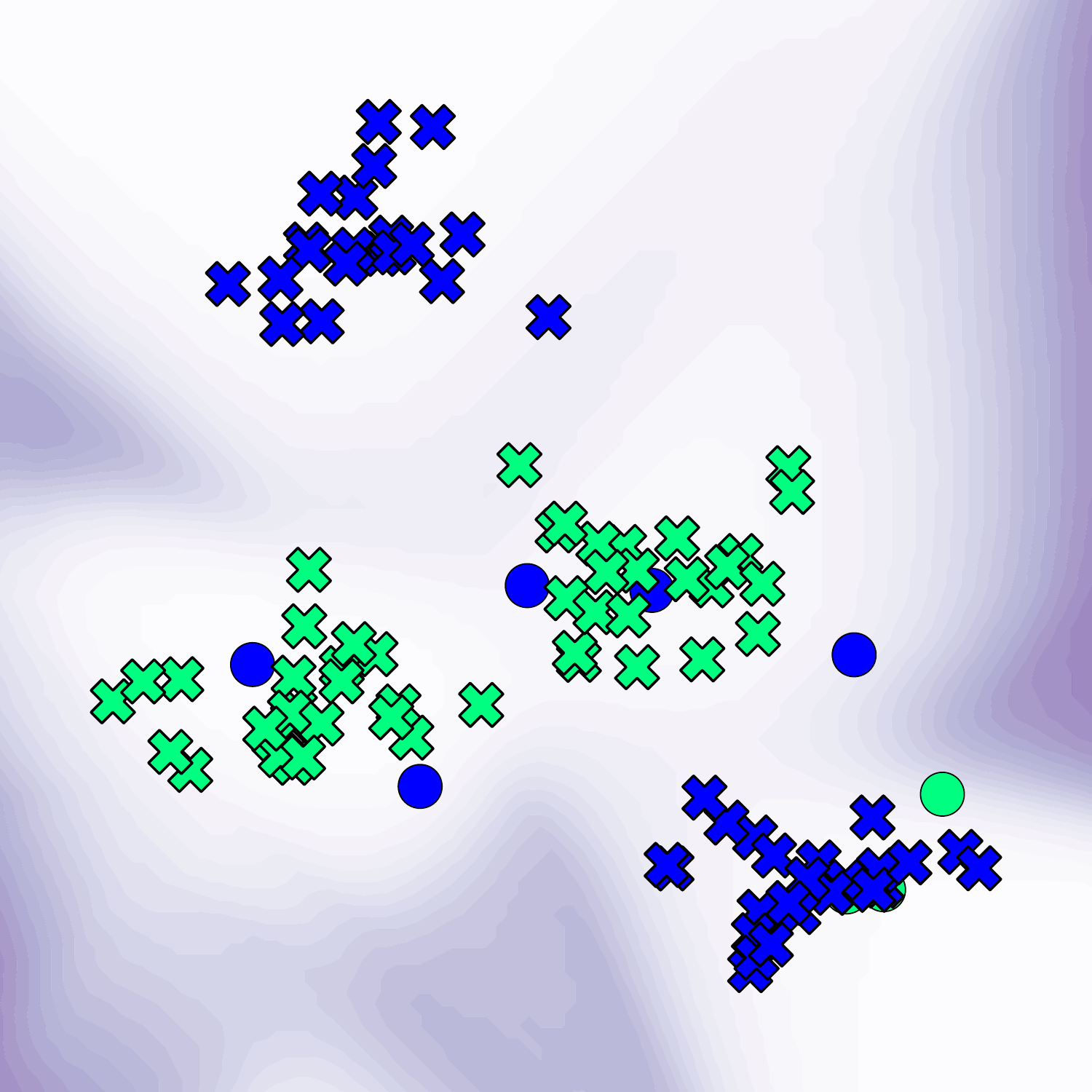}}
        \put(179,0){\textBF{Cluster-specialized Annotator} $\mathbf{a}_2$}
        \put(180.5,9){\includegraphics[width=0.3\textwidth]{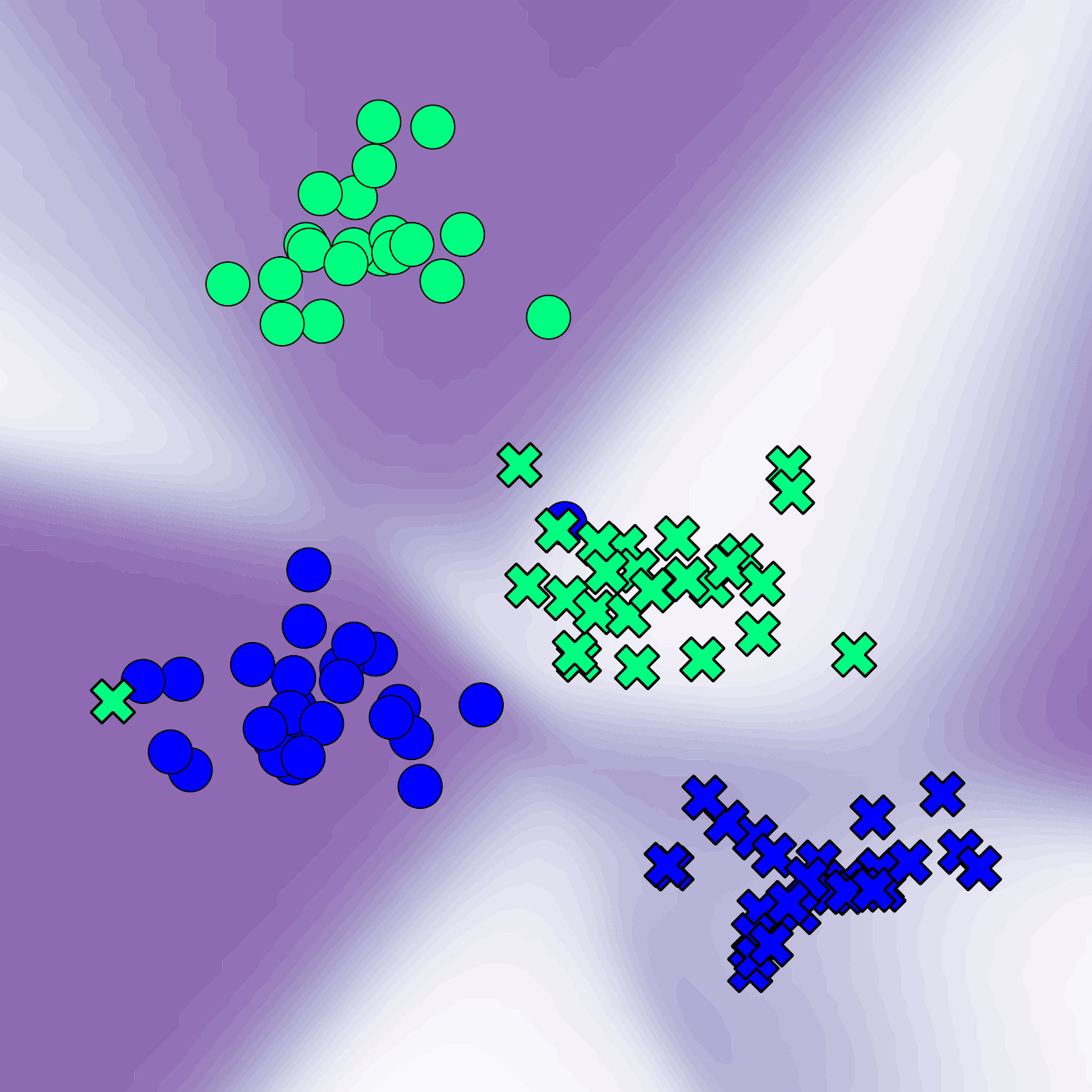}}
        \put(350,112){
            \footnotesize{
                $\mathbf{a}_1 = \begin{pmatrix} {\color{annottype}\text{adversarial}} \\ {\color{classap}0.04} \\ {\color{classap}0.06}  \\ {\color{clusterap}0.03}  \\ {\color{clusterap}0.07}  \\ {\color{clusterap}0.04}  \\ {\color{clusterap}0.05} \end{pmatrix}$
            }
        }
        \put(350,37){
            \footnotesize{
                $\mathbf{a}_2 = \begin{pmatrix} {\color{annottype}\text{cluster-specialized}} \\ {\color{classap}0.51} \\ {\color{classap}0.49}  \\ {\color{clusterap}0.95}  \\ {\color{clusterap}0.03}  \\ {\color{clusterap}0.95}  \\ {\color{clusterap}0.07} \end{pmatrix}$
            }
        }
    \end{picture}
    \caption{Visualization of MaDL(A)'s inductive AP estimates for two unknown annotators.}
    \label{fig:annotator-inductive}
\end{figure}

\textbf{Qualitative study:}
Fig.~\ref{fig:annotator-inductive} visualizes AP predictions of MaDL(A) regarding two exemplary annotators for the dataset \textsc{toy}.
The visualization of these AP predictions is analogous to Fig.~\ref{fig:toy-madl}.
Neither of the two annotators provides class labels for the training, and the plotted training instances show only potential annotations to visualize the annotation patterns. 
The vectors at the right list the annotator features containing prior information for both annotators.
The colors reveal the meanings of the respective feature values.
These meanings are unknown to MaDL(A), such that its AP predictions exclusively result from generalizing similar annotators' features and their annotations available during training.
MaDL(A) correctly identifies the left annotator as adversarial because it predicts low (white) AP scores across the feature space regions close to training instances.
For the right cluster-specialized annotator, MaDL(A) accurately separates the two weak clusters  (feature space regions with predominantly crosses) with low AP estimates from the two expert clusters (feature space regions with predominantly circles) with high AP estimates.

\textbf{Quantitative study:}
Table~\ref{tab:rq-3} presents the  GT and AP models' test performances for the four datasets with the simulated annotator set~\textsc{inductive}.
The table further indicates whether a technique processes prior information as annotator features (property P5) and whether a technique can inductively estimate the performances of annotators unavailable during the training phase (property P6).
Note that the AP results refer to the aforementioned 25 test annotators.
Hence, there are no results (marked as --) for techniques with AP models not fulfilling property P6. 
For completeness, we provide the results for the 75 annotators providing class labels for training in Appendix~\ref{app:extended-results-rq-3}.
As for RQ1 and RQ2, training with GT labels leads to the best performance results (UB), whereas learning from annotations aggregated via the majority rule mostly results in the worst performances (LB).
Inspecting the results of MaDL(A)'s GT model compared to the other techniques, we observe competitive or partially superior results across all four datasets.
Concerning its AP model, we further note that MaDL(A) provides meaningful AP estimates, indicated by BAL-ACC values greater than 0.5. 
Comparing the GT models' results of each pair of variants, performance gains for LIA and MaDL demonstrate the potential benefits of learning from annotator features containing prior annotator information.
In contrast, the GT models' results of CoNAL(A) and CoNAL($\overline{\text{A}}$) hardly differ.

\begin{table}[!ht]
    \caption{
        Results regarding RQ3 for datasets with simulated annotators: {\color{blue}{\textBF{Best}}} and {\color{purple}{\textBF{second best}}} performances are highlighted per dataset and evaluation score while {\color{orange}{excluding}} the performances of the UB. The AP models' results refer to the 25 test annotators providing no class labels for training. An entry -- marks a technique whose AP model cannot make predictions for such test annotators.
    }
    \label{tab:rq-3}
    \setlength{\tabcolsep}{4.8pt}
    \scriptsize
    \centering
    \begin{tabular}{|l|c|c||c|c|c||c|c|c|c|}
        \toprule
        \multicolumn{1}{|c|}{\multirow{2}{*}{\textbf{Technique}}} &  \multirow{2}{*}{\textbf{P5}} & \multirow{2}{*}{\textbf{P6}} & 
        \multicolumn{3}{c||}{\textbf{Ground Truth Model}} & 
        \multicolumn{4}{c|}{\textbf{Annotator Performance Model}} \\
        & & &
        ACC $\uparrow$  & NLL $\downarrow$  & 
        BS $\downarrow$  &  ACC $\uparrow$  & NLL $\downarrow$  & 
        BS $\downarrow$ & BAL-ACC $\uparrow$ \\ 
        \midrule 
        \hline
        \multicolumn{10}{|c|}{\cellcolor{datasetcolor!10} \textbf{\textsc{letter} (\textsc{inductive})}} \\
        \hline
        UB & \yes & \yes & \color{orange}{0.962$\pm$0.002} &       \color{orange}{0.129$\pm$0.003} &     \color{orange}{0.058$\pm$0.002} &        \color{orange}{0.672$\pm$0.005} &                 \color{orange}{0.745$\pm$0.047} &               \color{orange}{0.457$\pm$0.011} &                    \color{orange}{0.612$\pm$0.005} \\
        LB & \yes & \yes &   0.861$\pm$0.005 &       1.090$\pm$0.017 &     0.429$\pm$0.008 &                  0.569$\pm$0.008 &                 \color{purple}{\textBF{0.730$\pm$0.011}} &               \color{purple}{\textBF{0.522$\pm$0.007}} &                     0.537$\pm$0.006 \\
        \hline
        LIA($\overline{\text{A}}$)  & \no & \no & 0.875$\pm$0.006 &       0.901$\pm$0.060 &     0.350$\pm$0.024 &                      -- &                     -- &                   -- &                      -- \\
        LIA(A)  & \yes & \yes & 0.876$\pm$0.006 &       1.006$\pm$0.177 &     0.397$\pm$0.074 &                  \color{purple}{\textBF{0.609$\pm$0.017}} &                 1.447$\pm$0.845 &               0.597$\pm$0.105 &                  \color{purple}{\textBF{0.545$\pm$0.033}} \\
        CoNAL($\overline{\text{A}}$)  & \no & \no & 0.875$\pm$0.009 &       0.804$\pm$0.119 &     0.186$\pm$0.010 &                      -- &                     -- &                   -- &                      -- \\
        CoNAL(A)  & \yes & \no & 0.874$\pm$0.007 &       0.808$\pm$0.116 &     0.186$\pm$0.011 &                      -- &                     -- &                   -- &                      -- \\
        \hline
        MaDL($\overline{\text{A}}$) & \no & \no & \color{purple}{\textBF{0.911$\pm$0.006}} &       \color{purple}{\textBF{0.334$\pm$0.026}} &     \color{purple}{\textBF{0.129$\pm$0.008}} &                      -- &                     -- &                   -- &                      -- \\
        MaDL(A) & \yes & \yes    &  \color{blue}{\textBF{0.914$\pm$0.004}} &       \color{blue}{\textBF{0.303$\pm$0.009}} &     \color{blue}{\textBF{0.124$\pm$0.005}} &                  \color{blue}{\textBF{0.668$\pm$0.007}} &                 \color{blue}{\textBF{0.813$\pm$0.115}} &               \color{blue}{\textBF{0.471$\pm$0.015}} &                    \color{blue}{\textBF{0.600$\pm$0.010}} \\
        \hline
        \multicolumn{10}{|c|}{\cellcolor{datasetcolor!10} \textbf{\textsc{fmnist} (\textsc{inductive})}} \\
        \hline
        UB & \yes & \yes & \color{orange}{0.909$\pm$0.002} &       \color{orange}{0.246$\pm$0.005} &     \color{orange}{0.131$\pm$0.003} &        \color{orange}{0.730$\pm$0.008} &                 \color{orange}{0.536$\pm$0.019} &               \color{orange}{0.357$\pm$0.010} &                     \color{orange}{0.656$\pm$0.009} \\
        LB & \yes & \yes &        0.881$\pm$0.002 &       0.876$\pm$0.005 &     0.370$\pm$0.002 &                  0.590$\pm$0.023 &                 0.681$\pm$0.005 &               0.487$\pm$0.006 &                    0.537$\pm$0.010 \\
        \hline
        LIA($\overline{\text{A}}$)  & \no & \no & 0.852$\pm$0.003 &       1.011$\pm$0.020 &     0.436$\pm$0.010 &                      -- &                     -- &                   -- &                      -- \\
        LIA(A)  & \yes & \yes & 0.855$\pm$0.002 &       0.972$\pm$0.012 &     0.417$\pm$0.006 &                  \color{purple}{\textBF{0.674$\pm$0.036}} &                 \color{purple}{\textBF{0.626$\pm$0.026}} &               \color{purple}{\textBF{0.436$\pm$0.024}} &                 \color{purple}{\textBF{0.601$\pm$0.027}} \\
        CoNAL($\overline{\text{A}}$)  & \no & \no & 0.889$\pm$0.002 &       0.322$\pm$0.005 &     0.163$\pm$0.003 &                      -- &                     -- &                   -- &                      -- \\
        CoNAL(A)  & \yes & \no & 0.890$\pm$0.002 &       0.323$\pm$0.011 &     0.163$\pm$0.005 &                      -- &                     -- &                   -- &                      -- \\
        \hline
        MaDL($\overline{\text{A}}$) & \no & \no & \color{blue}{\textBF{0.895$\pm$0.002}} &       \color{blue}{\textBF{0.297$\pm$0.004}} &     \color{blue}{\textBF{0.152$\pm$0.002}} &                      -- &                     -- &                   -- &                      -- \\
        MaDL(A) & \yes & \yes &  \color{purple}{\textBF{0.893$\pm$0.004}} &       \color{purple}{\textBF{0.297$\pm$0.008}} &     \color{purple}{\textBF{0.153$\pm$0.004}} &                  \color{blue}{\textBF{0.723$\pm$0.004}} &                 \color{blue}{\textBF{0.538$\pm$0.003}} &               \color{blue}{\textBF{0.362$\pm$0.003}} &                     \color{blue}{\textBF{0.649$\pm$0.005}} \\
        \hline
        \multicolumn{10}{|c|}{\cellcolor{datasetcolor!10} \textbf{\textsc{cifar10} (\textsc{inductive})}} \\
        \hline
        UB & \yes & \yes & \color{orange}{0.931$\pm$0.002} &       \color{orange}{0.527$\pm$0.022} &     \color{orange}{0.122$\pm$0.003} &        \color{orange}{0.686$\pm$0.006} &                 \color{orange}{0.646$\pm$0.101} &               \color{orange}{0.409$\pm$0.016} &                     \color{orange}{0.613$\pm$0.006} \\
        LB & \yes & \yes &  0.781$\pm$0.003 &       1.054$\pm$0.035 &     0.447$\pm$0.016 &                  0.583$\pm$0.009 &                 0.684$\pm$0.004 &               0.490$\pm$0.004 &                     0.521$\pm$0.003 \\
        \hline
        LIA($\overline{\text{A}}$)  & \no & \no & 0.798$\pm$0.008 &       1.072$\pm$0.014 &     0.455$\pm$0.006 &                      -- &                     -- &                   -- &                      -- \\
        LIA(A)  & \yes & \yes & 0.804$\pm$0.004 &       1.056$\pm$0.022 &     0.447$\pm$0.011 &                  \color{purple}{\textBF{0.607$\pm$0.020}} &                 \color{purple}{\textBF{0.670$\pm$0.017}} &               \color{purple}{\textBF{0.477$\pm$0.016}} &                  \color{purple}{\textBF{0.544$\pm$0.010}} \\
        CoNAL($\overline{\text{A}}$)  & \no & \no & \color{purple}{\textBF{0.835$\pm$0.002}} &       0.576$\pm$0.016 &     \color{purple}{\textBF{0.245$\pm$0.005}} &                      -- &                     -- &                   -- &                      -- \\
        CoNAL(A)  & \yes & \no & 0.834$\pm$0.006 &       \color{purple}{\textBF{0.574$\pm$0.017}} &     0.248$\pm$0.007 &                      -- &                     -- &                   -- &                      -- \\
        \hline
        MaDL($\overline{\text{A}}$) & \no & \no & 0.811$\pm$0.008 &       0.626$\pm$0.036 &     0.277$\pm$0.014 &                      -- &                     -- &                   -- &                      -- \\
        MaDL(A) & \yes & \yes &   \color{blue}{\textBF{0.837$\pm$0.003}} &       \color{blue}{\textBF{0.557$\pm$0.028}} &     \color{blue}{\textBF{0.242$\pm$0.006}} &                  \color{blue}{\textBF{0.698$\pm$0.003}} &                 \color{blue}{\textBF{0.567$\pm$0.015}} &               \color{blue}{\textBF{0.383$\pm$0.004}} &                     \color{blue}{\textBF{0.617$\pm$0.004}} \\
        \hline
        \multicolumn{10}{|c|}{\cellcolor{datasetcolor!10} \textbf{\textsc{svhn} (\textsc{inductive})}} \\
        \hline
        UB & \yes & \yes & \color{orange}{0.965$\pm$0.001} &       \color{orange}{0.393$\pm$0.015} &     \color{orange}{0.063$\pm$0.002} &        \color{orange}{0.613$\pm$0.004} &                 \color{orange}{0.943$\pm$0.113} &               \color{orange}{0.511$\pm$0.015} &                     \color{orange}{0.524$\pm$0.006} \\
        LB & \yes & \yes &  0.927$\pm$0.002 &       0.805$\pm$0.016 &     0.328$\pm$0.009 &                  0.588$\pm$0.010 &                 0.704$\pm$0.007 &               0.509$\pm$0.006 &                     0.511$\pm$0.007 \\
        \hline
        LIA($\overline{\text{A}}$)  & \no & \no & 0.929$\pm$0.003 &       0.818$\pm$0.133 &     0.336$\pm$0.068 &                      -- &                     -- &                   -- &                      -- \\
        LIA(A)  & \yes & \yes & 0.932$\pm$0.001 &       0.754$\pm$0.152 &     0.303$\pm$0.079 &                  \color{purple}{\textBF{0.603$\pm$0.013}} &                 \color{purple}{\textBF{0.671$\pm$0.024}} &               \color{purple}{\textBF{0.478$\pm$0.022}} &                  \color{purple}{\textBF{0.513$\pm$0.008}} \\
        CoNAL($\overline{\text{A}}$)  & \no & \no & \color{purple}{\textBF{0.941$\pm$0.001}} &       \color{purple}{\textBF{0.258$\pm$0.009}} &     \color{purple}{\textBF{0.090$\pm$0.003}} &                      -- &                     -- &                   -- &                      -- \\
        CoNAL(A)  & \yes & \no & \color{blue}{\textBF{0.942$\pm$0.001}} &       0.260$\pm$0.012 &     \color{blue}{\textBF{0.090$\pm$0.002}} &                      -- &                     -- &                   -- &                      -- \\
        \hline
        MaDL($\overline{\text{A}}$) & \no & \no & 0.928$\pm$0.002 &       0.299$\pm$0.019 &     0.109$\pm$0.005 &                      -- &                     -- &                   -- &                      -- \\
        MaDL(A) & \yes & \yes & 0.935$\pm$0.001 &       \color{blue}{\textBF{0.256$\pm$0.009}} &     0.098$\pm$0.002 &                  \color{blue}{\textBF{0.624$\pm$0.007}} &                 \color{blue}{\textBF{0.632$\pm$0.013}} &               \color{blue}{\textBF{0.444$\pm$0.008}} &                    \color{blue}{\textBF{0.521$\pm$0.006}} \\
        \hline
        \bottomrule
    \end{tabular}
\end{table}

\section{Conclusion}
\label{sec:conclusion}
In this article, we made three main contributions. 
(1) We started with a formalization of the objectives in multi-annotator supervised learning. 
Focusing on AP estimation, we then presented six relevant properties (cf. P1--P6 in Section~\ref{sec:related-work}) for categorizing related techniques in this research area. 
(2) Considering these six properties, we proposed our framework MaDL.
A modular, probabilistic design and a weighted loss function modeling annotator correlations characterize its novelties. 
(3) We experimentally investigated the six properties via three RQs.
The results confirmed MaDL's robust and often superior performance to related multi-annotator supervised learning techniques.
The findings of this article, with a focus on AP estimation, provide a starting point for several aspects of future research, some examples of which are given below.

Although the annotator embeddings already contain information about the annotation patterns concerning instances and classes, MaDL is currently limited to computing annotator correlations on a global level, i.e., annotator weights are not an explicit function of instance-annotator pairs.
For example, an extension in this direction may be valuable to quantify correlations in certain regions of the feature space.
Leveraging AP estimates for additional applications, e.g., selecting the best crowdworkers to obtain high-quality annotations during a crowdsourcing campaign~\citep{herde2023who}, is also of great value.
Another neglected aspect is the study of epistemic uncertainty~\citep{huseljic2021separation}. 
For example, the visualizations for the two-dimensional dataset in Fig.~\ref{fig:toy-madl} show high certainty of the GT and AP models in feature space regions with no observed instances. 
However, meaningful epistemic uncertainty estimates are essential in many (safety-critical) applications~\citep{hullermeier2021aleatoric} and would improve the characterization of annotators' knowledge.
During our experiments, we showed the potential benefit of annotator features.
We had no access to a dataset with prior information from real-world annotators, so we needed a suitable simulation for these features.
Therefore, and also noted by~\cite{zhang2023learning}, future research may acquire such prior information via crowdsourcing to verify their benefit.
As the concentration of annotators may fluctuate or annotators may learn during the annotation process, taking time-varying APs into account is another potential avenue for future research~\citep{donmez2010probabilistic}.
Furthermore, there are already crowdsourcing approaches~\citep{chang2017revolt} and concepts~\citep{calma2016active} supporting collaboration between annotators. 
Thus, developing techniques considering or recommending such collaborations is of practical value~\citep{fang2012self}.

Finally, we limited ourselves to empirical performance results and classification tasks with class labels as annotations. 
Future investigations on theoretical performance guarantees of MaDL and the learning with different annotation types, such as class labels with confidence scores~\citep{berthon2021confidence} or partial labels~\citep{yu2022learning}, are apparent. Furthermore, the extension to related supervised learning tasks, such as semantic segmentation, sequence classification, and regression, is of interest. The goal of semantic segmentation is to classify individual pixels~\citep{minaee2021image}. A potential approach to extend MaDL would be to implement its GT model through a U-Net~\citep{ronneberger2015u} and feed its latent representations as input to the AP model for estimating pixel-wise confusion matrices per annotator. Likewise, we may adapt MaDL to be applied to sequence classification tasks, such as named entity recognition~\citep{li2020survey}. Concretely, we could implement the GT model through a BiLSTM-network with softmax outputs~\citep{reimers2017optimal} and feed its latent word representations as inputs to the AP model for estimating word-wise confusion matrices per annotator. Since both extensions involve higher computational costs than standard classification tasks, one may alternatively investigate the estimation of a single (pixel- or word-independent) confusion matrix per annotator. Regression tasks expect the prediction of continuous target variables. Therefore, the probabilistic model of MaDL has to be adapted. For example, the GT model could estimate the mean and variance of an instance's target variable, while the AP model learns annotators' biases and variances.

\subsubsection*{Broader Impact Statement}
Big data is a driving force behind the success of machine learning~\citep{zhou2017machine}. Reducing
the effort and cost required for annotating this data is essential for its ongoing development
In this context, MaDL is a possible tool to leverage the workforce of cost-efficient but error-prone annotators.
Yet, as a central resource for data annotation, crowdsourcing can negatively impact individuals or even entire communities. 
Some of these impacts include exploiting vulnerable individuals who participate in low-wage crowdsourcing tasks~\citep{schlagwein2019ethical}, producing low-quality data~\citep{daniel2018quality}, and outsourcing jobs~\citep{howe2008crowdsourcing}. 
On the one hand, multi-annotator supervised learning techniques can improve data quality and support awarding well-performing crowdworkers. 
On the other hand, such a technique may intensify the already existing competition between crowdworkers~\citep{schlagwein2019ethical}. 
It also requires tight monitoring to ensure fair assessments of crowdworkers.
Besides the benefits of annotator features containing prior information about annotators, there are several risks. 
Collecting and leaking potentially sensitive personal data about the annotators is such a significant risk~\citep{xia2020privacy}. 
Thus, the annotator features must contain only information relevant to the learning task.
Further, a lack of control over this or other processes can lead to discrimination and bias based on gender, origin, and other factors~\citep{goel2019crowdsourcing}.
For these reasons, it is crucial to consider and address the potential risks via responsible policies and practices when employing multi-annotator supervised learning techniques.

\subsubsection*{Acknowledgments}
\if\anonymous1
    Anonymous.
\else
    We thank Lukas Rauch for the insightful discussions and comments, which greatly improved this article.
\fi

\bibliography{tmlr}

\begin{thebibliography}{82}
\providecommand{\natexlab}[1]{#1}
\providecommand{\url}[1]{\texttt{#1}}
\expandafter\ifx\csname urlstyle\endcsname\relax
  \providecommand{\doi}[1]{doi: #1}\else
  \providecommand{\doi}{doi: \begingroup \urlstyle{rm}\Url}\fi

\bibitem[Albarqouni et~al.(2016)Albarqouni, Baur, Achilles, Belagiannis,
  Demirci, and Navab]{albarqouni2016aggnet}
Shadi Albarqouni, Christoph Baur, Felix Achilles, Vasileios Belagiannis,
  Stefanie Demirci, and Nassir Navab.
\newblock {Aggnet: Deep Learning from Crowds for Mitosis Detection in Breast
  Cancer Histology Images}.
\newblock \emph{{IEEE Trans. Med. Imaging}}, 35\penalty0 (5):\penalty0
  1313--1321, 2016.

\bibitem[Algan \& Ulusoy(2021)Algan and Ulusoy]{algan2021image}
G{\"o}rkem Algan and Ilkay Ulusoy.
\newblock {Image classification with deep learning in the presence of noisy
  labels: A survey}.
\newblock \emph{Knowl. Based Syst.}, 215:\penalty0 106771, 2021.

\bibitem[Arik \& Pfister(2021)Arik and Pfister]{arik2021tabnet}
Sercan~{\"O} Arik and Tomas Pfister.
\newblock {TabNet: Attentive Interpretable Tabular Learning}.
\newblock In \emph{AAAI Conf. Artif. Intell.}, pp.\  6679--6687, 2021.

\bibitem[Berthon et~al.(2021)Berthon, Han, Niu, Liu, and
  Sugiyama]{berthon2021confidence}
Antonin Berthon, Bo~Han, Gang Niu, Tongliang Liu, and Masashi Sugiyama.
\newblock {Confidence Scores Make Instance-dependent Label-noise Learning
  Possible}.
\newblock In \emph{Int. Conf. Machine Learn.}, pp.\  825--836, Virtual Conf.,
  2021.

\bibitem[Brier(1950)]{brier1950verification}
Glenn~W Brier.
\newblock {Verification of Forecasts Expressed in Terms of Probability}.
\newblock \emph{{Mon. Weather Rev.}}, 78\penalty0 (1):\penalty0 1--3, 1950.

\bibitem[Brodersen et~al.(2010)Brodersen, Ong, Stephan, and
  Buhmann]{brodersen2010balanced}
Kay~Henning Brodersen, Cheng~Soon Ong, Klaas~Enno Stephan, and Joachim~M
  Buhmann.
\newblock {The Balanced Accuracy and Its Posterior Distribution}.
\newblock In \emph{IEEE Int. Conf. Pattern Recognit.}, pp.\  3121--3124,
  Istanbul, Turkey, 2010.

\bibitem[Calma et~al.(2016)Calma, Leimeister, Lukowicz, Oeste-Rei{\ss},
  Reitmaier, Schmidt, Sick, Stumme, and Zweig]{calma2016active}
Adrian Calma, Jan~Marco Leimeister, Paul Lukowicz, Sarah Oeste-Rei{\ss}, Tobias
  Reitmaier, Albrecht Schmidt, Bernhard Sick, Gerd Stumme, and Katharina~Anna
  Zweig.
\newblock {From Active Learning to Dedicated Collaborative Interactive
  Learning}.
\newblock In \emph{Int. Conf. Archit. Comp. Syst.}, pp.\  1--8, Nuremberg,
  Germany, 2016.

\bibitem[Cao et~al.(2019)Cao, Xu, Kong, and Wang]{cao2018maxmig}
Peng Cao, Yilun Xu, Yuqing Kong, and Yizhou Wang.
\newblock {Max-MIG: an Information Theoretic Approach for Joint Learning from
  Crowds}.
\newblock In \emph{Int. Conf. Learn. Represent.}, New Orleans, LA, 2019.

\bibitem[Chang et~al.(2017)Chang, Amershi, and Kamar]{chang2017revolt}
Joseph~Chee Chang, Saleema Amershi, and Ece Kamar.
\newblock {Revolt: Collaborative Crowdsourcing for Labeling Machine Learning
  Datasets}.
\newblock In \emph{CHI Conf. Hum. Factors Comp. Syst.}, pp.\  2334--2346,
  Denver, CO, 2017.

\bibitem[Chu et~al.(2021)Chu, Ma, and Wang]{chu2021learning}
Zhendong Chu, Jing Ma, and Hongning Wang.
\newblock {Learning from Crowds by Modeling Common Confusions}.
\newblock In \emph{AAAI Conf. Artif. Intell.}, pp.\  5832--5840, Virtual Conf.,
  2021.

\bibitem[Daniel et~al.(2018)Daniel, Kucherbaev, Cappiello, Benatallah, and
  Allahbakhsh]{daniel2018quality}
Florian Daniel, Pavel Kucherbaev, Cinzia Cappiello, Boualem Benatallah, and
  Mohammad Allahbakhsh.
\newblock {Quality control in crowdsourcing: A survey of quality attributes,
  assessment techniques, and assurance actions}.
\newblock \emph{ACM Comput. Surv.}, 51\penalty0 (1):\penalty0 1--40, 2018.

\bibitem[Devlin et~al.(2018)Devlin, Chang, Lee, and Toutanova]{devlin2018bert}
Jacob Devlin, Ming-Wei Chang, Kenton Lee, and Kristina Toutanova.
\newblock {BERT: Pre-training of Deep Bidirectional Transformers for Language
  Understanding}.
\newblock \emph{arXiv:1810.04805}, 2018.

\bibitem[Donmez et~al.(2010)Donmez, Carbonell, and
  Schneider]{donmez2010probabilistic}
Pinar Donmez, Jaime Carbonell, and Jeff Schneider.
\newblock {A Probabilistic Framework to Learn from Multiple Annotators with
  Time-Varying Accuracy}.
\newblock In \emph{SIAM Int. Conf. Data Min.}, pp.\  826--837, Columbus, OH,
  2010.

\bibitem[Fang et~al.(2012)Fang, Zhu, Li, Ding, and Wu]{fang2012self}
Meng Fang, Xingquan Zhu, Bin Li, Wei Ding, and Xindong Wu.
\newblock {Self-Taught Active Learning from Crowds}.
\newblock In \emph{IEEE Int. Conf. Data Min.}, pp.\  858--863, Brussels,
  Belgium, 2012.

\bibitem[Fiedler(2021)]{fiedler2021simple}
James Fiedler.
\newblock {Simple Modifications to Improve Tabular Neural Networks}.
\newblock \emph{arXiv:2108.03214}, 2021.

\bibitem[Frey \& Slate(1991)Frey and Slate]{frey1991letter}
Peter~W. Frey and David~J. Slate.
\newblock {Letter recognition using Holland-style adaptive classifiers}.
\newblock \emph{Machine Learn.}, 6\penalty0 (2):\penalty0 161--182, 1991.

\bibitem[Gao et~al.(2022)Gao, Sun, Yang, Ren, Xiong, Engeler, Burazer,
  Wildling, Daniel, and Boning]{gao2022learning}
Zhengqi Gao, Fan-Keng Sun, Mingran Yang, Sucheng Ren, Zikai Xiong, Marc
  Engeler, Antonio Burazer, Linda Wildling, Luca Daniel, and Duane~S. Boning.
\newblock {Learning from Multiple Annotator Noisy Labels via Sample-Wise Label
  Fusion}.
\newblock In \emph{Eur. Conf. Comput. Vis.}, pp.\  407--422, Tel Aviv, Israel,
  2022.

\bibitem[Gil-Gonzalez et~al.(2021)Gil-Gonzalez, Orozco-Gutierrez, and
  Alvarez-Meza]{gil2021learning}
J.~Gil-Gonzalez, A.~Orozco-Gutierrez, and A.~Alvarez-Meza.
\newblock {Learning from multiple inconsistent and dependent annotators to
  support classification tasks}.
\newblock \emph{Neurocomputing}, 423:\penalty0 236--247, 2021.

\bibitem[Gil-Gonz{\'a}lez et~al.(2021)Gil-Gonz{\'a}lez, Valencia-Duque,
  {\'A}lvarez-Meza, Orozco-Guti{\'e}rrez, and
  Garc{\'\i}a-Moreno]{gil2021regularized}
Juli{\'a}n Gil-Gonz{\'a}lez, Andr{\'e}s Valencia-Duque, Andr{\'e}s
  {\'A}lvarez-Meza, {\'A}lvaro Orozco-Guti{\'e}rrez, and Andrea
  Garc{\'\i}a-Moreno.
\newblock {Regularized Chained Deep Neural Network Classifier for Multiple
  Annotators}.
\newblock \emph{Appl. Sci.}, 11\penalty0 (12):\penalty0 5409, 2021.

\bibitem[Gilyazev \& Turdakov(2018)Gilyazev and Turdakov]{gilyazev2018active}
Ruslan~A. Gilyazev and Denis~Y. Turdakov.
\newblock {Active Learning and Crowdsourcing: A Survey of Optimization Methods
  for Data Labeling}.
\newblock \emph{Program. Comput. Softw.}, 44\penalty0 (6):\penalty0 476--491,
  2018.

\bibitem[Glorot et~al.(2011)Glorot, Bordes, and Bengio]{glorot2011deep}
Xavier Glorot, Antoine Bordes, and Yoshua Bengio.
\newblock {Deep Sparse Rectifier Neural Networks}.
\newblock In \emph{Int. Conf. Artif. Intell. Stat.}, pp.\  315--323, Fort
  Lauderdale, FL, 2011.

\bibitem[Goel \& Faltings(2019)Goel and Faltings]{goel2019crowdsourcing}
Naman Goel and Boi Faltings.
\newblock {Crowdsourcing with Fairness, Diversity and Budget Constraints}.
\newblock In \emph{AAAI/ACM Conf. AI, Ethics, Soc.}, pp.\  297--304, Honolulu,
  HI, 2019.

\bibitem[Gu et~al.(2022)Gu, Masotto, Bachani, Lakshminarayanan, Nikodem, and
  Yin]{gu2022instance}
Keren Gu, Xander Masotto, Vandana Bachani, Balaji Lakshminarayanan, Jack
  Nikodem, and Dong Yin.
\newblock An instance-dependent simulation framework for learning with label
  noise.
\newblock \emph{Mach. Learn.}, 2022.

\bibitem[Guan et~al.(2018)Guan, Gulshan, Dai, and Hinton]{guan2018said}
Melody~Y. Guan, Varun Gulshan, Andrew~M. Dai, and Geoffrey~E. Hinton.
\newblock {Who Said What: Modeling Individual Labelers Improves
  Classification}.
\newblock In \emph{AAAI Conf. Artif. Intell.}, pp.\  3109--3118, New Orleans,
  LA, 2018.

\bibitem[Gupta et~al.(2014)Gupta, Bengio, and Weston]{gupta2014training}
Maya~R Gupta, Samy Bengio, and Jason Weston.
\newblock {Training Highly Multiclass Classifiers}.
\newblock \emph{J. Mach. Learn. Res.}, 15\penalty0 (1):\penalty0 1461--1492,
  2014.

\bibitem[He et~al.(2016)He, Zhang, Ren, and Sun]{he2016deep}
Kaiming He, Xiangyu Zhang, Shaoqing Ren, and Jian Sun.
\newblock {Deep Residual Learning for Image Recognition}.
\newblock In \emph{Conf. Comput. Vis. Pattern Recognit.}, pp.\  770--778, Las
  Vegas, NV, 2016.

\bibitem[He et~al.(2018)He, Du, Wang, Tian, Tang, and Chua]{he2018outer}
Xiangnan He, Xiaoyu Du, Xiang Wang, Feng Tian, Jinhui Tang, and Tat-Seng Chua.
\newblock {Outer Product-based Neural Collaborative Filtering}.
\newblock In \emph{Int. Joint Conf. Artif. Intell.}, pp.\  2227--2233,
  Stockholm, Sweden, 2018.

\bibitem[Herde et~al.(2021)Herde, Huseljic, Sick, and Calma]{herde2021survey}
Marek Herde, Denis Huseljic, Bernhard Sick, and Adrian Calma.
\newblock {A Survey on Cost Types, Interaction Schemes, and Annotator
  Performance Models in Selection Algorithms for Active Learning in
  Classification}.
\newblock \emph{IEEE Access}, 9:\penalty0 166970--166989, 2021.

\bibitem[Herde et~al.(2023)Herde, Huseljic, Sick, Bretschneider, and
  Oeste-Rei{\ss}]{herde2023who}
Marek Herde, Denis Huseljic, Bernhard Sick, Ulrich Bretschneider, and Sarah
  Oeste-Rei{\ss}.
\newblock {Who knows best? A Case Study on Intelligent Crowdworker Selection
  via Deep Learning}.
\newblock In \emph{Int. Workshop on Interact. Adapt. Learn. @ Eur. Conf. Mach.
  Learn.}, pp.\  14--18, Turin, Italy, 2023.

\bibitem[Hestness et~al.(2017)Hestness, Narang, Ardalani, Diamos, Jun,
  Kianinejad, Patwary, Yang, and Zhou]{hestness2017deep}
Joel Hestness, Sharan Narang, Newsha Ardalani, Gregory Diamos, Heewoo Jun,
  Hassan Kianinejad, Md~Mostofa~Ali Patwary, Yang Yang, and Yanqi Zhou.
\newblock {Deep Learning Scaling is Predictable, Empirically}.
\newblock \emph{arXiv:1712.00409}, 2017.

\bibitem[Hoiem et~al.(2021)Hoiem, Gupta, Li, and
  Shlapentokh-Rothman]{hoiem2021learning}
Derek Hoiem, Tanmay Gupta, Zhizhong Li, and Michal Shlapentokh-Rothman.
\newblock {Learning Curves for Analysis of Deep Networks}.
\newblock In \emph{Int. Conf. Machine Learn.}, pp.\  4287--4296, Virtual Conf.,
  2021.

\bibitem[Howe(2008)]{howe2008crowdsourcing}
Jeff Howe.
\newblock {Crowdsourcing: Why the Power of the Crowd Is Driving the Future of
  Business}, 2008.

\bibitem[H{\"u}llermeier \& Waegeman(2021)H{\"u}llermeier and
  Waegeman]{hullermeier2021aleatoric}
Eyke H{\"u}llermeier and Willem Waegeman.
\newblock {Aleatoric and epistemic uncertainty in machine learning: An
  introduction to concepts and methods}.
\newblock \emph{Machine Learn.}, 110:\penalty0 457--506, 2021.

\bibitem[Huseljic et~al.(2021)Huseljic, Sick, Herde, and
  Kottke]{huseljic2021separation}
Denis Huseljic, Bernhard Sick, Marek Herde, and Daniel Kottke.
\newblock {Separation of Aleatoric and Epistemic Uncertainty in Deterministic
  Deep Neural Networks}.
\newblock In \emph{IEEE Int. Conf. Pattern Recognit.}, pp.\  9172--9179,
  Virtual Conf., 2021.

\bibitem[Kajino et~al.(2012)Kajino, Tsuboi, and Kashima]{kajino2012convex}
Hiroshi Kajino, Yuta Tsuboi, and Hisashi Kashima.
\newblock {A Convex Formulation for Learning from Crowds}.
\newblock In \emph{AAAI Conf. Artif. Intell.}, pp.\  73--79, Toronto, ON, 2012.

\bibitem[Khetan et~al.(2018)Khetan, Lipton, and Anandkumar]{khetan2018learning}
Ashish Khetan, Zachary~C. Lipton, and Animashree Anandkumar.
\newblock {Learning From Noisy Singly-labeled Data}.
\newblock In \emph{Int. Conf. Learn. Represent.}, Vancouver, BC, 2018.

\bibitem[Krizhevsky(2009)]{krizhevsky2009learning}
Alex Krizhevsky.
\newblock {Learning Multiple Layers of Features from Tiny Images}.
\newblock Master's thesis, University of Toronto, 2009.

\bibitem[Kruskal(1964)]{kruskal1964multidimensional}
Joseph~B. Kruskal.
\newblock Multidimensional scaling by optimizing goodness of fit to a nonmetric
  hypothesis.
\newblock \emph{Psychometrika}, 29\penalty0 (1):\penalty0 1--27, 1964.

\bibitem[LeCun \& Cortes(1998)LeCun and Cortes]{lecun2010mnist}
Yann LeCun and Corinna Cortes.
\newblock {The MNIST database of handwritten digits}, 1998.

\bibitem[Li et~al.(2020)Li, Sun, Han, and Li]{li2020survey}
Jing Li, Aixin Sun, Jianglei Han, and Chenliang Li.
\newblock A survey on deep learning for named entity recognition.
\newblock \emph{IEEE Trans. Knowl. Data Eng.}, 34\penalty0 (1):\penalty0
  50--70, 2020.

\bibitem[Li et~al.(2022)Li, Sun, and Li]{li2022beyond}
Jingzheng Li, Hailong Sun, and Jiyi Li.
\newblock Beyond confusion matrix: learning from multiple annotators with
  awareness of instance features.
\newblock \emph{Machine Learn.}, pp.\  1--23, 2022.

\bibitem[Long et~al.(2016)Long, Hua, and Kapoor]{long2016joint}
Chengjiang Long, Gang Hua, and Ashish Kapoor.
\newblock {A Joint Gaussian Process Model for Active Visual Recognition with
  Expertise Estimation in Crowdsourcing}.
\newblock \emph{Int. J. Comput. Vis.}, 116\penalty0 (2):\penalty0 136--160,
  2016.

\bibitem[Loshchilov \& Hutter(2017)Loshchilov and Hutter]{loshchilov2017sgdr}
Ilya Loshchilov and Frank Hutter.
\newblock {{SGDR}: Stochastic Gradient Descent with Warm Restarts}.
\newblock In \emph{Int. Conf. Learn. Represent.}, Toulon, France, 2017.

\bibitem[Loshchilov \& Hutter(2019)Loshchilov and
  Hutter]{loshchilov2018decoupled}
Ilya Loshchilov and Frank Hutter.
\newblock {Decoupled Weight Decay Regularization}.
\newblock In \emph{Int. Conf. Learn. Represent.}, New Orleans, LA, 2019.

\bibitem[Minaee et~al.(2021)Minaee, Boykov, Porikli, Plaza, Kehtarnavaz, and
  Terzopoulos]{minaee2021image}
Shervin Minaee, Yuri Boykov, Fatih Porikli, Antonio Plaza, Nasser Kehtarnavaz,
  and Demetri Terzopoulos.
\newblock Image segmentation using deep learning: A survey.
\newblock \emph{IEEE Trans. Pattern Anal. Mach. Intell.}, 44\penalty0
  (7):\penalty0 3523--3542, 2021.

\bibitem[Naeini et~al.(2015)Naeini, Cooper, and
  Hauskrecht]{naeini2015obtaining}
Mahdi~Pakdaman Naeini, Gregory Cooper, and Milos Hauskrecht.
\newblock {Obtaining Well Calibrated Probabilities Using Bayesian Binning}.
\newblock In \emph{AAAI Conf. Artif. Intell.}, Austin, TX, 2015.

\bibitem[Netzer et~al.(2011)Netzer, Wang, Coates, Bissacco, Wu, and
  Ng]{netzer2011reading}
Yuval Netzer, Tao Wang, Adam Coates, Alessandro Bissacco, Bo~Wu, and Andrew~Y.
  Ng.
\newblock {Reading Digits in Natural Images with Unsupervised Feature
  Learning,}.
\newblock In \emph{Deep Learn. Unsupervised Feature Learn. Workshop @ Adv.
  Neural Inf. Process. Syst.}, Granada, Spain, 2011.

\bibitem[Ovadia et~al.(2019)Ovadia, Fertig, Ren, Nado, Sculley, Nowozin,
  Dillon, Lakshminarayanan, and Snoek]{ovadia2019can}
Yaniv Ovadia, Emily Fertig, Jie Ren, Zachary Nado, D.~Sculley, Sebastian
  Nowozin, Joshua Dillon, Balaji Lakshminarayanan, and Jasper Snoek.
\newblock {Can you trust your model\textquotesingle s uncertainty? Evaluating
  predictive uncertainty under dataset shift}.
\newblock In \emph{Adv. Neural Inf. Process. Syst.}, Vancouver, BC, 2019.

\bibitem[Piech et~al.(2015)Piech, Bassen, Huang, Ganguli, Sahami, Guibas, and
  Sohl-Dickstein]{piech2015deep}
Chris Piech, Jonathan Bassen, Jonathan Huang, Surya Ganguli, Mehran Sahami,
  Leonidas~J. Guibas, and Jascha Sohl-Dickstein.
\newblock {Deep Knowledge Tracing}.
\newblock In \emph{Adv. Neural Inf. Process. Syst.}, Montreal, QC, 2015.

\bibitem[Platanios et~al.(2020)Platanios, Al-Shedivat, Xing, and
  Mitchell]{platanios2020learning}
Emmanouil~A. Platanios, Maruan Al-Shedivat, Eric Xing, and Tom Mitchell.
\newblock {Learning from Imperfect Annotations}.
\newblock \emph{arXiv:2004.03473}, 2020.

\bibitem[Pouyanfar et~al.(2018)Pouyanfar, Sadiq, Yan, Tian, Tao, Reyes, Shyu,
  Chen, and Iyengar]{pouyanfar2018survey}
Samira Pouyanfar, Saad Sadiq, Yilin Yan, Haiman Tian, Yudong Tao, Maria~Presa
  Reyes, Mei-Ling Shyu, Shu-Ching Chen, and Sundaraja~S Iyengar.
\newblock {A Survey on Deep Learning: Algorithms, Techniques, and
  Applications}.
\newblock \emph{ACM Comput. Surv.}, 51\penalty0 (5):\penalty0 1--36, 2018.

\bibitem[Qu et~al.(2016)Qu, Cai, Ren, Zhang, Yu, Wen, and Wang]{qu2016product}
Yanru Qu, Han Cai, Kan Ren, Weinan Zhang, Yong Yu, Ying Wen, and Jun Wang.
\newblock {Product-based Neural Networks for User Response Prediction}.
\newblock In \emph{IEEE Int. Conf. Data Min.}, pp.\  1149--1154, Barcelona,
  Spain, 2016.

\bibitem[Raykar et~al.(2010)Raykar, Yu, Zhao, Valadez, Florin, Bogoni, and
  Moy]{raykar2010learning}
Vikas~C. Raykar, Shipeng Yu, Linda~H. Zhao, Gerardo~Hermosillo Valadez, Charles
  Florin, Luca Bogoni, and Linda Moy.
\newblock {Learning from Crowds}.
\newblock \emph{J. Mach. Learn. Res.}, 11\penalty0 (4):\penalty0 1297--1322,
  2010.

\bibitem[Reimers \& Gurevych(2017)Reimers and Gurevych]{reimers2017optimal}
Nils Reimers and Iryna Gurevych.
\newblock {Optimal Hyperparameters for Deep LSTM-Networks for Sequence Labeling
  Tasks}.
\newblock \emph{arXiv:1707.06799}, 2017.

\bibitem[Rodrigues \& Pereira(2018)Rodrigues and Pereira]{rodrigues2018deep}
Filipe Rodrigues and Francisco Pereira.
\newblock {Deep Learning from Crowds}.
\newblock In \emph{{AAAI Conf. Artif. Intell.}}, pp.\  1611--1618, New Orleans,
  LA, 2018.

\bibitem[Rodrigues et~al.(2013)Rodrigues, Pereira, and
  Ribeiro]{rodrigues2013learning}
Filipe Rodrigues, Francisco Pereira, and Bernardete Ribeiro.
\newblock {Learning from multiple annotators: Distinguishing good from random
  labelers}.
\newblock \emph{Pattern Recognit. Lett.}, 34\penalty0 (12):\penalty0
  1428--1436, 2013.

\bibitem[Rodrigues et~al.(2014)Rodrigues, Pereira, and
  Ribeiro]{rodrigues2014gaussian}
Filipe Rodrigues, Francisco Pereira, and Bernardete Ribeiro.
\newblock {Gaussian Process Classification and Active Learning with Multiple
  Annotators}.
\newblock In \emph{Int. Conf. Machine Learn.}, pp.\  433--441, Beijing, China,
  2014.

\bibitem[Rodrigues et~al.(2017)Rodrigues, Lourenco, Ribeiro, and
  Pereira]{rodrigues2017learning}
Filipe Rodrigues, Mariana Lourenco, Bernardete Ribeiro, and Francisco~C.
  Pereira.
\newblock {Learning Supervised Topic Models for Classification and Regression
  from Crowds}.
\newblock \emph{IEEE Trans. Pattern Anal. Mach. Intell.}, 39\penalty0
  (12):\penalty0 2409--2422, 2017.

\bibitem[Ronneberger et~al.(2015)Ronneberger, Fischer, and
  Brox]{ronneberger2015u}
Olaf Ronneberger, Philipp Fischer, and Thomas Brox.
\newblock {U-Net: Convolutional Networks for Biomedical Image Segmentation}.
\newblock In \emph{Med. Image Comput. Comput.-Assist. Interv.}, pp.\  234--241,
  Munich, Germany, 2015.

\bibitem[R{\"u}hling~Cachay et~al.(2021)R{\"u}hling~Cachay, Boecking, and
  Dubrawski]{ruhling2021end}
Salva R{\"u}hling~Cachay, Benedikt Boecking, and Artur Dubrawski.
\newblock {End-to-End Weak Supervision}.
\newblock In \emph{Adv. Neural Inf. Process. Syst.}, Virtual Conf., 2021.

\bibitem[Russakovsky et~al.(2015)Russakovsky, Deng, Su, Krause, Satheesh, Ma,
  Huang, Karpathy, Khosla, Bernstein, et~al.]{russakovsky2015imagenet}
Olga Russakovsky, Jia Deng, Hao Su, Jonathan Krause, Sanjeev Satheesh, Sean Ma,
  Zhiheng Huang, Andrej Karpathy, Aditya Khosla, Michael Bernstein, et~al.
\newblock {ImageNet Large Scale Visual Recognition Challenge}.
\newblock \emph{Int. J. of Comp. Vis.}, 115:\penalty0 211--252, 2015.

\bibitem[Schlagwein et~al.(2019)Schlagwein, Cecez-Kecmanovic, and
  Hanckel]{schlagwein2019ethical}
Daniel Schlagwein, Dubravka Cecez-Kecmanovic, and Benjamin Hanckel.
\newblock {Ethical norms and issues in crowdsourcing practices: A Habermasian
  analysis}.
\newblock \emph{Inf. Syst. J.}, 29\penalty0 (4):\penalty0 811--837, 2019.

\bibitem[Simonyan \& Zisserman(2015)Simonyan and Zisserman]{simonyan2015very}
Karen Simonyan and Andrew Zisserman.
\newblock {Very Deep Convolutional Networks for Large-Scale Image Recognition}.
\newblock In \emph{Int. Conf. Learn. Represent.}, San Diego, CA, 2015.

\bibitem[Tanno et~al.(2019)Tanno, Saeedi, Sankaranarayanan, Alexander, and
  Silberman]{tanno2019learning}
Ryutaro Tanno, Ardavan Saeedi, Swami Sankaranarayanan, Daniel~C. Alexander, and
  Nathan Silberman.
\newblock {Learning from Noisy Labels by Regularized Estimation of Annotator
  Confusion}.
\newblock In \emph{Conf. Comput. Vis. Pattern Recognit.}, pp.\  11244--11253,
  Long Beach, CA, 2019.

\bibitem[Uma et~al.(2021)Uma, Fornaciari, Hovy, Paun, Plank, and
  Poesio]{uma2021learning}
Alexandra~Nnemamaka Uma, Tommaso Fornaciari, Dirk Hovy, Silviu Paun, Barbara
  Plank, and Massimo Poesio.
\newblock {Learning from Disagreement: A Survey}.
\newblock \emph{J. Artif. Intell. Res.}, 72:\penalty0 1385--1470, 2021.

\bibitem[Vapnik(1995)]{vapnik1995the}
Vladimir Vapnik.
\newblock \emph{{The Nature of Statistical Learning Theory}}.
\newblock Springer, 1995.

\bibitem[Vaughan(2018)]{vaughan2017making}
Jennifer~W. Vaughan.
\newblock {Making Better Use of the Crowd: How Crowdsourcing Can Advance
  Machine Learning Research}.
\newblock \emph{J. Mach. Learn. Res.}, 18\penalty0 (193):\penalty0 1--46, 2018.

\bibitem[Vuurens et~al.(2011)Vuurens, de~Vries, and Eickhoff]{vuurens2011much}
Jeroen Vuurens, Arjen~P. de~Vries, and Carsten Eickhoff.
\newblock {How Much Spam Can You Take? An Analysis of Crowdsourcing Results to
  Increase Accuracy}.
\newblock In \emph{ACM SIGIR Workshop Crowdsourcing Inf. Retr.}, pp.\  21--26,
  Beijing, China, 2011.

\bibitem[Wei et~al.(2022)Wei, Xie, Feng, Han, and An]{wei2022deep}
Hongxin Wei, Renchunzi Xie, Lei Feng, Bo~Han, and Bo~An.
\newblock {Deep Learning From Multiple Noisy Annotators as A Union}.
\newblock \emph{IEEE Trans. Neural Netw. Learn. Syst.}, 2022.

\bibitem[Wojke \& Bewley(2018)Wojke and Bewley]{wojke2018deep}
Nicolai Wojke and Alex Bewley.
\newblock {Deep Cosine Metric Learning for Person Re-Identification}.
\newblock In \emph{IEEE Winter Conf. Appl. Comput. Vis.}, pp.\  748--756, Lake
  Tahoe, NV, 2018.

\bibitem[Xia \& McKernan(2020)Xia and McKernan]{xia2020privacy}
Huichuan Xia and Brian McKernan.
\newblock {Privacy in Crowdsourcing: a Review of the Threats and Challenges}.
\newblock \emph{Comput. Support. Coop. Work}, 29:\penalty0 263--301, 2020.

\bibitem[Xiao et~al.(2017)Xiao, Rasul, and Vollgraf]{xiao2017fashion}
Han Xiao, Kashif Rasul, and Roland Vollgraf.
\newblock Fashion-{MNIST}: {A} novel image dataset for benchmarking machine
  learning algorithms.
\newblock \emph{arXiv:1708.07747}, 2017.

\bibitem[Yan et~al.(2014)Yan, Rosales, Fung, Subramanian, and
  Dy]{yan2014learning}
Yan Yan, R{\'o}mer Rosales, Glenn Fung, Ramanathan Subramanian, and Jennifer
  Dy.
\newblock Learning from multiple annotators with varying expertise.
\newblock \emph{Machine Learn.}, 95\penalty0 (3):\penalty0 291--327, 2014.

\bibitem[Yang et~al.(2018)Yang, Drake, Damianou, and
  Maarek]{yang2018leveraging}
Jie Yang, Thomas Drake, Andreas Damianou, and Yoelle Maarek.
\newblock {Leveraging Crowdsourcing Data for Deep Active Learning an
  Application: Learning Intents in Alexa}.
\newblock In \emph{Int. World Wide Web Conf.}, pp.\  23--32, Lyon, France,
  2018.

\bibitem[Yu et~al.(2022)Yu, Ding, and Bach]{yu2022learning}
Peilin Yu, Tiffany Ding, and Stephen~H. Bach.
\newblock {Learning from Multiple Noisy Partial Labelers}.
\newblock In \emph{Int. Conf. Artif. Intell. Stat.}, pp.\  11072--11095,
  Virtual Conf., 2022.

\bibitem[Zagoruyko \& Komodakis(2016)Zagoruyko and
  Komodakis]{zagoruyko2016wide}
Sergey Zagoruyko and Nikos Komodakis.
\newblock {Wide Residual Networks}.
\newblock \emph{arXiv:1605.07146}, 2016.

\bibitem[Zhang(2022)]{zhang2022knowledge}
Jing Zhang.
\newblock {Knowledge Learning With Crowdsourcing: A Brief Review and Systematic
  Perspective}.
\newblock \emph{IEEE/CAA J. Autom. Sin.}, 9\penalty0 (5):\penalty0 749--762,
  2022.

\bibitem[Zhang et~al.(2016)Zhang, Wu, and Sheng]{zhang2016learning}
Jing Zhang, Xindong Wu, and Victor~S. Sheng.
\newblock {Learning from Crowdsourced Labeled Data: A Survey}.
\newblock \emph{{Artif. Intell. Rev.}}, 46\penalty0 (4):\penalty0 543--576,
  2016.

\bibitem[Zhang et~al.(2020)Zhang, Tanno, Xu, Jin, Jacob, Cicarrelli, Barkhof,
  and Alexander]{le2020disentangling}
Le~Zhang, Ryutaro Tanno, Mou-Cheng Xu, Chen Jin, Joseph Jacob, Olga Cicarrelli,
  Frederik Barkhof, and Daniel Alexander.
\newblock {Disentangling Human Error from Ground Truth in Segmentation of
  Medical Images}.
\newblock In \emph{Adv. Neural Inf. Process. Syst.}, pp.\  15750--15762,
  Virtual Conf., 2020.

\bibitem[Zhang et~al.(2023)Zhang, Tanno, Xu, Huang, Bronik, Jin, Jacob, Zheng,
  Shao, Ciccarelli, et~al.]{zhang2023learning}
Le~Zhang, Ryutaro Tanno, Moucheng Xu, Yawen Huang, Kevin Bronik, Chen Jin,
  Joseph Jacob, Yefeng Zheng, Ling Shao, Olga Ciccarelli, et~al.
\newblock {Learning from Multiple Annotators for Medical Image Segmentation}.
\newblock \emph{Pattern Recognit.}, pp.\  109400, 2023.

\bibitem[Zhang et~al.(2019)Zhang, Yao, Sun, and Tay]{zhang2019deep}
Shuai Zhang, Lina Yao, Aixin Sun, and Yi~Tay.
\newblock {Deep learning based recommender system: A survey and new
  perspectives}.
\newblock \emph{ACM Comput. Surv.}, 52\penalty0 (1):\penalty0 1--38, 2019.

\bibitem[Zhou et~al.(2017)Zhou, Pan, Wang, and Vasilakos]{zhou2017machine}
Lina Zhou, Shimei Pan, Jianwu Wang, and Athanasios~V. Vasilakos.
\newblock {Machine learning on big data: Opportunities and challenges}.
\newblock \emph{Neurocomputing}, 237:\penalty0 350--361, 2017.

\end{thebibliography}
\bibliographystyle{tmlr}

\appendix

\appendix
\section{Proofs}
\label{app:proofs}
\begin{proof}[Proof of Proposition~\ref{prop:gt}]
    Minimizing the loss in Eq.~\ref{eq:ground-truth-objective} results in the following Bayes optimal prediction:
    \begin{align*}
        y_\mathrm{GT}(\mathbf{x})   &= \argmin_{y^\prime \in \Omega_Y}\left(\mathbb{E}_{y \mid \mathbf{x}}\left[\delta(y \neq y^\prime))\right]\right)
                                    = \argmin_{y^\prime \in \Omega_Y}\left(\sum_{y \in \Omega_Y} \Pr(y \mid \mathbf{x}) \delta(y \neq y^\prime)\right) \\
                                    &= \argmin_{y^\prime \in \Omega_Y}\left(\sum_{y \in \Omega_Y \setminus \{y^\prime\}} \Pr(y \mid \mathbf{x})\right) = \argmin_{y^\prime \in \Omega_Y}\left(1 - \Pr(y^\prime \mid \mathbf{x})\right)
                                    = \argmax_{y^\prime \in \Omega_Y}\left(\Pr(y^\prime \mid \mathbf{x})\right). \qedhere
    \end{align*}
\end{proof}
\begin{proof}[Proof of Proposition~\ref{prop:ap}]
    Minimizing the loss in Eq.~\ref{eq:annotator-performance-objective} results in the following Bayes optimal prediction:
    \begin{align*}
        y_\mathrm{AP}(\mathbf{x}, \mathbf{a})   
        &= \argmin_{y^\prime \in \{0, 1\}}\left(\mathbb{E}_{y \mid \mathbf{x}}\left[\mathbb{E}_{z \mid \mathbf{x}, \mathbf{a}, y}\left[\delta\left(y^\prime \neq \delta\left(y \neq z\right)\right)\right]\right]\right)\\
        &= \argmin_{y^\prime \in \{0, 1\}}\left(\sum_{y \in \Omega_Y} \Pr(y \mid \mathbf{x}) \left(\sum_{z \in \Omega_Y} \Pr(z \mid \mathbf{x}, \mathbf{a}, y) \delta(y^\prime \neq \delta(y \neq z))\right)\right) \\
        &= \argmin_{y^\prime \in \{0, 1\}}\left(\sum_{y \in \Omega_Y} \Pr(y \mid \mathbf{x}) \left(\sum_{z \in \Omega_Y\setminus\{y\}} \Pr(z \mid \mathbf{x}, \mathbf{a}, y) \delta(y^\prime \neq 1) + \Pr(y \mid \mathbf{x}, \mathbf{a}, y) \delta(y^\prime \neq 0) \right)\right) \\
        &= \argmin_{y^\prime \in \{0, 1\}}\left(\sum_{y \in \Omega_Y} \Pr(y \mid \mathbf{x}) \Big( (1-\Pr(y \mid \mathbf{x}, \mathbf{a}, y)) \delta(y^\prime \neq 1) + \Pr(y \mid \mathbf{x}, \mathbf{a}, y) \delta(y^\prime \neq 0) \Big)\right) \\
        &= \delta\left(\sum_{y \in \Omega_Y} \Pr(y \mid \mathbf{x}) \Pr(y \mid \mathbf{x}, \mathbf{a}, y) < \sum_{y \in \Omega_Y} \Pr(y \mid \mathbf{x}) \left(1-\Pr(y \mid \mathbf{x}, \mathbf{a}, y) \right)\right) \\
        &= \delta\left(\sum_{y \in \Omega_Y} \Pr(y \mid \mathbf{x}) \Pr(y \mid \mathbf{x}, \mathbf{a}, y) < \sum_{y \in \Omega_Y} \Pr(y \mid \mathbf{x}) - \sum_{y \in \Omega_Y} \Pr(y \mid \mathbf{x}) \Pr(y \mid \mathbf{x}, \mathbf{a}, y)\right) \\
        &= \delta\left(\sum_{y \in \Omega_Y} \Pr(y \mid \mathbf{x}) \Pr(y \mid \mathbf{x}, \mathbf{a}, y) < 1 - \sum_{y \in \Omega_Y} \Pr(y \mid \mathbf{x}) \Pr(y \mid \mathbf{x}, \mathbf{a}, y)\right) \\
        &= \delta\left(\sum_{y \in \Omega_Y} \Pr(y \mid \mathbf{x}) \Pr(y \mid \mathbf{x}, \mathbf{a}, y) < 0.5\right). \qedhere
    \end{align*}
\end{proof}
\begin{proof}[Proof of Theorem~\ref{th:annotator-weights}]
    Applying assumption \ref{eq:assumption-2} of Theorem~\ref{th:annotator-weights} to Eq.~\ref{eq:annotator-weight}, the weight $w(\mathbf{a}_m)$ for an annotator $\mathbf{a}_m$ is given by:
    $$w(\mathbf{a}_m) \stackrel{\text{\ref{eq:assumption-2}}}{=} \frac{M}{G} \sum_{g=1}^{G}\frac{\delta(m \in \mathcal{A}^{(g)})}{|\mathcal{A}^{(g)}|}.$$
    Accordingly, the sums of the annotator weights are uniformly distributed across the $G$ groups:
    \begin{equation}
        \label{eq:assumption-3}
        \sum_{m \in \mathcal{A}^{(1)}} w(\mathbf{a}_m) = \dots = \sum_{m \in \mathcal{A}^{(G)}} w(\mathbf{a}_m) = \frac{M}{G}. \tag*{$(\diamond)$}
    \end{equation}
    Inserting these annotator weights into the weighted log-likelihood function and making use of assumption~\ref{eq:assumption-1} in Theorem~\ref{th:annotator-weights}, we get
    \begin{align*}
        \sum_{n=1}^N \sum_{m=1}^{M} w(\mathbf{a}_m) \ln\left(\Pr(z_{nm} \mid \mathbf{x}_n,  \mathbf{a}_m)\right)
        &= \sum_{n=1}^N \sum_{g=1}^{G} \sum_{m \in \mathcal{A}^{(g)}} w(\mathbf{a}_m) \ln\left(\Pr(z_{nm} \mid \mathbf{x}_n,  \mathbf{a}_m)\right)\\
        &\stackrel{\text{\ref{eq:assumption-1}}}{=} \sum_{n=1}^N \sum_{g=1}^{G} \left(\sum_{m \in \mathcal{A}^{(g)}} w(\mathbf{a}_m)\right) \ln\left(\Pr(z_{n{m_g}} \mid \mathbf{x}_n,  \mathbf{a}_{m_g})\right)\\
        &\stackrel{\text{\ref{eq:assumption-3}}}{=} \sum_{n=1}^N \sum_{g=1}^{G} \frac{M}{G} \ln\left(\Pr(z_{n{m_g}} \mid \mathbf{x}_n,  \mathbf{a}_{m_g})\right)\\
        &\propto \sum_{n=1}^N \sum_{g=1}^{G} \ln\left(\Pr(z_{n{m_g}} \mid \mathbf{x}_n,  \mathbf{a}_{m_g})\right). \qedhere
    \end{align*}
\end{proof}

\section{End-to-end Training Algorithm Example}
\label{app:training-algorithm-illustration}
This appendix illustrates the steps of Algorithm~\ref{alg:training} for a sampled mini-batch. Let us assume a classification problem with $C=2$ classes, $M=2$ annotators, and a mini-batch size of $B=1$. We further suppose there is no prior information about annotators such that we represent the two annotators via the one-hot encoded vectors 
\begin{equation}
    \mathbf{a}_1 = (1, 0)^\mathrm{T} \text{ and } \mathbf{a}_2 = (0, 1)^\mathrm{T}.
\end{equation}
In the first step, we compute the instances' class-membership probabilities. For a mini-batch size of $B=1$, we exemplarily assume we have a single arbitrary instance $\mathbf{x}_n$, for which we obtain 
\begin{equation}
    \mathbf{\hat{p}}_{\boldsymbol{\theta}}(\mathbf{x}_n) = (0.8, 0.2)^\mathrm{T}
\end{equation}
as class-membership probabilities outputted by the GT model with parameters $\boldsymbol{\theta}$.  In the second step, we compute the confusion matrix for each instance-annotator pair. For a mini-batch size of $B=1$ and $M=2$ annotators, we obtain
\begin{equation}
    \mathbf{\hat{P}}_{\boldsymbol{\omega}}(\mathbf{x}_n, \mathbf{a}_1) = \begin{pmatrix} 1 & 0 \\ 0 & 1\end{pmatrix} \text{ and } \mathbf{\hat{P}}_{\boldsymbol{\omega}}(\mathbf{x}_n, \mathbf{a}_2) = \begin{pmatrix} 0.5 & 0.5 \\ 0.5 & 0.5\end{pmatrix}
\end{equation}
as two exemplary confusion matrices outputted by the AP model with parameters $\boldsymbol{\omega}$. Thus, we currently expect annotator $\mathbf{a}_1$ to be error-free and $\mathbf{a}_2$ to randomly guess. In the third step, we determine the similarities between all pairs of annotator embeddings to compute their weights in the fourth step. Since the Gaussian kernel $k_\gamma$ is symmetric and there are only two annotators, we obtain
\begin{equation}
    \hat{w}_\gamma(\mathbf{a}_1) = \hat{w}_\gamma(\mathbf{a}_2) = 1
\end{equation}
as annotator weights in our example. We refer to Fig.~\ref{fig:annotator-weights} for a more complex example of computing annotator weights.
Before evaluating the loss function, we assume $\mathbf{z}_n = (2, 2)^\mathrm{T}$ as the vector of class labels assigned by the annotators $\mathbf{a}_1$ and $\mathbf{a}_2$ to instance $\mathbf{x}_n$. Moreover, we take $\gamma=1$ as an example bandwidth and $\alpha=2, \beta=1$ as parameters of the gamma distribution. Now, we have all the ingredients to evaluate the loss function in Eq.~\ref{eq:weighted-loss-function}:
\begin{align}
    L_{\mathbf{x}_n, \mathbf{a}_1, \mathbf{a}_2, \mathbf{z}_n, \alpha, \beta}(\boldsymbol{\theta}, \boldsymbol{\omega}, \gamma) = 
    &\underbrace{-\frac{1}{2} \cdot 1 \cdot \ln\left((0, 1)\begin{pmatrix} 1 & 0 \\ 0 & 1 \end{pmatrix}^\mathrm{T}\begin{pmatrix} 0.8 \label{eq:prediction-loss-1} \\ 0.2 \end{pmatrix} \right)}_{\text{prediction loss for annotator }\mathbf{a}_1} \\ 
    &\underbrace{-\frac{1}{2} \cdot 1 \cdot \ln\left((0, 1)\begin{pmatrix} 0.5 & 0.5 \\ 0.5 & 0.5\end{pmatrix}^\mathrm{T}\begin{pmatrix} 0.8 \\ 0.2 \end{pmatrix} \right)}_{\text{prediction loss for annotator }\mathbf{a}_2}  \label{eq:prediction-loss-2} \\
    &\underbrace{- \ln\left(\mathrm{Gam}\left(1 \mid 2, 1\right)\right)}_{\text{regularization term for the bandwidth } \gamma}  \label{eq:gamma-loss} \\
    &=-\frac{1}{2} \ln\left(0.2\right) - \frac{1}{2} \ln\left(0.5\right) + 1.
\end{align}
Eq.~\ref{eq:prediction-loss-1} and Eq.~\ref{eq:prediction-loss-2} compute the cross-entropy loss between the estimated annotation probabilities
\begin{equation}
    \mathbf{\hat{p}}_{\boldsymbol{\theta}, \boldsymbol{\omega}}(\mathbf{x}_n, \mathbf{a}_1) = \begin{pmatrix} 1 & 0 \\ 0 & 1 \end{pmatrix}^\mathrm{T}\begin{pmatrix} 0.8 \\ 0.2 \end{pmatrix} = \begin{pmatrix} 0.8 \\ 0.2 \end{pmatrix}, \qquad \mathbf{\hat{p}}_{\boldsymbol{\theta}, \boldsymbol{\omega}}(\mathbf{x}_n, \mathbf{a}_1) = \begin{pmatrix} 0.5 & 0.5 \\ 0.5 & 0.5 \end{pmatrix}^\mathrm{T}\begin{pmatrix} 0.8 \\ 0.2 \end{pmatrix} = \begin{pmatrix} 0.5 \\ 0.5 \end{pmatrix}
\end{equation}
and the two provided annotations as one-hot encoded targets to learn annotation patterns. In contrast, Eq.~\ref{eq:gamma-loss} computes the logarithmic probability density of the current bandwidth value in relation to the gamma distribution to regularize the possible bandwidth values. Finally, we can use a common optimizer for DNNs to update the parameters $\boldsymbol{\theta}, \boldsymbol{\omega}, \gamma$.

\section{Ablation Study}
\label{app:ablation-study}
This appendix presents the results of an ablation study regarding MaDL's hyperparameters. Table~\ref{tab:ablation-study-real-world} provides the results regarding the two datasets \textsc{music} and \textsc{labelme} with real-world annotators and Table~\ref{tab:ablation-study-simulated} presents the results for the dataset \textsc{letter} with the four annotator sets simulated according to Table~\ref{tab:annotator-simulation}. 
We design the ablation study following a one-factor-at-a-time approach in favor of reducing the computational cost.
This means we define a default MaDL variant for each experiment and change the value of only one hyperparameter at a time.
For example, we study the effect of the AP prior $\eta \in (0, 1)$ by taking the default MaDL variant and changing only the value of this hyperparameter.
Our default MaDL variant corresponds to the hyperparameter values described in Section~\ref{sec:multi-annotator-deep-learning}. 
For the combination of the dataset $\textsc{letter}$ with the annotator set $\textsc{inductive}$, the default MaDL variant gets annotator features containing prior information instead of one-hot encoded annotator features as input.
The general setup of an experiment, e.g., the number of repeated runs, the splits into the training, test, and validation sets, etc., is identical to the one described in Section~\ref{sec:experimental-evaluation}.
In the following, we analyze the effects of the individual hyperparameters, whose default values are given in brackets:
\begin{description}[font=\normalfont\textit]
    \item[Embedding size $(Q=R=16)$:] The embedding size controls the dimensionality of the instance and annotator embeddings learned by the AP model.
    For the two datasets with real-world annotators in Table~\ref{tab:ablation-study-real-world}, an embedding size of $Q=R=16$ clearly works best. In contrast, the results of Table~\ref{tab:ablation-study-simulated} indicate that an embedding size of $Q=R=8$ is superior for the dataset \textsc{letter} with simulated annotator sets. Although $Q=R=16$ is a robust default embedding size for the tested learning tasks, it is crucial to consider the characteristics of each learning task individually. Thereby, the number of annotators and instance features are of particular importance. 
    \item[AP prior $(\eta=0.8)$:] The AP prior controls the initialization of the AP output layer's biases (cf. Eq.~\ref{eq:biases}) and thus the parametrization of initial annotator confusion matrices. This hyperparameter is relevant for the identifiability of the class-membership probabilities and annotator confusion matrices. The results of Table~\ref{tab:ablation-study-real-world} and~\ref{tab:ablation-study-simulated} confirm this importance. A low value of $\eta=0.1$ leads to poor performance across all tested datasets since it cannot identify the annotation noise.
    Selecting high values, e.g., $\eta \in \{0.7, 0.8, 0.9\}$, resolves such an issue and thus leads to much better performances.
    \item[Outer product $(\mathrm{True})$:] The outer product layer is one option, adopted from literature~\citep{qu2016product}, to model the interactions between instance and annotator embeddings. Training MaDL with such a product layer leads to performance gains for  \textsc{music} and \textsc{labelme} as datasets with real-world annotators (cf. Table~\ref{tab:ablation-study-real-world}). In contrast, there are no clear performance differences for the dataset \textsc{letter} with the four simulated annotator sets (cf. Table~\ref{tab:ablation-study-simulated}). Since this modeling of interactions resembles recommender systems, testing alternatives, e.g., computing the outer product of instance and annotator embeddings as input to a convolutional layer~\citep{he2018outer}, may be worthwhile. 
    \item[Residual block $(\mathrm{True})$:] This hyperparameter determines whether we implement the residual connection (True) or not (False) into the block shown in Fig.~\ref{fig:residual-block}. The idea of this connection is to prioritize the annotator embeddings when computing APs. Inspecting the results, we see clear performance gains for \textsc{music} and \textsc{labelme} as datasets with real-world annotators (cf. Table~\ref{tab:ablation-study-real-world}). In contrast, the residual connection leads for two of the simulated annotator sets to better and for the other two to worse results for the dataset \textsc{letter}. Similar to investigating alternatives to the outer product layer, optimizing the entire architecture in Fig.~\ref{fig:residual-block} may be valuable in future work.  
    \item[Gamma prior $(\alpha=1.25, \beta=0.25)$:] This pair of hyperparameters specifies the prior gamma distribution of the bandwidth $\gamma$ used for computing kernel values between annotator embeddings. Table~\ref{tab:ablation-study-real-world} and \ref{tab:ablation-study-simulated} list the evaluated pairs of $\alpha$ and $\beta$, including the option of no annotator weights. The values for $\alpha$ and $\beta$ are selected to test different combinations of modes, i.e., $\nicefrac{(\alpha-1)}{\beta} \in \{0.5, 1, 2\}$, and variances, i.e., $\nicefrac{\alpha}{\beta^2} \in \{6, 20\}$, for the gamma distribution. Table~\ref{tab:ablation-study-simulated} indicates no large performance difference of varying value pairs of $\alpha$ and $\beta$. As previously shown for RQ2 in Section~\ref{sec:experimental-evaluation}, the weak results of the option ``No Weights`` for the dataset $\textsc{letter}$ with $\textsc{random-correlated}$ as annotator set demonstrate the importance of modeling annotator correlations. Table~\ref{tab:ablation-study-real-world} indicates larger performance differences for the varying parametrizations. Here, a mode of $\nicefrac{(\alpha-1)}{\beta} = 1$, corresponding to $(\alpha=1.25, \beta=0.25)$ and $(\alpha=1.5, \beta=0.5)$, leads to competitive performances across both datasets. In general, one could also replace the Gaussian kernel with other common kernels or similarity functions to estimate annotators' densities as a basis for quantifying their correlations. A popular alternative would be the cosine similarity function. For example, \cite{wojke2018deep} demonstrate how to learn an embedding space where this similarity function is optimized by re-parametrizing the conventional softmax classification output. 
\end{description}

\begin{table}[!h]
    \caption{
        Ablation study on MaDL's hyperparameters for the datasets \textsc{music} and \textsc{labelme} with real-world annotators: {\color{blue}{\textBF{Best}}} and {\color{purple}{\textBF{second best}}} performances are highlighted per annotator set and evaluation score.
    }
    \label{tab:ablation-study-real-world}
    \setlength{\tabcolsep}{4.55pt}
    \scriptsize
    \centering
    \begin{tabular}{|l|c||c|c|c||c|c|c|}
        \toprule
        \multicolumn{1}{|c|}{\multirow{2}{*}{\textbf{Parameter}}} &  \multirow{2}{*}{\textbf{Value}} & 
        \multicolumn{3}{c||}{\textbf{Ground Truth Model}} & 
        \multicolumn{3}{c|}{\textbf{Ground Truth Model}} \\
        & &
        ACC $\uparrow$  & NLL $\downarrow$  & 
        BS $\downarrow$  &  ACC $\uparrow$  & NLL $\downarrow$  & 
        BS $\downarrow$ \\ 
        \midrule 
        \hline
        \multicolumn{2}{|c||}{\cellcolor{datasetcolor!10}} & \multicolumn{3}{c||}{\cellcolor{datasetcolor!10} \textbf{\textsc{music}}} & \multicolumn{3}{c|}{\cellcolor{datasetcolor!10} \textbf{\textsc{labelme}}} \\
        \hline
        \multicolumn{2}{|c||}{Default: MaDL} & {\color{purple}{\textBF{0.743$\pm$0.020}}} &        0.877$\pm$0.034 &      0.381$\pm$0.013 &         {\color{purple}{\textBF{0.867$\pm$0.004}}} &        {\color{blue}{\textBF{0.623$\pm$0.138}}} &      {\color{purple}{\textBF{0.214$\pm$0.009}}} \\
        \hline
        \multirow{2}{*}{Embedding Size} & $Q=R=8$ & 0.742$\pm$0.011 &        0.907$\pm$0.048 &      0.378$\pm$0.010 &        0.852$\pm$0.010 &        0.728$\pm$0.146 &      0.235$\pm$0.018  \\
        & $Q=R=32$ & 0.724$\pm$0.020 &        0.889$\pm$0.067 &      0.388$\pm$0.029 &        0.856$\pm$0.012 &        0.940$\pm$0.280 &      0.242$\pm$0.026  \\
        \hline
        \multirow{3}{*}{AP Prior} 
        & $\eta=0.1$ & 0.200$\pm$0.101 &        5.652$\pm$1.197 &      1.305$\pm$0.213 & 0.139$\pm$0.055 &       10.514$\pm$5.138 &      1.483$\pm$0.351 \\
        & $\eta=0.7$ & 0.734$\pm$0.027 &        0.935$\pm$0.071 &      0.377$\pm$0.020 &         0.851$\pm$0.012 &        0.882$\pm$0.309 &      0.248$\pm$0.026  \\
        & $\eta=0.9$ & 0.735$\pm$0.005 &        0.907$\pm$0.047 &      0.395$\pm$0.008 &         0.854$\pm$0.004 &        1.000$\pm$0.277 &      0.246$\pm$0.012 \\
        \hline
        Outer Product & False & 0.725$\pm$0.030 &        0.896$\pm$0.076 &      0.394$\pm$0.034 &         0.860$\pm$0.003 &        1.014$\pm$0.221 &      0.238$\pm$0.006 \\
        \hline
        Residual Block & False & 0.734$\pm$0.015 &        0.911$\pm$0.065 &      0.387$\pm$0.032 &         0.852$\pm$0.006 &        1.121$\pm$0.461 &      0.249$\pm$0.022 \\
        \hline
        \multirow{6}{*}{Gamma Prior} & No Weights & 0.736$\pm$0.014 &        {\color{purple}{\textBF{0.857$\pm$0.033}}} &      {\color{purple}{\textBF{0.377$\pm$0.011}}} &        {\color{blue}{\textBF{0.870$\pm$0.008}}} &        {\color{purple}{\textBF{0.634$\pm$0.177}}} &      {\color{blue}{\textBF{0.211$\pm$0.014}}}  \\
        & $\alpha=1.118, \beta=0.236$ & 0.731$\pm$0.026 &        0.922$\pm$0.071 &      0.387$\pm$0.021 & 0.853$\pm$0.011 &        1.077$\pm$0.218 &      0.253$\pm$0.018 \\
        & $\alpha=1.226, \beta=0.452$ & 0.722$\pm$0.015 &        0.948$\pm$0.086 &      0.399$\pm$0.023 & 0.852$\pm$0.011 &        0.902$\pm$0.316 &      0.245$\pm$0.027 \\
        & $\alpha=1.5, \beta=0.5$ & {\color{blue}{\textBF{0.748$\pm$0.009}}} &        0.869$\pm$0.078 &      {\color{blue}{\textBF{0.372$\pm$0.025}}} &         0.864$\pm$0.001 &        0.685$\pm$0.153 &      0.222$\pm$0.010  \\
        & $\alpha=1.56, \beta=0.28$ & 0.722$\pm$0.018 &        {\color{blue}{\textBF{0.847$\pm$0.026}}} &      0.381$\pm$0.012 & 0.860$\pm$0.012 &        0.678$\pm$0.211 &      0.225$\pm$0.019 \\
        & $\alpha=2.22, \beta=0.61$ & 0.716$\pm$0.006 &        0.947$\pm$0.068 &      0.405$\pm$0.023 & 0.857$\pm$0.008 &        0.785$\pm$0.244 &      0.236$\pm$0.013 \\
        \hline
        \bottomrule
    \end{tabular}
\end{table}

The above analyses show that the chosen default hyperparameters do not always give the best results but are competitive for most datasets studied. 
They also indicate that having a (small) validation set with GT labels is required if maximum performance is crucial.
Obtaining such a validation set in a setting with error-prone annotators can be expensive.
Therefore, future research may examine methods to design such validation sets cost-efficiently.
In this context, it is also essential to elaborate further on the theoretical foundations of multi-annotator supervised learning techniques, e.g., by deriving theoretical guarantees for specific GT and AP distribution types.

\begin{table}[!ht]
    \caption{
        Ablation study on MaDL's hyperparameters for the dataset \textsc{letter} and four simulated annotated sets: {\color{blue}{\textBF{Best}}} and {\color{purple}{\textBF{second best}}} performances are highlighted per annotator set and evaluation score.
    }
    \label{tab:ablation-study-simulated}
    \setlength{\tabcolsep}{1.9pt}
    \scriptsize
    \centering
    \begin{tabular}{|l|c||c|c|c||c|c|c|c|}
        \toprule
        \multicolumn{1}{|c|}{\multirow{2}{*}{\textbf{Parameter}}} &
        \multicolumn{1}{c||}{\multirow{2}{*}{\textbf{Value}}} & 
        \multicolumn{3}{c||}{\textbf{Ground Truth Model}} & 
        \multicolumn{4}{c|}{\textbf{Annotator Performance Model}} \\
        & &
        ACC $\uparrow$  & NLL $\downarrow$  & 
        BS $\downarrow$  &  ACC $\uparrow$  & NLL $\downarrow$  & 
        BS $\downarrow$ & BAL-ACC $\uparrow$ \\ 
        \midrule 
        \hline
        \multicolumn{9}{|c|}{\cellcolor{datasetcolor!10} \textbf{\textsc{letter} (\textsc{independent})}} \\
        \hline
        \multicolumn{2}{|c||}{Default: MaDL} & 0.935$\pm$0.006 &        0.303$\pm$0.102 &      0.098$\pm$0.010 &         0.766$\pm$0.004 &        0.491$\pm$0.007 &      0.317$\pm$0.005 &         0.702$\pm$0.005 \\
        \hline
        \multirow{2}{*}{Embedding Size} & $Q=R=8$ & 0.937$\pm$0.004 &        0.285$\pm$0.116 &      0.097$\pm$0.006 &         {\color{blue}{\textBF{0.773$\pm$0.003}}} &        {\color{blue}{\textBF{0.476$\pm$0.006}}} &      {\color{blue}{\textBF{0.309$\pm$0.002}}} &         {\color{blue}{\textBF{0.708$\pm$0.003}}} \\
        & $Q=R=32$ & 0.935$\pm$0.007 &        0.327$\pm$0.125 &      0.099$\pm$0.011 &         0.763$\pm$0.004 &        0.501$\pm$0.005 &      0.322$\pm$0.003 &         0.698$\pm$0.005 \\
        \hline
        \multirow{3}{*}{AP Prior} 
        & $\eta=0.1$ & 0.587$\pm$0.048 &        7.686$\pm$1.229 &      0.766$\pm$0.095 &         0.656$\pm$0.018 &        1.194$\pm$0.139 &      0.554$\pm$0.037 &         0.614$\pm$0.015 \\
        & $\eta=0.7$ & 0.938$\pm$0.005 &        0.273$\pm$0.047 &      0.095$\pm$0.007 &         0.767$\pm$0.002 &        0.487$\pm$0.003 &      0.314$\pm$0.002 &         0.702$\pm$0.004 \\
        & $\eta=0.9$ & 0.935$\pm$0.007 &        {\color{blue}{\textBF{0.250$\pm$0.027}}} &      0.095$\pm$0.009 &         {\color{purple}{\textBF{0.768$\pm$0.002}}} &        0.486$\pm$0.003 &      0.315$\pm$0.002 &         {\color{purple}{\textBF{0.703$\pm$0.003}}} \\
        \hline
        Outer Product & False & 0.933$\pm$0.010 &        0.365$\pm$0.177 &      0.103$\pm$0.018 &         0.765$\pm$0.003 &        0.499$\pm$0.012 &      0.321$\pm$0.006 &         0.701$\pm$0.004 \\
        \hline
        Residual Block & False & 0.935$\pm$0.003 &        0.326$\pm$0.131 &      0.099$\pm$0.007 &         0.762$\pm$0.004 &        0.513$\pm$0.014 &      0.326$\pm$0.005 &         0.699$\pm$0.004 \\
        \hline
        \multirow{6}{*}{Gamma Prior} & No Weights & {\color{purple}{\textBF{0.939$\pm$0.007}}} &        0.302$\pm$0.097 &      {\color{purple}{\textBF{0.094$\pm$0.011}}} &         0.767$\pm$0.002 &        0.491$\pm$0.007 &      0.316$\pm$0.003 &         0.702$\pm$0.002 \\
        & $\alpha=1.118, \beta=0.236$ &         0.937$\pm$0.005 &        0.285$\pm$0.053 &      0.096$\pm$0.008 &         0.768$\pm$0.004 &        {\color{purple}{\textBF{0.484$\pm$0.003}}} &      {\color{purple}{\textBF{0.314$\pm$0.002}}} &         0.701$\pm$0.004 \\
        & $\alpha=1.226, \beta=0.452$ & 0.936$\pm$0.003 &        0.307$\pm$0.081 &      0.097$\pm$0.008 &         0.767$\pm$0.004 &        0.486$\pm$0.006 &      0.315$\pm$0.003 &         0.702$\pm$0.006 \\
        & $\alpha=1.5, \beta=0.5$ & {\color{blue}{\textBF{0.940$\pm$0.002}}} &        0.274$\pm$0.042 &      {\color{blue}{\textBF{0.092$\pm$0.004}}} &         0.767$\pm$0.003 &        0.489$\pm$0.006 &      0.317$\pm$0.003 &         0.703$\pm$0.004 \\
        & $\alpha=1.56, \beta=0.28$ &         0.936$\pm$0.003 &        {\color{purple}{\textBF{0.266$\pm$0.046}}} &      0.095$\pm$0.006 &         0.767$\pm$0.001 &        0.491$\pm$0.006 &      0.316$\pm$0.002 &         0.702$\pm$0.002 \\
        & $\alpha=2.22, \beta=0.61$ &         0.936$\pm$0.002 &        0.283$\pm$0.048 &      0.097$\pm$0.005 &         0.766$\pm$0.003 &        0.493$\pm$0.005 &      0.318$\pm$0.002 &         0.703$\pm$0.002 \\
        \hline
        \multicolumn{9}{|c|}{\cellcolor{datasetcolor!10} \textbf{\textsc{letter} (\textsc{correlated})}} \\
        \hline
        \multicolumn{2}{|c||}{Default: MaDL} & 0.947$\pm$0.003 &        0.282$\pm$0.077 &      0.080$\pm$0.004 &         0.887$\pm$0.001 &        {\color{purple}{\textBF{0.308$\pm$0.004}}} &      0.175$\pm$0.002 &         0.756$\pm$0.001 \\
        \hline
        \multirow{2}{*}{Embedding Size} & $Q=R=8$ & {\color{blue}{\textBF{0.953$\pm$0.003}}} &        {\color{blue}{\textBF{0.219$\pm$0.017}}} &      {\color{blue}{\textBF{0.072$\pm$0.004}}} &         {\color{purple}{\textBF{0.888$\pm$0.002}}} &        {\color{blue}{\textBF{0.303$\pm$0.003}}} &      {\color{blue}{\textBF{0.172$\pm$0.002}}} &         0.757$\pm$0.002 \\
        & $Q=R=32$ & 0.949$\pm$0.005 &        0.303$\pm$0.089 &      0.080$\pm$0.008 &         0.885$\pm$0.002 &        0.324$\pm$0.018 &      0.178$\pm$0.004 &         0.754$\pm$0.002 \\
        \hline
        \multirow{2}{*}{AP Prior} 
        & $\eta=0.1$ & 0.588$\pm$0.089 &        9.442$\pm$2.241 &      0.773$\pm$0.166 &         0.782$\pm$0.031 &        0.894$\pm$0.186 &      0.377$\pm$0.062 &         0.659$\pm$0.030 \\
        & $\eta=0.7$ & 0.947$\pm$0.006 &        0.292$\pm$0.080 &      0.078$\pm$0.008 &         0.887$\pm$0.002 &        0.309$\pm$0.005 &      0.174$\pm$0.003 &         0.756$\pm$0.003 \\
        & $\eta=0.9$ & 0.948$\pm$0.003 &        0.255$\pm$0.079 &      0.079$\pm$0.006 &         0.887$\pm$0.001 &        0.311$\pm$0.001 &      0.175$\pm$0.002 &         0.757$\pm$0.002 \\
        \hline
        Outer Product & False & 0.948$\pm$0.003 &        0.305$\pm$0.109 &      0.081$\pm$0.005 &         0.887$\pm$0.001 &        0.314$\pm$0.005 &      0.177$\pm$0.002 &         0.757$\pm$0.001 \\
        \hline
        Residual Block & False & 0.936$\pm$0.016 &        0.541$\pm$0.486 &      0.101$\pm$0.031 &         0.881$\pm$0.005 &        0.339$\pm$0.032 &      0.185$\pm$0.010 &         0.751$\pm$0.005 \\
        \hline
        \multirow{6}{*}{Gamma Prior} & No Weights & 0.946$\pm$0.007 &        0.293$\pm$0.092 &      0.083$\pm$0.010 &         0.883$\pm$0.003 &        0.314$\pm$0.002 &      0.178$\pm$0.002 &         0.751$\pm$0.003 \\
        & $\alpha=1.118, \beta=0.236$ &         0.950$\pm$0.004 &        0.260$\pm$0.089 &      0.077$\pm$0.007 &         0.887$\pm$0.001 &        0.308$\pm$0.004 &      0.174$\pm$0.003 &         {\color{purple}{\textBF{0.758$\pm$0.001}}} \\
        & $\alpha=1.226, \beta=0.452$ & 0.951$\pm$0.004 &        {\color{purple}{\textBF{0.250$\pm$0.070}}} &      0.075$\pm$0.006 &         {\color{blue}{\textBF{0.888$\pm$0.002}}} &        0.309$\pm$0.005 &      {\color{purple}{\textBF{0.174$\pm$0.003}}} &         {\color{blue}{\textBF{0.758$\pm$0.002}}} \\
        & $\alpha=1.5, \beta=0.5$ & {\color{purple}{\textBF{0.951$\pm$0.005}}} &        0.271$\pm$0.093 &      {\color{purple}{\textBF{0.075$\pm$0.007}}} &         0.887$\pm$0.001 &        0.311$\pm$0.003 &      0.174$\pm$0.002 &         0.757$\pm$0.002 \\
        & $\alpha=1.56, \beta=0.28$ &         0.947$\pm$0.004 &        0.286$\pm$0.072 &      0.082$\pm$0.008 &         0.885$\pm$0.002 &        0.318$\pm$0.013 &      0.177$\pm$0.004 &         0.754$\pm$0.002 \\
        & $\alpha=2.22, \beta=0.61$ &         0.947$\pm$0.002 &        0.289$\pm$0.052 &      0.082$\pm$0.003 &         0.885$\pm$0.002 &        0.313$\pm$0.001 &      0.177$\pm$0.001 &         0.754$\pm$0.003 \\
        \hline
        \multicolumn{9}{|c|}{\cellcolor{datasetcolor!10} \textbf{\textsc{letter} (\textsc{random-correlated})}} \\
        \hline
        \multicolumn{2}{|c||}{Default: MaDL} & 0.932$\pm$0.004 &        0.277$\pm$0.043 &      0.101$\pm$0.006 &         0.940$\pm$0.000 &        0.204$\pm$0.004 &      0.101$\pm$0.001 &         {\color{blue}{\textBF{0.519$\pm$0.001}}} \\
        \hline
        \multirow{2}{*}{Embedding Size} & $Q=R=8$ & 0.935$\pm$0.003 &        {\color{blue}{\textBF{0.232$\pm$0.012}}} &      0.097$\pm$0.004 &         {\color{blue}{\textBF{0.940$\pm$0.000}}} &        {\color{blue}{\textBF{0.202$\pm$0.001}}} &      {\color{blue}{\textBF{0.101$\pm$0.000}}} &         0.519$\pm$0.000 \\
        & $Q=R=32$ & 0.935$\pm$0.003 &        0.266$\pm$0.057 &      {\color{blue}{\textBF{0.096$\pm$0.007}}} &         0.939$\pm$0.000 &        0.214$\pm$0.003 &      0.104$\pm$0.000 &         0.519$\pm$0.000 \\
        \hline
        \multirow{2}{*}{AP Prior} 
        & $\eta=0.1$ & 0.598$\pm$0.069 &        7.537$\pm$1.922 &      0.744$\pm$0.136 &         0.931$\pm$0.002 &        0.251$\pm$0.012 &      0.120$\pm$0.004 &         0.512$\pm$0.001 \\
        & $\eta=0.7$ & 0.934$\pm$0.009 &        0.263$\pm$0.035 &      0.098$\pm$0.012 &         0.940$\pm$0.000 &        0.204$\pm$0.003 &      0.101$\pm$0.000 &         0.519$\pm$0.000 \\
        & $\eta=0.9$ & 0.932$\pm$0.003 &        {\color{purple}{\textBF{0.247$\pm$0.015}}} &      0.101$\pm$0.004 &         0.939$\pm$0.000 &        0.205$\pm$0.002 &      0.102$\pm$0.001 &         0.519$\pm$0.000 \\
        \hline
        Outer Product & False & 0.932$\pm$0.003 &        0.292$\pm$0.038 &      0.102$\pm$0.006 &         0.939$\pm$0.000 &        0.207$\pm$0.004 &      0.102$\pm$0.001 &         0.519$\pm$0.000 \\
        \hline
        Residual Block & False & {\color{blue}{\textBF{0.936$\pm$0.005}}} &        0.269$\pm$0.034 &      0.098$\pm$0.006 &         0.939$\pm$0.000 &        0.207$\pm$0.001 &      0.103$\pm$0.000 &         0.518$\pm$0.000 \\
        \hline
        \multirow{6}{*}{Gamma Prior} & No Weights & 0.548$\pm$0.037 &        1.902$\pm$0.241 &      0.673$\pm$0.071 &         0.801$\pm$0.049 &        0.423$\pm$0.037 &      0.265$\pm$0.031 &         0.506$\pm$0.007 \\
        & $\alpha=1.118, \beta=0.236$ &         0.935$\pm$0.001 &        0.270$\pm$0.034 &      {\color{purple}{\textBF{0.097$\pm$0.004}}} &         0.940$\pm$0.000 &        {\color{purple}{\textBF{0.204$\pm$0.001}}} &      0.101$\pm$0.000 &         0.519$\pm$0.000 \\
        & $\alpha=1.226, \beta=0.452$ & 0.934$\pm$0.002 &        0.255$\pm$0.039 &      0.097$\pm$0.005 &         {\color{purple}{\textBF{0.940$\pm$0.000}}} &        0.204$\pm$0.002 &      {\color{purple}{\textBF{0.101$\pm$0.000}}} &         {\color{purple}{\textBF{0.519$\pm$0.000}}} \\
        & $\alpha=1.5, \beta=0.5$ & {\color{purple}{\textBF{0.935$\pm$0.006}}} &        0.252$\pm$0.047 &      0.098$\pm$0.009 &         0.940$\pm$0.000 &        0.204$\pm$0.003 &      0.101$\pm$0.001 &         0.519$\pm$0.000 \\
        & $\alpha=1.56, \beta=0.28$ &         0.932$\pm$0.004 &        0.306$\pm$0.087 &      0.103$\pm$0.010 &         0.939$\pm$0.000 &        0.210$\pm$0.003 &      0.103$\pm$0.000 &         0.519$\pm$0.000 \\
        & $\alpha=2.22, \beta=0.61$ &         0.932$\pm$0.005 &        0.279$\pm$0.048 &      0.101$\pm$0.008 &         0.939$\pm$0.000 &        0.209$\pm$0.003 &      0.103$\pm$0.001 &         0.519$\pm$0.000 \\
        \hline
        \multicolumn{9}{|c|}{\cellcolor{datasetcolor!10} \textbf{\textsc{letter} (\textsc{inductive})}} \\
        \hline
        \multicolumn{2}{|c||}{Default: MaDL(A)} & 0.914$\pm$0.004 &        0.303$\pm$0.010 &      0.124$\pm$0.006 &         0.718$\pm$0.004 &        0.583$\pm$0.007 &      0.383$\pm$0.004 &         0.665$\pm$0.005 \\
        \hline
        \multirow{2}{*}{Embedding Size} & $Q=R=8$ & {\color{purple}{\textBF{0.919$\pm$0.004}}} &        {\color{purple}{\textBF{0.287$\pm$0.018}}} &      {\color{purple}{\textBF{0.120$\pm$0.005}}} &         0.704$\pm$0.012 &        {\color{blue}{\textBF{0.569$\pm$0.015}}} &      0.384$\pm$0.012 &         0.645$\pm$0.012 \\
        & $Q=R=32$ & 0.916$\pm$0.005 &        0.330$\pm$0.041 &      0.122$\pm$0.006 &         0.712$\pm$0.007 &        0.632$\pm$0.028 &      0.398$\pm$0.012 &         0.657$\pm$0.008 \\
        \hline
        \multirow{2}{*}{AP Prior} 
        & $\eta=0.1$ & 0.642$\pm$0.065 &        4.982$\pm$1.017 &      0.630$\pm$0.127 &         0.652$\pm$0.017 &        0.999$\pm$0.148 &      0.525$\pm$0.042 &         0.605$\pm$0.015 \\
        & $\eta=0.7$ & 0.912$\pm$0.006 &        0.332$\pm$0.030 &      0.128$\pm$0.009 &         0.704$\pm$0.004 &        0.588$\pm$0.005 &      0.390$\pm$0.003 &         0.650$\pm$0.005 \\
        & $\eta=0.9$ & 0.913$\pm$0.003 &        0.306$\pm$0.021 &      0.127$\pm$0.003 &         0.717$\pm$0.005 &        0.582$\pm$0.005 &      0.383$\pm$0.003 &         0.665$\pm$0.005 \\
        \hline
        Outer Product & False & 0.914$\pm$0.008 &        0.403$\pm$0.151 &      0.127$\pm$0.011 &         {\color{blue}{\textBF{0.725$\pm$0.003}}} &        0.596$\pm$0.013 &      0.381$\pm$0.004 &         {\color{blue}{\textBF{0.676$\pm$0.003}}} \\
        \hline
        Residual Block & False & {\color{blue}{\textBF{0.921$\pm$0.003}}} &        {\color{blue}{\textBF{0.286$\pm$0.032}}} &      {\color{blue}{\textBF{0.116$\pm$0.006}}} &         {\color{purple}{\textBF{0.721$\pm$0.004}}} &        0.593$\pm$0.032 &      {\color{purple}{\textBF{0.378$\pm$0.011}}} &         {\color{purple}{\textBF{0.668$\pm$0.004}}} \\
        \hline
        \multirow{6}{*}{Gamma Prior} & No Weights & 0.915$\pm$0.004 &        0.301$\pm$0.016 &      0.124$\pm$0.003 &         0.719$\pm$0.007 &        {\color{purple}{\textBF{0.577$\pm$0.010}}} &      {\color{blue}{\textBF{0.377$\pm$0.006}}} &         0.663$\pm$0.007 \\
        & $\alpha=1.118, \beta=0.236$ &         0.917$\pm$0.006 &        0.308$\pm$0.033 &      0.123$\pm$0.008 &         0.718$\pm$0.004 &        0.593$\pm$0.015 &      0.384$\pm$0.006 &         0.667$\pm$0.006 \\
        & $\alpha=1.226, \beta=0.452$ & 0.918$\pm$0.002 &        0.298$\pm$0.024 &      0.121$\pm$0.002 &         0.719$\pm$0.006 &        0.583$\pm$0.016 &      0.381$\pm$0.007 &         0.666$\pm$0.006 \\
        & $\alpha=1.5, \beta=0.5$ & 0.914$\pm$0.000 &        0.304$\pm$0.006 &      0.126$\pm$0.003 &         0.716$\pm$0.007 &        0.581$\pm$0.018 &      0.383$\pm$0.010 &         0.663$\pm$0.005 \\
        & $\alpha=1.56, \beta=0.28$ &         0.910$\pm$0.006 &        0.341$\pm$0.059 &      0.132$\pm$0.007 &         0.711$\pm$0.008 &        0.588$\pm$0.009 &      0.389$\pm$0.007 &         0.659$\pm$0.006 \\
        & $\alpha=2.22, \beta=0.61$ &         0.917$\pm$0.007 &        0.334$\pm$0.072 &      0.124$\pm$0.009 &         0.714$\pm$0.007 &        0.586$\pm$0.015 &      0.385$\pm$0.008 &         0.660$\pm$0.005 \\
        \hline
        \bottomrule
    \end{tabular}
\end{table}

\section{Extended Results regarding Research Question 3}
\label{app:extended-results-rq-3}
Table~\ref{tab:rq-3-extended} extends Table~\ref{tab:rq-3} by the AP models' results regarding the 75 annotators providing class labels for training. 
The GT models' results of both tables are identical and are given to avoid switching between the two tables.
The additional AP models' results confirm the main takeaway for property P5. 
Excluding the UB, MaDL(A), which uses annotator features including prior information, makes the most accurate AP predictions.
This means it also outperforms its counterpart MaDL($\overline{\text{A}}$), which has no prior information about the annotators.
LIA(A) estimates the APs more accurately than LIA($\overline{\text{A}}$) for three of the four datasets.
Comparing the AP models of CoNAL(A) and CoNAL($\overline{\text{A}}$), performance gains of using annotator features are observable for two of the four datasets, while there are no notable differences for the other two datasets.

\begin{table}[!h]
\caption{
        Additional results regarding RQ3 for datasets with simulated annotators: {\color{blue}{\textBF{Best}}} and {\color{purple}{\textBF{second best}}} performances are highlighted per dataset and evaluation score while {\color{orange}{excluding}} the performances of the UB. In contrast to Table~\ref{tab:rq-3}, the AP models' results refer to the 75 annotators providing class labels for training.
    }
    \label{tab:rq-3-extended}
    \setlength{\tabcolsep}{4.9pt}
    \scriptsize
    \centering
    \begin{tabular}{|l|c|c||c|c|c||c|c|c|c|}
        \toprule
        \multicolumn{1}{|c|}{\multirow{2}{*}{\textbf{Technique}}} &  \multirow{2}{*}{\textbf{P5}} & \multirow{2}{*}{\textbf{P6}} & 
        \multicolumn{3}{c||}{\textbf{Ground Truth Model}} & 
        \multicolumn{4}{c|}{\textbf{Annotator Performance Model}} \\
        & & &
        ACC $\uparrow$  & NLL $\downarrow$  & 
        BS $\downarrow$  &  ACC $\uparrow$  & NLL $\downarrow$  & 
        BS $\downarrow$ & BAL-ACC $\uparrow$ \\ 
        \midrule 
        \hline
        \multicolumn{10}{|c|}{\cellcolor{datasetcolor!10} \textbf{\textsc{letter} (\textsc{inductive})}} \\
        \hline
        UB & \yes & \yes & \color{orange}{0.962$\pm$0.002} &       \color{orange}{0.129$\pm$0.003} &     \color{orange}{0.058$\pm$0.002} &                 \color{orange}{0.729$\pm$0.004} &    \color{orange}{0.565$\pm$0.010} &   \color{orange}{0.369$\pm$0.005} &      \color{orange}{0.677$\pm$0.004} \\
        LB & \yes & \yes &   0.861$\pm$0.005 &     1.090$\pm$0.017 &   0.429$\pm$0.008 &      0.572$\pm$0.008 &     0.719$\pm$0.008 &   0.515$\pm$0.006 &      0.552$\pm$0.004 \\
        \hline
        LIA($\overline{\text{A}}$)  & \no & \no & 0.875$\pm$0.006 &     0.901$\pm$0.060 &   0.350$\pm$0.024 &      0.622$\pm$0.002 &     0.712$\pm$0.015 &   0.495$\pm$0.007 &      0.593$\pm$0.003 \\
        LIA(A)  & \yes & \yes & 0.876$\pm$0.006 &     1.006$\pm$0.177 &   0.397$\pm$0.074 &      0.618$\pm$0.028 &     1.441$\pm$0.925 &   0.581$\pm$0.129 &      0.572$\pm$0.038 \\
        CoNAL($\overline{\text{A}}$)  & \no & \no & 0.875$\pm$0.009 &     0.804$\pm$0.119 &   0.186$\pm$0.010 &      0.646$\pm$0.001 &     0.666$\pm$0.019 &   0.455$\pm$0.007 &      0.595$\pm$0.001 \\
        CoNAL(A)  & \yes & \no & 0.874$\pm$0.007 &     0.808$\pm$0.116 &   0.186$\pm$0.011 &      0.646$\pm$0.001 &     0.662$\pm$0.015 &   0.453$\pm$0.006 &      0.595$\pm$0.001 \\
        \hline
        MaDL($\overline{\text{A}}$) & \no & \no & \color{purple}{\textBF{0.911$\pm$0.006}} &       \color{purple}{\textBF{0.334$\pm$0.026}} &     \color{purple}{\textBF{0.129$\pm$0.008}} &                      \color{purple}{\textBF{0.685$\pm$0.011}} &     \color{purple}{\textBF{0.601$\pm$0.008}} &   \color{purple}{\textBF{0.403$\pm$0.009}} &      \color{purple}{\textBF{0.626$\pm$0.012}} \\
        MaDL(A) & \yes & \yes    &  \color{blue}{\textBF{0.914$\pm$0.004}} &       \color{blue}{\textBF{0.303$\pm$0.009}} &     \color{blue}{\textBF{0.124$\pm$0.005}} &                  \color{blue}{\textBF{0.718$\pm$0.004}} &     \color{blue}{\textBF{0.583$\pm$0.006}} &   \color{blue}{\textBF{0.383$\pm$0.004}} &      \color{blue}{\textBF{0.665$\pm$0.004}} \\
        \hline
        \multicolumn{10}{|c|}{\cellcolor{datasetcolor!10} \textbf{\textsc{fmnist} (\textsc{inductive})}} \\
        \hline
        UB & \yes & \yes & \color{orange}{0.909$\pm$0.002} &       \color{orange}{0.246$\pm$0.005} &     \color{orange}{0.131$\pm$0.003} &                 \color{orange}{0.759$\pm$0.001} &     \color{orange}{0.486$\pm$0.002} &   \color{orange}{0.320$\pm$0.001} &      \color{orange}{0.693$\pm$0.002} \\
        LB & \yes & \yes &        0.881$\pm$0.002 &       0.876$\pm$0.005 &     0.370$\pm$0.002 &                  0.621$\pm$0.020 &     0.662$\pm$0.006 &   0.469$\pm$0.006 &      0.562$\pm$0.009 \\
        \hline
        LIA($\overline{\text{A}}$)  & \no & \no & 0.852$\pm$0.003 &       1.011$\pm$0.020 &     0.436$\pm$0.010 &                      0.657$\pm$0.022 &     0.641$\pm$0.013 &   0.449$\pm$0.013 &      0.589$\pm$0.015 \\
        LIA(A)  & \yes & \yes & 0.855$\pm$0.002 &       0.972$\pm$0.012 &     0.417$\pm$0.006 &                  0.685$\pm$0.031 &     0.623$\pm$0.023 &   0.432$\pm$0.021 &      0.610$\pm$0.025 \\
        CoNAL($\overline{\text{A}}$)  & \no & \no & 0.889$\pm$0.002 &       0.322$\pm$0.005 &     0.163$\pm$0.003 &                      0.705$\pm$0.009 &     0.547$\pm$0.010 &   0.372$\pm$0.009 &      0.653$\pm$0.010 \\
        CoNAL(A)  & \yes & \no & 0.890$\pm$0.002 &       0.323$\pm$0.011 &     0.163$\pm$0.005 &                      0.713$\pm$0.011 &     0.540$\pm$0.011 &   0.366$\pm$0.010 &      0.661$\pm$0.013 \\
        \hline
        MaDL($\overline{\text{A}}$) & \no & \no & \color{blue}{\textBF{0.895$\pm$0.002}} &       \color{blue}{\textBF{0.297$\pm$0.004}} &     \color{blue}{\textBF{0.152$\pm$0.002}} &                      \color{purple}{\textBF{0.753$\pm$0.003}} &     \color{purple}{\textBF{0.496$\pm$0.004}} &   \color{purple}{\textBF{0.328$\pm$0.003}} &      \color{purple}{\textBF{0.683$\pm$0.005}} \\
        MaDL(A) & \yes & \yes &  \color{purple}{\textBF{0.893$\pm$0.004}} &       \color{purple}{\textBF{0.297$\pm$0.008}} &     \color{purple}{\textBF{0.153$\pm$0.004}} &                  \color{blue}{\textBF{0.755$\pm$0.003}} &     \color{blue}{\textBF{0.492$\pm$0.004}} &   \color{blue}{\textBF{0.325$\pm$0.003}} &      \color{blue}{\textBF{0.686$\pm$0.005}} \\
        \hline
        \multicolumn{10}{|c|}{\cellcolor{datasetcolor!10} \textbf{\textsc{cifar10} (\textsc{inductive})}} \\
        \hline
        UB & \yes & \yes & \color{orange}{0.931$\pm$0.002} &       \color{orange}{0.527$\pm$0.022} &     \color{orange}{0.122$\pm$0.003} &        \color{orange}{0.712$\pm$0.001} &     \color{orange}{0.555$\pm$0.002} &   \color{orange}{0.372$\pm$0.002} &      \color{orange}{0.655$\pm$0.001} \\
        LB & \yes & \yes &  0.781$\pm$0.003 &       1.054$\pm$0.035 &     0.447$\pm$0.016 &                  0.584$\pm$0.006 &     0.682$\pm$0.004 &   0.489$\pm$0.003 &      0.534$\pm$0.004 \\
        \hline
        LIA($\overline{\text{A}}$)  & \no & \no & 0.798$\pm$0.008 &       1.072$\pm$0.014 &     0.455$\pm$0.006 &                      0.598$\pm$0.013 &     0.676$\pm$0.014 &   0.482$\pm$0.013 &      0.547$\pm$0.008 \\
        LIA(A)  & \yes & \yes & 0.804$\pm$0.004 &       1.056$\pm$0.022 &     0.447$\pm$0.011 &                  0.602$\pm$0.022 &     0.673$\pm$0.017 &   0.479$\pm$0.016 &      0.549$\pm$0.014 \\
        CoNAL($\overline{\text{A}}$)  & \no & \no & \color{purple}{\textBF{0.835$\pm$0.002}} &       0.576$\pm$0.016 &     \color{purple}{\textBF{0.245$\pm$0.005}} &                      0.670$\pm$0.002 &     0.599$\pm$0.002 &   0.415$\pm$0.002 &      0.618$\pm$0.001 \\
        CoNAL(A)  & \yes & \no & 0.834$\pm$0.006 &       \color{purple}{\textBF{0.574$\pm$0.017}} &     0.248$\pm$0.007 &                      0.670$\pm$0.001 &     0.599$\pm$0.001 &   0.415$\pm$0.001 &      0.618$\pm$0.002 \\
        \hline
        MaDL($\overline{\text{A}}$) & \no & \no & 0.811$\pm$0.008 &       0.626$\pm$0.036 &     0.277$\pm$0.014 &                      \color{purple}{\textBF{0.690$\pm$0.002}} &     \color{purple}{\textBF{0.565$\pm$0.002}} &   \color{purple}{\textBF{0.386$\pm$0.001}} &      \color{purple}{\textBF{0.624$\pm$0.003}} \\
        MaDL(A) & \yes & \yes &   \color{blue}{\textBF{0.837$\pm$0.003}} &       \color{blue}{\textBF{0.557$\pm$0.028}} &     \color{blue}{\textBF{0.242$\pm$0.006}} &                  \color{blue}{\textBF{0.703$\pm$0.004}} &     \color{blue}{\textBF{0.561$\pm$0.015}} &   \color{blue}{\textBF{0.378$\pm$0.005}} &      \color{blue}{\textBF{0.640$\pm$0.005}} \\
        \hline
        \multicolumn{10}{|c|}{\cellcolor{datasetcolor!10} \textbf{\textsc{svhn} (\textsc{inductive})}} \\
        \hline
        UB & \yes & \yes & \color{orange}{0.965$\pm$0.001} &       \color{orange}{0.393$\pm$0.015} &     \color{orange}{0.063$\pm$0.002} &        \color{orange}{0.650$\pm$0.003} &     \color{orange}{0.589$\pm$0.002} &   \color{orange}{0.410$\pm$0.002} &      \color{orange}{0.578$\pm$0.004} \\
        LB & \yes & \yes &  0.927$\pm$0.002 &       0.805$\pm$0.016 &     0.328$\pm$0.009 &                  0.579$\pm$0.006 &     0.709$\pm$0.005 &   0.513$\pm$0.004 &      0.531$\pm$0.003 \\
        \hline
        LIA($\overline{\text{A}}$)  & \no & \no & 0.929$\pm$0.003 &       0.818$\pm$0.133 &     0.336$\pm$0.068 &                      0.548$\pm$0.043 &     0.695$\pm$0.029 &   0.502$\pm$0.027 &      0.528$\pm$0.009 \\
        LIA(A)  & \yes & \yes & 0.932$\pm$0.001 &       0.754$\pm$0.152 &     0.303$\pm$0.079 &                  0.600$\pm$0.009 &     0.672$\pm$0.027 &   0.480$\pm$0.024 &      0.539$\pm$0.004 \\
        CoNAL($\overline{\text{A}}$)  & \no & \no & \color{purple}{\textBF{0.941$\pm$0.001}} &       \color{purple}{\textBF{0.258$\pm$0.009}} &     \color{purple}{\textBF{0.090$\pm$0.003}} &                      0.627$\pm$0.005 &     0.608$\pm$0.002 &   0.428$\pm$0.002 &      0.551$\pm$0.006 \\
        CoNAL(A)  & \yes & \no & \color{blue}{\textBF{0.942$\pm$0.001}} &       0.260$\pm$0.012 &     \color{blue}{\textBF{0.090$\pm$0.002}} &                      \color{purple}{\textBF{0.632$\pm$0.007}} &     0.605$\pm$0.003 &   0.425$\pm$0.003 &      \color{purple}{\textBF{0.556$\pm$0.007}} \\
        \hline
        MaDL($\overline{\text{A}}$) & \no & \no & 0.928$\pm$0.002 &       0.299$\pm$0.019 &     0.109$\pm$0.005 &                      0.631$\pm$0.004 &     \color{purple}{\textBF{0.599$\pm$0.006}} &   \color{purple}{\textBF{0.418$\pm$0.002}} &      0.548$\pm$0.002 \\
        MaDL(A) & \yes & \yes & 0.935$\pm$0.001 &       \color{blue}{\textBF{0.256$\pm$0.009}} &     0.098$\pm$0.002 &                  \color{blue}{\textBF{0.641$\pm$0.002}} &     \color{blue}{\textBF{0.592$\pm$0.001}} &   \color{blue}{\textBF{0.413$\pm$0.001}} &      \color{blue}{\textBF{0.562$\pm$0.002}} \\
        \hline
        \bottomrule
    \end{tabular}
\end{table}

\section{A Case Study on Varying Annotation Ratios}
\label{app:varying-annotation-ratios}
This appendix presents a case study on the impact of varying annotation ratios on multi-annotator supervised learning techniques' performances. 
Fig.~\ref{fig:varying-annotation-ratios} displays evaluation results for four different annotation ratios, i.e., 0.2, 0.4, 0.6, and 0.8, as curves for the dataset \textsc{cifar10} with the simulated annotator set \textsc{independent}. 
For a better interpretation of the results, the upper right plot displays the means and standard deviations over five runs for two descriptive statistics of the training data. On the one hand, we calculate the \textit{annotation accuracy} (ANNOT-ACC), and on the other hand, we calculate the accuracy of the \textit{observed annotations aggregated via the majority rule} (MR-ACC). Mathematically, both statistics can be expressed as follows:
\begin{align}
    \text{ANNOT-ACC}(\mathbf{y}, \mathbf{Z}) &\defas \frac{1}{|\mathbf{Z}|} \sum_{n=1}^{N} \sum_{m \in \mathcal{A}_n} \delta\left(y_n = z_{nm}\right),\\
    \text{MR-ACC}(\mathbf{y}, \mathbf{\overline{z}}) &\defas \frac{1}{N} \sum_{n=1}^{N} \delta\left(y_n = \overline{z}_{n}\right),
\end{align}
where $\mathbf{\overline{z}} = (\overline{z}_1, \dots, \overline{z}_N)^\mathrm{T} \in \Omega_Y^N$ denotes the vector of observed annotations aggregated per instance via the majority rule.
The ANNOT-ACC is almost constant across the different annotation ratios, as the selection of which annotations are observed is made randomly.
In comparison, the MR-ACC increases because, on average, the annotators are independent and better than random guessing. Therefore, the probability of correctly aggregated annotation increases with more observed annotations per instance.

The curves in the other seven plots report multi-annotator supervised learning techniques' mean evaluation scores and standard deviations over five runs.
The evaluated MaDL variant corresponds to its default hyperparameter configuration.
The GT models' evaluation curves in the first row of Fig.~\ref{fig:varying-annotation-ratios} confirm our intuition that more annotations lead to better results.
Further, we observe superior performances of the UB regarding ACC and BS, while the UB repeatedly provides inferior NLL scores.
Since the NLL per instance is unbounded, the latter observation is likely caused by highly overconfident predictions. 
As expected, the GT model of the LB performs worst in terms of the ACC for each tested annotation ratio and thus confirms the general usefulness of multi-annotator supervised learning.
However, its results for the NLL and BS are surprisingly better than LIA, possibly due to the difficulties of training via an EM algorithm (cf. Section~\ref{sec:related-work}).
The GT model of MaDL outperforms other multi-annotator supervised learning techniques regarding the ACC and BS results, while it is approximately on par with CoNAL regarding the NLL results.
As the annotation ratio increases, the ACC of MaDL approaches that of the UB.
Still, MaDL cannot reach the ACC results of the UB, even for the highest tested annotation ratio of 0.8. This is because the negative impact of annotation noise is problem-dependent~\citep{gu2022instance}, e.g., the impact may depend on the data distribution. The annotator simulation is an additional impactful aspect in our concrete case. For \textsc{cifar10}, the negative impact of annotation noise is more significant than for the other datasets, particularly evident in the large gap in the ACC between the LB and the UB.
The AP models' evaluation curves in the second row of Fig.~\ref{fig:varying-annotation-ratios} show similar trends to the ones of the GT models. 
In particular, there are two noteworthy aspects. First, the AP estimates of MaDL are close to the UB. Second, the AP estimates of the LB are better than the ones of the other multi-annotator supervised learning techniques (excluding MaDL and UB) for annotation ratios of 0.6 and 0.8. The well-performing AP model architecture adopted from MaDL likely leads to these results.

\begin{figure}[!h]
    \begin{picture}(100,205)
        \put(16,177){\includegraphics[width=0.96\textwidth]{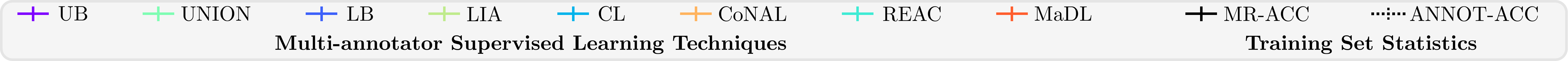}}
        
        \put(7,96){\includegraphics[width=0.225\textwidth]{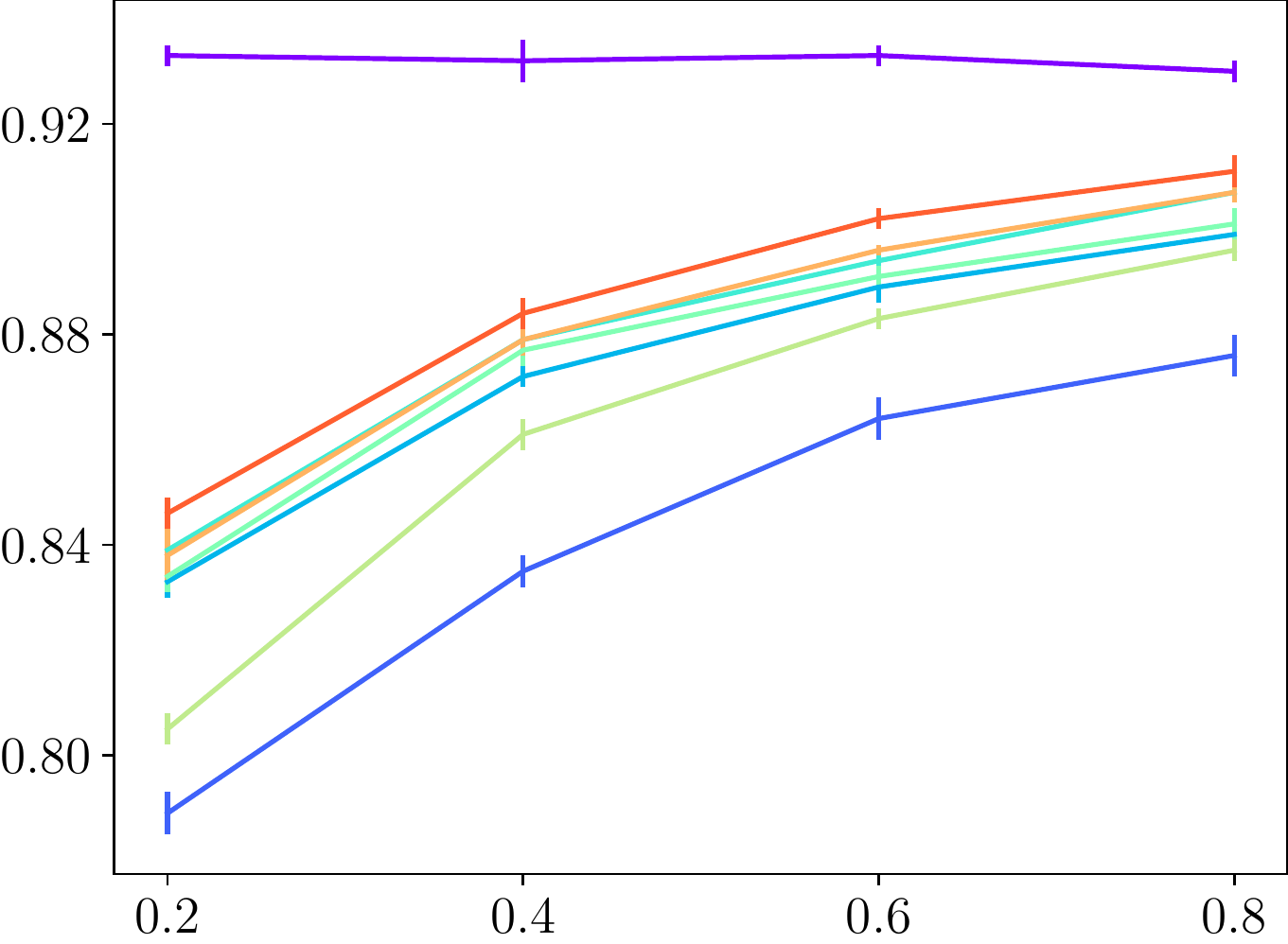}}
        \put(2,123){\rotatebox{90}{\tiny{GT-ACC $\uparrow$}}}
        \put(36,92){\tiny{Annotation Ratios}}

        \put(125,96){\includegraphics[width=0.225\textwidth]{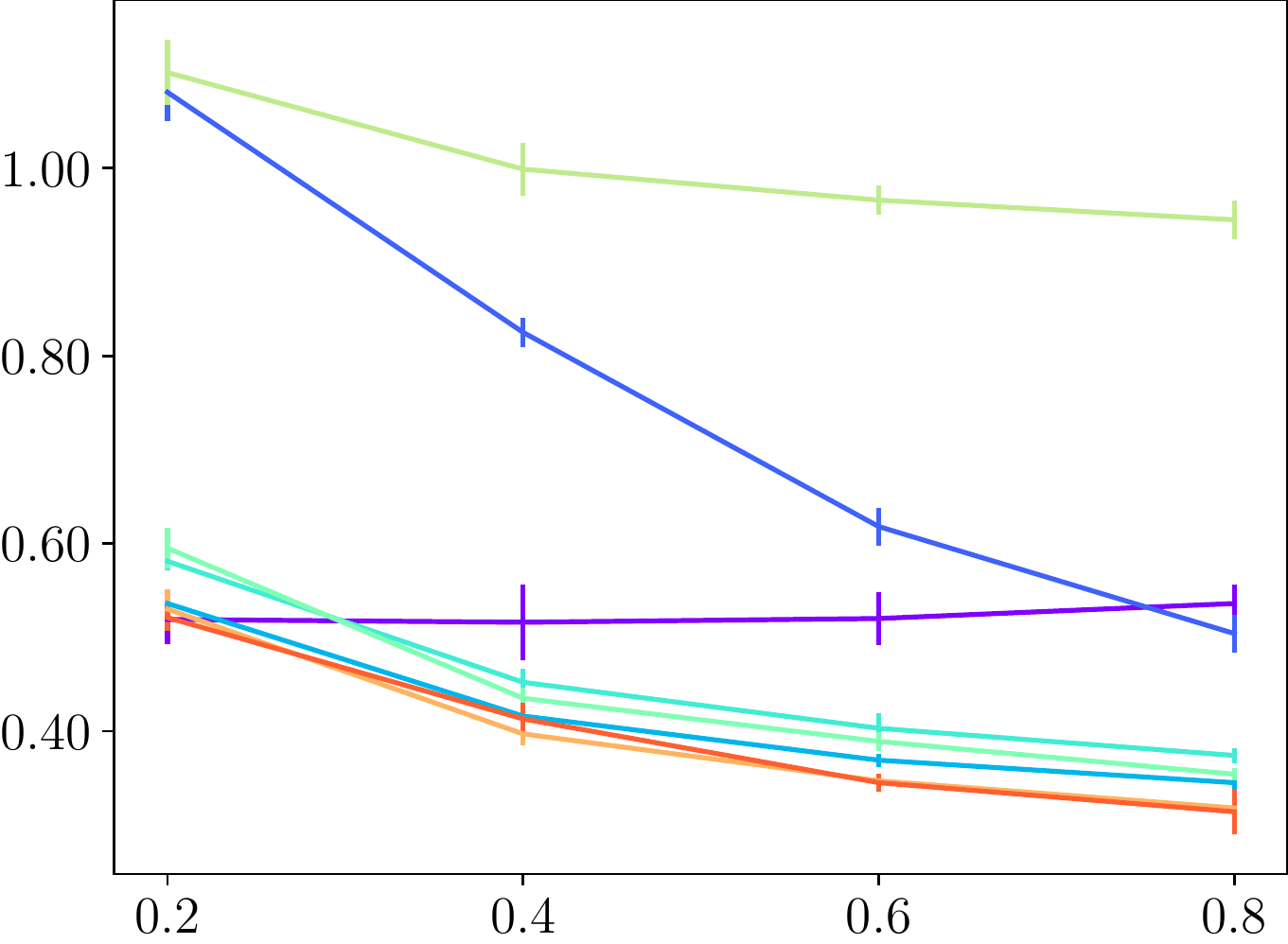}}
        \put(120,121){\rotatebox{90}{\tiny{GT-NLL $\downarrow$}}}
        \put(154,92){\tiny{Annotation Ratios}}
        
        \put(243,96){\includegraphics[width=0.225\textwidth]{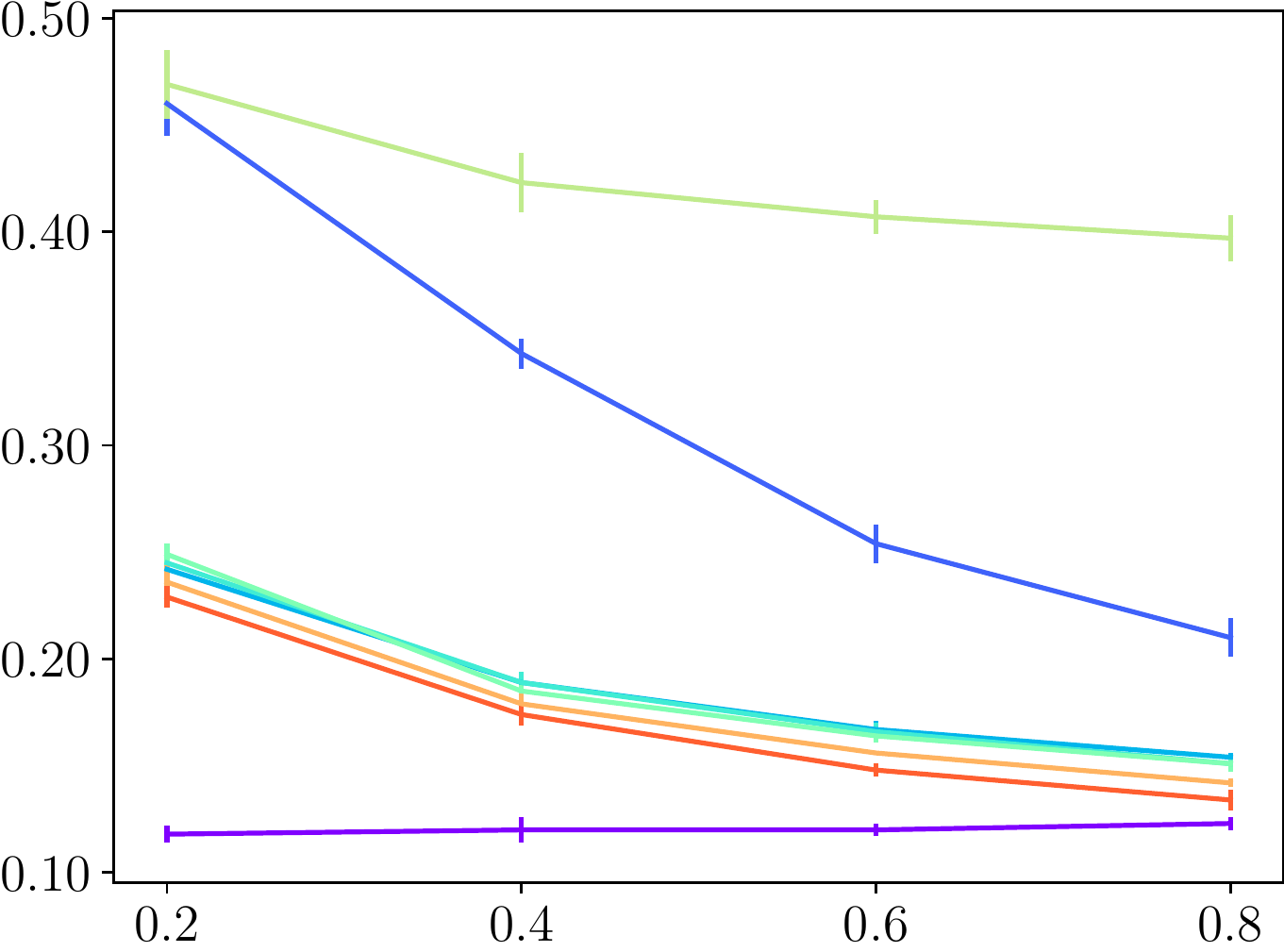}}
        \put(238,124){\rotatebox{90}{\tiny{GT-BS $\downarrow$}}}
        \put(272,92){\tiny{Annotation Ratios}}

        \put(361,96){\includegraphics[width=0.225\textwidth]{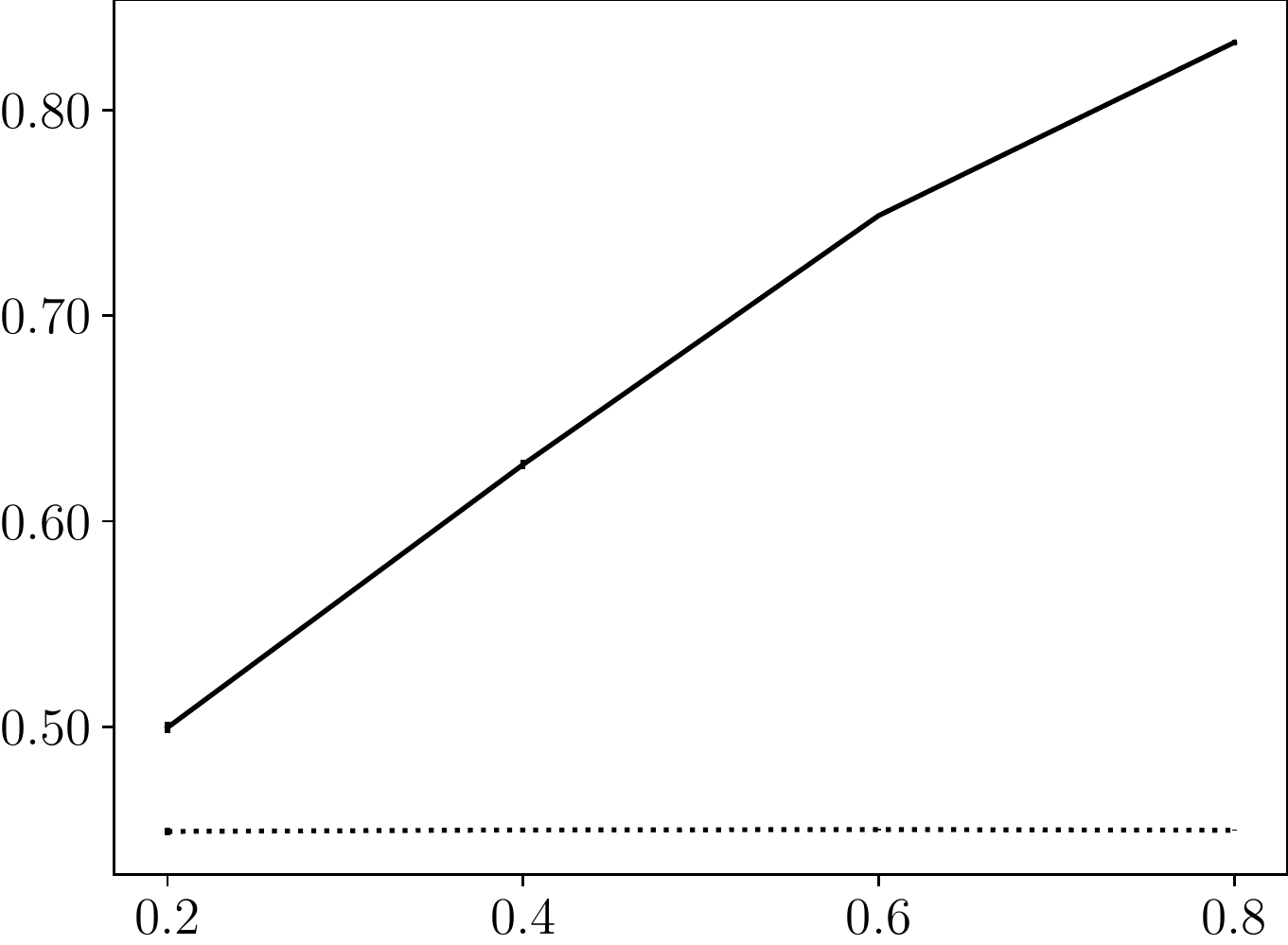}}
        \put(356,126){\rotatebox{90}{\tiny{ACC $\uparrow$}}}
        \put(390,92){\tiny{Annotation Ratios}}

        \put(7,6){\includegraphics[width=0.225\textwidth]{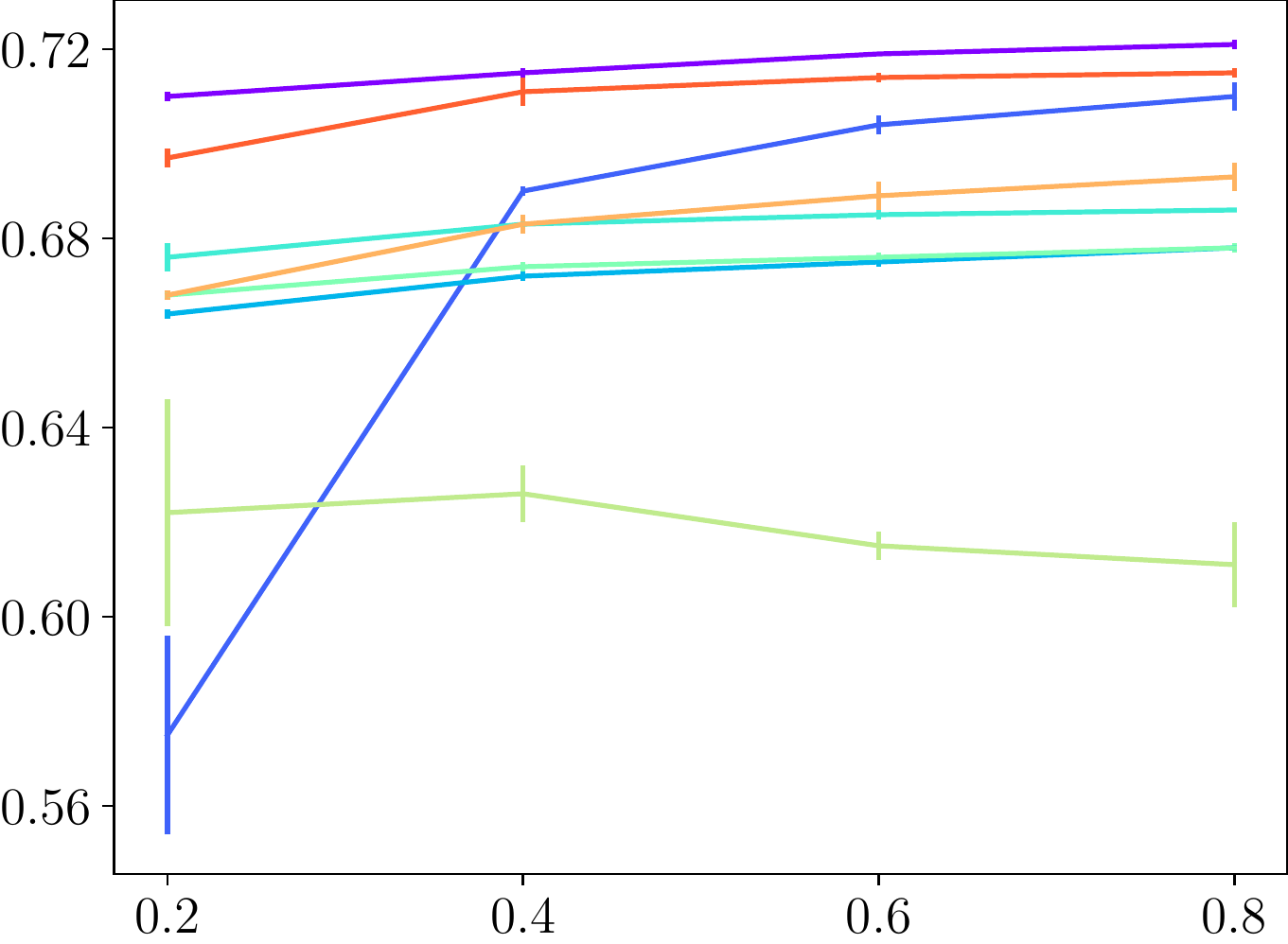}}
        \put(2,33){\rotatebox{90}{\tiny{AP-ACC $\uparrow$}}}
        \put(36,2){\tiny{Annotation Ratios}}

        \put(125,6){\includegraphics[width=0.225\textwidth]{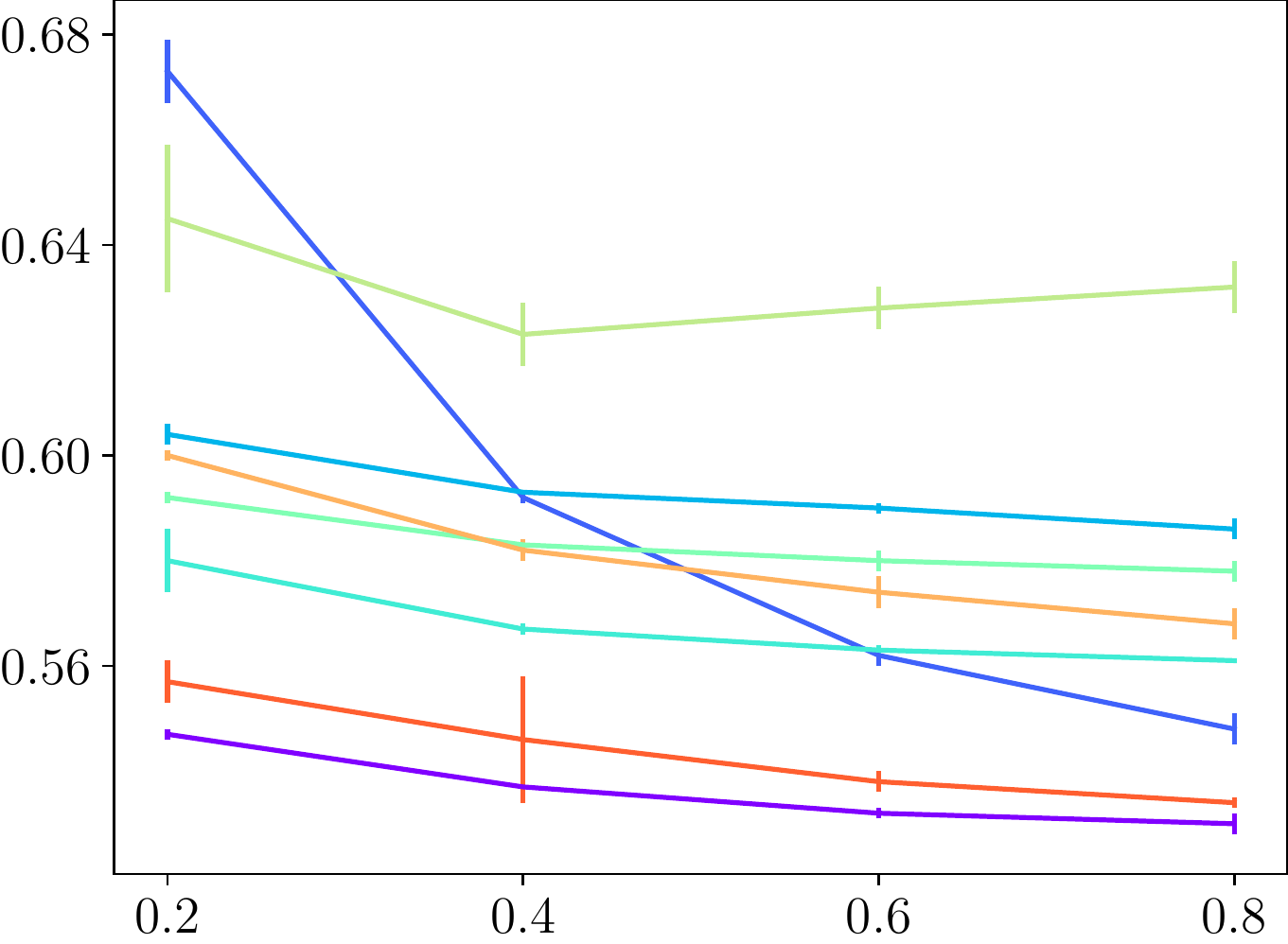}}
        \put(120,31){\rotatebox{90}{\tiny{AP-NLL $\downarrow$}}}
        \put(154,2){\tiny{Annotation Ratios}}
        
        \put(243,6){\includegraphics[width=0.225\textwidth]{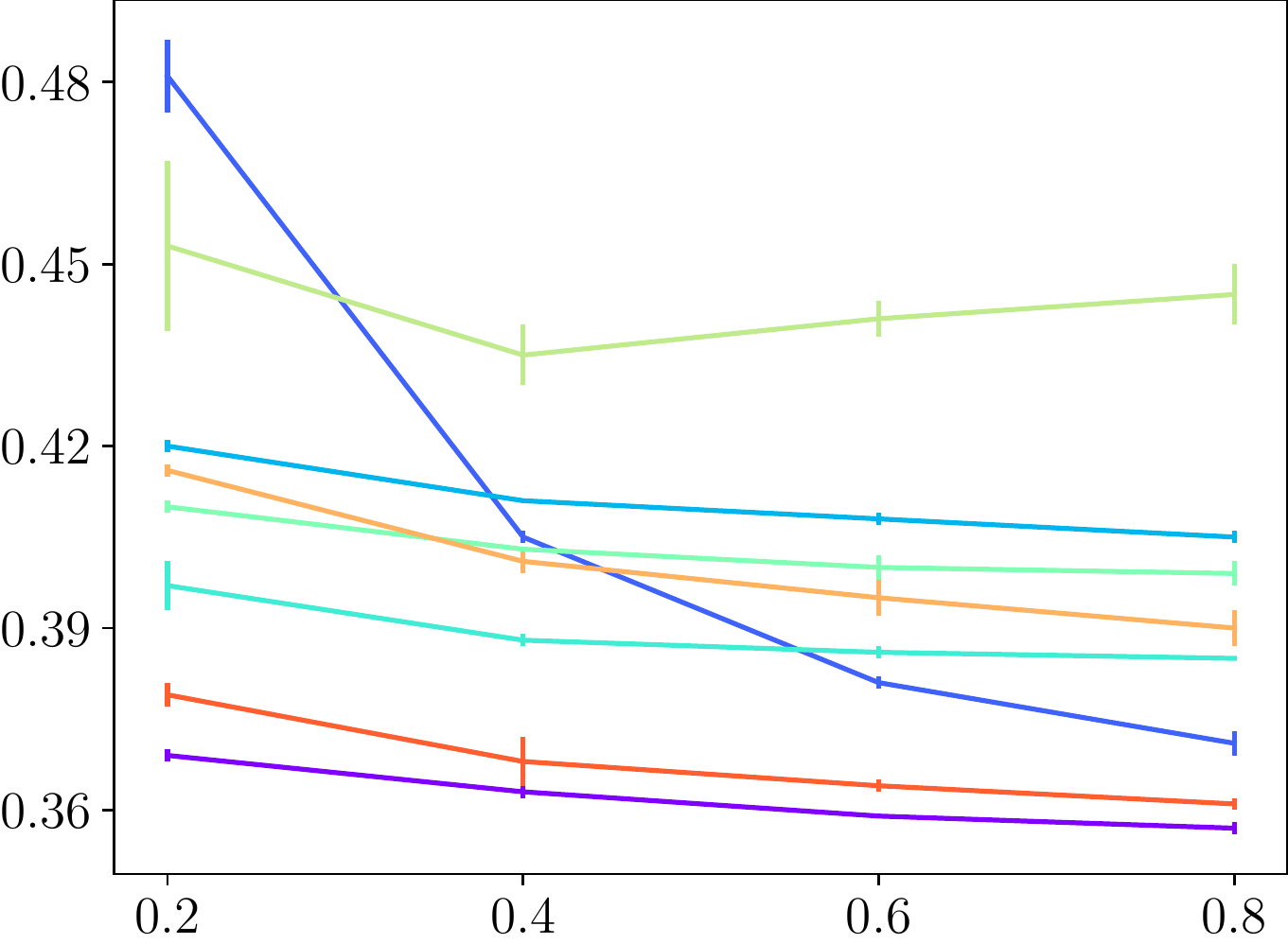}}
        \put(238,34){\rotatebox{90}{\tiny{AP-BS $\downarrow$}}}
        \put(272,2){\tiny{Annotation Ratios}}

        \put(361,6){\includegraphics[width=0.225\textwidth]{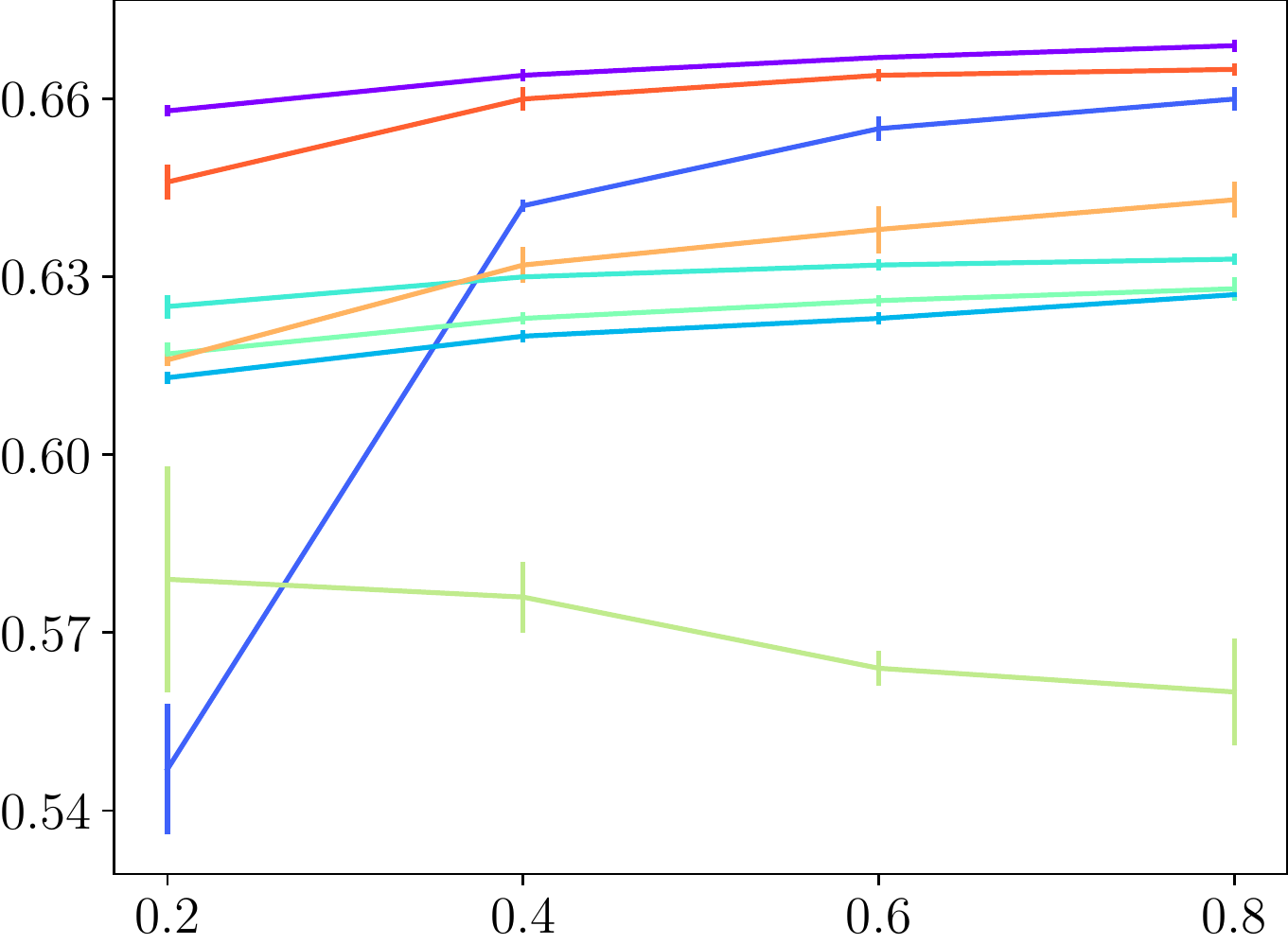}}
        \put(356,22){\rotatebox{90}{\tiny{AP-BAL-ACC $\uparrow$}}}
        \put(390,2){\tiny{Annotation Ratios}}

    \end{picture}
    \caption{Evaluation results for four different annotation ratios on \textsc{cifar10 (independent)}.}
    \label{fig:varying-annotation-ratios}
\end{figure}

\section{A Case Study on CIFAR100}
\label{app:cifar100}
This appendix presents a case study analyzing multi-annotator supervised learning techniques' performances on classification tasks with more classes than in the main experiments. 
For this purpose, we use the dataset \textsc{cifar100}, which is similar to the \textsc{cifar10} dataset (cf. Table~\ref{tab:datasets}), with the major difference that $C=100$ classes occur~\citep{krizhevsky2009learning}. The general design of the experiments follows that of RQ1 (cf. Section~\ref{sec:experimental-setup}), except that we adopt the \textit{wide residual network} (WRN-28-10, \citeauthor{zagoruyko2016wide}, \citeyear{zagoruyko2016wide}) as the basic network architecture and stochastic GD as the optimizer with a mini-batch size of $B=128$. For a fairer comparison, we perform a small grid search with the learning rates $\{0.1, 0.01\}$ and weight decays $\{0.0005, 0.0\}$ for each technique. Thereby, we reduce the learning rate with cosine annealing~\citep{loshchilov2017sgdr} over the 100 training epochs.

Table~\ref{tab:cifar100} presents the obtained GT and AP models’ test performances, where the evaluated MaDL variant corresponds to its default hyperparameter configuration.
As expected, training with GT labels as the UB achieves the best performance in terms of GT and AP estimates.
The performance gap between the UB and other techniques is substantial, highlighting the challenges of learning the \textsc{cifar100} dataset with partially erroneous annotations.
However, the benefits of employing multi-annotator supervised learning techniques are also significant.
This advantage becomes particularly apparent when comparing MaDL with the LB because MaDL improves the GT-ACC by approximately $19\,\%$. 
The comparison with the other multi-annotator supervised learning techniques further confirms the state-of-the-art results of MaDL for \textsc{cifar100}.
We note that only LIA and REAC appear to roughly learn the classification task among the related techniques.
A conceivable explanation might be that the default values of the hyperparameters of CL, UNION, and CoNAL are only well-suited for datasets with fewer classes.

\begin{table}[!h]
\caption{
        Results on the \textsc{cifar100} dataset with simulated annotators: {\color{blue}{\textBF{Best}}} and {\color{purple}\textBF{second best}} performances are highlighted per evaluation score while {\color{orange}{excluding}} the performances of the UB.
    }
    \label{tab:cifar100}
    \setlength{\tabcolsep}{4.9pt}
    \scriptsize
    \centering
    \begin{tabular}{|l||c|c|c||c|c|c|c|}
        \toprule
        \multicolumn{1}{|c||}{\multirow{2}{*}{\textbf{Technique}}} & 
        \multicolumn{3}{c||}{\textbf{Ground Truth Model}} & 
        \multicolumn{4}{c|}{\textbf{Annotator Performance Model}} \\
        &
        ACC $\uparrow$  & NLL $\downarrow$  & 
        BS $\downarrow$  &  ACC $\uparrow$  & NLL $\downarrow$  & 
        BS $\downarrow$ & BAL-ACC $\uparrow$ \\ 
        \midrule 
        \hline
        \multicolumn{8}{|c|}{\cellcolor{datasetcolor!10} \textbf{\textsc{cifar100} (\textsc{independent})}} \\
        \hline
        UB  &  {\color{orange}{0.795 $\pm$ 0.001}} &     {\color{orange}{0.818 $\pm$ 0.008}} &   {\color{orange}{0.297 $\pm$ 0.003}} &      {\color{orange}{0.727 $\pm$ 0.002}} &     {\color{orange}{0.530 $\pm$ 0.001}} &   {\color{orange}{0.355 $\pm$ 0.001}} &      {\color{orange}{0.665 $\pm$ 0.003}} \\
        LB &   0.430 $\pm$ 0.010 &     2.593 $\pm$ 0.038 &   0.762 $\pm$ 0.012 &      0.599 $\pm$ 0.015 &     0.665 $\pm$ 0.006 &   0.472 $\pm$ 0.005 &      0.577 $\pm$ 0.011 \\
        \hline
        CL & 0.077 $\pm$ 0.013 &    10.555 $\pm$ 0.802 &   1.616 $\pm$ 0.059 &      0.594 $\pm$ 0.000 &     1.466 $\pm$ 0.013 &   0.764 $\pm$ 0.001 &      0.500 $\pm$ 0.000 \\
        REAC & 0.553 $\pm$ 0.006 &     2.037 $\pm$ 0.175 &   0.630 $\pm$ 0.023 &      0.629 $\pm$ 0.029 &     0.657 $\pm$ 0.071 &   0.465 $\pm$ 0.060 &      0.543 $\pm$ 0.035 \\
        UNION & 0.033 $\pm$ 0.003 &     4.815 $\pm$ 0.067 &   1.031 $\pm$ 0.014 &      0.594 $\pm$ 0.000 &     1.838 $\pm$ 0.008 &   0.794 $\pm$ 0.001 &      0.500 $\pm$ 0.000 \\
        LIA & {\color{purple}\textBF{0.560 $\pm$ 0.008}} &     {\color{purple}\textBF{1.995 $\pm$ 0.051}} &   {\color{purple}\textBF{0.619 $\pm$ 0.009}} &      {\color{blue}\textBF{0.672 $\pm$ 0.004}} &     {\color{blue}\textBF{0.608 $\pm$ 0.003}} &   {\color{blue}\textBF{0.417 $\pm$ 0.003}} &      {\color{purple}\textBF{0.611 $\pm$ 0.006}} \\
        CoNAL & 0.168 $\pm$ 0.008 &     7.762 $\pm$ 0.344 &   1.283 $\pm$ 0.016 &      0.594 $\pm$ 0.000 &     1.209 $\pm$ 0.021 &   0.700 $\pm$ 0.003 &      0.500 $\pm$ 0.000 \\
        \hline
        MaDL & \color{blue}{\textBF{0.621 $\pm$ 0.003}} &    \color{blue}{\textBF{1.628 $\pm$ 0.011}} &   \color{blue}{\textBF{0.508 $\pm$ 0.005}} &      \color{purple}{\textBF{0.665 $\pm$ 0.006}} &     \color{purple}{\textBF{0.646 $\pm$ 0.015}} &   \color{purple}{\textBF{0.442 $\pm$ 0.011}} &      \color{blue}{\textBF{0.628 $\pm$ 0.003}} \\
        \hline        
        \bottomrule
    \end{tabular}
\end{table}

\section{Practitioner's Guide}
\label{app:practioner-guide}
This appendix provides an overview of various aspects to consider when employing MaDL in practical classification tasks. Note that these aspects are closely related to each other and that the associated recommendations do not have general validity.
\begin{description}[font=\normalfont\textbf]
\item[Data modalities:]
    We can train MaDL with different modalities of data. Depending on the instances' data modality, we need to modify the GT model's architecture. For example, we may use \mbox{TabNet}~\citep{arik2021tabnet} for tabular data, ResNet~\citep{he2016deep} for image data, or BERT~\citep{devlin2018bert} for text data. These considerations apply analogously to the annotator features and the architecture of the AP model. However, annotator features are commonly tabular since they are collected via surveys, or only anonymized identifiers of the annotators are known if there is no prior annotator information. In the latter case, one-hot encoding converts these identifiers into tabular annotator features. 
    \textit{Recommendation: Use common architectures from the literature to fit the data modality.}

    \item[Number of instances:] The number of annotated instances available for training is critical to DNNs' generalization performances~\citep{hestness2017deep} and, consequently, to MaDL's generalization performance. Typically, the more annotated instances we have, the better MaDL can learn the underlying annotation patterns as a basis for inferring the GT class labels and APs. Learning curves allow us to study this behavior~\citep{hoiem2021learning}. Fig.~\ref{fig:training-sizes} shows such exemplary learning curves of MaDL's GT-ACC and AP-ACC compared to the LB and UB for the dataset \textsc{letter}. Each curve represents the means and standard deviations over five runs. The trend of improving performance with an increasing number of annotated instances is confirmed for both evaluation scores. Further, we observe that MaDL improves upon the LB, even for smaller training sets. Our findings, coupled with the case study on varying annotation ratios in Appendix~\ref{app:varying-annotation-ratios}, suggest that an increase in both the number of annotated instances and the number of annotations per instance generally contributes to improved results. \textit{Recommendation: If the annotation budget allows, increase the number of annotated instances for better results.}
   \begin{figure}[!h]
        \begin{picture}(300,120)
            \put(180,100){\includegraphics[width=0.25\textwidth]{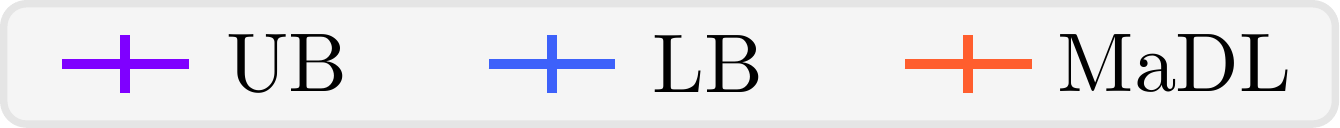}}
            \put(97,35){\rotatebox{90}{\scriptsize{GT-ACC $\uparrow$}}}
            \put(112,0){\scriptsize{Number of Annotated Instances}}
            \put(105,10){\includegraphics[width=0.25\textwidth]{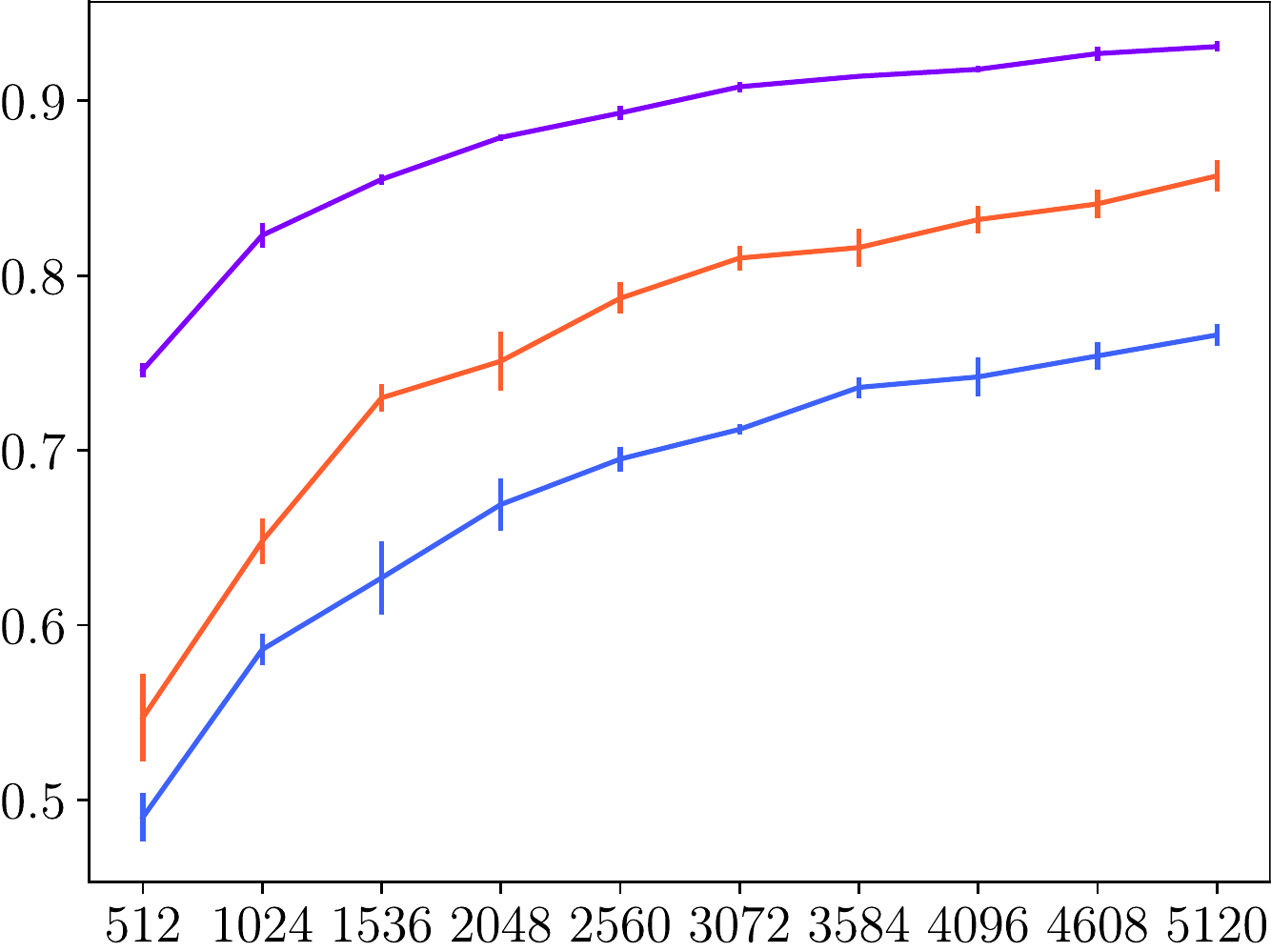}}
            \put(257,0){\scriptsize{Number of Annotated Instances}}
            \put(242,35){\rotatebox{90}{\scriptsize{AP-ACC $\uparrow$}}}
            \put(250,10){\includegraphics[width=0.25\textwidth]{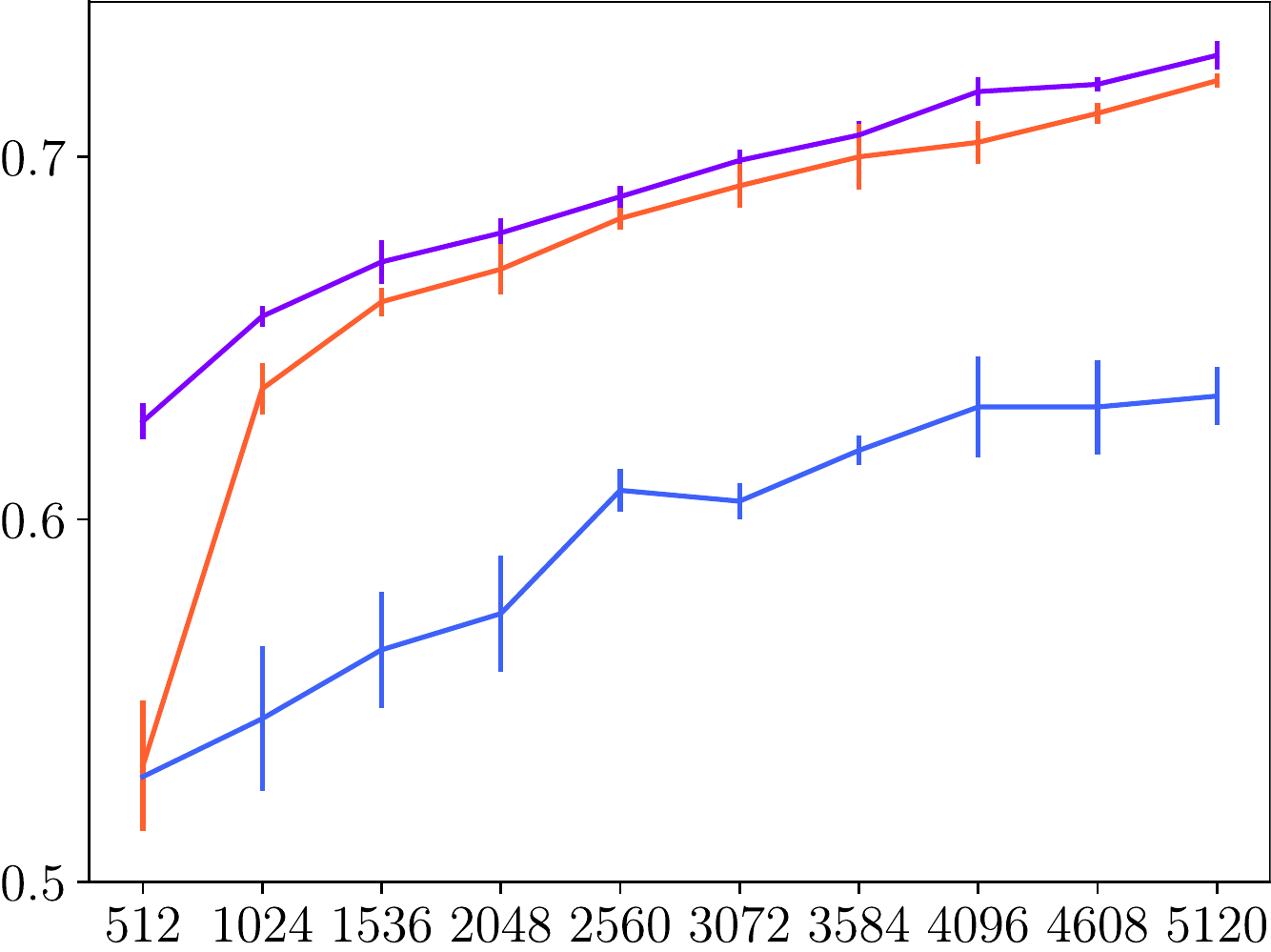}}
        \end{picture}
        \caption{Exemplary learning curves on the \textsc{letter} dataset.}
        \label{fig:training-sizes}
    \end{figure}    
    
    \item[Number of classes:] With an increasing number of classes, there is a growing chance that some classes will be easy to separate and some classes will be highly confusable~\citep{gupta2014training}. Detecting such pairs of confusable classes requires the accurate estimation of annotators' confusion matrices, whose concept is illustrated in Fig.~\ref{fig:confusion-matrices}. The three matrices represent three variants of MaDL with varying assumptions about APs (cf. property P1 in Section~\ref{sec:related-work}). The class-independent variant corresponds to estimating a single scalar as the only degree of freedom, i.e., $\nu=1$, to build the entire confusion matrix. In our example, this scalar corresponds to $0.80$ as an annotator's correctness probability. Accordingly, this variant cannot distinguish between easy- and difficult-to-recognize classes. The degree of freedom of the partially class-dependent variant equals the number of classes, i.e., $\nu=C$, because there is an individual correctness probability for each class. These probabilities build the diagonal of the confusion matrix, which consists of the values $0.60, 0.80$, and $1.00$ in our example. Only the fully class-dependent variant can capture pairs of highly confusable classes. As a result, the degree of freedom is considerably higher with $\nu=C \cdot (C-1)$. On the one hand, our experiments show that this variant is robust across different datasets (cf. Section~\ref{sec:experimental-evaluation}), and even works for datasets with $C=100$ classes (cf. Appendix~\ref{app:cifar100}). On the other hand, the complexity increases quadratically with the number of classes, so this variant is computationally costly for datasets with hundreds or thousands of classes. Note that MaDL can estimate these confusion matrices not only annotator-dependent but also instance-dependent. \textit{Recommendation: If no assumptions about the annotators are known, and the computational effort is feasible, use the fully class-dependent MaDL variant.}
    \begin{figure}[!h]
        \centering
        \setlength{\tabcolsep}{3.5pt}
        \begin{picture}(470,80)
            \scriptsize{
                \put(35,35){
                    \begin{tabular}{|c|ccc|}
                        \toprule
                        \multicolumn{4}{|c|}{\textbf{Class-independent (I)}}                                                              \\ 
                        \multicolumn{4}{|c|}{$\nu = 1$}                                                               \\ \midrule \hline
                                       & \multicolumn{3}{c|}{Annotated Class}                                                             \\ \hline
                        GT Class       & \multicolumn{1}{c|}{$z=1$}  & \multicolumn{1}{c|}{$z=2$}  & \multicolumn{1}{c|}{$z=3$}\\ \hline
                        $y=1$          & \multicolumn{1}{c|}{$0.80$} & \multicolumn{1}{c|}{$0.10$} & \multicolumn{1}{c|}{$0.10$}\\ \hline
                        $y=2$          & \multicolumn{1}{c|}{$0.10$} & \multicolumn{1}{c|}{$0.80$} & \multicolumn{1}{c|}{$0.10$}\\ \hline
                        $y=3$          & \multicolumn{1}{c|}{$0.10$} & \multicolumn{1}{c|}{$0.10$} & \multicolumn{1}{c|}{$0.80$} \\ \hline
                    \end{tabular}
                }
                \put(187,35){
                    \begin{tabular}{|c|ccc|}
                        \toprule
                        \multicolumn{4}{|c|}{\textbf{Partially Class-dependent (P)}}                                                      \\ 
                        \multicolumn{4}{|c|}{$\nu = C = 3$}                                                           \\ \midrule \hline
                                       & \multicolumn{3}{c|}{Annotated Class}                                                             \\ \hline
                        GT Class       & \multicolumn{1}{c|}{$z=1$}  & \multicolumn{1}{c|}{$z=2$}  & \multicolumn{1}{c|}{$z=3$} \\ \hline
                        $y=1$          & \multicolumn{1}{c|}{$0.60$} & \multicolumn{1}{c|}{$0.20$} & \multicolumn{1}{c|}{$0.20$}\\ \hline
                        $y=2$          & \multicolumn{1}{c|}{$0.10$} & \multicolumn{1}{c|}{$0.80$} & \multicolumn{1}{c|}{$0.10$}\\ \hline
                        $y=3$          & \multicolumn{1}{c|}{$0.00$} & \multicolumn{1}{c|}{$0.00$} & \multicolumn{1}{c|}{$1.00$}\\ \hline
                    \end{tabular}
                }
                \put(345,35){
                    \begin{tabular}{|c|ccc|}
                        \toprule
                        \multicolumn{4}{|c|}{\textbf{Fully Class-dependent (F)}}                                                         \\ 
                        \multicolumn{4}{|c|}{$\nu = C \cdot (C-1) = 3 \cdot 2 = 6$}                                                      \\ \midrule \hline
                                       & \multicolumn{3}{c|}{Annotated Class}                                                             \\ \hline
                        GT Class       & \multicolumn{1}{c|}{$z=1$}  & \multicolumn{1}{c|}{$z=2$}  & \multicolumn{1}{c|}{$z=3$}\\ \hline
                        $y=1$          & \multicolumn{1}{c|}{$0.60$} & \multicolumn{1}{c|}{$0.35$} & \multicolumn{1}{c|}{$0.05$} \\ \hline
                        $y=2$          & \multicolumn{1}{c|}{$0.20$} & \multicolumn{1}{c|}{$0.80$} & \multicolumn{1}{c|}{$0.00$} \\ \hline
                        $y=3$          & \multicolumn{1}{c|}{$0.00$} & \multicolumn{1}{c|}{$0.00$} & \multicolumn{1}{c|}{$1.00$}\\ \hline
                    \end{tabular}
                }
            }
        \end{picture}
        \caption{Illustrative confusion matrices for $C=3$ classes.}
        \label{fig:confusion-matrices}
    \end{figure}
    
    \item[Number of annotators:] As the number of annotators increases, the AP model usually needs to learn more varying annotation patterns. For this, the AP model's size must be sufficiently large, e.g., by increasing the annotator embeddings' dimensionality. Moreover, in settings with many annotators, prior information about the annotators can be particularly helpful for learning correlations between them. \textit{Recommendation: Ensure that the AP model has a sufficient size to learn annotation patterns.}
    
    \item[Training, validation, and testing:] In principle, MaDL can be trained without a validation or test set, demonstrating robust results with default hyperparameters across diverse datasets. Nevertheless, like most DNNs, careful selection of the optimizer's hyperparameters, such as the learning rate, is essential. Without a validation set, the training loss curve can serve as a rudimentary guide to assess if the loss is being minimized effectively. Still, better results typically stem from hyperparameter tuning using a validation set. Next to popular regularization techniques, e.g., dropout and weight decay, such a validation set is also essential to detect and avoid overfitting. Finally, a separate test set allows us to assess MaDL's final performance reliably. The setting of error-prone annotators makes obtaining a validation or test set challenging. Ideally, we obtain the GT labels from experts. Alternatively, we may assume the majority votes of numerous annotators as GT labels. \textit{Recommendation: For hyperparameter tuning, avoidance of overfitting, and a reliable performance assessment, consider acquiring a validation and test set before deploying MaDL.}
    
    \item[Deployment:] After training, MaDL's GT and AP models offer various deployment options: We can
    \begin{itemize}
        \item directly apply the GT model as a downstream classifier to the associated learning task,
        \item use the GT model's predictions to improve the dataset's label quality,
        \item study the annotation patterns and correlations between annotators via the AP model,
        \item or leverage the AP model to assess which annotator is best for annotating a particular instance.
    \end{itemize}
\end{description}

\end{document}